\documentclass[10pt]{article}

\usepackage{amsmath,amsthm,verbatim,amssymb,amsfonts,amscd, graphicx, enumitem, times}
\usepackage{graphics}
\usepackage{centernot}
\usepackage{authblk}
\theoremstyle{plain}
\newtheorem{theorem}{Theorem}
\newtheorem{corollary}{Corollary}

\newtheorem*{remark}{Remark}
\newtheorem{proposition}{Proposition}

\newtheorem{definition}{Definition}
\newtheorem{assumption}{Assumption} 
\newtheorem{property}{Property}

\usepackage{booktabs}
\usepackage{longtable}
\usepackage{array}

\usepackage[colorlinks,linkcolor=blue,citecolor=blue]{hyperref}

\usepackage[bottom]{footmisc}
\usepackage{caption}
\usepackage{subcaption}
\usepackage{helvet}  %Required
\usepackage{courier}  %Required
\usepackage{url}  %Required
\usepackage{graphicx}  %Required
\usepackage{multirow}
\usepackage{amsthm}
\usepackage{color}
\usepackage{MnSymbol}
\usepackage{makecell}
\usepackage{arydshln}
\usepackage{amsmath}
\usepackage[dvipsnames]{xcolor}
\usepackage{caption} 
\usepackage{natbib}
\usepackage{bbm}

\usepackage{textcomp}
\usepackage{wrapfig}
\usepackage{algorithm}
\usepackage{algorithmic}

\usepackage{csquotes}

\newcommand{\Tr}{{\rm Tr}}

\newcommand{\rr}{\raggedright\let\\\tabularnewline}

\begin{document}
%\[ \fbox{$\Box$} \fbox{$\hat\Box$} \fbox{$\tilde\Box$} \]

\title{Neural Quadratic Forms:\\
A Unified Minimal Model for Sudden Learning and Scaling Laws}
\author{Liu Ziyin$^{1,}$\thanks{These two authors contributed equally.}\ \ , Yizhou Xu$^{2,*}$, Tomaso Poggio$^1$, Isaac Chuang$^1$\\
$^1$\textit{Massachusetts Institute of Technology}\\
$^2$\textit{École Polytechnique Fédérale de Lausanne}%\\
%$^3$\textit{NTT Research}
}
\maketitle

\begin{abstract}
Neural networks trained by gradient descent on a smooth cost function can nevertheless learn in steps: the cost holds on long plateaus and then drops abruptly. Meanwhile, training losses instead follow smooth power laws. Variants of both behaviors occur in architectures with very different microscopic structures, which is the signature of a few relevant collective variables. We show that a symmetry fixes what those variables are: a network layer is a sum over interchangeable units, so relabeling the units leaves it unchanged; given smoothness and the condition that a unit's gradient vanish at the origin, symmetry then enforces a universal leading form for the expansion about the near-zero weights present at the start of training, the quadratic $\Tr[WW^{\top}A(x)]$, in which every architectural detail is confined to a single ``structure matrix" $A(x)$ that we compute for each architecture. Perceptrons, attention layers, mixtures of experts, and convolutions become one model at different $A$. Its training dynamics then close on the ``order parameter" $M=WW^{\top}$ and, whenever the data matrices share an eigenbasis, reduce to a Lotka--Volterra equation whose modes switch on one after another. The smaller the initial weights, the further apart the switch-on times, and the plateaus appear as a singular limit of a smooth flow; when many modes are unresolved the same events merge into a power law in training time whose exponent the theory predicts. We confirm both numerically across training methods and architectures.
\end{abstract}

%\begin{enumerate}[noitemsep,topsep=0pt, parsep=0pt,partopsep=0pt, leftmargin=13pt]
%    \item ..
%\end{enumerate}

%\section{Introduction}

\section{Introduction}

Neural networks exhibit two apparently incompatible forms of regularity. Along a single training trajectory, the loss can remain nearly constant for long intervals and then fall abruptly as new features are acquired. Across model size, dataset size, or training compute, by contrast, losses often follow smooth power laws over many orders of magnitude, with weak sensitivity to architectural detail \cite{kaplan2020scaling,bahri2024explaining,maloney2022solvable}. These cross-scale laws are accurate enough to guide large-scale design decisions: compute-optimal scaling can favor a smaller model trained on substantially more data over a much larger, undertrained one \cite{hoffmann2022training}. Variants of both behaviors occur across architectures with very different microscopic structure, including multilayer perceptrons, convolutional networks, mixtures of experts, and transformers \cite{saxe2014exact,liu2022towards,zhang2025saddle,kunin2026alternating,huh2024platonic,kaushik2025universalweightsubspacehypothesis}. Sharp microscopic events together with macroscopic laws that are indifferent to microscopic detail are the familiar signature of a small number of relevant collective variables, and they invite the construction of a minimal model.

To a physicist, these behaviors are not unique to neural networks. Smooth deterministic dynamics can produce sharp, reproducible transitions when different modes cross instability thresholds one at a time, as at the onset of convection in a fluid. Long plateaus followed by rapid changes resemble the induction periods of autocatalytic reactions. Moreover, a transition may remain smooth for every finite value of a parameter but become sharp in a limiting regime, just as phase transitions emerge only in the thermodynamic limit. Finally, the weak dependence on microscopic details suggests universality: the large-scale behavior is governed by a small number of collective variables.

These physical phenomena can be understood by finding the collective variables and the equation they obey. Such a search naturally begins with a symmetry, and neural architectures supply an unusually clean one. Neural networks are canonically assembled from repeated, exchangeable components: hidden units in a perceptron, heads in an attention layer, experts in a mixture, channels in a convolution. Relabeling those components leaves the represented function unchanged, so a module of width $d$ is invariant under the symmetric group $S_d$ acting on its components, in the same way that a collection of identical particles is invariant under exchange. The analogy is more than verbal: the training of two-layer networks has been mapped directly onto the mean-field dynamics of interacting particles \cite{mei2019mean}, permutation symmetry has been proposed as the operational definition of a neuron \cite{ziyin2024symmetry}, and the breaking and restoration of parameter symmetries has been argued to organize the order in which features are acquired \cite{ziyin2024parameter,ziyin2025parameter}.

Symmetry of this kind is the standard point of departure for a Landau construction \cite{landau2013statistical,goldenfeld1992lectures,chaikin1995principles}: identify the symmetry group, identify a small parameter that holds the system near a distinguished point of configuration space, expand about that point, and retain only the lowest-order terms the symmetry permits. Here the group is $S_d$, the distinguished point is the origin of parameter space, and the small parameter is the initialization scale $\epsilon$. Microscopic detail should then survive only in the phenomenological coefficients of the expansion, which is the sense in which universality across architectures is to be expected and which is the logic underlying the statistical-mechanical tradition in learning theory \cite{seung1992statistical,engel2001statistical,zdeborova2016statistical,bahri2020statistical,carleo2019machine}.

\begin{figure}
    \centering
    \includegraphics[width=1\linewidth]{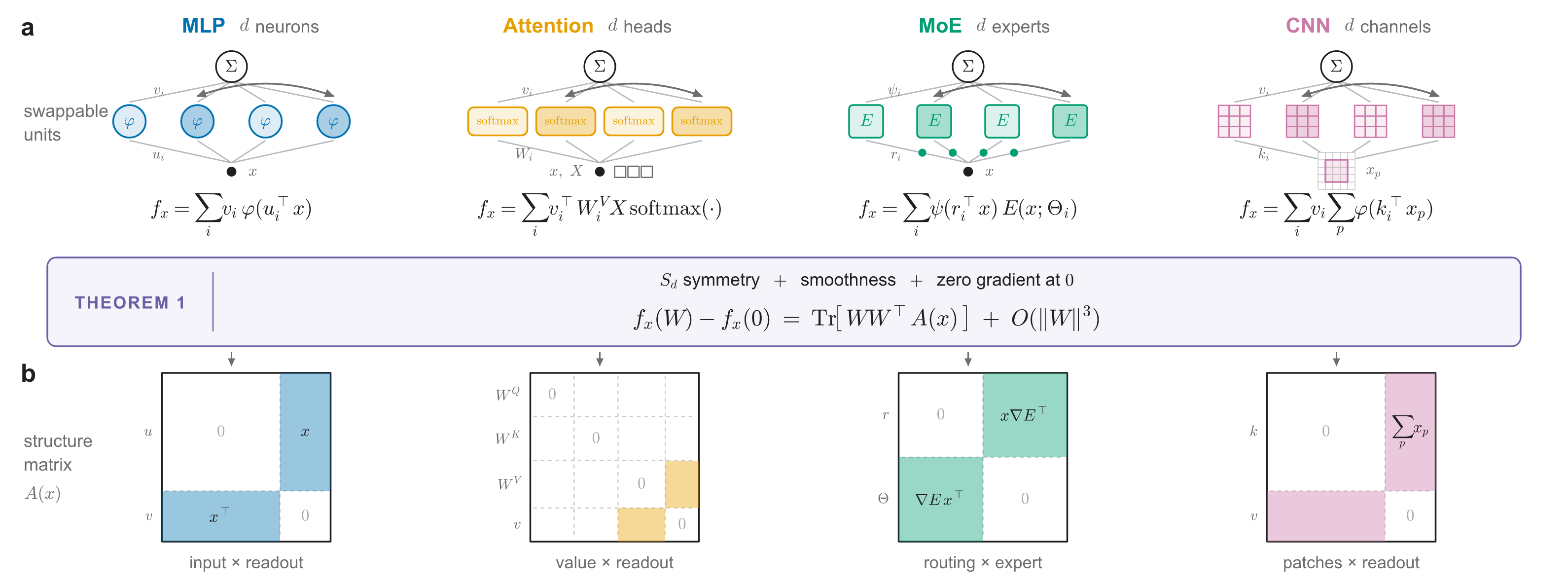}
    \caption{\textbf{Permutation symmetry of the units allows us to characterize the model with structure matrices $A$.} \textbf{(a)} Perceptrons, attention layers, mixtures of experts and convolutions are all sums over interchangeable components, so a permutation leaves the represented function unchanged.
  \textbf{(b)} Training begins with all weights near zero, so each module may be Taylor expanded about that point. Given smoothness and the condition that a component's gradient vanish whenever that component's own weights do, symmetry then enforces the same leading quadratic form in every case (Theorem~\ref{theo:main}), and the architecture survives only in the structure matrix $A(x)$. Here $x$ is a single input to the module.}
    \label{fig:overview}
\end{figure}

Carrying out this program produces a single expression, summarized in Figure~\ref{fig:overview}. We prove (Theorem~\ref{theo:main}) that any module $f_x(w_1,\dots,w_d)$ that is $S_d$-symmetric in its components [Figure~\ref{fig:overview}(a)], is three times continuously differentiable, and satisfies a gradient condition, stated precisely below, obeys
\begin{equation}
f_x(W)-f_x(0)= \mu^\top  g(x) + \Tr\!\left[WW^\top A(x)\right]+O\left (\|W\|^3 \right ).
\label{eq:nqf-intro}
\end{equation}
Here $x$ is a single input, $W=(w_1,\dots,w_d)\in\mathbb{R}^{p\times d}$ collects the $d$ components,  each carrying $p$ parameters, $\mu = \sum_{i=1}^d w_i$, and the symmetric matrix $A(x)\in\mathbb{R}^{p\times p}$ is fixed by the architecture and by $x$ alone; we call Eq.~\eqref{eq:nqf-intro} the neural quadratic form (NQF) and $A(x)$ the structure matrix. It turns out that the term $\mu^\top g(x)$ vanishes for most of the practical architectures, and so the NQF can often be written with only the second-moment term.

Everything architecture-specific now sits in $A(x)$, as a sparsity pattern and a set of couplings [Figure~\ref{fig:overview}(b)]. Attention is the striking case: only its value and readout blocks appear at this order, the query and key matrices entering first at quartic order. What controls the model is likewise not the individual components but the order parameter $M=WW^\top$, whose size is set by $p$.

%The reach of the result is limited in two ways, one by the hypotheses and one by the expansion. The gradient condition is stronger than requiring the gradient to vanish at the origin, and has to be: the weaker condition leaves behind a term coupling distinct components. The expansion, for its part, is local, and so is the theory built on it, which will need modification for biases, nonsmooth activations, modules that apply a nonlinearity after aggregating their components, or far-from-origin initialization. We verify Eq.~\eqref{eq:nqf-intro} numerically for perceptrons, convolutions and attention under three optimizers, and confirm that it fails, as it must, once the initialization is large.

Quadratic parameterizations are not themselves new as proxies for nonconvex learning \cite{gunasekar2017implicit,arora2019implicit,candes2015phase,berthier2023incremental,maillard2024bayes,defilippis2025scaling}. Matrix sensing, phase retrieval, diagonal networks and quadratic networks have each been analyzed on their own terms, to explain implicit low-rank bias, spectral initialization, or feature growth; exact analyses of deep linear networks display the same mode-wise plateaus under gradient descent \cite{saxe2014exact,advani2020high}; and recent work argues that quadratic models remain quantitatively accurate even for large language models \cite{meterez2026defensequadraticmodel}. What Eq.~\eqref{eq:nqf-intro} adds is that these are not a family of analogous models but one model, and that the coupling $A(x)$ is computed from the architecture [Figure~\ref{fig:overview}(a,b)]. Seventeen models previously studied as separate solvable proxies, including those above, are instances of Eq.~\eqref{eq:nqf-intro} at particular $A$. A practical consequence is that $A(x)$ can be evaluated for a proposed layer before that layer is ever trained.

Several distinct regularities travel under the name of a neural scaling law, and they are worth separating before any theory is compared. Power laws are reported in the loss against optimization time within a single run; in the trained loss against model or dataset size; and along a compute-optimal frontier trading the two. What follows is a law of the first kind, for excess loss against rescaled training time, controlled by the same spectral structure that enters theories of the other two. How the trained loss depends on model size, dataset size and compute, and how those resources should be traded against one another, is a separate question we do not take up.

Within that scope, the closest theory is that of Bahri \emph{et al.} \cite{bahri2024explaining}, who separate a variance-limited regime, where the loss falls as the inverse of model or dataset size, from a resolution-limited one whose exponent is set by the data manifold or the tangent-kernel spectrum. Theirs is a theory of trained loss against scale; ours is of loss against time. Kernel treatments reach power-law learning curves from spectral structure by a related route \cite{bordelon2020spectrum,canatar2021spectral,maloney2022solvable,bordelon2024dynamical}, but they linearize about initialization and freeze the representation \cite{jacot2018neural,chizat2019lazy}, suppressing the feature growth that produces abrupt transitions \cite{radhakrishnan2022mechanism,beaglehole2023mechanism}; sudden learning is then treated separately, as saddle-to-saddle transitions for particular targets \cite{abbe2023sgd,michaud2023quantization,nam2024exactly,arous2025learning}, or as the loss of stability of a trivial symmetric solution \cite{wu2019learnability}. Our question spans the two: can the spectrum behind a macroscopic law emerge from within a feature-learning dynamics, and can that same dynamics produce the abrupt acquisition of individual modes? Answering both questions with one dynamics is tractable only because that dynamics is reducible to a few collective variables.

That reduction is our first result. Under stochastic gradient descent, the dynamics of any NQF close on the pair $(M,\mu)=(\sum_i w_iw_i^\top,\sum_i w_i)$, no matter how many trainable parameters the model contains (Theorem~\ref{theo:master}). The entire parameter trajectory is therefore represented by these low-dimensional moments; and within the NQF description a module of width $d$ admits a compressed equivalent of width $k_{\mathcal V}+1$, set by the rank of the data, which reproduces its predictions on the training data at every step under a rescaling (Theorem~\ref{theo:compressibility}). This measures the low-dimensional training dynamics \cite{gur2018gradient,li2018measuring,aghajanyan2021intrinsic}.
The second result is that the closed dynamics is in many cases solvable. In the common eigenbasis of the data, the flow of $M$ becomes the generalized Lotka--Volterra equation of population ecology \cite{hofbauer1998evolutionary,may1972will,bunin2017ecological}, with the eigenvalues $z_k$ of $M$ in the role of species abundances, and we solve it in four regimes (Theorems~\ref{theo:proportion}--\ref{theo:isotropic_samples}), each exhibiting a combination of sudden learning and neural scaling law phenomena. %The phenomenology of the opening paragraph then follows: each mode suddenly starts to learn at a time $t_k^*\simeq(a\zeta_k)^{-1}\ln(1/\epsilon)$, so sudden learning times separate logarithmically as $\epsilon\to0$ and the loss becomes a staircase, and when the growth rates $\zeta_k$ and target strengths $V_k$ are power-law distributed with exponents $\alpha_2$ and $\alpha_1$, the staircase aggregates into a power law of exponent $(\alpha_1-1)/\alpha_2$ in rescaled training time. That exponent is predicted from the two spectra with no fitted parameter.

This work is organized as follows. Section~\ref{sec:theory} derives the normal form and Section~\ref{sec:example} computes $A(x)$ for standard architectures. Section~\ref{sec:dynamics} analyzes the dynamics, and Section~\ref{sec:solutions} solves the dynamics for special cases. Section~\ref{sec:scaling} derives the predictions for sudden learning and neural scaling laws. Section~\ref{sec:exp} tests them numerically, and Section~\ref{sec: discussion} discusses the limitations and the extension to higher orders.

\section{Neural Quadratic Forms}\label{sec:theory}
This sections presents the central result of this paper. We show that permutation symmetry alone gives us a universal norm quadratic expansion, which is the result of Theorem~\ref{theo:main}. In Section~\ref{sec:example} we will show how each neural architecture is different.

\paragraph{Notation.} Throughout, $x$ denotes a single input to the module: one training example drawn from a set $\mathcal{X}$. The module itself is written $f_x$, a scalar-valued function of the parameters at fixed $x$; the subscript records that $x$ is held fixed while the parameters vary. Its $d$ components carry $p$ parameters each, collected as $w_1,\dots,w_d\in\mathbb{R}^p$, so $f_x:\mathbb{R}^{p\times d}\to\mathbb{R}$. Vector-valued outputs are treated in Appendix~\ref{app:multi-dimension} and change nothing essential. All notations used in our paper are listed in Table \ref{tab:notation}.

The starting point of the theory is that there is a universally shared mathematical structure across different types of layers and architectural modules (see Section~\ref{sec:example}). 
\begin{definition}[Permutation symmetry]
\label{def:perm}
$f_x: \mathbb{R}^{p \times d} \to \mathbb{R}$ satisfies the permutation symmetry ($S_d$- symmetry) if $f_x(w_1, \dots, w_d) = f_x(w_{\sigma(1)}, \dots, w_{\sigma(d)})$ for any permutation $\sigma$.
\end{definition}
Leveraging the terminology from MLPs, each $w_i$ can be seen as the weights of a neuron of a hidden layer. It is thus natural to call $w_i$ a ``neuron," and we will stick to this terminology. In fact, permutation symmetries have been suggested as the ``right" way to precisely define a neuron in deep learning \cite{ziyin2024symmetry}. To be more precise, one could also call $w_i$ the ``coordinate" of the $i$-th neuron. Thus, in our work, any subset of the weights that is the minimal unit of permutation symmetry will be called a ``neuron." In this terminology, the weights of a self-attention head are also a neuron.

For a simplified presentation of the theory, we assume one further benign condition that is obeyed by almost any neural architecture in use (see Section~\ref{sec:example}).
\begin{property}
Zero gradient at zero (ZGZ): for any neuron $i$ and any  $w_{j \neq i}$, $
\nabla_{w_i} f_x(w_1, \dots, w_d) |_{w_i = 0} = 0$.
\end{property}

The following theorem shows that for models with permutation symmetry, their Taylor expansions take a highly universal form. In some sense, this result can be seen as a variant of the fundamental theorem of symmetric polynomials \cite{wang2026universal}.

\begin{theorem}[Neural Quadratic Forms]
\label{theo:main}
Let $f_x: \mathbb{R}^{p \times d} \to \mathbb{R}$ be a three times continuously differentiable model with respect to its neurons $W = (w_1, \dots, w_d)$, where $w_i \in \mathbb{R}^p$. Assume $f_x$ satisfies the permutation symmetry and the ZGZ conditions. Then, 
\begin{equation}\label{eq:NQF}
f_x(W) = f_x(0)+\sum_{i=1}^d \mathrm{Tr}[w_i w_i^\top  A(x)] + O(\|W\|^3),
\end{equation}
where $A(x) \in \mathbb{R}^{p \times p}$ is a symmetric matrix dependent only on $x$, and $f_x(0)$ is the model evaluated at $W=0$. Equivalently, 
\begin{equation}
    \lim_{\phi \to \infty}  \phi (f_x(\phi^{-1/2} W)-f_x(0)) =  \sum_{i=1}^d \mathrm{Tr}[w_i w_i^\top  A(x)].
\end{equation}
\end{theorem}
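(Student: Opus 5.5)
The plan is to Taylor expand $f_x$ to second order about $W=0$, using that $f_x\in C^3$, and then let the two hypotheses constrain the coefficients. Taylor's theorem with remainder gives
\begin{equation*}
f_x(W)=f_x(0)+\sum_{i}\nabla_{w_i}f_x(0)^\top w_i+\tfrac12\sum_{i,j} w_i^\top H_{ij}\, w_j+O(\|W\|^3),
\end{equation*}
where $H_{ij}=\nabla_{w_i}\nabla_{w_j}^\top f_x(0)\in\mathbb{R}^{p\times p}$. The remainder is uniformly $O(\|W\|^3)$ on a neighborhood of the origin because the third derivatives are continuous, hence bounded on a compact ball. The second form of the statement then follows immediately: substitute $\phi^{-1/2}W$, multiply by $\phi$, and note that the cubic remainder contributes $O(\phi^{-1/2})\to0$.

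The linear term is handled first. Taking $w_i=0$ together with all other $w_j=0$ in the ZGZ property gives $\nabla_{w_i}f_x(0)=0$ for every $i$, so the linear term vanishes.

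The real content lies in the off-diagonal Hessian blocks, and I expect this step to be the main obstacle. The point is that the weaker condition "gradient vanishes at the origin" would not suffice here; one genuinely needs ZGZ for arbitrary $w_{j\neq i}$. Fix $i\neq j$. ZGZ says that the map $w_j\mapsto \nabla_{w_i}f_x(W)|_{w_i=0}$ is identically zero in $w_j$, with all other neurons held at zero. Because $f_x\in C^2$, we may differentiate this identity in $w_j$ at $w_j=0$, which gives $\nabla_{w_j}\nabla_{w_i}^\top f_x(0)=0$. Hence $H_{ij}=0$ for all $i\neq j$. Some care is needed to make sure the mixed partial is evaluated along the slice $w_i=0$, which contains the origin; since the identity holds on that entire slice, differentiating within it is legitimate.

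It remains to treat the diagonal blocks. By permutation symmetry, $f_x(W)=f_x(W P_\sigma)$. Differentiating twice at the origin, where the origin is fixed by every permutation, yields $H_{ii}=H_{\sigma(i)\sigma(i)}$, so all diagonal blocks equal a common matrix $H_{11}$. Each block is symmetric by Schwarz's theorem, since $f_x\in C^2$. Setting $A(x)=\tfrac12 H_{11}$, the quadratic term becomes
\begin{equation*}
\sum_i w_i^\top A(x)\,w_i=\sum_i \Tr[w_iw_i^\top A(x)].
\end{equation*}
This $A(x)$ is symmetric and depends only on $x$, because it is a derivative of $f_x$ at the fixed point $0$. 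This establishes Eq.~\eqref{eq:NQF}.
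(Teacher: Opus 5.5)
Your proof is correct and follows the paper's argument step for step. Both expand to second order, use ZGZ at the origin to remove the linear term, differentiate the ZGZ identity along the slice $w_i=0$ to get $H_{ij}=0$ for $i\neq j$, and use permutation symmetry to equate the diagonal blocks, setting $A(x)=\tfrac12 H_{ii}$. You also spell out the $\phi\to\infty$ limit form and the symmetry of $A(x)$ via Schwarz's theorem, which the paper leaves implicit.
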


\begin{remark}
    We note that the ZGZ condition is nonessential. Removing it leads to essentially the same result but with more complicated notations. We present these results in Appendix~\ref{app:remove_ZGZ}. Moreover, in practice, the ZGZ assumption can be replaced by the stronger (but equally common) assumption of having a per-neuron $Z_2$ symmetry, meaning $f_x(\dots, w_i, \dots) = f_x(\dots, -w_i, \dots)$ for all $i$. This means the 1st- and 3rd-order tensors in the Taylor expansion are zero, naturally elevating the error bound from $O(\|W\|^3)$ to $O(\|W\|^4)$.
\end{remark}

Therefore, we will refer to any model in the form of Eq.\eqref{eq:NQF} as a neural quadratic form (NQF). For example,
\begin{equation}
    f_x(W)= c_0  + \Tr [WW^\top A(x)]
\end{equation}
will be a generic NQF with $d$ as its width and $A$ as its ``architecture" or ``structure." We will from now on refer to $A$ as the structure matrix. In Appendix \ref{app:multi-dimension} we extend the results to multidimensional outputs. Note also the dimension of $M$. Both $M$ and $A(x)$ are $p\times p$, where $p$ is the number of parameters of one neuron: the width $d$ having already been summed away. Since the neurons are exchangeable, they may be read as identical particles with a $p$-dimensional  state space, and $\sum_i w_i$ and $M=\sum_i w_iw_i^\top$ as the first two moments of their empirical distribution, $M$ playing the role of a density matrix. 

The next section shows how different forms of $A$ distinguish different architectures. Note that $f$ depends on the neurons only through their second moment $M = WW^\top$, a point we take up in Section~\ref{sec:dynamics}.

%Permutation symmetry admits only symmetric functions of the $w_i$, and truncation at quadratic order only these two moments. %Everything downstream is sized accordingly: the dynamics of Section ~\ref{sec:dynamics} concern $p$ modes, and the compressed width (Theorem~\ref{theo:compressibility}) cannot exceed $p$.

\begin{table*}[t]
\caption{Six representative models that are instances of the neural quadratic form
\eqref{eq:nqf-intro}, together with $A(x)$ that distinguishes them. There are two ways to read the table:
many generic architectures reduce approximately to an NQF, and many previously separate
models are special cases of the NQF. The complete list appears in Table~\ref{tab:NQF_summary}.}
\label{tab:NQF_compact}
\renewcommand{\arraystretch}{1.35}
\begin{tabular}{@{}p{3.0cm}p{4.0cm}p{7.4cm}l@{}}
\hline\hline
\textbf{Model} & \textbf{NQF notation} & \textbf{Structure matrix $A(x)$} & \textbf{Refs.}\\
\hline
\rr Two-layer MLP
& \rr $f_x(W)=\sum_{i=1}^d v_i\,\phi(u_i^\top x)$
& \rr $A(x)=\frac{\phi'(0)}{2}\begin{bmatrix}0_{k\times k} & x\\ x^\top & 0\end{bmatrix}$
& \rr \cite{fukumizu1996regularity,radhakrishnan2022mechanism}\\
\rr Single-layer CNN
& \rr $f_x(W)=\sum_{i=1}^d v_i\sum_{p=1}^P \phi(k_i^\top x_p)$
& \rr $A(X)=\frac{\phi'(0)}{2}\begin{bmatrix}0_{m\times m} & \sum_p x_p\\ \sum_p x_p^\top & 0\end{bmatrix}$
& \rr \cite{krizhevsky2012imagenet,du2018gradient}\\
\rr Multi-head attention
& \rr $f_x(W)=\sum_{i=1}^d v_i^\top W_i^V X\cdot\mathrm{softmax}(\cdot)$
& \rr $A(X)=\frac12\begin{bmatrix}0_{2d_kD} & 0 & 0\\ 0 & 0 & x_{\mathrm{avg}}\otimes I_{d_v}\\ 0 & x_{\mathrm{avg}}^\top\otimes I_{d_v} & 0\end{bmatrix}$
& \rr \cite{vaswani2017attention}\\
\rr Mixture of experts
& \rr $f_x(W)=\sum_{i=1}^d \psi(r_i^\top x)\,E(x;\Theta_i)$
& \rr $A(x)=\frac{\psi'(0)}{2}\begin{bmatrix}0 & x\,G(x)^\top\\ G(x)\,x^\top & 0\end{bmatrix}$, $G:=\nabla_{\Theta}E(x;0)$
& \rr \cite{shazeer2017outrageously,nguyen2024sigmoid}\\
\rr Phase retrieval
& \rr $\hat y_a = a^\top WW^\top a$
& \rr $A_a = aa^\top$
& \rr \cite{candes2015phase,maillard2020phase}\\
\rr Diagonal linear network
& \rr $f_x(u)=\sum_k x_k u_k^2/4$
& \rr $A_x = \frac14\,\mathrm{Diag}(x)$
& \rr \cite{berthier2023incremental,pesme2021implicit}\\
\hline\hline
\end{tabular}
\renewcommand{\arraystretch}{1}
\end{table*}

\section{NQF for Different Architectures}
\label{sec:example}

In fact, a point often unclear to practitioners is that the existence of permutation symmetries is so universal that essentially any architecture module that has a notion of ``width" automatically has the permutation symmetry property. This makes Theorem~\ref{theo:main} applicable to almost any neural module one encounters in practice. Thus, in the NQF perspective, it is the structure matrices that determine the learning dynamics of neural networks. See Table \ref{tab:NQF_compact} for common models that are reducible to an NQF. We now discuss a few examples in more detail.

\paragraph{Two-Layer MLP.} Consider a two-layer Multi-Layer Perceptron (MLP) with a scalar output $f_x: \mathbb{R}^{d \times (k+1)} \to \mathbb{R}$ without bias terms. Let the parameters associated with the $i$-th hidden neuron be denoted as a single vector $w_i = [u_i^\top , v_i]^\top $, where $u_i \in \mathbb{R}^k$ is the input weight vector and $v_i \in \mathbb{R}$ is the readout (output) weight. The model is given by:
\begin{equation}
f_x(w_1, \dots, w_d) = \sum_{i=1}^d v_i \phi(u_i^\top  x)
\label{eq:MLP}
\end{equation}
where $x \in \mathbb{R}^k$ is the input and $\phi: \mathbb{R} \to \mathbb{R}$ is the activation function.
\begin{proposition}
\label{prop:MLP}
If $\phi$ is three times continuously differentiable and satisfies $\phi(0) = 0$, then the model \eqref{eq:MLP} satisfies the assumptions in Theorem \ref{theo:main} with
\begin{equation}
    A(x) = \frac{\phi'(0)}{2} 
    \begin{bmatrix}
        0_{k \times k} & x \\
        x^\top  & 0
    \end{bmatrix}.
\end{equation}
\end{proposition}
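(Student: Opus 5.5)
We would verify the three hypotheses of Theorem~\ref{theo:main} directly and then read off $A(x)$ from the second-order Taylor coefficient of a single neuron's contribution. Since $f_x$ is a sum $\sum_i h_x(w_i)$ with $h_x(u,v)=v\,\phi(u^\top x)$, most of the work reduces to a single-neuron computation.

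\textbf{Step 1: smoothness and symmetry.} Each map $(u_i,v_i)\mapsto v_i\phi(u_i^\top x)$ is a product of a linear function and the composition of $\phi\in C^3$ with a linear function. It is therefore $C^3$, and so is the finite sum. Permuting the indices $i$ only reorders the summands, so the permutation symmetry of Definition~\ref{def:perm} holds trivially.

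\textbf{Step 2: ZGZ.} Because of the additive structure, $\nabla_{w_i}f_x$ depends only on $w_i$ and equals
\[
\bigl(v_i\phi'(u_i^\top x)\,x,\ \phi(u_i^\top x)\bigr).
\]
At $w_i=0$ the first block vanishes because $v_i=0$, and the second vanishes because $\phi(0)=0$. So ZGZ holds for every choice of the other neurons $w_{j\neq i}$. This is the only place the hypothesis $\phi(0)=0$ is used.

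\textbf{Step 3: identify $A(x)$.} With the hypotheses verified, Theorem~\ref{theo:main} gives the expansion, and $A(x)$ is half the Hessian of $h_x$ at the origin. One could equally expand $h_x$ directly: $\phi(u^\top x)=\phi'(0)\,u^\top x+O(\|u\|^2)$, so
\[
v\,\phi(u^\top x)=\phi'(0)\,v\,u^\top x+O(\|w\|^3).
\]
The only quadratic term is the bilinear cross term $\phi'(0)\,v\,x^\top u$. Writing this as $w^\top A w$ with $A$ symmetric forces $A$ to have zero diagonal blocks and off-diagonal block $\frac{\phi'(0)}{2}x$, which is the matrix in the statement. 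Summing over $i$ and using $w^\top A w=\Tr[ww^\top A]$ recovers Eq.~\eqref{eq:NQF}, and the remainders add to $O(\|W\|^3)$.

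The main obstacle is only a bookkeeping point: making sure the $O(\cdot)$ remainder is uniform. It is, since the third-order terms are controlled by $\sup|\phi''|$ on a neighborhood of $0$ and by $\|x\|^2$, with $x$ fixed. The factor $\tfrac12$ comes from symmetrizing the cross term.
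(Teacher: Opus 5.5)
Your proposal is correct and follows the paper's proof: you verify smoothness, permutation symmetry, and ZGZ from the two gradient blocks $v_i\phi'(u_i^\top x)\,x$ and $\phi(u_i^\top x)$, then identify $A(x)$ as half the single-neuron Hessian at the origin. The only cosmetic difference is in that last step: the paper computes the three Hessian blocks explicitly, noting that $v_i\phi''(u_i^\top x)\,xx^\top$ vanishes at $v_i=0$, whereas you read off the same cross term $\phi'(0)\,v\,x^\top u$ from the first-order expansion of $\phi$.
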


\paragraph{Self-Attention.} Consider a Multi-Head Attention (MHA) model computing a scalar output from an input query token $x \in \mathbb{R}^D$ and a context matrix $X \in \mathbb{R}^{D \times N}$ (where $N$ is the sequence length).

Let the parameters associated with the $i$-th attention head be grouped into a single parameter vector $w_i = \mathrm{vec}(W_i^Q, W_i^K, W_i^V, v_i)$, where $W_i^Q, W_i^K \in \mathbb{R}^{d_k \times D}$ are the query and key matrices, $W_i^V \in \mathbb{R}^{d_v \times D}$ is the value matrix, and $v_i \in \mathbb{R}^{d_v}$ is the readout vector for the scalar output. Assuming no bias terms, the model is formulated as:
\begin{equation}
    f_x(w_1, \dots, w_d) = \sum_{i=1}^d v_i^\top  W_i^V X \cdot \mathrm{softmax} \left( \frac{X^\top  (W_i^K)^\top  W_i^Q x}{\sqrt{d_k}} \right)
    \label{eq:MHA}
\end{equation}
where $\mathrm{softmax}$ is applied over the $N$ dimension.
\begin{proposition}
\label{prop:attention}
The MHA model \eqref{eq:MHA} satisfies the assumption in Theorem \ref{theo:main}. The matrix $A(X)$ takes the following block form:
\begin{equation}
    A(X) = \frac{1}{2} 
    \begin{bmatrix}
        0 & 0 & 0 & 0 \\
        0 & 0 & 0 & 0 \\
        0 & 0 & 0 & x_{avg} \otimes I_{d_v} \\
        0 & 0 & x_{avg}^\top  \otimes I_{d_v} & 0
    \end{bmatrix}
\end{equation}
where $x_{avg} = \frac{1}{N} X \mathbf{1}_N \in \mathbb{R}^D$, $\mathbf{1}_N \in \mathbb{R}^N$ is a vector of all ones, and $\otimes$ denotes the Kronecker product.
\end{proposition}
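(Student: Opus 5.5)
Proposition~\ref{prop:attention} has two parts: the model \eqref{eq:MHA} satisfies the hypotheses of Theorem~\ref{theo:main}, and $A(X)$ has the stated form. For the first part, the plan is to check smoothness, $S_d$-symmetry and ZGZ directly. For the second, Theorem~\ref{theo:main} already guarantees $f_x(W)=f_x(0)+\sum_i \Tr[w_iw_i^\top A(X)]+O(\|W\|^3)$, so $A(X)$ is just half the Hessian of a single head's contribution at the origin. I will compute that Hessian by hand.

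\textbf{Hypotheses.} Write $h(w_i)=v_i^\top W_i^V X\, s(w_i)$ with
\begin{equation*}
s(w_i)=\mathrm{softmax}\bigl(X^\top (W_i^K)^\top W_i^Q x/\sqrt{d_k}\bigr),
\end{equation*}
so that $f_x=\sum_i h(w_i)$. Each head's contribution depends only on that head's parameters, so the sum is permutation invariant, and $h$ is $C^\infty$ as a composition of polynomials with softmax. Because $f_x$ is additive over heads, $\nabla_{w_i} f_x=\nabla h(w_i)$ regardless of the other $w_j$, and ZGZ reduces to checking $\nabla h(0)=0$. The factor $v_i^\top W_i^V$ is bilinear and $s$ is smooth, so every first-order term in $h$ contains $v_i$ or $W_i^V$ to first power multiplied by the other, which vanishes at $0$. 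Also $f_x(0)=0$.

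\textbf{Computing the Hessian.} I will Taylor expand $s$ around the origin. Its argument is quadratic in $(W^Q,W^K)$, so
\begin{equation*}
s(w_i)=\tfrac1N\mathbf 1_N+O(\|W^Q\|\|W^K\|).
\end{equation*}
Substituting, $h(w_i)=\tfrac1N v_i^\top W_i^V X\mathbf 1_N + O(\|w_i\|^4)$, since the correction is (bilinear in $v,W^V$)$\times$(bilinear in $W^Q,W^K$). Hence the only quadratic term is $v_i^\top W_i^V x_{avg}$. This term couples the $W^V$ block only to the $v$ block, and the query and key blocks get zero rows and columns.

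\textbf{Matching to the block form.} Using $\mathrm{vec}(W^V)$ in the ordering consistent with the Kronecker convention, I will write
\begin{equation*}
v^\top W^V x_{avg}=\mathrm{vec}(W^V)^\top (x_{avg}\otimes I_{d_v})\,v,
\end{equation*}
since $W^V x_{avg}=(x_{avg}^\top\otimes I_{d_v})\mathrm{vec}(W^V)$ under column-stacking. Symmetrizing this as $w^\top A w$ with off-diagonal blocks $\tfrac12(x_{avg}\otimes I_{d_v})$ and its transpose gives $\Tr[w_iw_i^\top A(X)]$, which reproduces the stated matrix.

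The main obstacle I expect is keeping the vec/Kronecker index conventions consistent, so that the block comes out as $x_{avg}\otimes I_{d_v}$ rather than $I_{d_v}\otimes x_{avg}$. I would fix column-major vectorization at the outset. The rest is routine bookkeeping, including the observation that the query and key parameters first appear at quartic order.
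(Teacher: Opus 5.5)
Your proposal is correct and follows essentially the same route as the paper. You verify smoothness, $S_d$-symmetry and ZGZ, using that each head depends only on its own parameters and that every gradient component keeps a factor of $v_i$ or $W_i^V$. You then expand the softmax as $\frac1N\mathbf{1}_N+O(\|W_i^Q\|\|W_i^K\|)$, so the only quadratic term is $v_i^\top W_i^V x_{avg}$, and you vectorize it via $\mathrm{vec}(W_i^V x_{avg})=(x_{avg}^\top\otimes I_{d_v})\mathrm{vec}(W_i^V)$. Your column-major convention gives exactly the stated off-diagonal blocks $\tfrac12(x_{avg}\otimes I_{d_v})$ and $\tfrac12(x_{avg}^\top\otimes I_{d_v})$, so the Kronecker-ordering concern you raised resolves as you expected.
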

This means that to leading order, what determines the behavior of a self-attention head are the value and output matrices, and the self-attention behaves like an averaging operator. In fact, the query and key weights appear only at quartic order.

\paragraph{Query-Key-Only Head.} Now, consider a variant of MHA where the readout vector and value matrix are fixed and merged into a constant vector $c \in \mathbb{R}^D$. While this is rarely the case in practice, it is actually a common theoretical model of the learning dynamics of attention (e.g., \cite{ataee2023max,maulen2026attention}, or in which the readout
and query-key matrices are trained separately \cite{wang2025learning}). The model evaluates a scalar output as:
\begin{equation}
f_x(w_1, \dots, w_d) = \sum_{i=1}^d c^\top  X \cdot \mathrm{softmax} \left( \frac{X^\top  (W_i^K)^\top  W_i^Q x}{\sqrt{d_k}} \right)
\label{eq:Query-Key-Only}
\end{equation}
Let the parameters for the $i$-th head be the query and key matrices $w_i = [\mathrm{vec}(W_i^Q)^\top , \mathrm{vec}(W_i^K)^\top ]^\top  \in \mathbb{R}^{2 d_k D}$.
\begin{proposition}
\label{prop:MHA_variant}
Model \eqref{eq:Query-Key-Only} satisfies all assumptions in Theorem \ref{theo:main}. Furthermore, $A(X)$ takes the block form:
\begin{equation}
A(X) = \frac{1}{2 \sqrt{d_k}}
\begin{bmatrix}
0 & x u^\top  \otimes I_{d_k} \\
u x^\top  \otimes I_{d_k} & 0
\end{bmatrix}
\end{equation}
where $u = \Sigma_X c \in \mathbb{R}^D$, and $\Sigma_X = \frac{1}{N} X X^\top  - (\frac{1}{N} X \mathbf{1}_N)(\frac{1}{N} X \mathbf{1}_N)^\top  \in \mathbb{R}^{D \times D}$ is the sample covariance matrix.
\end{proposition}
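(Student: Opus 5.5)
The plan is to verify the three hypotheses of Theorem~\ref{theo:main} directly, and then to read off $A(X)$ from an explicit second-order expansion of a single head. Since the module is a sum of identical per-head functions, the same $A(X)$ then serves for every head.

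\textbf{Hypotheses.} Write the $i$-th head as $h(w_i)=c^\top X\,\mathrm{softmax}(z_i)$ with logits $z_i = X^\top S_i x/\sqrt{d_k}$ and $S_i=(W_i^K)^\top W_i^Q\in\mathbb{R}^{D\times D}$.
\begin{itemize}
\item \emph{Permutation symmetry.} We have $f_x=\sum_i h(w_i)$ with the same $h$ for every head, so relabeling heads leaves $f_x$ unchanged.
\item \emph{Smoothness.} $h$ is a composition of a polynomial map $w_i\mapsto z_i$ with the real-analytic softmax, so $f_x$ is $C^\infty$, in particular $C^3$.
\item \emph{ZGZ.} $\nabla_{w_i}f_x=\nabla h(w_i)$ does not depend on the other heads. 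Since $z_i$ is bilinear in $(W_i^Q,W_i^K)$, its differential vanishes at $w_i=0$, and by the chain rule $\nabla h(0)=0$.
\end{itemize}

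\textbf{Second-order expansion.} Expand the softmax about zero logits:
\[
\mathrm{softmax}(z)=\tfrac1N\mathbf{1}_N+\tfrac1N\bigl(I_N-\tfrac1N\mathbf{1}_N\mathbf{1}_N^\top\bigr)z+O(\|z\|^2).
\]
Because $z_i=O(\|w_i\|^2)$, the remainder is $O(\|w_i\|^4)$. Substituting gives
\[
h(w_i)=c^\top x_{avg}+\tfrac{1}{\sqrt{d_k}}\,c^\top\Bigl[\tfrac1N X\bigl(I_N-\tfrac1N\mathbf{1}_N\mathbf{1}_N^\top\bigr)X^\top\Bigr]S_i x+O(\|w_i\|^4).
\]
The bracket is exactly $\Sigma_X$, so the quadratic term is
\[
\tfrac{1}{\sqrt{d_k}}\,u^\top (W_i^K)^\top W_i^Q x=\tfrac1{\sqrt{d_k}}(W_i^K u)^\top(W_i^Q x).
\]
Summing over heads, $f_x(0)=d\,c^\top x_{avg}$. (The error is in fact $O(\|W\|^4)$, consistent with the $Z_2$ remark, since $h(-w_i)=h(w_i)$.)

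\textbf{Reading off $A(X)$.} Use the column-stacking identity $Mv=(v^\top\otimes I_{d_k})\mathrm{vec}(M)$ for any $M\in\mathbb{R}^{d_k\times D}$ and $v\in\mathbb{R}^D$. Then
\[
(W^K u)^\top(W^Q x)=\mathrm{vec}(W^Q)^\top\,(x\otimes I_{d_k})(u^\top\otimes I_{d_k})\,\mathrm{vec}(W^K)=\mathrm{vec}(W^Q)^\top\,(xu^\top\otimes I_{d_k})\,\mathrm{vec}(W^K).
\]
This is a bilinear form between the $Q$ and $K$ blocks of $w_i$. Writing it as $w_i^\top A w_i=\mathrm{Tr}[w_iw_i^\top A]$ with $A$ symmetric forces the $(Q,K)$ block to be $\frac{1}{2\sqrt{d_k}}(xu^\top\otimes I_{d_k})$ and the $(K,Q)$ block to be its transpose, $\frac{1}{2\sqrt{d_k}}(ux^\top\otimes I_{d_k})$. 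The diagonal blocks are zero because the form has no $QQ$ or $KK$ terms.

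Uniqueness is automatic: $A$ must equal one half of the per-head Hessian at zero, and that Hessian is symmetric. The main obstacle is only bookkeeping, namely keeping the vec ordering and the Kronecker transposes consistent with the stated block form and not losing the factor $\tfrac12$ from symmetrization.
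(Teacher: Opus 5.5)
Your proposal is correct and follows essentially the same route as the paper. Both proofs verify smoothness, permutation symmetry and ZGZ (the logits are bilinear in $(W_i^Q,W_i^K)$), expand the softmax to first order at zero logits so that $X J_S(0) X^\top=\Sigma_X$ appears, and convert $\frac{1}{\sqrt{d_k}}(W_i^K u)^\top(W_i^Q x)$ into the stated Kronecker block form via vec identities; you write the form as $\mathrm{vec}(W^Q)^\top(xu^\top\otimes I_{d_k})\mathrm{vec}(W^K)$ rather than its transpose, which is the same scalar, and your additional remarks on uniqueness (via $A=\frac12 H_{ii}$) and on the $O(\|W\|^4)$ error are correct.
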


\paragraph{Single-head attention logits.}
Finally, we can also consider the single-head attention model, which can, interestingly, also be regarded as an independent NQF because the query and key weight matrices contain the permutation symmetry as a subgroup.
\begin{proposition}
\label{prop:SingleHead_Row}
By treating the $i$-th rows of $W^Q$ and $W^K$ as $w_i \in \mathbb{R}^{2D}$, the single-head attention model 
\begin{equation}
f_x(w_1, \dots, w_{d_k}) = c^\top  X \cdot \mathrm{softmax} \left( \frac{X^\top  (W^K)^\top  W^Q x}{\sqrt{d_k}} \right)
\end{equation}
satisfies all assumptions in Theorem \ref{theo:main}. Furthermore, $A(X) \in \mathbb{R}^{2D \times 2D}$ takes the block form:
\begin{equation}
A(X) = \frac{1}{2 \sqrt{d_k}}
\begin{bmatrix}0 & x u^\top  \\u x^\top  & 0
\end{bmatrix}
\end{equation}
where $u = \Sigma_X c \in \mathbb{R}^D$, and $\Sigma_X$ is the sample covariance matrix.
\end{proposition}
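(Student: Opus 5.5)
The plan is to verify the three hypotheses of Theorem~\ref{theo:main} directly and then read off $A(X)$ from an explicit second-order expansion, exactly as for Propositions~\ref{prop:attention} and \ref{prop:MHA_variant}. I write $q_i^\top$ and $k_i^\top$ for the $i$-th rows of $W^Q$ and $W^K$, and $w_i=(q_i,k_i)\in\mathbb{R}^{2D}$, with the query block first. The key identity is $(W^K)^\top W^Q=\sum_{i=1}^{d_k} k_i q_i^\top$. Hence the logit vector is
\begin{equation*}
s(W)=\frac{1}{\sqrt{d_k}}\,X^\top\Big(\sum_{i=1}^{d_k} k_i q_i^\top\Big)x .
\end{equation*}

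The hypotheses are then checked as follows.
\begin{itemize}
\item \emph{Permutation symmetry.} Permuting the indices $i$ (the same row permutation applied to $W^Q$ and $W^K$) leaves this sum, and hence $f_x$, unchanged.
\item \emph{Smoothness.} $f_x=c^\top X\,\mathrm{softmax}(s(W))$ is a composition of the polynomial map $W\mapsto s(W)$ with the real-analytic softmax, so it is $C^\infty$, in particular $C^3$.
\item \emph{ZGZ.} The only dependence of $f_x$ on $w_i$ is through the bilinear term $k_iq_i^\top$. Its derivative with respect to $q_i$ is linear in $k_i$, and its derivative with respect to $k_i$ is linear in $q_i$. Both vanish at $w_i=0$, whatever the other $w_j$ are. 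By the chain rule, $\nabla_{w_i}f_x|_{w_i=0}=0$.
\end{itemize}

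Next I compute the quadratic term. Since $s(W)$ is homogeneous of degree two, $\|s\|=O(\|W\|^2)$. Taylor-expanding the softmax about $0$ gives
\begin{equation*}
\mathrm{softmax}(s)=\tfrac1N\mathbf 1_N+Js+O(\|s\|^2),\qquad J=\tfrac1N\big(I_N-\tfrac1N\mathbf 1_N\mathbf 1_N^\top\big).
\end{equation*}
The remainder is $O(\|W\|^4)$, so
\begin{equation*}
f_x(W)=f_x(0)+c^\top XJs+O(\|W\|^4).
\end{equation*}
Now expand the leading coefficient:
\begin{equation*}
c^\top XJX^\top=c^\top\Big(\tfrac1N XX^\top-x_{avg}x_{avg}^\top\Big)=c^\top\Sigma_X=u^\top .
\end{equation*}
The last step uses that $\Sigma_X$ is symmetric. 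The quadratic part is therefore
\begin{equation*}
\frac{1}{\sqrt{d_k}}\sum_i (u^\top k_i)(q_i^\top x).
\end{equation*}
Writing each summand as $w_i^\top A w_i$ with $A$ symmetric gives off-diagonal blocks $\frac{1}{2\sqrt{d_k}}xu^\top$ (query row, key column) and its transpose, with zero diagonal blocks. Indeed,
\begin{equation*}
w_i^\top A w_i=2\cdot\frac{1}{2\sqrt{d_k}}\,q_i^\top x u^\top k_i,
\end{equation*}
which reproduces each summand. This is the claimed $A(X)$.

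Uniqueness of the symmetric $A$ is automatic. Theorem~\ref{theo:main}'s quadratic form is determined by the Hessian at $0$, and a quadratic form determines its symmetric matrix. So no further identification is needed.

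The only step needing care is the softmax Jacobian and its contraction. One must correctly obtain the centering term, so that the result is the covariance $\Sigma_X$ and not the raw second moment $\tfrac1N XX^\top$, and keep the block ordering $(q,k)$ consistent so that $xu^\top$, not $ux^\top$, sits in the upper-right block. I would double-check this by differentiating $\mathrm{softmax}_n(s)=e^{s_n}/\sum_m e^{s_m}$ at $s=0$ explicitly.
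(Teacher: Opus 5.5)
Your proposal is correct and follows essentially the same route as the paper: you check smoothness, use the permutation invariance of $(W^K)^\top W^Q=\sum_i k_i q_i^\top$, verify ZGZ, and then read $A(X)$ off the first-order softmax expansion, where the centering in the softmax Jacobian produces $\Sigma_X$ and hence $u=\Sigma_X c$. Your ZGZ check, which applies the chain rule through the bilinear term $k_i q_i^\top$ of the full softmax model for arbitrary $w_{j\neq i}$, is slightly more careful than the paper's, which differentiates only the quadratic term $E(W)$.
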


\paragraph{Mixture of Experts.} Consider a Mixture of Experts (MoE) model computing a scalar output $f_x$ from an input $x \in \mathbb{R}^k$. Let the parameters associated with the $i$-th expert branch be grouped into a single vector $w_i = [r_i^\top , \mathrm{vec}(\Theta_i)^\top ]^\top $, where $r_i \in \mathbb{R}^k$ is the routing (gating) weight vector, and $\Theta_i$ represents the internal parameters of the expert network $E(x; \Theta_i)$. We employ an independent gating mechanism \cite{nguyen2024sigmoid}:
\begin{equation}
f_x(w_1, \dots, w_d) = \sum_{i=1}^d \psi(r_i^\top  x) E(x; \Theta_i)
\label{eq:MoE}
\end{equation}
where $\psi: \mathbb{R} \to \mathbb{R}$ is a smooth gating activation function (e.g., Tanh or GeLU) satisfying $\psi(0) = 0$. Each expert $E(x; \Theta_i)$ is a smooth neural network satisfying $E(x; 0) = 0$.
\begin{proposition}
\label{prop:MoE}
Assuming the gating function $\psi$ and the expert networks $E$ are at least three times continuously differentiable, the MoE model \eqref{eq:MoE} satisfies the assumption in Theorem \ref{theo:main} with
\begin{equation}
A(x) = \frac{1}{2} \begin{bmatrix} 0 & \psi'(0) x \nabla_{\Theta_i} E(x; 0)^\top  \\ \psi'(0) \nabla_{\Theta_i} E(x; 0) x^\top  & 0 \end{bmatrix}.
\end{equation}
\end{proposition}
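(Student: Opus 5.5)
The statement to prove is Proposition~\ref{prop:MoE}. The plan is to verify the two hypotheses of Theorem~\ref{theo:main} directly and then read off $A(x)$ from the Hessian at the origin.

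\textbf{Permutation symmetry.} Each summand $\psi(r_i^\top x)E(x;\Theta_i)$ depends only on $w_i=(r_i,\Theta_i)$, and the same function is applied to every $i$. Hence $f_x$ is a sum of identical functions of the neurons and is invariant under any relabeling $\sigma$.

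\textbf{Regularity.} $f_x$ is $C^3$ because $\psi$ and $E$ are $C^3$ and products and compositions of $C^3$ maps are $C^3$.

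\textbf{ZGZ.} Since neuron $i$ enters only its own summand, $\nabla_{w_i}f_x$ does not depend on $w_{j\neq i}$. Its two blocks are
\[
\nabla_{r_i}=\psi'(r_i^\top x)\,x\,E(x;\Theta_i)
\quad\text{and}\quad
\nabla_{\Theta_i}=\psi(r_i^\top x)\,\nabla_\Theta E(x;\Theta_i).
\]
At $w_i=0$ the first block vanishes because $E(x;0)=0$, and the second vanishes because $\psi(0)=0$. So ZGZ holds, and Theorem~\ref{theo:main} applies.

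\textbf{Computing $A(x)$.} Because $f_x$ is additively separable across neurons, its Hessian at $0$ is block diagonal over neurons with identical blocks $H(x)$. The second-order Taylor term is then $\sum_i \tfrac12 w_i^\top H w_i=\sum_i\Tr[w_iw_i^\top A]$ with $A=H/2$, which is symmetric. It remains to expand a single summand $g(r,\Theta)=\psi(r^\top x)E(x;\Theta)$ to second order:
\[
\psi(r^\top x)=\psi'(0)\,r^\top x+O(\|r\|^2),
\qquad
E(x;\Theta)=G^\top\Theta+O(\|\Theta\|^2),
\]
where $G=\nabla_\Theta E(x;0)$ and $\Theta$ is vectorized. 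The product is therefore
\[
g=\psi'(0)\,(r^\top x)(G^\top\Theta)+O(\|w\|^3),
\]
which contains no pure $rr$ or $\Theta\Theta$ terms. This means the $rr$ and $\Theta\Theta$ Hessian blocks are zero, and the mixed block is $\partial_r\partial_\Theta g=\psi'(0)\,xG^\top$. Writing $\psi'(0)(r^\top x)(G^\top\Theta)=\tfrac12\,\psi'(0)\,\bigl(r^\top xG^\top\Theta+\Theta^\top Gx^\top r\bigr)$ gives exactly $w^\top A w$ with the stated off-diagonal blocks $\tfrac12\psi'(0)\,xG^\top$ and $\tfrac12\psi'(0)\,Gx^\top$.

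\textbf{Main care point.} The only subtlety is confirming that the diagonal blocks vanish. The $rr$ block is $\psi''(r^\top x)\,xx^\top E(x;\Theta)$, which is zero at the origin because $E(x;0)=0$. The $\Theta\Theta$ block is $\psi(r^\top x)\,\nabla^2_\Theta E(x;\Theta)$, which is zero at the origin because $\psi(0)=0$. Each of these vanishes by one of the two normalizations, and the remaining step is just keeping the vectorization of $\Theta$ consistent with the block ordering of $w_i$.
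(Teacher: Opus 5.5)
Your proposal is correct and follows the paper's proof essentially step for step: $C^3$ regularity, permutation symmetry, and ZGZ, the last because the $r_i$-gradient vanishes when $E(x;0)=0$ and the $\Theta_i$-gradient vanishes when $\psi(0)=0$. The paper only says the computation of $A(x)$ is ``similar to'' the MLP case, so your explicit check fills in that omitted detail. You show that the $rr$ and $\Theta\Theta$ Hessian blocks vanish and that the mixed block is $\psi'(0)\,x\,\nabla_{\Theta}E(x;0)^\top$.
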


\paragraph{Convolutional Neural Networks.}
Consider a single-layer Convolutional Neural Network (CNN) computing a scalar output $f_x$ from an input feature map $X$. Let the parameters associated with the $i$-th convolutional channel be grouped into a single vector $w_i = [k_i^\top , v_i]^\top $, where $k_i \in \mathbb{R}^m$ represents the flattened weights of the convolutional filter, and $v_i \in \mathbb{R}$ is the readout weight applied after pooling.

Assuming the network uses global sum-pooling over the $P$ spatial patches of the input (denoted as $x_p \in \mathbb{R}^m$ for $p = 1, \dots, P$), the model is formulated as:
\begin{equation}
    f_x(w_1, \dots, w_d) = \sum_{i=1}^d v_i \sum_{p=1}^P \phi(k_i^\top  x_p),
\label{eq:CNN}
\end{equation}
where $\phi: \mathbb{R} \to \mathbb{R}$ is the activation function.

\begin{proposition}
If $\phi$ is three times continuously differentiable and satisfies $\phi(0) = 0$, then the CNN \eqref{eq:CNN} satisfies the assumption in Theorem \ref{theo:main} with
\begin{equation}
A(X) = \frac{\phi'(0)}{2} \begin{bmatrix} 0_{m \times m} & \sum_{p=1}^P x_p \\ \sum_{p=1}^P x_p^\top  & 0 \end{bmatrix}.
\end{equation}
\label{prop:CNN}
\end{proposition}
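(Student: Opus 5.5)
We have to prove the CNN proposition: show the model $f_X(W)=\sum_i v_i\sum_p \phi(k_i^\top x_p)$ is $C^3$, $S_d$-symmetric and satisfies ZGZ, and then compute the Hessian at zero to read off $A(X)$. The most economical route reuses Proposition~\ref{prop:MLP}. Since $\phi$ is $C^3$, each summand $v_i\phi(k_i^\top x_p)$ is $C^3$ in $(k_i,v_i)$, and so is the finite sum. Permutation symmetry is immediate, because $f_X$ is a sum over $i$ of one fixed function $g_X(w_i)=v_i\sum_p\phi(k_i^\top x_p)$ applied to each neuron. Relabeling the channels permutes the summands and leaves the sum unchanged.

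For ZGZ, the additive separability means $\nabla_{w_i} f_X=\nabla g_X(w_i)$, independent of $w_{j\neq i}$. So it suffices to check $\nabla g_X(0)=0$. We have
\[
\nabla_{k_i} g_X = v_i\sum_p \phi'(k_i^\top x_p)\,x_p,
\qquad
\nabla_{v_i} g_X=\sum_p\phi(k_i^\top x_p).
\]
At $w_i=0$ the first term vanishes because $v_i=0$, and the second vanishes because $\phi(0)=0$.

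For the structure matrix, I would use the fact that under these hypotheses the quadratic term in Theorem~\ref{theo:main} is $\sum_i \tfrac12 w_i^\top \nabla^2 g_X(0)\, w_i$. The reason is that $f_X(0)=0$, the gradient vanishes, and mixed Hessian blocks between distinct neurons vanish by separability. Hence $A(X)=\tfrac12\nabla^2 g_X(0)$. Differentiating the gradient above and evaluating at zero gives three blocks:
\begin{itemize}
\item the $kk$-block is $v_i\sum_p\phi''(0)x_px_p^\top=0$;
\item the $vv$-block is $0$;
\item the mixed block is $\partial_{v_i}\nabla_{k_i}g_X=\sum_p\phi'(0)x_p$.
\end{itemize}
Therefore
\[
A(X)=\frac{\phi'(0)}{2}\begin{bmatrix}0_{m\times m} & \sum_p x_p\\ \sum_p x_p^\top & 0\end{bmatrix}.
\]

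As a cross-check, linearity in $x$ of the MLP structure matrix of Proposition~\ref{prop:MLP} shows this equals that matrix evaluated at $\sum_p x_p$. The CNN is thus a sum over patches of MLPs sharing weights, and its quadratic forms add.

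No step is a real obstacle. The only point needing care is justifying that the Theorem~\ref{theo:main} form coincides with $\tfrac12$ of the per-neuron Hessian, which follows from separability via a second-order Taylor expansion with $C^3$ remainder.
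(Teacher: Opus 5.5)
Your proposal is correct and follows essentially the paper's route. The paper simply says the proof is identical to that of Proposition~\ref{prop:MLP}: check smoothness, permutation symmetry and ZGZ, then compute the per-neuron Hessian blockwise at the origin. That is exactly what you do, with $\sum_p x_p$ playing the role of $x$, and your observation that the CNN structure matrix is $\sum_p$ of the MLP structure matrix at $x_p$ (by linearity in $x$) makes the paper's ``identical proof'' remark explicit.
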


\paragraph{Notions of Layer and Representation.} While the form $\Tr[WAW^\top ]$ is universal, the matrix $A$ always takes a sparse and strongly off-diagonal form. In particular, for any neural layer above,
\begin{equation}
    A = \frac{1}{2}\begin{bmatrix}
        0 & C(x)\\
        C^\top (x) & 0 
    \end{bmatrix},
\end{equation}
where $C \in \mathbb{R}^{d_1 \times d_2}$ is an arbitrary rectangular matrix with suitable dimensions. This means that one can also divide $W$ into corresponding blocks:
$W = \begin{bmatrix} Z_1 \\ Z_2 \end{bmatrix}$, 
such that 
\begin{equation}
M = WW^\top = \begin{bmatrix} Z_1 Z_1^\top  & Z_1 Z_2^\top  \\ Z_2 Z_1^\top  & Z_2 Z_2^\top  \end{bmatrix}
\end{equation}
This implies that the output of the model only depends on $Z_2Z_1^\top $: $f_x = \Tr[Z_2 Z_1^\top  C]$. In the case of a vector $C=x$, $Z_1$ is also a vector and so $f_x = Z_2Z_1^\top  x$, a two-layer linear network. Therefore, one can abstractly think of $Z_1^\top$ as the first-layer weight and $Z_2 $ as the second-layer weight. The representation also becomes definable. The first-layer latent representation of $C(x)$ is thus
\begin{equation}
    h(x)= Z_1^\top C(x).
\end{equation}
For a structured model, such as above, it is possible to talk about a latent representation and define the notion of layers. Whenever the matrix $A$ can be written in this form, we refer to the model as a ``feedforward" model.

\section{Learning Dynamics of the NQF}\label{sec:dynamics}
Sections~\ref{sec:theory} and~\ref{sec:example} concern a model when expanded, and in this section, we study the learning dynamics of these models. A key question this section answers is how many numbers are needed to follow training. The answer is that the number does not grow with the model: whatever the width, the trajectory of an NQF is a flow on the pair $(M,\mu)$. This is stated as Theorem~\ref{theo:master}. One consequence follows immediately: learning is highly redundant, in that a model can be compressed with no change to its trajectory at all (Theorem~\ref{theo:compressibility}). 

We work throughout with the most general form of NQF without the ZGZ condition (See Appendix \ref{app:remove_ZGZ}). If we denote $M:=\sum_{i=1}^dw_iw_i^\top $ and $\mu:=\sum_{i=1}^dw_i$, we can write a general NQF as
\begin{equation}\label{eq:NQF-general}
f_x(W)=f_x(0)+g(x)^\top \mu+\mu^\top B(x)\mu+\Tr[MA(x)].
\end{equation}
Without loss of generality, $A$ and $B$ matrices can always be regarded as symmetric matrices.

\begin{theorem}[Master Theorem for NQF]
\label{theo:master}
Let $A(x), B(x)$ be symmetric. Under SGD
\begin{equation}
\Delta W=-\eta\sum_{x \in \mathcal{B}}\nabla_W\mathcal{L}(f_x(W))
\end{equation}
with identical data sampling, two neural quadratic models have $M_a(t) = M_b(t),\ \mu_a(t)=\mu_b(t)$ at any time $t$ if at initialization:
\begin{equation}
M_a(0) = M_b(0),\ \mu_a(0)=\mu_b(0).
\end{equation}
The learning dynamics are completely determined by $M$, $\mu$:
\begin{equation}
\Delta \mu = -\eta \big( d \cdot v + H \mu \big)
\end{equation}
\begin{equation}
\Delta M = -\eta \big( v \mu^\top  + \mu v^\top  + H M + M H \big) + \eta^2 \big( d \cdot v v^\top  + v \mu^\top  H + H \mu v^\top  + H M H \big)
\end{equation}
where $d$ is the number of neurons, $v := \sum_{x \in \mathcal{B}} \ell'_x \big(g(x) + 2B(x)\mu \big)$ and $H := \sum_{x \in \mathcal{B}} 2 \ell'_x A(x)$. Here we denote $\ell'_x := \frac{\partial \mathcal{L}}{\partial f_x}$ and $\mathcal{B}$ denotes a minibatch.
\end{theorem}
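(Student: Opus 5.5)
The plan is a direct computation of the per-neuron gradient, followed by aggregating the update over neurons. From the general NQF \eqref{eq:NQF-general}, with $A(x),B(x)$ symmetric, the gradient with respect to a single neuron is
\begin{equation*}
\nabla_{w_i} f_x(W) = g(x) + 2B(x)\mu + 2A(x)w_i ,
\end{equation*}
since $\partial\mu/\partial w_i = I$ and $\partial \Tr[MA]/\partial w_i = 2Aw_i$. The chain rule then gives $\nabla_{w_i}\mathcal{L}(f_x) = \ell'_x\big(g(x)+2B(x)\mu + 2A(x)w_i\big)$. Summing over the minibatch, the SGD step for each neuron is
\begin{equation*}
\Delta w_i = -\eta\,(v + H w_i),
\end{equation*}
with $v$ and $H$ exactly as defined in Theorem~\ref{theo:master}. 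The key structural point is that $v$ and $H$ are common to all neurons. Each $\ell'_x$ depends on $W$ only through $f_x(W)$, and by \eqref{eq:NQF-general} $f_x(W)$ depends on $W$ only through $(M,\mu)$. Hence $v$ and $H$ are functions of $(M,\mu)$ and of the sampled batch alone.

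Next I aggregate the updates. Summing $\Delta w_i$ over $i$ gives $\Delta\mu = -\eta(d\,v + H\mu)$. For $M$, I expand
\begin{equation*}
(w_i+\Delta w_i)(w_i+\Delta w_i)^\top - w_iw_i^\top = w_i\Delta w_i^\top + \Delta w_i w_i^\top + \Delta w_i\Delta w_i^\top ,
\end{equation*}
and sum over $i$, using $\sum_i w_i=\mu$, $\sum_i w_iw_i^\top = M$ and $\sum_i 1 = d$. The first-order terms give $-\eta(\mu v^\top + M H + v\mu^\top + HM)$, where the symmetry of $H$ is used so that $(Hw_i)^\top = w_i^\top H$. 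The second-order term is $\eta^2\sum_i (v+Hw_i)(v+Hw_i)^\top = \eta^2(d\,vv^\top + v\mu^\top H + H\mu v^\top + HMH)$. This reproduces the stated formulas.

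The equivalence claim then follows by induction on the step index. Suppose two models with the same $f_x(0)$, $g$, $A$, $B$ (the same NQF) have $M_a(t)=M_b(t)$ and $\mu_a(t)=\mu_b(t)$, and both draw the same minibatch at step $t$. Then they have equal $f_x$ on every sample, hence equal $\ell'_x$, $v$ and $H$. By the update formulas just derived, their increments $\Delta M$ and $\Delta\mu$ also coincide, which closes the induction.

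There is no genuine obstacle here; the proof is a calculation. The only step that needs care is the justification that $\ell'_x$ (and with it $v$ and $H$) is a function of $(M,\mu)$ only. This is what makes the recursion closed rather than merely expressible in $M$ and $\mu$ at one instant. I would state it explicitly, and also state the implicit assumption that $\mathcal{L}$ depends on $W$ only through the outputs $f_x$, with no weight decay or other regularizer acting on individual $w_i$.
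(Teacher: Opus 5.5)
Your proof is correct and is the same computation as the paper's. The paper works with the whole matrix: it writes $\nabla_W f_x = (g(x)+2B(x)\mu)\mathbf{1}^\top + 2A(x)W$, so that $\Delta W = -\eta(v\mathbf{1}^\top + HW)$, and then reads off $\Delta\mu=\Delta W\mathbf{1}$ and $\Delta M = W\Delta W^\top + \Delta W W^\top + \Delta W\Delta W^\top$, which is your column-by-column sum in matrix form. Your explicit remark that $\ell'_x$, and hence $v$ and $H$, depends on $W$ only through $(M,\mu)$ is exactly the step the paper's concluding ``therefore'' leaves implicit; because of the $d\cdot v$ term, ``the same NQF'' should also be read as including the same width $d$.
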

This theorem can be extended to a form that describes a broad class of optimization methods including vanilla SGD, SGD with weight decay, and Polyak Momentum. With weight decay, a simple decay $-\gamma M$ term appears in the equation, where $\gamma$ is the weight decay strength. See Appendix \ref{app:linear_update}. 

A deep neural network can often be viewed as a composition of multiple layers, each of which can be approximated by an NQF model. We theoretically and empirically study multilayer NQFs in Appendix \ref{app:multi-layer}.

\paragraph{Compressibility of NQF.}
A key implication of Theorem~\ref{theo:master} is that the learning process of neural networks is highly redundant. We will say that two neural networks $f_x(\theta)$ and $g_x(\theta')$ have the same learning dynamics if for any $x$,
\begin{equation}
    f_x(\theta_t) = g_x(\theta_t').
\end{equation}
Namely, the identity is defined on the functional side, and it is possible for two models to have the same dynamics even if they have different parameters.

Actually, there exist infinitely many parameter configurations whose learning dynamics are identical. This also implies that the learning process of a very large neural network is identical to and can be fully captured by a much smaller network -- a phenomenon that has been termed the ``dynamical lottery ticket hypothesis" \cite{frankle2018lottery,wang2026universal}.

\begin{theorem}
\label{theo:compressibility}
Let an NQF have $d$ neurons, defined by functions $g(x), B(x), A(x)$, learning rate $\eta$, and initial parameter statistics $\mu(0)$ and $M(0)$. Let $k_{\mathcal{V}} := \dim \left( \text{span} \bigcup_{x \in \mathcal{X}} \Big( \{g(x)\} \cup \text{Col}(A(x)) \cup \text{Col}(B(x)) \Big) \right) \leq p$ represent the dimension of the joint subspace spanned by the vectors $g(x)$ and all column vectors of $B(x)$ and $A(x)$ across all $x \in \mathcal{X}$, where $\mathcal{X}$ denotes the training set. If $d > k_{\mathcal{V}}+1$, then there exists a smaller NQF with $d' = k_{\mathcal{V}}+1$ neurons, characterized by $\tilde{g}(x) = g(x)$, $\tilde{B}(x) = B(x)$, $\tilde{A}(x) = \frac{d'}{d} A(x)$ and learning rate $\tilde{\eta} = \frac{d}{d'} \eta$, such that:
\begin{enumerate}[noitemsep,topsep=0pt, parsep=0pt,partopsep=0pt, leftmargin=13pt]
    \item $f_{d'} = f_d$ for all inputs $x\in\mathcal{X}$;
    \item $f_{d'} = f_d$ for all training steps $t \ge 0$ under SGD with identical data sampling.
\end{enumerate}
\end{theorem}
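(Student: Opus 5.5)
The plan is to show that training only ever sees the projections of $(\mu,M)$ onto the subspace $\mathcal{V}$ from the statement. Those projections obey a closed recursion in which the width $d$ enters only through the factors already visible in Theorem~\ref{theo:master}, and the rescaling $\tilde A=\frac{d'}{d}A$, $\tilde\eta=\frac{d}{d'}\eta$ absorbs the change of width. What remains is to check that a width-$(k_{\mathcal V}+1)$ configuration realizing the required rescaled moments exists.

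\emph{Step 1 (only projected moments matter).} Let $P$ be the orthogonal projector onto $\mathcal{V}$. Every $g(x)$ lies in $\mathcal V$, and $A(x),B(x)$ are symmetric with columns in $\mathcal V$, so $g=Pg$, $A=PAP$ and $B=PBP$. Hence Eq.~\eqref{eq:NQF-general} depends on the parameters only through $m:=P\mu$ and $S:=PMP$:
\[
f_x=f_x(0)+g^\top m+m^\top B m+\Tr[SA].
\]
In Theorem~\ref{theo:master} the loss derivative $\ell'_x$ depends only on $f_x$. Then $v=\sum\ell'_x(g+2Bm)$ lies in $\mathcal V$, and $H=PHP$.

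\emph{Step 2 (closed projected dynamics).} I apply $P$ to the updates of Theorem~\ref{theo:master}. Using $H=HP=PH$ and $Pv=v$, I get
\[
\Delta m=-\eta(d\,v+Hm),
\]
\[
\Delta S=-\eta(vm^\top+mv^\top+HS+SH)+\eta^2(d\,vv^\top+vm^\top H+Hmv^\top+HSH).
\]
For example, $PHMP=HPMP=HS$. So $(m,S)$ evolve autonomously given the data sampling.

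\emph{Step 3 (rescaling).} For the small model I impose $\tilde m=m$ and $\tilde S=\frac{d}{d'}S$. With $\tilde g=g$, $\tilde B=B$ and $\tilde A=\frac{d'}{d}A$, this gives $\tilde f_x=f_x$, so $\tilde\ell'_x=\ell'_x$, $\tilde v=v$ and $\tilde H=\frac{d'}{d}H$. I then check term by term that the invariant is preserved. For $\Delta\tilde m$, $\tilde\eta d'=\eta d$ and $\tilde\eta\tilde H=\eta H$. Each term of $\Delta\tilde S$ picks up exactly the factor $d/d'$: for instance $\tilde\eta^2 d'vv^\top=\frac{d}{d'}\eta^2 d\,vv^\top$ and $\tilde\eta^2\tilde H\tilde S\tilde H=\frac{d}{d'}\eta^2HSH$. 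Induction over steps then gives $f_{d'}=f_d$ at every $t$ and every $x\in\mathcal X$. This is claim~2, and claim~1 is the case $t=0$.

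\emph{Step 4 (realizability at $t=0$; main obstacle).} I must exhibit $d'=k_{\mathcal V}+1$ vectors $\tilde w_i\in\mathcal V$ with $\sum_i\tilde w_i=m$ and $\sum_i\tilde w_i\tilde w_i^\top=\frac{d}{d'}S$. A necessary condition comes from Cauchy--Schwarz applied to the original neurons, which gives $S\succeq mm^\top/d$. It follows that
\[
C:=\tfrac{d}{d'}S-\tfrac{1}{d'}mm^\top=\tfrac{d}{d'}\big(S-\tfrac1d mm^\top\big)\succeq0,
\]
and $C$ is supported in $\mathcal V$, which has dimension $k:=k_{\mathcal V}$.

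To build the configuration, I take the $k+1$ vertices $e_1,\dots,e_{k+1}$ of a regular simplex in $\mathcal V\cong\mathbb R^k$, scaled so that $\sum_i e_i=0$ and $\sum_i e_ie_i^\top=P$. I then set $\tilde w_i=C^{1/2}e_i+m/d'$. The cross terms vanish because $\sum_i e_i=0$, so the second moment is $C+mm^\top/d'=\frac{d}{d'}S$ and the first moment is $m$, as required. This is exactly where $d'=k_{\mathcal V}+1$ is needed: $k+1$ points are the minimum to carry a zero-mean frame of full rank in $\mathcal V$.

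Components of the original parameters outside $\mathcal V$ are irrelevant by Steps~1--2, so the construction is complete. I would double-check the edge case in which $C$ is singular; the simplex construction still works there, since $C^{1/2}$ is simply degenerate.
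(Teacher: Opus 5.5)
Your proposal is correct and follows the paper's proof essentially step for step. You project onto $\mathcal V$, carry the invariant $\tilde\mu_t=VV^\top\mu_t$, $\tilde M_t=\frac{d}{d'}VV^\top M_tVV^\top$ by induction through the update rules of Theorem~\ref{theo:master} (with $\tilde v=v$ and $\tilde H=\frac{d'}{d}H$), and realize the initial moments with a zero-sum tight frame on $k_{\mathcal V}+1$ points, which is the paper's $U=LZ$ construction. Your one addition is a Cauchy--Schwarz justification that the centered matrix $C$ is positive semidefinite, which the paper asserts without proof.
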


In fact, independent of the original width $d$, there exists a finite-size NQF whose learning dynamics is identical to that of the original. This partially explains the commonly observed low dimensionality of the training dynamics \cite{gur2018gradient,li2018measuring,aghajanyan2021intrinsic}. Moreover, when the data is low-rank, this theorem states that the dimension of the learning dynamics is at most that of the data, a direct explanation of the folklore belief that structures in the data make learning simple. In Appendix \ref{app:compression_error} we further characterize the compression error.

\section{Exactly Solvable Cases of Learning Dynamics}\label{sec:solutions}
The learning dynamics of NQF do not have a general solution. However, with special initialization or data distributions, solutions of the learning dynamics are obtainable. Here, we focus on training NQF with the gradient flow algorithm. We consider the empirical Mean Squared Error (MSE) loss over a dataset of $m$ samples $\{x_\mu, y_\mu\}_{\mu=1}^m$:
\begin{equation}
\mathcal{L}(W) = \frac{1}{m}\sum_{\mu=1}^m \left( \mathrm{Tr}[WW^\top  A(x_\mu)] - y_\mu \right)^2
\end{equation}
where $W = [w_1, \dots, w_d] \in \mathbb{R}^{p \times d}$ is the parameter matrix. 
Let $\Delta_\mu(t) = \mathrm{Tr}[W(t)W(t)^\top  A(x_\mu)] - y_\mu$ denote the residual for the $\mu$-th sample. The gradient flow $\dot{W} = -\nabla_W L(W)$ is given by:
\begin{equation}
\label{eq:W_dynamics}
\dot{W}(t) = -\frac{4}{m} \sum_{\mu=1}^m \Delta_\mu(t) A(x_\mu) W(t) = -H(t) W(t)
\end{equation}
where $H(t) := \frac{4}{m} \sum_{\mu=1}^m \Delta_\mu(t) A(x_\mu) \in \mathbb{R}^{p \times p}$.

Notice that the network's output is determined by the positive semi-definite matrix $M = WW^\top  \in \mathbb{R}^{p \times p}$. By differentiating $M(t)$, we observe that the dynamics naturally close in the space of $M$:
\begin{equation}
\begin{aligned}
\dot{M}(t) &= \dot{W}W^\top  + W\dot{W}^\top  = -H(t)M(t) - M(t)H(t)
\end{aligned}
\label{eq:M_dynamics}
\end{equation}
This kind of equation appears frequently in fluid dynamics and polymer physics, and this dynamical equation can be seen as a zero-dimensional homogeneous flow, and $M$ can be identified as the Reynolds stress \cite{pope2001turbulent}. 
The non-linear matrix differential equation \eqref{eq:M_dynamics} is generally impossible to solve. However, for some special cases it has exact solutions. 

We will assume the following commutativity condition -- this is a common assumption for solving the learning dynamics of exactly solvable models, e.g., \cite{saxe2014exact,gunasekar2017implicit}.
\begin{assumption}
The dataset consists of $m$ mutually commuting symmetric data matrices: $A(x_\mu) = P \Lambda_\mu P^\top $, where $P$ is an orthogonal matrix and $\Lambda_\mu = \mathrm{diag}(\lambda_{\mu, 1}, \dots, \lambda_{\mu, p})$.
\label{assume:commutation}
\end{assumption}
Under Assumption \ref{assume:commutation}, we can project the dynamics into the common eigenbasis defined by $P$. Let $\tilde{M}(t) = P^\top M(t) P$. The residual can be rewritten as:
\begin{equation}
\Delta_\mu(t) = \mathrm{Tr}[M(t) A(x_\mu)] - y_\mu = \mathrm{Tr}[\tilde{M}(t) \Lambda_\mu] - y_\mu = \sum_{k=1}^p \tilde{M}_{kk}(t) \lambda_{\mu, k} - y_\mu.
\label{eq:residual_diagonal}
\end{equation}
Equation \eqref{eq:residual_diagonal} reveals that the network's output and the loss gradient only depend on the diagonal elements of $\tilde{M}(t)$. We define $z_k(t) := \tilde{M}_{kk}(t) \ge 0$ as the $k$-th feature.

\begin{proposition}
\label{prop:decoupling}
Under Assumption \ref{assume:commutation}, the off-diagonal elements are slaved to the diagonal elements and do not influence the predictions or the loss. Their evolution is solved by:
\begin{equation}
\tilde{M}_{ij}(t) = \tilde{M}_{ij}(0) \exp\left( -\int_0^t  \left( \tilde{H}_{ii}(\tau) + \tilde{H}_{jj}(\tau) \right) d\tau \right).
\end{equation}
\end{proposition}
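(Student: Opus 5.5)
The plan is to rotate the matrix flow of Eq.~\eqref{eq:M_dynamics} into the common eigenbasis supplied by Assumption~\ref{assume:commutation}, where it becomes a diagonal, elementwise linear ODE with a time-dependent coefficient. Define $\tilde H(t) := P^\top H(t) P$. Since $H(t)=\frac{4}{m}\sum_\mu \Delta_\mu(t) A(x_\mu)$ and each $A(x_\mu)=P\Lambda_\mu P^\top$, we get
\begin{equation*}
\tilde H(t)=\frac{4}{m}\sum_{\mu}\Delta_\mu(t)\Lambda_\mu .
\end{equation*}
This is diagonal for every $t$, with entries $\tilde H_{kk}(t)=\frac{4}{m}\sum_\mu \Delta_\mu(t)\lambda_{\mu,k}$.

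\textbf{Step 1 (rotated flow).} Conjugate Eq.~\eqref{eq:M_dynamics} by the constant orthogonal $P$, using $PP^\top=I$. This gives $\dot{\tilde M} = -\tilde H\tilde M - \tilde M\tilde H$. Because $\tilde H$ is diagonal, the $(i,j)$ entry reads
\begin{equation*}
\dot{\tilde M}_{ij}(t) = -\bigl(\tilde H_{ii}(t)+\tilde H_{jj}(t)\bigr)\tilde M_{ij}(t).
\end{equation*}
So each entry obeys its own scalar linear ODE, and there is no coupling between distinct entries except through the scalar coefficients $\tilde H_{kk}(t)$.

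\textbf{Step 2 (slaving).} By Eq.~\eqref{eq:residual_diagonal}, the residuals $\Delta_\mu$, and hence the predictions, the loss and every $\tilde H_{kk}(t)$, depend only on the diagonal entries $z_k=\tilde M_{kk}$. Setting $i=j$ in Step 1 gives $\dot z_k = -2\tilde H_{kk}(z)\,z_k$, which is a closed autonomous system in the $z_k$ alone. The diagonal trajectory is therefore determined independently of the off-diagonal entries. Those entries never feed back into predictions or loss, and they are slaved: their coefficient $\tilde H_{ii}+\tilde H_{jj}$ is a known function of $t$ once $z(t)$ is known.

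\textbf{Step 3 (solve).} Treating $a(t)=\tilde H_{ii}(t)+\tilde H_{jj}(t)$ as a given continuous function, the scalar linear ODE $\dot y=-a(t)y$ has the unique solution $y(t)=y(0)\exp\bigl(-\int_0^t a(\tau)\,d\tau\bigr)$. This is the stated formula. Continuity of $a$ follows because $\Delta_\mu(t)$ is continuous along the gradient-flow solution.

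The only point needing care is well-posedness. The solution of Eq.~\eqref{eq:W_dynamics} should exist on the interval considered. This holds locally since the vector field is polynomial in $W$, and I would state the formula on the maximal existence interval. Apart from that, every step is a direct computation.
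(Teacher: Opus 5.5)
Your proposal is correct and follows the same route as the paper: conjugate by $P$, note that $\tilde H$ is diagonal and depends only on the $z_k$, and integrate the resulting scalar linear ODE for each entry $\tilde M_{ij}$. Your well-posedness caveat can in fact be dropped, because the solution exists for all $t\ge 0$: the nonincreasing loss bounds every $\Delta_\mu$ and hence $\|H(t)\|$, so Gr\"onwall rules out blow-up of $W$.
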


Substituting $\Delta_\mu(t) = \sum_{j=1}^p \lambda_{\mu,j} z_j(t) - y_\mu$ into \eqref{eq:M_dynamics}, we obtain the evolution of the $k$-th feature:
\begin{equation}
\dot{z}_k(t) = -\frac{8}{m} \sum_{\mu=1}^m \Delta_\mu(t) \lambda_{\mu,k} z_k(t).
\label{eq:GD-z}
\end{equation}
Expanding $\Delta_\mu(t)$ yields:
\begin{equation}
\label{eq:GLV}
\dot{z}_k(t) = \frac{8}{m} \left( \sum_{\mu=1}^m y_\mu \lambda_{\mu,k} - \sum_{j=1}^p \left( \sum_{\mu=1}^m \lambda_{\mu,k} \lambda_{\mu,j} \right) z_j(t) \right) z_k(t).
\end{equation}
\eqref{eq:GLV} is the Generalized Lotka-Volterra (GLV) equation from population ecology.
In general, it still does not have analytic solutions\footnote{It has implicit solutions as shown in \cite{gunasekar2017implicit}.}. For the following sections, we will consider several special cases where the dynamics are actually solvable. The connection to the GLV allows some direct interpretation of $z_k$ as the abundance of a species and $C_{k,j}:=\sum_\mu \lambda_{\mu,k} \lambda_{\mu,j}$ as the competition (when positive) and cooperation (when negative) between species \cite{hofbauer1998evolutionary}.

\paragraph{Proportional Samples.}
\begin{theorem}
\label{theo:proportion}
Suppose Assumption \ref{assume:commutation} holds, and further assume that the data matrices can be expressed as $A(x_\mu) = c_\mu A$ for a symmetric matrix $A$ and constants $c_\mu$.
Define $\alpha := \frac{4}{m} \sum_{\mu=1}^m c_\mu^2$ and $\beta := \frac{4}{m} \sum_{\mu=1}^m c_\mu y_\mu$. Then, the trajectory of $W(t)$ is given by:
\begin{equation}
W(t) = P \exp(-\xi(t) \Lambda) P^\top  W(0)
\end{equation}
where $\exp(-\xi(t) \Lambda) = \mathrm{diag}\left(\exp(-\lambda_1 \xi(t)), \dots, \exp(-\lambda_p \xi(t))\right)$, and the auxiliary scalar variable $\xi(t)$ satisfies the following implicit integral equation:
\begin{equation}
t = \int_0^{\xi(t)} \frac{ds}{\alpha \sum_{j=1}^p \lambda_j z_j(0) \exp(-2 \lambda_j s) - \beta}
\label{eq:implicit-xi}
\end{equation}
\end{theorem}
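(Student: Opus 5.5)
The plan is to show that under proportional samples the matrix $H(t)$ in \eqref{eq:W_dynamics} is always a scalar multiple of the fixed matrix $A$. That turns the matrix flow into a linear ODE driven by a single scalar, and the scalar obeys an autonomous one-dimensional equation that can be integrated by separation of variables.

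\textbf{Step 1 (reduce $H$ to a scalar).} Since $A(x_\mu)=c_\mu A$, the residual is $\Delta_\mu(t)=c_\mu \Tr[M(t)A]-y_\mu$. Substituting into $H(t)=\frac{4}{m}\sum_\mu \Delta_\mu A(x_\mu)$ gives
\begin{equation*}
H(t)=h(t)A,\qquad h(t):=\frac{4}{m}\sum_{\mu}c_\mu\Delta_\mu(t)=\alpha\,\Tr[M(t)A]-\beta .
\end{equation*}

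\textbf{Step 2 (solve the linear flow).} The equation $\dot W=-h(t)AW$ involves only the single fixed matrix $A$, so all the operators $A$ at different times commute. Define $\xi(t):=\int_0^t h(\tau)\,d\tau$. Then $W(t)=\exp(-\xi(t)A)W(0)$ solves the equation, and uniqueness of solutions to linear ODEs with continuous coefficients identifies it as the solution. Writing $A=P\Lambda P^\top$ (Assumption~\ref{assume:commutation}) gives $\exp(-\xi A)=P\exp(-\xi\Lambda)P^\top$, which is the claimed form of $W(t)$.

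\textbf{Step 3 (close the equation for $\xi$).} We have
\begin{equation*}
M(t)=P e^{-\xi\Lambda}P^\top M(0)P e^{-\xi\Lambda}P^\top .
\end{equation*}
Hence $\Tr[M(t)A]=\Tr[e^{-\xi\Lambda}\tilde M(0)e^{-\xi\Lambda}\Lambda]=\sum_j\lambda_j z_j(0)e^{-2\lambda_j\xi}$, because only the diagonal entries of $\tilde M(0)$ contribute (consistent with Proposition~\ref{prop:decoupling}). It follows that $\dot\xi=F(\xi)$ with
\begin{equation*}
F(s):=\alpha\sum_j\lambda_jz_j(0)e^{-2\lambda_js}-\beta,\qquad \xi(0)=0 .
\end{equation*}
This is autonomous and scalar. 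Separating variables, $dt=d\xi/F(\xi)$, and integrating from $0$ gives \eqref{eq:implicit-xi}.

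\textbf{Main obstacle.} The delicate part is justifying the separation of variables. This requires $F(\xi(s))\neq 0$ along the trajectory, which I would argue as follows. $F$ is smooth, so the solution $\xi$ is unique and monotone. If $F(0)=0$, then $\xi\equiv 0$ is the solution and the formula degenerates trivially. Otherwise, $\xi(t)$ stays strictly between $0$ and the nearest root $s^*$ of $F$ in the direction of motion, and never reaches $s^*$ in finite time. On that interval $F$ keeps a constant sign, so the integral is well defined and strictly monotone in its upper limit. This makes \eqref{eq:implicit-xi} a valid implicit definition of $\xi(t)$, with the integral diverging as $\xi\to s^*$, consistent with convergence to a fixed point. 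The same reasoning rules out finite-time blowup when no root exists: for $\xi\to-\infty$, any $\lambda_j>0$ term makes $F$ grow exponentially, but then $\dot\xi=F>0$ drives $\xi$ back up rather than further down. I would also remark that blowup for $\xi\to+\infty$ cannot occur, since $F\to-\beta$ or is dominated by $\lambda_j<0$ terms with $\lambda_jz_j(0)\le 0$, which only push $F$ downward.
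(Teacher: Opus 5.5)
Your proposal is correct and follows essentially the same route as the paper: you reduce $H(t)$ to $(\alpha\Tr[MA]-\beta)A$, integrate the resulting linear flow in the eigenbasis of $A$ using $\xi(t)=\int_0^t(\alpha\Tr[MA]-\beta)\,d\tau$, and solve the autonomous scalar ODE for $\xi$ by separation of variables. Your extra justification of the separation step is sound, and the paper's proof passes over it silently; it covers monotonicity of $\xi$, never reaching a zero of $F$ in finite time, absence of finite-time blowup, and the degenerate case $F(0)=0$ in which the implicit formula fails.
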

\begin{remark}
The exact solution in \cite{xu2025three} is a special case of Theorem \ref{theo:proportion}, where $\Lambda$ only has two eigenvalues and \eqref{eq:implicit-xi} can be integrated explicitly.
\end{remark}

\paragraph{Orthogonal Samples.}
\begin{theorem}
\label{theo:orthogonal_samples}
Suppose Assumption \ref{assume:commutation} holds, and further assume that $A(x_\mu) A(x_\nu) = 0$ for all $\mu \neq \nu$. Then, the trajectory of $W(t)$ is given by
\begin{equation}
W(t) = P \exp(-\Sigma(t)) P^\top  W(0),
\end{equation}
where $\Sigma(t) = \frac{4}{m} \sum_{\mu=1}^m \xi_\mu(t) \Lambda_\mu$ is a diagonal matrix, and the auxiliary scalar variable $\xi_\mu(t)$ for each $\mu \in \{1, \dots, m\}$ evolves independently according to the following implicit integral equation:
\begin{equation}
t = \int_0^{\xi_\mu(t)} \frac{ds}{\sum_{k=1}^p \lambda_{\mu, k} z_k(0) \exp\left(-\frac{8}{m} \lambda_{\mu, k} s\right) - y_\mu}.
\label{eq:implicit-solution}
\end{equation}
\end{theorem}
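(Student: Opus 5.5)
The plan is to integrate the linear-in-$W$ gradient flow \eqref{eq:W_dynamics} directly. Under Assumption~\ref{assume:commutation} its generator is diagonal in a fixed basis, so $W(t)$ is a matrix exponential. The orthogonality condition then decouples the scalar integrals into independent one-dimensional autonomous ODEs, which can be solved by separation of variables.

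\emph{Step 1: exponential form.} Write $H(t)=\frac{4}{m}\sum_\mu \Delta_\mu(t)A(x_\mu)=P\big(\frac4m\sum_\mu\Delta_\mu(t)\Lambda_\mu\big)P^\top$. All $H(t)$ at different times are simultaneously diagonalized by the fixed $P$, so they commute. Hence the solution of $\dot W=-HW$ is $W(t)=P\exp(-\Sigma(t))P^\top W(0)$, where
\[
\Sigma(t)=\int_0^t P^\top H(\tau)P\,d\tau=\frac4m\sum_\mu \xi_\mu(t)\Lambda_\mu,\qquad \xi_\mu(t):=\int_0^t\Delta_\mu(\tau)\,d\tau .
\]
To check this, differentiate the candidate and use uniqueness for the linear ODE once $\Delta_\mu(\cdot)$ is known; more carefully, the candidate with $\xi$ defined by the closed system below solves the full nonlinear flow, and uniqueness of solutions of the smooth flow concludes. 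This gives the claimed form of $W(t)$.

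\emph{Step 2: closing on $\xi_\mu$.} We have $\tilde M(t)=P^\top W W^\top P=e^{-\Sigma}\tilde M(0)e^{-\Sigma}$, and since $\Sigma$ is diagonal,
\[
z_k(t)=z_k(0)\exp\!\Big(-\tfrac8m\sum_\nu\xi_\nu(t)\lambda_{\nu,k}\Big).
\]
This is consistent with Proposition~\ref{prop:decoupling} and \eqref{eq:GD-z}. By \eqref{eq:residual_diagonal}, $\dot\xi_\mu=\Delta_\mu=\sum_k\lambda_{\mu,k}z_k(t)-y_\mu$. Now use $A(x_\mu)A(x_\nu)=0$, which gives $\Lambda_\mu\Lambda_\nu=0$, i.e.\ $\lambda_{\mu,k}\lambda_{\nu,k}=0$ for $\mu\neq\nu$. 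For every $k$ with $\lambda_{\mu,k}\ne0$ we then have $\lambda_{\nu,k}=0$ for all $\nu\ne\mu$, so inside $\lambda_{\mu,k}z_k$ the exponent reduces to $-\frac8m\lambda_{\mu,k}\xi_\mu$. Terms with $\lambda_{\mu,k}=0$ vanish anyway. Therefore
\[
\dot\xi_\mu=F_\mu(\xi_\mu),\qquad F_\mu(s):=\sum_k\lambda_{\mu,k}z_k(0)e^{-\frac8m\lambda_{\mu,k}s}-y_\mu,\qquad \xi_\mu(0)=0,
\]
an autonomous scalar ODE involving no other $\xi_\nu$.

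\emph{Step 3: separation of variables.} Separating variables gives $t=\int_0^{\xi_\mu(t)}ds/F_\mu(s)$, which is \eqref{eq:implicit-solution}.

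The only delicate point, and what I expect to be the main obstacle, is justifying this inversion. If $F_\mu(0)\neq0$, then $\xi_\mu$ is strictly monotone and stays on one side of the nearest zero of $F_\mu$, approaching it only as $t\to\infty$ by uniqueness for the smooth scalar ODE. So $F_\mu$ has constant sign along the trajectory, and the integral is well defined and inverts to give $\xi_\mu(t)$. If $F_\mu(0)=0$, then $\xi_\mu\equiv0$, and the formula is read in this degenerate sense.

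Finally, I would double-check the bookkeeping of factors: the $\frac4m$ in $H$, and the factor $2$ from $M=WW^\top$, which together give the $\frac8m$ in the exponent.
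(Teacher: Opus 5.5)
Your proposal is correct and follows the paper's proof step for step. Like the paper, you define $\xi_\mu(t)=\int_0^t\Delta_\mu$, integrate the commuting linear flow in the eigenbasis to get $\tilde W(t)=\exp(-\Sigma(t))\tilde W(0)$, read off $z_k(t)=z_k(0)\exp(-\frac{8}{m}\sum_\nu\xi_\nu\lambda_{\nu,k})$, use $\Lambda_\mu\Lambda_\nu=0$ to reduce the exponent to $-\frac{8}{m}\lambda_{\mu,k}\xi_\mu$, and separate variables. Your extra remarks are sound refinements that the paper leaves implicit: uniqueness for the nonlinear flow, $F_\mu$ keeping a constant sign along the trajectory so the integral is well defined, and the degenerate case $F_\mu(0)=0$ giving $\xi_\mu\equiv 0$.
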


\paragraph{Orthogonal Features.}
\begin{theorem}
\label{theo:ortho_features}
Suppose Assumption \ref{assume:commutation} holds, and further assume that: $\sum_{\mu=1}^m \lambda_{\mu, k} \lambda_{\mu, j} = 0$ for all $k \neq j$.
Define
\begin{equation}
r_k := \frac{8}{m} \sum_{\mu=1}^m \lambda_{\mu, k} y_\mu, \quad C_{kk} := \frac{8}{m} \sum_{\mu=1}^m \lambda_{\mu, k}^2.
\label{eq:r-and-C}
\end{equation}
Then the trajectory of $W(t)$ is given by:
\begin{equation}
W(t) = P D(t) P^\top  W(0),
\end{equation}
where $D(t) = \mathrm{diag}(d_1(t), \dots, d_p(t))$ is a diagonal matrix with $d_k(t) = \sqrt{z_k(t) / z_k(0)}$ if $z_k(0)\neq0$ and $d_k(t)=0$ otherwise. The exact solution for $z_k(t)$ is:
\begin{equation}
z_k(t) = \frac{r_k z_k(0)}{C_{kk} z_k(0) + (r_k - C_{kk} z_k(0)) \exp(-r_k t)}
\end{equation}
if $r_k \neq 0$ and $z_k(t) = \frac{z_k(0)}{1 + C_{kk} z_k(0) t}$ otherwise.
\end{theorem}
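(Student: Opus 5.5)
Under the orthogonality assumption the GLV system \eqref{eq:GLV} decouples into $p$ independent scalar logistic equations. Each one can be integrated in closed form. The off-diagonal and $W$-level statements then follow from the linear structure of \eqref{eq:W_dynamics} in the common eigenbasis.

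\textbf{Step 1: decoupling.} Substituting $\sum_\mu \lambda_{\mu,k}\lambda_{\mu,j}=0$ for $k\neq j$ into \eqref{eq:GLV}, only the $j=k$ term of the interaction sum survives. With the notation \eqref{eq:r-and-C} this gives
\begin{equation*}
\dot z_k = (r_k - C_{kk} z_k)\, z_k ,
\end{equation*}
a logistic (Bernoulli) equation for each $k$ separately.

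\textbf{Step 2: solving the logistic equation.} If $z_k(0)=0$, uniqueness gives $z_k\equiv 0$. Otherwise, substitute $u_k=1/z_k$, which yields the linear equation $\dot u_k = -r_k u_k + C_{kk}$.
\begin{itemize}
\item For $r_k\neq 0$, the solution is $u_k(t) = C_{kk}/r_k + (1/z_k(0) - C_{kk}/r_k)e^{-r_k t}$. Inverting and multiplying numerator and denominator by $r_k z_k(0)$ gives the stated formula.
\item For $r_k=0$, we get $u_k = 1/z_k(0) + C_{kk} t$, which is the second formula.
\end{itemize}
Since $C_{kk}\ge 0$ and $z_k(0)\ge 0$, the denominators do not vanish for $t\ge0$ in the relevant cases, so the solution exists globally.

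\textbf{Step 3: lifting to $W$.} Rotate to $\tilde W = P^\top W$. The matrix $\tilde H(t) = P^\top H(t) P = \frac4m\sum_\mu \Delta_\mu(t)\Lambda_\mu$ is diagonal for all $t$, because each $\Lambda_\mu$ is diagonal. Hence \eqref{eq:W_dynamics} becomes a linear ODE with diagonal time-dependent coefficient, $\dot{\tilde W} = -\tilde H(t)\tilde W$. Its solution is $\tilde W(t) = D(t)\tilde W(0)$ with
\begin{equation*}
D(t)=\exp\Big(-\int_0^t \tilde H(\tau)\,d\tau\Big),
\end{equation*}
so $W(t) = P D(t) P^\top W(0)$.

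It remains to identify $d_k(t)$. From $\tilde M = \tilde W\tilde W^\top$ we get $z_k(t) = d_k(t)^2 z_k(0)$. Since $d_k>0$, this gives $d_k = \sqrt{z_k(t)/z_k(0)}$ whenever $z_k(0)\neq 0$. This is consistent with Proposition~\ref{prop:decoupling}: the diagonal entries obey $\dot z_k = -2\tilde H_{kk} z_k$, which matches \eqref{eq:GD-z}.

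When $z_k(0)=0$, the $k$-th row of $\tilde W(0)$ is zero, so the $k$-th row of $\tilde W(t)$ is zero regardless of $d_k$. We may therefore set $d_k=0$ without changing $W(t)$.

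\textbf{Main obstacle.} The calculations are routine. The step needing the most care is Step 3: justifying that $\tilde H$ is genuinely diagonal along the whole trajectory, which is what makes the matrix exponential factorize into scalar pieces. It is also where the convention $d_k=0$ for $z_k(0)=0$ has to be handled. I would make this rigorous by noting that $D(t)$ depends only on the scalar integrals $\int_0^t \tilde H_{kk}$, and that these are determined by the $z_j$ already solved in Step 2.
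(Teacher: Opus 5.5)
Your proposal is correct and follows the paper's proof essentially step for step. You decouple the GLV system via the orthogonality condition into independent logistic equations, solve them, and lift to $W$ through $\tilde W(t)=\exp\bigl(-\int_0^t\tilde H\bigr)\tilde W(0)$ with $\exp(-\Sigma_{kk})=\sqrt{z_k(t)/z_k(0)}$, handling $z_k(0)=0$ by the zero-row argument; your $u_k=1/z_k$ substitution and the check that the denominators stay positive for $t\ge0$ are small details the paper leaves implicit (and your ``main obstacle'' is not really one, since $\tilde H=\frac4m\sum_\mu\Delta_\mu\Lambda_\mu$ is diagonal for any scalar residuals).
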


\paragraph{Isotropic Samples.}

The previous solutions rely on the commutativity assumption (Assumption \ref{assume:commutation}). In this section, we explore a different solvable regime where the data matrices do not commute, but instead satisfy an isotropic condition.

\begin{assumption}
\label{assume:isotropic}
Assume the dataset consists of $m \ge p(p+1)/2$  symmetric data matrices $\{A(x_\mu)\}_{\mu=1}^m$ and
their second moment tensor satisfies:
\begin{equation}
\frac{1}{m} \sum_{\mu=1}^m A_{ij}(x_\mu) A_{kl}(x_\mu) = \frac{c}{2} (\delta_{ik}\delta_{jl} + \delta_{il}\delta_{jk})
\end{equation}
where $c>0$ is a constant, and $\delta$ is the Kronecker delta.
\end{assumption}

\begin{theorem}\label{theo:isotropic_samples}
Suppose Assumption \ref{assume:isotropic} holds. Let $Y = \frac{4}{m} \sum_{\mu=1}^m y_\mu A(x_\mu) \in \mathbb{R}^{p \times p}$. Then, the trajectory of $M(t) = W(t)W(t)^\top $ has a closed-form solution:
\begin{equation}
M(t) = \exp(Y t) M(0) \left[ I + 8c \Phi(t) M(0) \right]^{-1} \exp(Y t),
\end{equation}
where $I$ is the identity matrix, and $\Phi(t) = \int_0^t  \exp(2Y\tau) d\tau$. If $Y$ is non-singular, $\Phi(t) = \frac{1}{2}Y^{-1}(\exp(2Yt) - I)$.

Furthermore, if the initial state $M(0)$ and $Y$ are simultaneously diagonalizable, then, the eigenvalues of $M(t)$, denoted as $z_k(t)$, evolve according to
\begin{equation}
z_k(t) = \frac{\gamma_k z_k(0)}{4c z_k(0) + (\gamma_k - 4c z_k(0)) \exp(-2\gamma_k t)}
\end{equation}
if $\gamma_k\neq0$ and $z_k(t) = \frac{z_k(0)}{1 + 8c z_k(0) t}$ otherwise, where $\gamma_k$ is the $k$-th eigenvalue of $Y$.
\end{theorem}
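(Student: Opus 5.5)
The plan is to show that, under Assumption~\ref{assume:isotropic}, the closed dynamics \eqref{eq:M_dynamics} for $M$ becomes a matrix Riccati equation, and then to linearize it by passing to $M^{-1}$.

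\emph{Step 1 (collapse of $H$).} Write $H(t)=\frac4m\sum_\mu(\Tr[MA(x_\mu)]-y_\mu)A(x_\mu)$. The first part has entries
\[
\tfrac4m\sum_\mu\sum_{k,l}M_{kl}A_{kl}(x_\mu)A_{ij}(x_\mu).
\]
By the isotropy assumption this equals $4\cdot\frac c2(M_{ij}+M_{ji})=4cM_{ij}$, since $M$ is symmetric. Hence $H=4cM-Y$, and \eqref{eq:M_dynamics} becomes
\[
\dot M = YM+MY-8cM^2 .
\]

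\emph{Step 2 (linearization).} First suppose $M(0)$ is invertible, and set $N=M^{-1}$. Then
\[
\dot N=-N\dot MN=-NY-YN+8cI .
\]
This is a linear Lyapunov-type equation. Variation of constants gives
\[
N(t)=e^{-Yt}N(0)e^{-Yt}+8c\int_0^t e^{-Y(t-\tau)}e^{-Y(t-\tau)}\,d\tau .
\]
Substituting $\tau\mapsto t-\tau$, the integral equals $e^{-Yt}\Phi(t)e^{-Yt}$, where $\Phi(t)=\int_0^t e^{2Y\tau}d\tau$. So $N(t)=e^{-Yt}[N(0)+8c\Phi(t)]e^{-Yt}$. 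Inverting and writing $[N(0)+8c\Phi]^{-1}=M(0)[I+8c\Phi M(0)]^{-1}$ yields the claimed formula. When $Y$ is nonsingular, the expression $\Phi=\tfrac12Y^{-1}(e^{2Yt}-I)$ follows by direct integration.

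\emph{Step 3 (singular $M(0)$ and well-posedness).} This is the step needing care, since $W(0)$ with $d<p$ makes $M(0)$ singular. I would first check that $I+8c\Phi(t)M(0)$ is always invertible. The matrix $\Phi(t)$ is symmetric positive definite for $t>0$, so $I+8c\Phi M(0)$ is similar to $I+8c\,\Phi^{1/2}M(0)\Phi^{1/2}$, which is positive definite because $M(0)\succeq0$. Given this, the right-hand side of the formula is a smooth function of $M(0)$ on the whole PSD cone. I would then conclude by either of two routes. One is continuity: approximate $M(0)$ by $M(0)+\delta I$, use continuous dependence of the Riccati ODE on initial data, and let $\delta\to0$. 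The other is to differentiate the formula directly and verify the ODE together with the initial condition. Uniqueness of ODE solutions then identifies it with $M(t)$.

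\emph{Step 4 (commuting case).} If $M(0)$ and $Y$ are simultaneously diagonalizable, every factor in the formula is diagonal in the common basis. For each eigenvalue pair this gives
\[
z_k=\frac{e^{2\gamma_kt}z_k(0)}{1+8cz_k(0)\,(e^{2\gamma_kt}-1)/(2\gamma_k)} .
\]
Multiplying the numerator and denominator by $2\gamma_ke^{-2\gamma_kt}$ gives the stated logistic form. In the case $\gamma_k=0$, $\Phi=t$ and the formula reduces directly to $z_k(0)/(1+8cz_k(0)t)$.
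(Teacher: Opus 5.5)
Your proof is correct. Step 1 is identical to the paper's; the difference is in how the Riccati equation $\dot M = YM+MY-8cM^2$ is linearized. You invert, setting $N=M^{-1}$, which gives the Lyapunov equation $\dot N=-YN-NY+8cI$. The paper instead writes $M=VU^{-1}$ with $V(0)=M(0)$ and $U(0)=I$, and gets the linear system $\dot V=YV$, $\dot U=-YU+8cV$. This yields $V=e^{Yt}M(0)$ and $U=e^{-Yt}[I+8c\Phi(t)M(0)]$.

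These are the same computation in different coordinates: when $M(0)$ is invertible, $UV^{-1}$ is exactly your $N(t)$. However, the paper's factorization never inverts $M$; it only needs $U(t)$ to be invertible. So it covers rank-deficient $M(0)$, which is the generic case $d<p$, with no extra work. Your route needs the additional Step 3 for that case.

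Of your two options in Step 3, directly differentiating the formula is the cleaner one, using $\tfrac{d}{dt}[I+8c\Phi(t)M(0)]=8c\,e^{2Yt}M(0)$. It is essentially the paper's sufficiency check. The continuity route also needs the solution from $M(0)$ to exist on the whole time interval. The direct route gets this for free, because the formula is itself a global solution.

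In return, your write-up supplies two things the paper leaves implicit. First, it gives an actual reason why $I+8c\Phi(t)M(0)$ is invertible, namely its similarity to $I+8c\,\Phi^{1/2}M(0)\Phi^{1/2}$. Second, it appeals to uniqueness of solutions of the Riccati ODE, which is what identifies the closed form with $W(t)W(t)^\top$.
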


In Appendix \ref{app:exact_solution} we futher propose a general solution which unifies Theorem \ref{theo:ortho_features} and
the simultaneously diagonalizable case of Theorem \ref{theo:isotropic_samples}, as well as the exact solutions of linear networks in \cite{saxe2014exact}.

\paragraph{Neural Tangent Kernel and Feature Learning.} While the main focus of our work is sudden learning at a small initialization, the NQF models can also be used to study lazy training and feature learning. We discuss this point in Appendix~\ref{app:ntk}.

\section{Sudden Learning and Neural Scaling Laws}\label{sec:scaling}
When learning of single features or data points is step-function-like, they can be naturally composed to give an arbitrary learning curve (as long as it is monotonically decreasing). For example, in this section, we show that when the data correlations follow a power-law structure, the learning curves naturally become power laws with predictable exponents. This leads to a precise prediction of the exponents of neural scaling laws. In this section, we only provide an informal statement. Please refer to Appendix \ref{app:feature-descent} for formal statements and the proofs.

\paragraph{Feature-wise Descent}
\label{sec:saddle-to-saddle}
Under the conditions of Theorem \ref{theo:ortho_features} (or Theorem~\ref{theo:isotropic_samples} with $M(0)$ and $Y$ simultaneously diagonalizable), define $\zeta_k = r_k$ (in Theorem \ref{theo:ortho_features}) or $\zeta_k=\gamma_k$ (in Theorem~\ref{theo:isotropic_samples}), which can be understood as the effective growth rate for the feature $z_k$. As the initialization scale $\epsilon\to0$, the characteristic time for $z_k$ to be activated is
\begin{equation}
t_k^*\sim\frac{1}{\zeta_k} \ln\frac{1}{\epsilon}.
\label{eq:ignition-time}
\end{equation}
Consequently, the gaps between different feature activations diverge in the limit of small initialization, which exhibits saddle-to-saddle dynamics: 
\begin{equation}
\lim_{\epsilon \to 0} |t_{k'}^* - t_k^*| = \infty
\end{equation}
if $\zeta_{k'}\neq\zeta_k$.

Moreover, define $V_k := \frac{r_k^2}{C_{kk}}$ (under Theorem \ref{theo:ortho_features}) or $V_k := \frac{\gamma_k^2}{c}$ (under Theorem \ref{theo:isotropic_samples}), which can be understood as the effective target strength. Assume that $\zeta_k$ and $V_k$ exhibit the following power-law decays:
\begin{align}
\zeta_k \propto k^{-\alpha_2}, \
V_k \propto k^{-\alpha_1}.
\end{align}
Then, under the infinite-width limit $p\to\infty$ and the small initialization limit $\epsilon\to0$, the limiting excess loss has a power-law decay
\begin{equation}
\mathcal{E}(\tau) = \Theta\left( \tau^{-\frac{\alpha_1 - 1}{\alpha_2}} \right).
\end{equation}

\paragraph{Sample-wise Descent}
Under the conditions of Theorem \ref{theo:orthogonal_samples}, suppose $\zeta_\mu := \frac{8}{m} \max_k \lambda_{\mu, k} y_\mu>0$, which can be understood as the effective growth rate of the $\mu$-th sample. When the initialization scale $\epsilon\to0$, the empirical MSE converges to a saddle-to-saddle trajectory
\begin{equation}
\lim_{\epsilon \to 0^+} L\left(\tau \ln\frac{1}{\epsilon}\right) = \frac{1}{m} \sum_{\mu=1}^m y_\mu^2 \cdot \mathbb{I}(\tau < \tau_\mu^*),
\label{eq:staircase-limit}
\end{equation}
where $\mathbb{I}(\cdot)$ denotes the indicator function and $\tau_\mu^* := \frac{1}{\zeta_\mu}$.

Moreover, suppose that the $\zeta_\mu$ and $y_\mu$ exhibit the following power-law decay:
\begin{align}
\zeta_\mu \propto\mu^{-\gamma_2},\
y_\mu\propto \mu^{-\gamma_1}
\end{align}
for $\gamma_1 > 1/2$ and $\gamma_2 > 0$. Then under the infinite sample limit $m\to\infty$ and the small initialization limit $\epsilon\to0$, the limiting MSE follows a power-law decay:
\begin{equation}
\mathcal{L}(\tau) = \Theta\left( \tau^{-\frac{2\gamma_1 - 1}{\gamma_2}} \right).
\end{equation}

The dynamics in Theorem \ref{theo:proportion} can be viewed as being mathematically equivalent to the solution in Theorem \ref{theo:orthogonal_samples} restricted to $m=1$, and thus there is only one plateau.

\section{Experiments}\label{sec:exp}
\begin{figure}[t]
\centering
\includegraphics[width=1.0\linewidth]{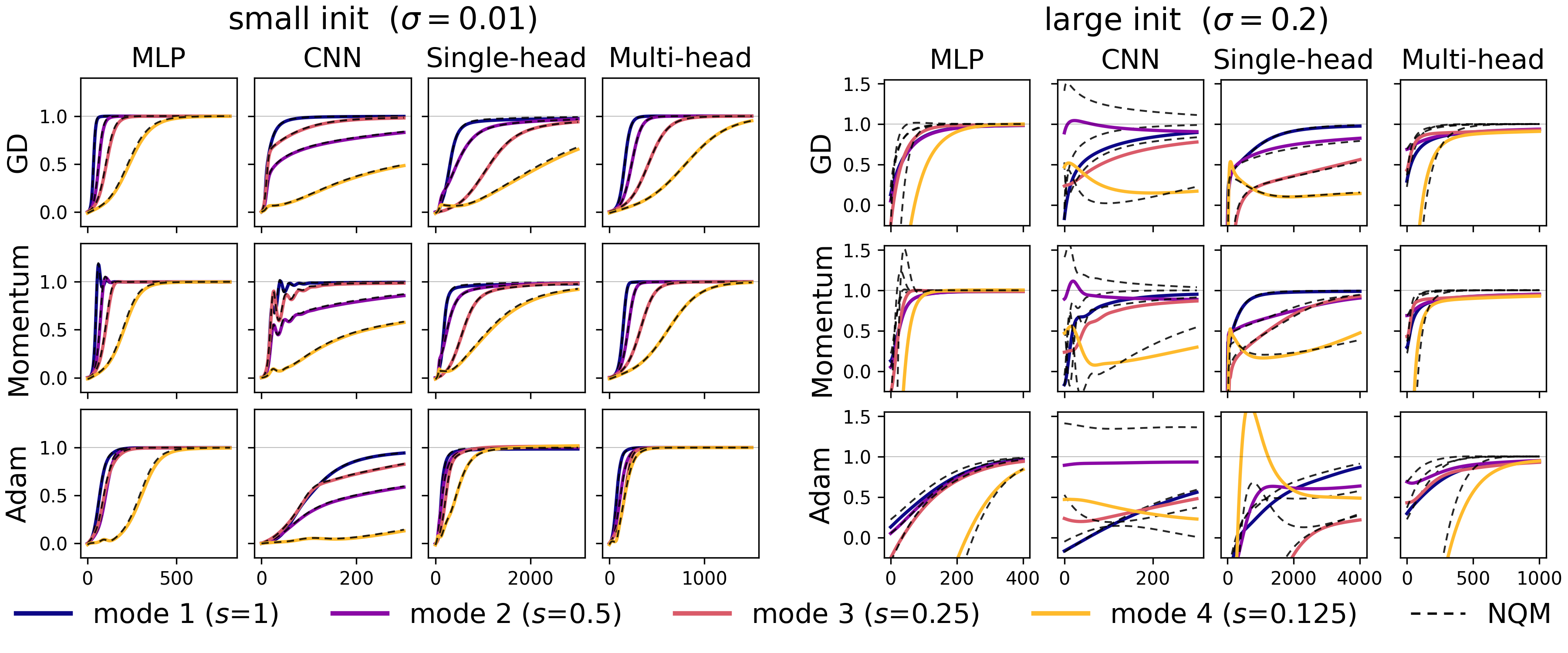}
\caption{NQF approximation across initialization, optimizer, and
architecture. Two initialization scales are juxtaposed---\emph{small init} ($\sigma=0.01$, left) and \emph{large init} ($\sigma=0.2$, right). Within each block, rows are optimizers and columns are architectures. Each module is trained together with its NQF approximation on an NQF teacher with singular values
$s=(1,\tfrac12,\tfrac14,\tfrac18)$. Curves show the recovered $k$-th singular value $\hat{s}_k(t)/s_k$ versus training step; \emph{colored solid} $=$ original module, \emph{black dashed} $=$ NQF.}
\label{fig:NQF_approx}
\end{figure}
\begin{figure}[t]
    \centering
    \includegraphics[width=\linewidth]{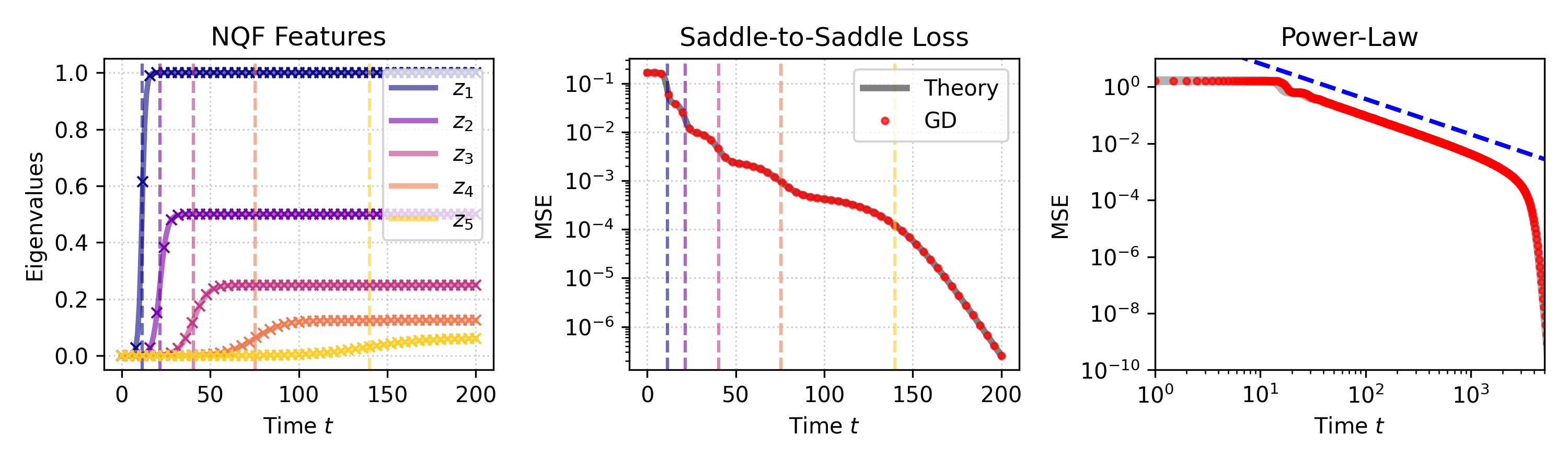}
    \caption{Sudden learning in a solvable NQF, the phenomenon the theory of this paper is built to explain. The experiment is under the feature-wise descent setting of Section \ref{sec:saddle-to-saddle}. Solid lines represent the theory in Theorem \ref{theo:ortho_features} and markers represent the results obtained by GD. Vertical dashed lines mark the characteristic timescales $t_k^* = \frac{1}{r_k} \ln \left( \frac{r_k}{C_{kk}\epsilon} \right)$ where each feature reaches half-saturation.  \textbf{Left}: Evolution of the eigenvalues of $WW^\top $. \textbf{Middle}: The excess MSE loss over time, which exhibits sequential plateaus. Each sharp drop in the loss aligns with $t_k^*$. \textbf{Right}: Under the feature-wise descent setting of Section \ref{sec:saddle-to-saddle}, the blue dashed line indicates the theoretical slope and confirms that our predicted slope matches the loss trajectory before the finite-size cutoff.
    }
    \label{fig:feature-descent}
\end{figure}
\begin{figure}[t]
    \centering
    \includegraphics[width=0.8\linewidth]{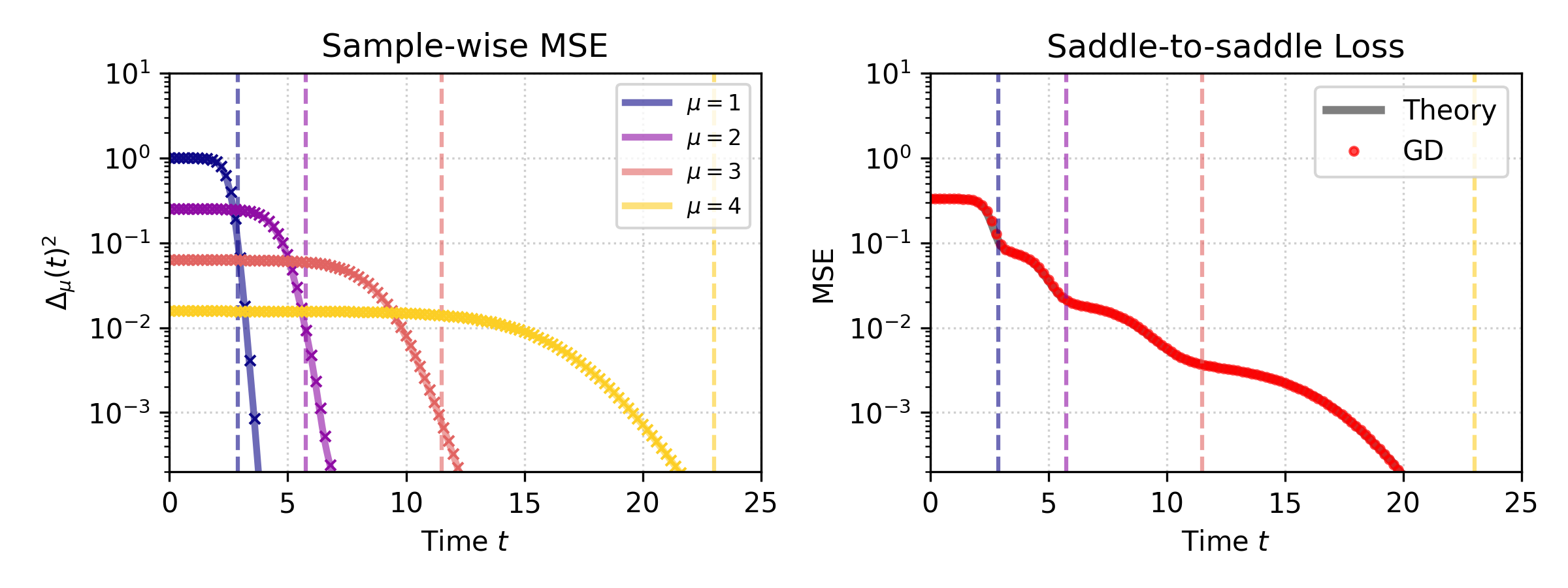}
    \caption{The experiment is under the setting of the sample-wise descent in Section~\ref{sec:saddle-to-saddle}. Solid lines represent the theory of Theorem~\ref{theo:orthogonal_samples} obtained by numerically solving Eq.~\protect\eqref{eq:implicit-solution}, and markers represent the empirical results obtained by full-batch GD. Vertical dashed lines mark the characteristic timescales $t_\mu^* = \zeta_\mu^{-1} \ln \left( 1/ \epsilon \right)$ according to \eqref{eq:staircase-limit}. \textbf{Left}: Evolution of the per-sample loss $\Delta_\mu(t)^2 = (\mathrm{Tr}[WW^\top  A(x_\mu)] - y_\mu)^2$, showing independent and sequential convergence. \textbf{Right}: The total empirical MSE loss over time, which exhibits $m$ sequential plateaus. Each discrete drop in the total loss matches the respective timescale $t_\mu^*$.}
    \label{fig:sample-descent}
\end{figure}

\begin{figure}[t!]
    \centering
    \includegraphics[width=0.75\linewidth]{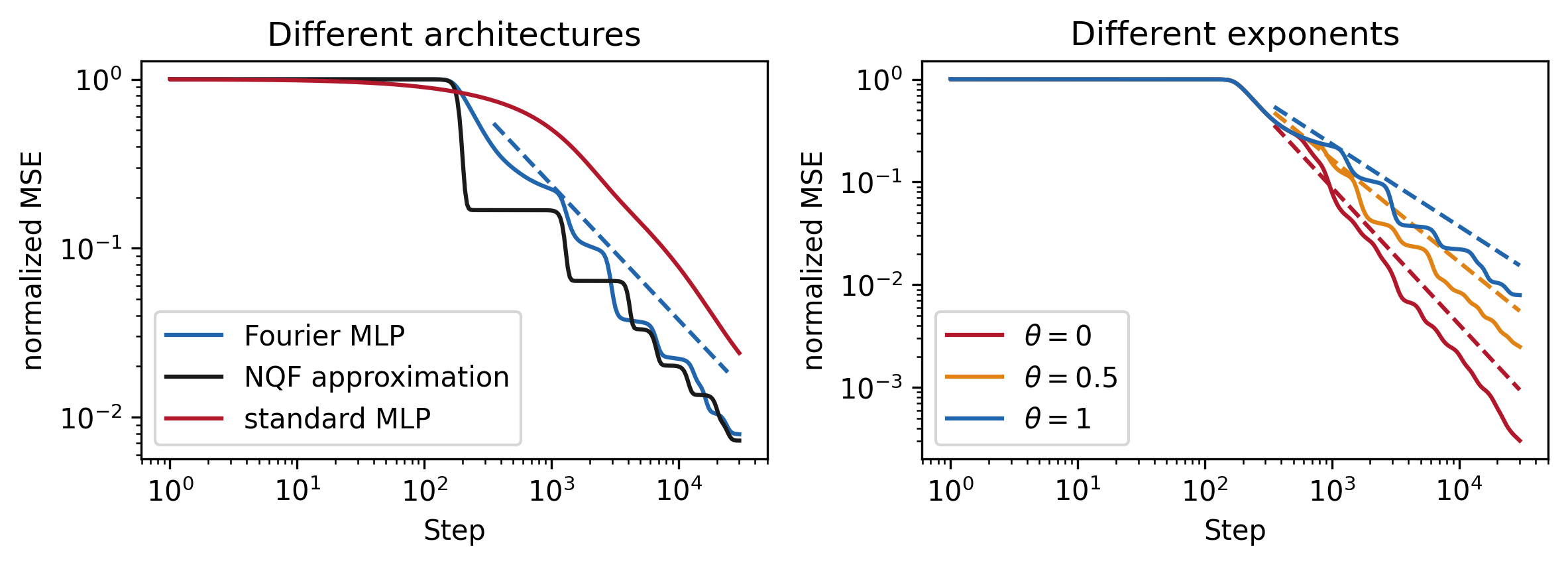}
    \caption{Curves show the normalized MSE $\mathcal{L}(t)/\mathcal{L}(0)$; dotted lines are the slopes predicted in Section \ref{sec:saddle-to-saddle}. \textbf{Left}: For $\theta = 1$, the Fourier MLP (blue) descends as the predicted power law $t^{-(2\beta-1)/(\theta+\beta)}$ (see Appendix \ref{app:exp-details}) and is tracked by its NQF approximation (black); a standard tanh MLP trained on the raw input $x$ (red), which does not exhibit the power law. \textbf{Right}: The power-law exponent of the Fourier MLP (solid lines) matches the theory (dashed lines) for $\theta \in \{0, 0.5, 1\}$.}
    \label{fig:power-law-MLP}
\end{figure}

All the experiments in this section are described in Appendix~\ref{app:exp-details}. The code that reproduces every figure is available at \url{https://github.com/xu-yz19/Neural-Quadratic-Forms}.

\paragraph{NQF approximation} 
To empirically validate the NQF approximation (Theorem \ref{theo:main}), we train a two-layer MLP (Proposition \ref{prop:MLP}), a single-layer CNN (Proposition \ref{prop:CNN}), a query-key-only attention head (Proposition \ref{prop:MHA_variant}) and a multi-head attention model (Proposition \ref{prop:attention}) in Figure \ref{fig:NQF_approx} with different optimizers (gradient descent, Polyak momentum $\beta=0.9$, and Adam). All models use Gaussian initialization with scale $\sigma$. The input is chosen to be Gaussian, and the label is generated by a low-rank NQF. The NQF tracks the original almost exactly across every architecture and optimizer. At large initialization, the NQF visibly departs from the original, showing that the NQF approximation is only valid in the small-initialization regime. In Appendix \ref{app:exp-details}, we further provide additional experiments concerning the NQF approximation for the teacher-student setting and real datasets.

\paragraph{Feature-wise saddle-to-saddle dynamics}
To validate the feature-wise descent in Section \ref{sec:saddle-to-saddle}, we choose $\{A(x_\mu)\}_{\mu=1}^m$ to be diagonal matrices satisfying the condition of Theorem \ref{theo:ortho_features} and train the model using full-batch GD from a small initialization. As shown in Figure \ref{fig:feature-descent}, the numerical trajectories match the predictions of Theorem \ref{theo:ortho_features}. Furthermore, the excess loss exhibits sequential plateaus at the predicted characteristic timescales $t_k^*$. On the right side of Figure \ref{fig:feature-descent}, we further choose $m=1000$, $r_k,V_k$ to follow power laws and confirm that the slope predicted by Section \ref{sec:saddle-to-saddle} matches the curve obtained by GD.

\paragraph{Sample-wise saddle-to-saddle dynamics}
To validate the sample-wise descent in Section \ref{sec:saddle-to-saddle}, we construct a synthetic dataset consisting of $m$ mutually orthogonal data matrices. The student model is trained via full-batch GD from a small initialization scale. As illustrated in Figure \ref{fig:sample-descent}, the numerical trajectories obtained via GD match the analytical curves in Theorem \ref{theo:orthogonal_samples}. Specifically, the left panel shows that the MSE for each individual sample remains stagnant at its initial plateau before undergoing a sudden, sharp decay toward zero near $t_\mu^*$. Consequently, as shown in the right panel, the total MSE loss transitions through $m$ sequential plateaus. 

\paragraph{Power law on MLP} We consider a Fourier MLP
\begin{equation}
f(x) = \frac{1}{\sqrt{P}} \sum_{k=1}^P W_k \tanh(s_k a_k \psi_k(x)),
\label{eq:Fourier-MLP}
\end{equation}
where the trainable parameters are $w_k = [a_k, W_k]^\top$. Here, $\psi_k(x)$ represents orthonormal Fourier features $\psi_k(x) = \sqrt{2} \cos(2\pi k x)$ sampled on a uniform 1D grid $x_\mu \in \{0, 1/M, \dots, (M-1)/M\}$. To induce the power-law structures required by Section \ref{sec:saddle-to-saddle}, we choose $s_k = k^{-\theta}$ and construct a target function $y(x) = \sum_{k=1}^P b_k \psi_k(x)$ with decaying coefficients $b_k = k^{-\beta}$. In Appendix \ref{app:exp-details} we will show that the NQF of this Fourier MLP satisfies the conditions in Section \ref{sec:saddle-to-saddle} (feature-wise descent).

We train the model using full-batch GD. We use the Gaussian initialization with a small scale. We compare this architecture against two baselines: (1) its NQF approximation (Proposition \ref{prop:MLP}); and (2) a standard MLP trained on the raw input $x$. 
The results are shown in Figure \ref{fig:power-law-MLP}, which shows an empirical power-law loss and matches the prediction in Section \ref{sec:saddle-to-saddle}. Meanwhile, the MLP trained on the raw input does not exhibit a power-law loss, which demonstrates the necessity of the structure required by Section \ref{sec:saddle-to-saddle} to attain the power laws. The small systematic steepening of the empirical power law relative to theory is the finite-initialization effect of Appendix \ref{app:exp-details}, which vanishes as $\epsilon \to 0$.

\section{Related Works}
\label{sec:related-works}
\paragraph{Quadratic and factorized model classes.}
Quadratic parameterizations have long served as solvable proxies for nonconvex learning, which includes low-rank matrix factorization \cite{gunasekar2017implicit,arora2019implicit}, phase retrieval \cite{candes2015phase}, diagonal linear networks \cite{berthier2023incremental}, quadratic networks \cite{maillard2024bayes,defilippis2025scaling}, and so on. These models have been used to explain implicit low-rank bias, spectral initialization, and feature learning. A recent work \cite{meterez2026defensequadraticmodel} suggests that quadratic models can even accurately approximate LLMs. NQFs differ from these model-specific analyses by deriving the quadratic features $A(x)$ from the architecture: matrix sensing, diagonal networks, MLPs, CNNs, attention heads, and MoE branches become instances of the same local normal form.

\paragraph{Learning dynamics and feature learning.}
Exact analyses of deep linear networks show rapid transitions between mode-wise plateaus under gradient descent \cite{saxe2014exact}. The neural tangent kernel and lazy-training literature describe the opposite regime, in which a network is well approximated by its linearization around initialization and training is governed by a nearly fixed kernel \cite{jacot2018neural,chizat2019lazy}. Feature-learning work has emphasized that realistic networks change their representation during training \cite{radhakrishnan2022mechanism,beaglehole2023mechanism}. NQFs sit between these perspectives: the model is still a local expansion near small initialization, but the retained quadratic term makes the kernel and representation evolve through the finite-dimensional order parameter $M$, giving closed nonlinear dynamics for feature growth and competition.

\paragraph{Sudden learning and scaling laws.}
Empirical neural scaling laws show that loss often follows power laws in data, model size, or compute \cite{kaplan2020scaling}. Several theoretical works explain such laws through spectral structure: high-eigenvalue modes are learned earlier, and power-law eigenspectra or task-model alignments induce power-law learning curves \cite{bordelon2020spectrum,canatar2021spectral,maloney2022solvable,bahri2024explaining,defilippis2025scaling}. Sudden learning has also been formalized in SGD analyses where target structure is acquired sequentially through saddle-to-saddle transitions \cite{abbe2023sgd}. The NQF perspective connects these two viewpoints: individual features or samples undergo sharp activation events at initialization-dependent times, while a power-law distribution of feature alignments, variances, or sample difficulties aggregates these transitions into a smooth scaling law \cite{michaud2023quantization,nam2024exactly,arous2025learning}.

\section{Discussion}\label{sec: discussion}

We have derived a normal form for the training of a smooth nonlinear neural networks from one structural assumption: that the repeated components of a layer are exchangeable. This property is present in essentially every architecture that has a notion of width. Within that scope, the resulting neural quadratic form is architecture independent, and what distinguishes one architecture from another is the sparsity pattern and the strength of its structure matrix $A$. The learning dynamics of any NQF close on the finite-dimensional order parameters $M$ and $\mu$, they reduce in solvable regimes to generalized Lotka--Volterra flows, and they generate saddle-to-saddle transitions whose composition produces neural scaling laws with predictable exponents.

Read as a Landau construction, the calculation says something specific about universality in deep learning. The dictionary is worth setting down carefully, because two of its entries are not the ones a first reading suggests. The symmetry group is $S_d$; the expansion is truncated at the lowest symmetry-allowed nonconstant order; and the architecture enters only through the coupling $A(x)$, in the same way that microscopic detail enters a Landau functional only through phenomenological coefficients. That much is the standard construction, and it is what makes universality across architectures the expected outcome.

The remaining two entries need more care. The first is the order parameter: $M=WW^\top$. It is $S_d$-invariant by construction, and invariant as well under the larger redundancy $W\mapsto WO$, $O\in\mathrm{O}(d)$, that the quadratic form does not see; it therefore does not transform in a nontrivial representation of the symmetry group, as the order parameter of a textbook Landau construction does. What it shares with the Edwards--Anderson overlap of spin-glass theory, and with the overlaps $w_i^\top w_j$ that serve as the order parameters of online learning in committee machines \cite{engel2001statistical,seung1992statistical}, is that it is an invariant which nonetheless vanishes in one phase and not in the other: $z_k=0$ is the unlearned state and $z_k>0$ the learned one. The symmetry broken at ignition is the freedom of $W$ itself, which the flow on $M$ has quotiented out, and $M$ is the invariant that registers the breaking.

The second is the control parameter $\epsilon$, which is not the small parameter of the expansion. The initialization scale $\epsilon$ does not decide whether a feature is learned, since every mode with $r_k>0$ will be learned at every $\epsilon>0$; what tunes the transition is $r_k$, or $r_k-\gamma$ once weight decay is included, whose sign decides the stability of $z_k=0$ in the way that a reduced temperature decides the stability of the disordered state -- a point we will explore in a future work. What $\epsilon$ controls instead is the sharpness of the crossover, and its natural counterpart is system size: the gaps between successive ignitions grow as $\ln(1/\epsilon)$ while the transitions themselves do not, so $\ln(1/\epsilon)$ plays the role of $N$, and $\epsilon\to0$ is the analogue of the thermodynamic limit. Sudden learning is then the ignition of individual modes of $M$, and the $\epsilon\to0$ limit is a genuine singular limit in which a smooth flow acquires sharp transitions; they are sharp for the same reason that a phase transition is sharp only in an infinite system.

\paragraph{Limitations.}%\label{sec:limitations}
Being a minimal model of training, the model certainly has many limitations, which are worth stating plainly.

\begin{enumerate}[leftmargin=1.4em,itemsep=2pt,topsep=2pt]
\item \textbf{Small initialization.} The NQF is the leading nonconstant term of a local expansion, and it is accurate only when the initialization scale is small compared with the feature length scale set by the data. Figure~\ref{fig:NQF_approx} shows the approximation failing at large initialization, which is the behavior the theory predicts for itself. Appendix~\ref{app:ntk} partially relaxes this by treating the lazy and feature-learning regimes within the same framework.
\item \textbf{Smoothness.} Theorem~\ref{theo:main} assumes three-times differentiability. Rectified activations at the origin, bias terms, normalization layers, and modules that apply a nonlinearity after aggregating their components each violate one of these and require additional treatment.
%\item \textbf{Expressivity is bounded by the neuron dimension.} Theorem~\ref{theo:master} is strong enough to be a limitation: because the dynamics close on a $p\times p$ object, the expressivity of an NQF is set by the parameter dimension of a single component and not by the width.
\item \textbf{Other plateaus.} Our experiments show plateaus that the NQF does not account for, which we attribute to cubic and higher terms. Extending to neural cubic and quartic forms (Appendix~\ref{app:remove_ZGZ}) is the natural next step, as is a systematic treatment of multilayer NQFs, begun in Appendix~\ref{app:multi-layer}.
\item \textbf{Power-law spectra are an assumption, not a prediction.} Deriving $A(x)$ from the architecture fixes the operator whose spectrum controls the scaling exponent. It does not, however, predict that the spectrum has a power-law tail. That tail is a hypothesis of the scaling theorem of Section~\ref{sec:scaling}, and our Fourier-feature experiment constructs it deliberately. Whether it arises generically, for realistic data and architectures, is the question that would have to be settled in future works.
\item \textbf{Adaptive optimizers.} The linear-update corollary of Appendix~\ref{app:linear_update} covers gradient descent, weight decay, and Polyak momentum, but not Adam, which is nonetheless tracked accurately by the NQF in our experiments (Figure~\ref{fig:NQF_approx}).
\end{enumerate}

\paragraph{Outlook.} The correspondences to physics suggest that the technical machinery developed for interacting populations, hydrodynamic instabilities, and polymer flow can be brought to bear on learning dynamics, and conversely that trained networks provide an unusually well-instrumented laboratory for these classical models: every mode amplitude is directly observable, the coupling matrix is known exactly, and the control parameter can be tuned over many decades. Whether the mapping survives beyond quadratic order, and whether the ecological picture of competition between features predicts anything about generalization rather than optimization, are the questions we regard as most worth pursuing.

\section*{acknowledgments}
We gratefully acknowledge the generous support of NTT Research, Inc. I.L.C. acknowledges support in part from the Institute for Artificial Intelligence and Fundamental Interactions (IAIFI) through NSF Grant No.~PHY-2019786. This work was also supported by the Center for Brains, Minds and Machines (CBMM), funded by NSF STC award No.~CCF-1231216.

\bibliographystyle{plain}
\bibliography{ref}

@article{fukumizu1996regularity,
  author  = {Fukumizu, Kenji},
  title   = {A Regularity Condition of the Information Matrix of a Multilayer Perceptron Network},
  journal = {Neural Networks},
  volume  = {9},
  number  = {5},
  pages   = {871--879},
  year    = {1996},
  doi     = {10.1016/0893-6080(95)00152-2}
}

@inproceedings{saxe2014exact,
  author    = {Saxe, Andrew M. and McClelland, James L. and Ganguli, Surya},
  title     = {Exact Solutions to the Nonlinear Dynamics of Learning in Deep Linear Neural Networks},
  booktitle = {International Conference on Learning Representations},
  year      = {2014},
  eprint    = {1312.6120},
  archivePrefix = {arXiv},
  primaryClass  = {cs.NE}
}

@article{candes2015phase,
  author  = {Cand{\`e}s, Emmanuel J. and Li, Xiaodong and Soltanolkotabi, Mahdi},
  title   = {Phase Retrieval via Wirtinger Flow: Theory and Algorithms},
  journal = {IEEE Transactions on Information Theory},
  volume  = {61},
  number  = {4},
  pages   = {1985--2007},
  year    = {2015},
  doi     = {10.1109/TIT.2015.2399924}
}

@inproceedings{bordelon2020spectrum,
  author    = {Bordelon, Blake and Canatar, Abdulkadir and Pehlevan, Cengiz},
  title     = {Spectrum Dependent Learning Curves in Kernel Regression and Wide Neural Networks},
  booktitle = {Proceedings of the 37th International Conference on Machine Learning},
  series    = {Proceedings of Machine Learning Research},
  volume    = {119},
  pages     = {1024--1034},
  year      = {2020},
  publisher = {PMLR}
}

@article{canatar2021spectral,
  author  = {Canatar, Abdulkadir and Bordelon, Blake and Pehlevan, Cengiz},
  title   = {Spectral Bias and Task-Model Alignment Explain Generalization in Kernel Regression and Infinitely Wide Neural Networks},
  journal = {Nature Communications},
  volume  = {12},
  number  = {2914},
  year    = {2021},
  doi     = {10.1038/s41467-021-23103-1}
}

@misc{kaushik2025universalweightsubspacehypothesis,
      title={The Universal Weight Subspace Hypothesis}, 
      author={Prakhar Kaushik and Shravan Chaudhari and Ankit Vaidya and Rama Chellappa and Alan Yuille},
      year={2025},
      eprint={2512.05117},
      archivePrefix={arXiv},
      primaryClass={cs.LG},
      url={https://arxiv.org/abs/2512.05117}, 
}

@article{maloney2022solvable,
  author        = {Maloney, Alexander and Roberts, Daniel A. and Sully, James},
  title         = {A Solvable Model of Neural Scaling Laws},
  journal       = {arXiv preprint arXiv:2210.16859},
  year          = {2022},
  eprint        = {2210.16859},
  archivePrefix = {arXiv},
  primaryClass  = {cs.LG},
  url = {https://arxiv.org/abs/2210.16859}
}

@inproceedings{abbe2023sgd,
  author    = {Abbe, Emmanuel and Boix Adser{\`a}, Enric and Misiakiewicz, Theodor},
  title     = {SGD Learning on Neural Networks: Leap Complexity and Saddle-to-Saddle Dynamics},
  booktitle = {Proceedings of the Thirty Sixth Conference on Learning Theory},
  series    = {Proceedings of Machine Learning Research},
  volume    = {195},
  pages     = {2552--2623},
  year      = {2023},
  publisher = {PMLR}
}

@inproceedings{arora2019implicit,
  author    = {Arora, Sanjeev and Cohen, Nadav and Hu, Wei and Luo, Yuping},
  title     = {Implicit Regularization in Deep Matrix Factorization},
  booktitle = {Advances in Neural Information Processing Systems},
  volume    = {32},
  year      = {2019}
}

@article{berthier2023incremental,
  author  = {Berthier, Rapha{\"e}l},
  title   = {Incremental Learning in Diagonal Linear Networks},
  journal = {Journal of Machine Learning Research},
  volume  = {24},
  number  = {171},
  pages   = {1--26},
  year    = {2023}
}

@inproceedings{zhu2024quadratic,
  author    = {Zhu, Libin and Liu, Chaoyue and Radhakrishnan, Adityanarayanan and Belkin, Mikhail},
  title     = {Quadratic Models for Understanding Catapult Dynamics of Neural Networks},
  booktitle = {International Conference on Learning Representations},
  year      = {2024},
  url       = {https://openreview.net/forum?id=PvJnX3dwsD}
}

@inproceedings{jacot2018neural,
  author    = {Jacot, Arthur and Gabriel, Franck and Hongler, Cl{\'e}ment},
  title     = {Neural Tangent Kernel: Convergence and Generalization in Neural Networks},
  booktitle = {Advances in Neural Information Processing Systems},
  volume    = {31},
  year      = {2018}
}

@inproceedings{chizat2019lazy,
  author    = {Chizat, L{\'e}na{\"i}c and Oyallon, Edouard and Bach, Francis},
  title     = {On Lazy Training in Differentiable Programming},
  booktitle = {Advances in Neural Information Processing Systems},
  volume    = {32},
  year      = {2019}
}

@article{radhakrishnan2022mechanism,
  author       = {Radhakrishnan, Adityanarayanan and Beaglehole, Daniel and Pandit, Parthe and Belkin, Mikhail},
  title        = {Mechanism for feature learning in neural networks and backpropagation-free machine learning models},
  journal      = {Science},
  volume       = {383},
  number       = {6690},
  pages        = {1461--1467},
  year         = {2024},
  doi          = {10.1126/science.adi5639},
  eprint       = {2212.13881},
  archivePrefix= {arXiv},
  primaryClass = {cs.LG}
}

@inproceedings{shazeer2017outrageously,
  author    = {Shazeer, Noam and Mirhoseini, Azalia and Maziarz, Krzysztof and Davis, Andy and Le, Quoc V. and Hinton, Geoffrey E. and Dean, Jeff},
  title     = {Outrageously Large Neural Networks: The Sparsely-Gated Mixture-of-Experts Layer},
  booktitle = {International Conference on Learning Representations},
  year      = {2017}
}

@inproceedings{gunasekar2017implicit,
  title     = {Implicit Regularization in Matrix Factorization},
  author    = {Gunasekar, Suriya and Woodworth, Blake and Bhojanapalli, Srinadh and Neyshabur, Behnam and Srebro, Nathan},
  booktitle = {Advances in Neural Information Processing Systems},
  volume    = {30},
  pages     = {6151--6159},
  year      = {2017},
  publisher = {Curran Associates, Inc.},
  url       = {https://proceedings.neurips.cc/paper_files/paper/2017/hash/58191d2a914c6dae66371c9dcdc91b41-Abstract.html},
  eprint    = {1705.09280},
  archivePrefix = {arXiv},
  primaryClass  = {stat.ML}
}

@inproceedings{ziyin2024parameter,
  title={Parameter symmetry and noise equilibrium of stochastic gradient descent},
  author={Ziyin, Liu and Wang, Mingze and Li, Hongchao and Wu, Lei},
  booktitle={The Thirty-eighth Annual Conference on Neural Information Processing Systems},
  year={2024}
}

@inproceedings{ziyin2024symmetry,
  title={Symmetry Induces Structure and Constraint of Learning},
  author={Ziyin, Liu},
  booktitle={Forty-first International Conference on Machine Learning},
  year={2024}
}

@article{liu2022towards,
  title={Towards understanding grokking: An effective theory of representation learning},
  author={Liu, Ziming and Kitouni, Ouail and Nolte, Niklas S and Michaud, Eric and Tegmark, Max and Williams, Mike},
  journal={Advances in Neural Information Processing Systems},
  volume={35},
  pages={34651--34663},
  year={2022}
}

@article{kaplan2020scaling,
  title={Scaling laws for neural language models},
  author={Kaplan, Jared and McCandlish, Sam and Henighan, Tom and Brown, Tom B and Chess, Benjamin and Child, Rewon and Gray, Scott and Radford, Alec and Wu, Jeffrey and Amodei, Dario},
  journal={arXiv preprint arXiv:2001.08361},
  year={2020},
  url = {https://arxiv.org/abs/2001.08361}
}

@article{bahri2024explaining,
  title={Explaining neural scaling laws},
  author={Bahri, Yasaman and Dyer, Ethan and Kaplan, Jared and Lee, Jaehoon and Sharma, Utkarsh},
  journal={Proceedings of the National Academy of Sciences},
  volume={121},
  number={27},
  pages={e2311878121},
  year={2024},
  publisher={National Academy of Sciences}
}

@inproceedings{huh2024platonic,
  title     = {Position: The Platonic Representation Hypothesis},
  author    = {Huh, Minyoung and Cheung, Brian and Wang, Tongzhou and Isola, Phillip},
  booktitle = {Proceedings of the 41st International Conference on Machine Learning},
  series    = {Proceedings of Machine Learning Research},
  volume    = {235},
  pages     = {20617--20642},
  year      = {2024},
  publisher = {PMLR},
  url       = {https://proceedings.mlr.press/v235/huh24a.html},
  eprint    = {2405.07987},
  archivePrefix = {arXiv},
  primaryClass  = {cs.LG}
}

@article{beaglehole2023mechanism,
  title={Mechanism of feature learning in convolutional neural networks},
  author={Beaglehole, Daniel and Radhakrishnan, Adityanarayanan and Pandit, Parthe and Belkin, Mikhail},
  journal={arXiv preprint arXiv:2309.00570},
  year={2023},
  url = {https://arxiv.org/abs/2309.00570}
}

@inproceedings{bardes2021vicreg,
  title     = {{VICReg}: Variance-Invariance-Covariance Regularization for Self-Supervised Learning},
  author    = {Bardes, Adrien and Ponce, Jean and LeCun, Yann},
  booktitle = {International Conference on Learning Representations},
  year      = {2022},
  eprint    = {2105.04906},
  archivePrefix = {arXiv},
  primaryClass  = {cs.CV}
}

@article{srebro2004maximum,
  title={Maximum-margin matrix factorization},
  author={Srebro, Nathan and Rennie, Jason and Jaakkola, Tommi},
  journal={Advances in neural information processing systems},
  volume={17},
  year={2004}
}

@article{von2021self,
  title={Self-supervised learning with data augmentations provably isolates content from style},
  author={Von K{\"u}gelgen, Julius and Sharma, Yash and Gresele, Luigi and Brendel, Wieland and Sch{\"o}lkopf, Bernhard and Besserve, Michel and Locatello, Francesco},
  journal={Advances in neural information processing systems},
  volume={34},
  pages={16451--16467},
  year={2021}
}

@article{ge2016matrix,
  title={Matrix completion has no spurious local minimum},
  author={Ge, Rong and Lee, Jason D and Ma, Tengyu},
  journal={Advances in neural information processing systems},
  volume={29},
  year={2016}
}

@article{stoger2021small,
  title={Small random initialization is akin to spectral learning: Optimization and generalization guarantees for overparameterized low-rank matrix reconstruction},
  author={St{\"o}ger, Dominik and Soltanolkotabi, Mahdi},
  journal={Advances in Neural Information Processing Systems},
  volume={34},
  pages={23831--23843},
  year={2021}
}

@article{vaswani2017attention,
  title={Attention is all you need},
  author={Vaswani, Ashish and Shazeer, Noam and Parmar, Niki and Uszkoreit, Jakob and Jones, Llion and Gomez, Aidan N and Kaiser, {\L}ukasz and Polosukhin, Illia},
  journal={Advances in neural information processing systems},
  volume={30},
  year={2017}
}

@article{krizhevsky2012imagenet,
  title={Imagenet classification with deep convolutional neural networks},
  author={Krizhevsky, Alex and Sutskever, Ilya and Hinton, Geoffrey E},
  journal={Advances in neural information processing systems},
  volume={25},
  year={2012}
}

@article{koren2009matrix,
  title={Matrix factorization techniques for recommender systems},
  author={Koren, Yehuda and Bell, Robert and Volinsky, Chris},
  journal={Computer},
  volume={42},
  number={8},
  pages={30--37},
  year={2009},
  publisher={IEEE}
}

@book{landau2013statistical,
  title={Statistical Physics: Volume 5},
  author={Landau, Lev Davidovich and Lifshitz, Evgenii Mikhailovich},
  volume={5},
  year={2013},
  publisher={Elsevier}
}

@article{bahri2020statistical,
  title={Statistical mechanics of deep learning},
  author={Bahri, Yasaman and Kadmon, Jonathan and Pennington, Jeffrey and Schoenholz, Sam S and Sohl-Dickstein, Jascha and Ganguli, Surya},
  journal={Annual Review of Condensed Matter Physics},
  volume={11},
  pages={501--528},
  year={2020},
  publisher={Annual Reviews}
}

@article{zdeborova2016statistical,
  title={Statistical physics of inference: Thresholds and algorithms},
  author={Zdeborov{\'a}, Lenka and Krzakala, Florent},
  journal={Advances in Physics},
  volume={65},
  number={5},
  pages={453--552},
  year={2016},
  publisher={Taylor \& Francis}
}

@inproceedings{mei2019mean,
  title={Mean-field theory of two-layers neural networks: dimension-free bounds and kernel limit},
  author={Mei, Song and Misiakiewicz, Theodor and Montanari, Andrea},
  booktitle={Conference on learning theory},
  pages={2388--2464},
  year={2019},
  organization={PMLR}
}

@article{pesme2021implicit,
  title={Implicit bias of sgd for diagonal linear networks: a provable benefit of stochasticity},
  author={Pesme, Scott and Pillaud-Vivien, Loucas and Flammarion, Nicolas},
  journal={Advances in Neural Information Processing Systems},
  volume={34},
  pages={29218--29230},
  year={2021}
}

@inproceedings{even2023s,
  title     = {{(S)GD} over Diagonal Linear Networks: Implicit bias, Large Stepsizes and Edge of Stability},
  author    = {Even, Mathieu and Pesme, Scott and Gunasekar, Suriya and Flammarion, Nicolas},
  booktitle = {Advances in Neural Information Processing Systems},
  volume    = {36},
  year      = {2023},
  url       = {https://proceedings.neurips.cc/paper_files/paper/2023/hash/5da6ce80e97671b70c01a2e703b868b3-Abstract-Conference.html},
  eprint    = {2302.08982},
  archivePrefix = {arXiv},
  primaryClass  = {cs.LG}
}

@inproceedings{frankle2018lottery,
  title     = {The Lottery Ticket Hypothesis: Finding Sparse, Trainable Neural Networks},
  author    = {Frankle, Jonathan and Carbin, Michael},
  booktitle = {International Conference on Learning Representations},
  year      = {2019},
  url       = {https://openreview.net/forum?id=rJl-b3RcF7},
  eprint    = {1803.03635},
  archivePrefix = {arXiv},
  primaryClass  = {cs.LG}
}

@inproceedings{ziyin2022shapes,
  title     = {What shapes the loss landscape of self-supervised learning?},
  author    = {Ziyin, Liu and Lubana, Ekdeep Singh and Ueda, Masahito and Tanaka, Hidenori},
  booktitle = {International Conference on Learning Representations},
  year      = {2023},
  eprint    = {2210.00638},
  archivePrefix = {arXiv},
  primaryClass  = {cs.LG}
}

@inproceedings{mikolov2013distributed,
  title={Distributed Representations of Words and Phrases and their Compositionality},
  author={Mikolov, Tomas and Sutskever, Ilya and Chen, Kai and Corrado, Greg S. and Dean, Jeff},
  booktitle={Advances in Neural Information Processing Systems},
  volume={26},
  year={2013}
}

@misc{goldberg2014word2vec,
  title={word2vec Explained: Deriving Mikolov et al.'s Negative-Sampling Word-Embedding Method},
  author={Goldberg, Yoav and Levy, Omer},
  year={2014},
  eprint={1402.3722},
  archivePrefix={arXiv},
  primaryClass={cs.CL},
  url = {https://arxiv.org/abs/1402.3722}
}

@inproceedings{pan2024dissecting,
  title     = {Dissecting Query-Key Interaction in Vision Transformers},
  author    = {Pan, Xu and Philip, Aaron and Xie, Ziqian and Schwartz, Odelia},
  booktitle = {Advances in Neural Information Processing Systems},
  volume    = {37},
  year      = {2024},
  url       = {https://openreview.net/forum?id=CsF3PwBN6N},
  eprint    = {2405.14880},
  archivePrefix = {arXiv},
  primaryClass  = {cs.CV}
}

@article{maillard2024bayes,
  title={Bayes-optimal learning of an extensive-width neural network from quadratically many samples},
  author={Maillard, Antoine and Troiani, Emanuele and Martin, Simon and Zdeborov{\'a}, Lenka and Krzakala, Florent},
  journal={Advances in Neural Information Processing Systems},
  volume={37},
  pages={82085--82132},
  year={2024}
}

@article{erba2025bilinear,
  title={Bilinear sequence regression: A model for learning from long sequences of high-dimensional tokens},
  author={Erba, Vittorio and Troiani, Emanuele and Biggio, Luca and Maillard, Antoine and Zdeborov{\'a}, Lenka},
  journal={Physical Review X},
  volume={15},
  number={2},
  pages={021092},
  year={2025},
  publisher={APS}
}

@inproceedings{boncoraglio2025single,
  title     = {Single-Head Attention in High Dimensions: A Theory of Generalization, Weights Spectra, and Scaling Laws},
  author    = {Boncoraglio, Fabrizio and Erba, Vittorio and Troiani, Emanuele and Xu, Yizhou and Krzakala, Florent and Zdeborov{\'a}, Lenka},
  booktitle = {Proceedings of the 43rd International Conference on Machine Learning},
  year      = {2026},
  url       = {https://openreview.net/forum?id=3qan4Zg9rA},
  eprint    = {2509.24914},
  archivePrefix = {arXiv},
  primaryClass  = {cs.LG}
}

@inproceedings{defilippis2025scaling,
  title     = {Scaling Laws and Spectra of Shallow Neural Networks in the Feature Learning Regime},
  author    = {Defilippis, Leonardo and Xu, Yizhou and Girardin, Julius and Erba, Vittorio and Troiani, Emanuele and Zdeborov{\'a}, Lenka and Loureiro, Bruno and Krzakala, Florent},
  booktitle = {The Fourteenth International Conference on Learning Representations},
  year      = {2026},
  url       = {https://openreview.net/forum?id=Q3yLIIkt7z},
  eprint    = {2509.24882},
  archivePrefix = {arXiv},
  primaryClass  = {cs.LG}
}

@inproceedings{xu2025fundamental,
  title     = {Fundamental Limits of Matrix Sensing: Exact Asymptotics, Universality, and Applications},
  author    = {Xu, Yizhou and Maillard, Antoine and Zdeborov{\'a}, Lenka and Krzakala, Florent},
  booktitle = {Proceedings of Thirty Eighth Conference on Learning Theory},
  series    = {Proceedings of Machine Learning Research},
  volume    = {291},
  pages     = {5757--5823},
  year      = {2025},
  publisher = {PMLR},
  url       = {https://proceedings.mlr.press/v291/xu25a.html},
  eprint    = {2503.14121},
  archivePrefix = {arXiv},
  primaryClass  = {cs.LG}
}

@article{maillard2020phase,
  title={Phase retrieval in high dimensions: Statistical and computational phase transitions},
  author={Maillard, Antoine and Loureiro, Bruno and Krzakala, Florent and Zdeborov{\'a}, Lenka},
  journal={Advances in Neural Information Processing Systems},
  volume={33},
  pages={11071--11082},
  year={2020}
}

@inproceedings{wang2026universal,
  title={A universal compression theory for lottery ticket hypothesis and neural scaling laws},
  author={Wang, Hong-Yi and Luo, Di and Poggio, Tomaso and Chuang, Isaac L and Ziyin, Liu},
  booktitle={The Fourteenth International Conference on Learning Representations},
  year={2026}
}

@article{ataee2023max,
  title={Max-margin token selection in attention mechanism},
  author={Ataee Tarzanagh, Davoud and Li, Yingcong and Zhang, Xuechen and Oymak, Samet},
  journal={Advances in neural information processing systems},
  volume={36},
  pages={48314--48362},
  year={2023}
}

@inproceedings{wang2025learning,
  title     = {Learning Compositional Functions with Transformers from Easy-to-Hard Data},
  author    = {Wang, Zixuan and Nichani, Eshaan and Bietti, Alberto and Damian, Alex and Hsu, Daniel and Lee, Jason D. and Wu, Denny},
  booktitle = {Proceedings of Thirty Eighth Conference on Learning Theory},
  series    = {Proceedings of Machine Learning Research},
  volume    = {291},
  pages     = {5632--5711},
  year      = {2025},
  publisher = {PMLR},
  url       = {https://proceedings.mlr.press/v291/wang25a.html},
  eprint    = {2505.23683},
  archivePrefix = {arXiv},
  primaryClass  = {cs.LG}
}

@article{maulen2026attention,
  title={Attention-based clustering},
  author={Maulen Soto, Rodrigo and Marion, Pierre and Boyer, Claire},
  journal={Advances in Neural Information Processing Systems},
  volume={38},
  pages={66455--66506},
  year={2025}
}

@article{gur2018gradient,
  title={Gradient descent happens in a tiny subspace},
  author={Gur-Ari, Guy and Roberts, Daniel A and Dyer, Ethan},
  journal={arXiv preprint arXiv:1812.04754},
  year={2018},
  url = {https://arxiv.org/abs/1812.04754}
}

@inproceedings{li2018measuring,
  title     = {Measuring the Intrinsic Dimension of Objective Landscapes},
  author    = {Li, Chunyuan and Farkhoor, Heerad and Liu, Rosanne and Yosinski, Jason},
  booktitle = {International Conference on Learning Representations},
  year      = {2018},
  url       = {https://openreview.net/forum?id=ryup8-WCW},
  eprint    = {1804.08838},
  archivePrefix = {arXiv},
  primaryClass  = {cs.LG}
}

@inproceedings{aghajanyan2021intrinsic,
  title={Intrinsic dimensionality explains the effectiveness of language model fine-tuning},
  author={Aghajanyan, Armen and Gupta, Sonal and Zettlemoyer, Luke},
  booktitle={Proceedings of the 59th annual meeting of the association for computational linguistics and the 11th international joint conference on natural language processing (volume 1: long papers)},
  pages={7319--7328},
  year={2021}
}

@book{pope2001turbulent,
  title     = {Turbulent Flows},
  author    = {Pope, Stephen B.},
  year      = {2000},
  publisher = {Cambridge University Press},
  address   = {Cambridge},
  doi       = {10.1017/CBO9780511840531}
}

@book{hofbauer1998evolutionary,
  title={Evolutionary games and population dynamics},
  author={Hofbauer, Josef and Sigmund, Karl},
  year={1998},
  publisher={Cambridge university press}
}

@book{abou2012matrix,
  title={Matrix Riccati equations in control and systems theory},
  author={Abou-Kandil, Hisham and Freiling, Gerhard and Ionescu, Vlad and Jank, Gerhard},
  year={2012},
  publisher={Birkh{\"a}user}
}

@inproceedings{arous2025learning,
  title     = {Learning quadratic neural networks in high dimensions: {SGD} dynamics and scaling laws},
  author    = {Ben Arous, G{\'e}rard and Erdogdu, Murat A. and Vural, Nuri Mert and Wu, Denny},
  booktitle = {Advances in Neural Information Processing Systems},
  volume    = {38},
  year      = {2025},
  url       = {https://proceedings.neurips.cc/paper_files/paper/2025/file/d7ce06e9293c3d8e6cb3f80b4157f875-Paper-Conference.pdf},
  eprint    = {2508.03688},
  archivePrefix = {arXiv},
  primaryClass  = {cs.LG}
}

@inproceedings{du2018gradient,
  title={Gradient descent learns one-hidden-layer cnn: Don’t be afraid of spurious local minima},
  author={Du, Simon and Lee, Jason and Tian, Yuandong and Singh, Aarti and Poczos, Barnabas},
  booktitle={International Conference on Machine Learning},
  pages={1339--1348},
  year={2018},
  organization={PMLR}
}

@article{cao2019tight,
  title={Tight sample complexity of learning one-hidden-layer convolutional neural networks},
  author={Cao, Yuan and Gu, Quanquan},
  journal={Advances in Neural Information Processing Systems},
  volume={32},
  year={2019}
}

@article{nguyen2024sigmoid,
  title={Sigmoid gating is more sample efficient than softmax gating in mixture of experts},
  author={Nguyen, Huy and Ho, Nhat and Rinaldo, Alessandro},
  journal={Advances in Neural Information Processing Systems},
  volume={37},
  pages={118357--118388},
  year={2024}
}

@article{michaud2023quantization,
  title={The quantization model of neural scaling},
  author={Michaud, Eric and Liu, Ziming and Girit, Uzay and Tegmark, Max},
  journal={Advances in Neural Information Processing Systems},
  volume={36},
  pages={28699--28722},
  year={2023}
}

@article{nam2024exactly,
  title={An exactly solvable model for emergence and scaling laws in the multitask sparse parity problem},
  author={Nam, Yoonsoo and Fonseca, Nayara and Lee, Seok H and Mingard, Chris and Louis, Ard A},
  journal={Advances in Neural Information Processing Systems},
  volume={37},
  pages={39632--39693},
  year={2024}
}

@inproceedings{zhang2025saddle,
  title     = {Saddle-to-Saddle Dynamics Explains a Simplicity Bias Across Neural Network Architectures},
  author    = {Zhang, Yedi and Saxe, Andrew and Latham, Peter E.},
  booktitle = {The Fourteenth International Conference on Learning Representations},
  year      = {2026},
  url       = {https://openreview.net/forum?id=Vit5M0G5Gb},
  eprint    = {2512.20607},
  archivePrefix = {arXiv},
  primaryClass  = {cs.LG}
}

@article{kunin2026alternating,
  title={Alternating gradient flows: A theory of feature learning in two-layer neural networks},
  author={Kunin, Daniel and Marchetti, Giovanni Luca and Chen, Feng and Karkada, Dhruva and Simon, James and Deweese, Michael and Ganguli, Surya and Miolane, Nina},
  journal={Advances in Neural Information Processing Systems},
  volume={38},
  pages={4377--4424},
  year={2025}
}

@inproceedings{xu2025three,
  title     = {Three Mechanisms of Feature Learning in a Linear Network},
  author    = {Xu, Yizhou and Ziyin, Liu},
  booktitle = {International Conference on Learning Representations},
  year      = {2025},
  url       = {https://openreview.net/forum?id=Wh4SE2S7Mo},
  eprint    = {2401.07085},
  archivePrefix = {arXiv},
  primaryClass  = {cs.LG}
}

@misc{meterez2026defensequadraticmodel,
      title={A Defense of the Quadratic Model}, 
      author={Alexandru Meterez and Pranav Ajit Nair and Depen Morwani and Cengiz Pehlevan and Sham Kakade and Alex Damian},
      year={2026},
      eprint={2607.21716},
      archivePrefix={arXiv},
      primaryClass={cs.LG},
      url={https://arxiv.org/abs/2607.21716}, 
}

@book{goldenfeld1992lectures,
  title     = {Lectures on Phase Transitions and the Renormalization Group},
  author    = {Goldenfeld, Nigel},
  year      = {1992},
  publisher = {Addison-Wesley},
  address   = {Reading, MA}
}

@book{chaikin1995principles,
  title     = {Principles of Condensed Matter Physics},
  author    = {Chaikin, Paul M. and Lubensky, Tom C.},
  year      = {1995},
  publisher = {Cambridge University Press},
  address   = {Cambridge}
}

@article{carleo2019machine,
  title   = {Machine learning and the physical sciences},
  author  = {Carleo, Giuseppe and Cirac, Ignacio and Cranmer, Kyle and Daudet, Laurent and Schuld, Maria and Tishby, Naftali and Vogt-Maranto, Leslie and Zdeborov\'a, Lenka},
  journal = {Rev. Mod. Phys.},
  volume  = {91},
  pages   = {045002},
  year    = {2019},
  doi     = {10.1103/RevModPhys.91.045002}
}

@article{seung1992statistical,
  title   = {Statistical mechanics of learning from examples},
  author  = {Seung, H. S. and Sompolinsky, H. and Tishby, N.},
  journal = {Phys. Rev. A},
  volume  = {45},
  pages   = {6056--6091},
  year    = {1992},
  doi     = {10.1103/PhysRevA.45.6056}
}

@book{engel2001statistical,
  title     = {Statistical Mechanics of Learning},
  author    = {Engel, Andreas and Van den Broeck, Christian},
  year      = {2001},
  publisher = {Cambridge University Press},
  address   = {Cambridge}
}

@article{advani2020high,
  title   = {High-dimensional dynamics of generalization error in neural networks},
  author  = {Advani, Madhu S. and Saxe, Andrew M. and Sompolinsky, Haim},
  journal = {Neural Networks},
  volume  = {132},
  pages   = {428--446},
  year    = {2020},
  doi     = {10.1016/j.neunet.2020.08.022}
}

@article{may1972will,
  title   = {Will a large complex system be stable?},
  author  = {May, Robert M.},
  journal = {Nature},
  volume  = {238},
  pages   = {413--414},
  year    = {1972},
  doi     = {10.1038/238413a0}
}

@article{bunin2017ecological,
  title   = {Ecological communities with {L}otka--{V}olterra dynamics},
  author  = {Bunin, Guy},
  journal = {Phys. Rev. E},
  volume  = {95},
  pages   = {042414},
  year    = {2017},
  doi     = {10.1103/PhysRevE.95.042414}
}

@inproceedings{bordelon2024dynamical,
  title     = {A dynamical model of neural scaling laws},
  author    = {Bordelon, Blake and Atanasov, Alexander and Pehlevan, Cengiz},
  booktitle = {Proceedings of the 41st International Conference on Machine Learning},
  series    = {PMLR},
  volume    = {235},
  year      = {2024}
}

@misc{ziyin2025parameter,
  title         = {Parameter symmetry potentially unifies deep learning theory},
  author        = {Ziyin, Liu and Xu, Yizhou and Poggio, Tomaso and Chuang, Isaac},
  year          = {2025},
  eprint        = {2502.05300},
  archivePrefix = {arXiv},
  primaryClass  = {cs.LG},
  url = {https://arxiv.org/abs/2502.05300}
}

@article{wu2019learnability,
  title   = {Learnability for the information bottleneck},
  author  = {Wu, Tailin and Fischer, Ian and Chuang, Isaac L. and Tegmark, Max},
  journal = {Entropy},
  volume  = {21},
  number  = {10},
  pages   = {924},
  year    = {2019},
  doi     = {10.3390/e21100924}
}

@inproceedings{hoffmann2022training,
  title     = {Training compute-optimal large language models},
  author    = {Hoffmann, Jordan and Borgeaud, Sebastian and Mensch, Arthur and Buchatskaya, Elena and Cai, Trevor and Rutherford, Eliza and de Las Casas, Diego and Hendricks, Lisa Anne and Welbl, Johannes and Clark, Aidan and Hennigan, Tom and Noland, Eric and Millican, Katie and van den Driessche, George and Damoc, Bogdan and Guy, Aurelia and Osindero, Simon and Simonyan, Karen and Elsen, Erich and Rae, Jack W. and Vinyals, Oriol and Sifre, Laurent},
  booktitle = {Advances in Neural Information Processing Systems},
  volume    = {35},
  year      = {2022},
  eprint    = {2203.15556},
  archivePrefix = {arXiv}
}
\clearpage

\appendix

\tableofcontents

\begin{table}[htbp]
\caption{Notation used throughout the paper. ``Defined'' gives the section or equation in which the symbol is introduced.}
\label{tab:notation}
\begin{tabular}{@{}llll@{}}
\hline\hline
Symbol & Meaning & Space & Defined\\
\hline
\multicolumn{4}{@{}l}{\emph{Architecture and the normal form}}\\
$d$ & width: number of exchangeable components & $\mathbb{N}$ & Section~\ref{sec:theory}\\
$p$ & parameter dimension of one component & $\mathbb{N}$ & Section~\ref{sec:theory}\\
$w_i$ & the $i$-th component (``neuron'') & $\mathbb{R}^{p}$ & Section~\ref{sec:theory}\\
$W$ & all components, $W=(w_1,\dots,w_d)$ & $\mathbb{R}^{p\times d}$ & Section~\ref{sec:theory}\\
$f_x(W)$ & module output on input $x$ & $\mathbb{R}$ & Section~\ref{sec:theory}\\
$S_d$ & symmetric group acting on the $d$ components & --- & Definition~\ref{def:perm}\\
$A(x)$ & structure matrix & $\mathbb{R}^{p\times p}$ sym. & Eq.~\eqref{eq:NQF}\\
$B(x),\,g(x)$ & extra couplings when ZGZ is dropped & $\mathbb{R}^{p\times p},\mathbb{R}^{p}$ & Eq.~\eqref{eq:NQF-general}\\
$C(x)$ & off-diagonal block of $A$ in a feedforward module & $\mathbb{R}^{d_1\times d_2}$ & Section~\ref{sec:example}\\
$\epsilon$ & initialization scale (small parameter of the expansion) & $\mathbb{R}_{>0}$ & Section~\ref{sec:scaling}\\
\hline
\multicolumn{4}{@{}l}{\emph{Order parameters and dynamics}}\\
$M$ & order parameter $WW^{\top}$, positive semidefinite & $\mathbb{R}^{p\times p}$ & Eq.~\eqref{eq:M_dynamics}\\
$\mu$ & first-moment order parameter $\sum_i w_i$ & $\mathbb{R}^{p}$ & Section~\ref{sec:dynamics}\\
$H$ & residual-weighted structure operator $\frac{4}{m}\sum_\mu\Delta_\mu A(x_\mu)$ & $\mathbb{R}^{p\times p}$ & Eq.~\eqref{eq:W_dynamics}\\
$\eta$ & learning rate & $\mathbb{R}_{>0}$ & Thm.~\ref{theo:master}\\
$\gamma$ & weight-decay strength & $\mathbb{R}_{\ge0}$ & Section~\ref{sec:dynamics}\\
$k_{\mathcal V}$ & dimension of the joint data subspace & $\mathbb{N}$ & Thm.~\ref{theo:compressibility}\\
$K(x,x')$ & empirical neural tangent kernel & $\mathbb{R}$ & App.~\ref{app:ntk}\\
\hline
\multicolumn{4}{@{}l}{\emph{Data and spectra}}\\
$m$ & number of training samples & $\mathbb{N}$ & Section~\ref{sec:solutions}\\
$x_\mu,\,y_\mu$ & $\mu$-th input and target & --- & Section~\ref{sec:solutions}\\
$\Delta_\mu$ & residual $\Tr[MA(x_\mu)]-y_\mu$ & $\mathbb{R}$ & Section~\ref{sec:solutions}\\
$\mathcal{L}$ & sample-averaged MSE loss & $\mathbb{R}_{\ge0}$ & Section~\ref{sec:solutions}\\
$\tilde{\mathcal{L}}$ & unnormalized squared error $\sum_\mu\Delta_\mu^2$ & $\mathbb{R}_{\ge0}$ & Section~\ref{sec:scaling}\\
$\mathcal{E}$ & excess loss $\mathcal{L}(t)-\mathcal{L}(\infty)$ & $\mathbb{R}_{\ge0}$ & Section~\ref{sec:scaling}\\
$P,\,\Lambda_\mu$ & common eigenbasis and eigenvalue matrix of $A(x_\mu)$ & $\mathbb{R}^{p\times p}$ & Asm.~\ref{assume:commutation}\\
$\lambda_{\mu,k}$ & $k$-th eigenvalue of $A(x_\mu)$ & $\mathbb{R}$ & Asm.~\ref{assume:commutation}\\
$z_k$ & $k$-th feature, the $k$-th eigenvalue of $M$ & $\mathbb{R}_{\ge0}$ & Section~\ref{sec:solutions}\\
$r_k$ & growth rate of feature $k$ & $\mathbb{R}$ & Eq.~\eqref{eq:r-and-C}\\
$C_{kj}$ & feature interaction matrix & $\mathbb{R}^{p\times p}$ & Eq.~\eqref{eq:r-and-C}\\
$Y,\,\gamma_k$ & $\frac{4}{m}\sum_\mu y_\mu A(x_\mu)$ and its $k$-th eigenvalue & $\mathbb{R}^{p\times p},\mathbb{R}$ & Thm.~\ref{theo:isotropic_samples}\\
$c$ & isotropy constant of the data second moment & $\mathbb{R}_{>0}$ & Asm.~\ref{assume:isotropic}\\
\hline
\multicolumn{4}{@{}l}{\emph{Sudden learning and scaling laws}}\\
$\zeta_k$ & effective growth rate ($r_k$, or $\gamma_k$ if isotropic) & $\mathbb{R}_{>0}$ & Section~\ref{sec:scaling}\\
$a$ & $1$ under Thm.~\ref{theo:ortho_features}, $2$ under Thm.~\ref{theo:isotropic_samples} & $\{1,2\}$ & Eq.~\eqref{eq:ignition-time}\\
$V_k$ & effective target strength of feature $k$ & $\mathbb{R}_{\ge0}$ & Section~\ref{sec:scaling}\\
$t_k^*$ & ignition time of feature $k$ & $\mathbb{R}_{>0}$ & Eq.~\eqref{eq:ignition-time}\\
$\tau$ & rescaled time $t/\ln(1/\epsilon)$ & $\mathbb{R}_{>0}$ & Section~\ref{sec:scaling}\\
$\alpha_1,\alpha_2$ & decay exponents of $V_k$ and $\zeta_k$ & $\mathbb{R}_{>0}$ & Section~\ref{sec:scaling}\\
$\gamma_1,\gamma_2$ & decay exponents of $y_\mu$ and $\zeta_\mu$ & $\mathbb{R}_{>0}$ & Section~\ref{sec:scaling}\\
$\beta_0$ & decay exponent of the initialization profile & $\mathbb{R}$ & App.~\ref{app:feature-descent}\\
$\beta$ & decay exponent of the target coefficients $b_k$ & $\mathbb{R}$ & Section~\ref{sec:exp}\\
$\theta$ & spectral exponent of the Fourier features & $\mathbb{R}$ & Section~\ref{sec:exp}\\
$\alpha_{\mathrm{eff}}$ & effective exponent at finite $\epsilon$ & $\mathbb{R}$ & App.~\ref{app:exp-details}\\
\hline\hline
\end{tabular}
\end{table}

\section{Theory}
\subsection{List of Notations}
Table \ref{tab:notation} lists all notations used throughout the paper.

\subsection{List of NQF Examples}
\label{app:table}

Table~\ref{tab:NQF_summary} gives the full list of models that reduce to the neural quadratic form, extending Table~\ref{tab:NQF_compact}.

\begin{table*}[t!]
\caption{\small Complete list of models that are special cases of the neural quadratic
form, together with their structure matrices $A(x)$. A six-row excerpt appears as
Table~\ref{tab:NQF_compact} in the main text. Two conventions vary across the rows. The
neurons are the columns of $W$ in most entries, so that $M=WW^\top$; in the tied query--key
and linear SSL rows they are the rows of $W$, so that $M=W^\top W$. 
% And the entry marked
% $\dagger$ carries a structure matrix that depends on the neuron index $i$: it is a
% \emph{generalized} NQF $\sum_i\Tr[w_iw_i^\top A_i(x)]$, for which permutation symmetry is
% broken and the closure of Theorem~\ref{theo:master} on $M$ alone does not hold. It may be
% read instead as a single NQF of width $d=1$ and parameter dimension $\sum_i p_i$ whose
% structure matrix is the block-diagonal assembly of the $A_i$, exactly as the Fourier MLP of
% Appendix~\ref{app:exp-details} is.
}
\label{tab:NQF_summary}
\renewcommand{\arraystretch}{1.2}
{\footnotesize
\begin{tabular}{@{}p{3.3cm}p{3.8cm}p{7.4cm}p{1.3cm}@{}}
\hline\hline
\textbf{Model} & \textbf{NQF notation} & \textbf{Structure matrix $A(x)$} & \textbf{Refs.} \\
\hline
\rr PSD matrix sensing / \newline PSD factorization
& \rr $\hat{y}_a = \langle S_a, WW^\top \rangle$
& \rr $A_a = \mathrm{Sym}(S_a)$; if $S_a = S_a^\top$, $A_a = S_a$.
& \rr \cite{stoger2021small,gunasekar2017implicit,xu2025fundamental} \\
\rr Rectangular matrix sensing / \newline matrix factorization
& \rr $\hat{y}_a = \langle S_a, UV^\top \rangle = \text{Tr}(U^\top S_a V)$
& \rr $A_a = \frac{1}{2} \begin{bmatrix} 0 & S_a \\ S_a^\top & 0 \end{bmatrix}$
& \rr \cite{gunasekar2017implicit,arora2019implicit,xu2025fundamental} \\
\rr Matrix completion / \newline recommender entry
& \rr $\hat{R}_{ij} = u_i^\top v_j$
& \rr $A_{ij} = \frac{1}{2} \begin{bmatrix} 0 & e_i e_j^\top \\ e_j e_i^\top & 0 \end{bmatrix}$
& \rr \cite{koren2009matrix,ge2016matrix} \\
\rr Word-context embedding score
& \rr $s_{wc} = u_w^\top v_c$
& \rr $A_{wc} = \frac{1}{2} \begin{bmatrix} 0 & e_w e_c^\top \\ e_c e_w^\top & 0 \end{bmatrix}$
& \rr \cite{mikolov2013distributed,goldberg2014word2vec} \\
\rr Quadratic network
& \rr $f_W(x) = \sum_r (w_r^\top x)^2$
& \rr $A_x = xx^\top$
& \rr \cite{radhakrishnan2022mechanism,zhu2024quadratic,defilippis2025scaling} \\
\rr Phase retrieval
& \rr $\hat{y}_a = a^\top WW^\top a$
& \rr $A_a = aa^\top$
& \rr \cite{candes2015phase,maillard2020phase} \\
\rr Diagonal linear network, \newline square parametrization
& \rr $f_x(u) = \sum_k x_k u_k^2 / 4$
& \rr $A_x = \frac{1}{4} \mathrm{Diag}(x)$
& \rr \cite{berthier2023incremental,pesme2021implicit} \\
\rr Diagonal linear network, \newline two-factor signed form
& \rr $f_x(u, v) = u^\top \mathrm{Diag}(x) v$
& \rr $A_x = \frac{1}{2} \begin{bmatrix} 0 & \mathrm{Diag}(x) \\ \mathrm{Diag}(x) & 0 \end{bmatrix}$
& \rr \cite{even2023s,pesme2021implicit} \\
\rr Two-layer MLP
& \rr $f_x(W) = \sum_{i=1}^d v_i \phi(u_i^\top x)$
& \rr $A(x) = \frac{\phi'(0)}{2} \begin{bmatrix} 0_{k \times k} & x \\ x^\top & 0 \end{bmatrix}$
& \rr \cite{fukumizu1996regularity,radhakrishnan2022mechanism} \\
\rr Single-layer CNN
& \rr $f_x(W) = \sum_{i=1}^d v_i \sum_{p=1}^P \phi(k_i^\top x_p)$
& \rr $A(X) = \frac{\phi'(0)}{2} \begin{bmatrix} 0_{m \times m} & \sum_{p=1}^P x_p \\ \sum_{p=1}^P x_p^\top & 0 \end{bmatrix}$
& \rr \cite{krizhevsky2012imagenet,du2018gradient,cao2019tight} \\
\rr Mixture of Experts (MoE)
& \rr $f_x(W) = \sum_{i=1}^d \psi(r_i^\top x) E(x; \Theta_i)$
& \rr $A(x) = \frac{\psi'(0)}{2} \begin{bmatrix} 0 & x\,G(x)^\top \\ G(x)\,x^\top & 0 \end{bmatrix}$, with $G(x) := \nabla_{\Theta_i} E(x; 0)$
& \rr \cite{shazeer2017outrageously,nguyen2024sigmoid} \\
\rr Multi-head attention (MHA)
& \rr $f_x(W) = \sum_{i=1}^d v_i^\top W_i^V X \dots$
& \rr $A(X) = \frac{1}{2} \begin{bmatrix} 0 & 0 & 0 & 0 \\ 0 & 0 & 0 & 0 \\ 0 & 0 & 0 & x_{\mathrm{avg}} \otimes I_{d_v} \\ 0 & 0 & x_{\mathrm{avg}}^\top \otimes I_{d_v} & 0 \end{bmatrix}$
& \rr \cite{vaswani2017attention} \\
\rr Query-key-only attention
& \rr $f_x = \sum_{i=1}^d c^\top X \cdot \text{softmax}(\dots)$
& \rr $A(X) = \frac{1}{2\sqrt{d_k}} \begin{bmatrix} 0 & x u^\top \otimes I_{d_k} \\ u x^\top \otimes I_{d_k} & 0 \end{bmatrix}$, \newline where $u = \Sigma_X c$.
& \rr \cite{ataee2023max,pan2024dissecting,maulen2026attention} \\
\rr Tied query-key attention
& \rr $\frac{1}{\sqrt{d_k}} u^\top W^\top W x$
& \rr $A_{x,X,c}^{\text{tied}} = \frac{1}{2\sqrt{d_k}} (x u^\top + u x^\top)$
& \rr \cite{pan2024dissecting,boncoraglio2025single} \\
\rr Bilinear sequence regression
& \rr $x_s^\top B x_t$, with $B = UV^\top$
& \rr $A_{s,t} = \frac{1}{2} \begin{bmatrix} 0 & x_s x_t^\top \\ x_t x_s^\top & 0 \end{bmatrix}$
& \rr \cite{srebro2004maximum,gunasekar2017implicit,erba2025bilinear} \\
% \rr ReLU catapult NQF$^\dagger$ %branch \newline (Zhu et al.)
% & \rr $\frac{1}{\sqrt{md}} \delta v_i \delta u_i^\top x \mathbf{1}\{u_{0,i}^\top x \ge 0\}$
% & \rr $A_{i,x} = \frac{\mathbf{1}\{u_{0,i}^\top x \ge 0\}}{2\sqrt{md}} \begin{bmatrix} 0 & x \\ x^\top & 0 \end{bmatrix}$
% & \rr \cite{lewkowycz2020large,zhu2024quadratic} \\
\rr Linear SSL augmentation
& \rr $\|W\Delta\|^2 = \text{Tr}(W^\top W \Delta \Delta^\top)$
& \rr $A_\Delta = \Delta \Delta^\top$, with $\Delta = x - \chi$
& \rr \cite{von2021self,bardes2021vicreg,ziyin2022shapes} \\
\hline\hline
\end{tabular}}
\renewcommand{\arraystretch}{1}
\end{table*}

\subsection{Proof of Theorem \ref{theo:main}}
Because $f_x$ is three times continuously differentiable, we can write its multi-variable Taylor expansion around the origin $W = (0, \dots, 0)$:
\begin{equation}
f_x(W) = f_x(0) + \sum_{i=1}^d (\nabla_{w_i} f_x(0))^\top  w_i + \frac{1}{2} \sum_{i=1}^d \sum_{j=1}^d w_i^\top  H_{ij} w_j + O(\|W\|^3),
\end{equation}
where $H_{ij} = \frac{\partial^2 f_x}{\partial w_i \partial w_j} \bigg|_{W=0} \in \mathbb{R}^{p \times p}$ represents the blocks of the Hessian matrix.

By the ZGZ condition, we know that
\begin{equation}
\nabla_{w_i} f_x(0) = 0 \quad \text{for all } i.
\end{equation}
Thus, the first-order term $\sum (\nabla_{w_i} f_x(0))^\top  w_i$ vanishes.

Consider the off-diagonal Hessian blocks $H_{ij}$ where $i \neq j$. By definition:
\begin{equation}
H_{ij} = \frac{\partial}{\partial w_j} \left( \nabla_{w_i} f_x(W) \right) \bigg|_{W=0}.
\end{equation}
By the assumption that $\nabla_{w_i} f_x = 0$ as long as $w_i = 0$ irrespective of $w_j$, taking the derivative with respect to $w_j$ yields
\begin{equation}
H_{ij} = 0 \quad \text{for all } i \neq j.
\end{equation}
The cross-terms in the second-order expansion are eliminated.

We are now left with only the diagonal terms of the second-order expansion:
\begin{equation}
f_x(W) = f_x(0) + \frac{1}{2} \sum_{i=1}^d w_i^\top  H_{ii} w_i + O(\|W\|^3)
\end{equation}
By the Permutation Symmetry condition, all neurons are interchangeable. This implies that the curvature of the function with respect to any single neuron $i$, evaluated at the symmetric origin $W=0$, must be identical for all neurons. Therefore:
\begin{equation}
H_{11} = H_{22} = \dots = H_{dd}.
\end{equation}
Let us define a new matrix $A(x) = \frac{1}{2} H_{ii}$. Substituting this into our expansion yields:
\begin{equation}
\begin{aligned}
f_x(W) &= f_x(0) + \sum_{i=1}^d w_i^\top  A(x) w_i + O(\|W\|^3)= f_x(0) + \sum_{i=1}^d \mathrm{Tr}[w_i w_i^\top  A(x)] + O(\|W\|^3).
\end{aligned}
\end{equation}
This finishes the proof.

\subsection{Proof of Proposition \ref{prop:MLP}}
We verify Theorem \ref{theo:main}'s prerequisites sequentially.

1. Smoothness: 
Since the activation function $\phi \in C^3$, $f_x$ is at least three times continuously differentiable.

2. Permutation Symmetry:
Let $\sigma$ be an arbitrary permutation of the indices $\{1, \dots, d\}$.
\begin{equation}
    f_x(w_{\sigma(1)}, \dots, w_{\sigma(d)}) = \sum_{i=1}^d v_{\sigma(i)} \phi(u_{\sigma(i)}^\top  x) = \sum_{i=1}^d v_i \phi(u_i^\top  x) = f_x(w_1, \dots, w_d).
\end{equation}

3. ZGZ:
The gradient with respect to the combined neuron parameter $w_i$ consists of
\begin{align}
    \nabla_{u_i} f_x = v_i \phi'(u_i^\top  x) x,\ 
    \nabla_{v_i} f_x = \phi(u_i^\top  x).
\end{align}
Setting $u_i = w_i=0$ gives
\begin{align}
    \nabla_{u_i} f_x \big|_{u_i=0, v_i=0} = 0 \cdot \phi'(0) x = 0,\ \
    \nabla_{v_i} f_x \big|_{u_i=0, v_i=0} = \phi(0) = 0.
\end{align}
Therefore, the two-layer MLP satisfies all assumptions in Theorem \ref{theo:main}.

Finally let us compute $A(x) = \frac{1}{2} H_{ii}$, where $H_{ii} = \nabla_{w_i}^2 f_x \big|_{w_i=0}$ is the Hessian matrix with respect to the $i$-th neuron's parameters $w_i = [u_i^\top , v_i]^\top $. The Hessian $H_{ii}$ can be written as:
\begin{equation}
    H_{ii} = 
    \begin{bmatrix}
        \nabla_{u_i}^2 f_x & \nabla_{u_i} \nabla_{v_i} f_x \\
        \nabla_{v_i} \nabla_{u_i} f_x & \nabla_{v_i}^2 f_x
    \end{bmatrix}
    \Bigg|_{u_i=0, v_i=0}.
\end{equation}
Top-left block ($\nabla_{u_i}^2 f_x$):
\begin{equation}
    \nabla_{u_i} (\nabla_{u_i} f_x) = \nabla_{u_i} (v_i \phi'(u_i^\top  x) x) = v_i \phi''(u_i^\top  x) x x^\top 
\end{equation}
Evaluating at $w_i=0$ (which sets $v_i=0$), this term vanishes: $\nabla_{u_i}^2 f_x \big|_{0} = 0_{k \times k}$.

Bottom-right block ($\nabla_{v_i}^2 f_x$):
\begin{equation}
    \nabla_{v_i} (\nabla_{v_i} f_x) = \nabla_{v_i} (\phi(u_i^\top  x)) = 0
\end{equation}

Off-diagonal blocks ($\nabla_{v_i} \nabla_{u_i} f_x$):
Taking the derivative of $\nabla_{u_i} f_x$ with respect to $v_i$:
\begin{equation}
    \nabla_{v_i} (v_i \phi'(u_i^\top  x) x) = \phi'(u_i^\top  x) x
\end{equation}
Evaluating at $w_i=0$ (which sets $u_i=0$), we obtain $\phi'(0) x$. 
By the symmetry of second derivatives, the top-right block is its transpose $\phi'(0) x^\top $.

Assembling the blocks yields the Hessian evaluated at the origin:
\begin{equation}
    H_{ii} = 
    \begin{bmatrix}
        0_{k \times k} & \phi'(0) x \\
        \phi'(0) x^\top  & 0
    \end{bmatrix},
\end{equation}
which finishes the proof.

\subsection{Proof of Proposition \ref{prop:attention}}

1. Smoothness: 
The $\mathrm{softmax}$ function is infinitely differentiable everywhere in its domain. Thus, $f_x$ is $C^\infty$.

2. Permutation Symmetry:
Let $\sigma$ be an arbitrary permutation of the head indices $\{1, \dots, d\}$. 
\begin{equation}
    f_x(w_{\sigma(1)}, \dots, w_{\sigma(d)}) = \sum_{i=1}^d \text{Head}_{\sigma(i)}(x, X)=\sum_{i=1}^d \text{Head}_i(x, X) = f_x(w_1, \dots, w_d)
\end{equation}

3. ZGZ:
Let $A_i \in \mathbb{R}^N$ denote the softmax attention weights.
Evaluating the gradients at $w_i = 0$ implies $v_i=0$, $W_i^V=0$, $W_i^Q=0$, and $W_i^K=0$:
\begin{align}
    \nabla_{v_i} f_x \big|_{w_i=0} &= W_i^V X A_i \big|_{W_i^V=0} = 0 \\
    \nabla_{W_i^V} f_x \big|_{w_i=0} &= v_i A_i^\top  X^\top  \big|_{v_i=0} = 0
\end{align}
For the query and key matrices, the gradients involve the derivative of the softmax function, which we denote as a Jacobian matrix $J_{A_i}$. By the chain rule:
\begin{align}
    \nabla_{W_i^Q} f_x = \left( \dots J_{A_i} \dots \right) \times (v_i^\top  W_i^V X) ,\
    \nabla_{W_i^K} f_x = \left( \dots J_{A_i} \dots \right) \times (v_i^\top  W_i^V X).
\end{align}
Because evaluating at $v_i = W_i^V = 0$, $(v_i^\top  W_i^V X)$ becomes zero. Consequently, regardless of the Jacobian of the softmax, the overall gradients for the query and key matrices vanish:
\begin{equation} 
    \nabla_{W_i^Q} f_x \big|_{w_i=0} = 0, \quad \nabla_{W_i^K} f_x \big|_{w_i=0} = 0
\end{equation}
We conclude that $\nabla_{w_i} f_x |_{w_i=0} = 0$. All assumptions in Theorem \ref{theo:main} are satisfied.

Finally let us compute $A(X) = \frac{1}{2} H_{ii}$, where $H_{ii} = \nabla_{w_i}^2 f_x \big|_{w_i=0}$. The output of the $i$-th head is:
\begin{equation}
    h_i(w_i) = v_i^\top  W_i^V X \cdot \mathrm{softmax} \left( \frac{X^\top  (W_i^K)^\top  W_i^Q x}{\sqrt{d_k}} \right)
\end{equation}

Let the pre-softmax logits be $L(w_i) = \frac{X^\top  (W_i^K)^\top  W_i^Q x}{\sqrt{d_k}} \in \mathbb{R}^N$. Notice that $L(w_i)$ is a homogeneous polynomial of degree 2 with respect to $w_i$. The Taylor expansion of the $\mathrm{softmax}$ function around $L = 0$ is:
\begin{equation}
    \mathrm{softmax}(L) = \frac{1}{N} \mathbf{1}_N + J_S(0) L + O(\|L\|^2) = \frac{1}{N} \mathbf{1}_N + O(\|w_i\|^2)
\end{equation}
Now, substitute this back into the head formulation $h_i(w_i)$, we get:
\begin{align}
    h_i(w_i) = v_i^\top  W_i^V X \left( \frac{1}{N} \mathbf{1}_N + O(\|w_i\|^2) \right) = \frac{1}{N} v_i^\top  W_i^V X \mathbf{1}_N + O(\|w_i\|^4).
\end{align}
The only second-order term in the expansion comes from $
    v_i^\top  W_i^V \left( \frac{1}{N} X \mathbf{1}_N \right) = v_i^\top  W_i^V x_{avg}$,
where $x_{avg} = \frac{1}{N} X \mathbf{1}_N$. To express this as a bilinear quadratic form $w_i^\top  A(X) w_i$, we can vectorize the term as:
\begin{equation}
    v_i^\top  W_i^V x_{avg} = v_i^\top  \mathrm{vec}(I_{d_v} W_i^V x_{avg}) = v_i^\top  (x_{avg}^\top  \otimes I_{d_v}) \mathrm{vec}(W_i^V)
\end{equation}
 Placing this in the $4 \times 4$ block matrix corresponding to $Q, K, V, v$ yields the required matrix $A(X)$ and finishes the proof.

\subsection{Proof of Proposition \ref{prop:MHA_variant}}
Similarly to Proposition \ref{prop:attention}, smoothness and permutation symmetry are naturally satisfied.

Let $L = \frac{1}{\sqrt{d_k}} X^\top  (W_i^K)^\top  W_i^Q x \in \mathbb{R}^N$. Evaluating the gradient with respect to $W_i^Q$ at the origin:
\begin{equation}
\nabla_{W_i^Q} f_x \big|_{w_i=0} \propto \frac{\partial L}{\partial W_i^Q} \big|_{W_i^K=0} = 0
\end{equation}
By symmetry, $\nabla_{W_i^K} f_x \big|_{w_i=0} = 0$. Then the ZGZ condition holds.

We compute the second-order term of the Taylor expansion of the $i$-th head around $w_i=0$. The expansion of the softmax function around $L=0$ is:
\begin{equation}
\mathrm{softmax}(L) = \frac{1}{N} \mathbf{1}_N + J_S(0) L + O(|L|^2),
\end{equation}
where $J_S(0) = \frac{1}{N} I_N - \frac{1}{N^2} \mathbf{1}_N \mathbf{1}_N^\top  \in \mathbb{R}^{N \times N}$ is the Jacobian of the softmax function at the origin. Substituting $L$, the head's output becomes:
\begin{equation}
h_i(w_i) = c^\top  X \frac{1}{N} \mathbf{1}_N + c^\top  X J_S(0) \frac{X^\top  (W_i^K)^\top  W_i^Q x}{\sqrt{d_k}} + O(|w_i|^4)
\end{equation}
The first term is a constant term. The second-order term is:
\begin{equation}
E_2 := \frac{1}{\sqrt{d_k}} c^\top  X J_S(0) X^\top  (W_i^K)^\top  W_i^Q x
\end{equation}
Notice that the term $X J_S(0) X^\top $ inside the expression is the sample covariance matrix:
\begin{equation}
X J_S(0) X^\top  = \frac{1}{N} X X^\top  - \left(\frac{1}{N} X \mathbf{1}_N \right) \left(\frac{1}{N} \mathbf{1}_N^\top  X^\top  \right) = \Sigma_X
\end{equation}
Define $u = \Sigma_X c \in \mathbb{R}^D$. We can rewrite the scalar $E_2$ as:
\begin{equation}
E_2 = \frac{1}{\sqrt{d_k}} u^\top  (W_i^K)^\top  W_i^Q x
\end{equation}
Using the Kronecker product property $\mathrm{vec}(A B C) = (C^\top  \otimes A) \mathrm{vec}(B)$, we rewrite $E_2$:
\begin{equation}
\begin{aligned}
E_2 &= \frac{1}{\sqrt{d_k}} (W_i^K u)^\top  (W_i^Q x) = \frac{1}{\sqrt{d_k}} \mathrm{vec}(W_i^K)^\top  (u x^\top  \otimes I_{d_k}) \mathrm{vec}(W_i^Q)
\end{aligned}
\end{equation}
Thus:
\begin{equation}
A(X) = \frac{1}{2 \sqrt{d_k}}
\begin{bmatrix}
0 & x u^\top  \otimes I_{d_k} \\
u x^\top  \otimes I_{d_k} & 0
\end{bmatrix},
\end{equation}
which finishes the proof.

\subsection{Proof of Proposition \ref{prop:SingleHead_Row}}
Smoothness:
The function is $C^\infty$ as it is a composition of linear transformations and the softmax function.

Permutation Symmetry:
Notice that the pre-softmax projection depends on the product $(W^K)^\top  W^Q = \sum_{i=1}^{d_k} w_i^K (w_i^Q)^\top $. Any permutation $\sigma$ of the neuron indices $\{1, \dots, d_k\}$ leaves the matrix product unchanged. Thus the model is invariant to the permutation of its hidden projection dimensions.

ZGZ:
As derived in the last section, the relevant part of the model is:
\begin{equation}
E(W) = \frac{1}{\sqrt{d_k}} u^\top  (W^K)^\top  W^Q x = \frac{1}{\sqrt{d_k}} \sum_{i=1}^{d_k} (u^\top  w_i^K) (x^\top  w_i^Q)
\end{equation}
where $u = \Sigma_X c$. Taking the gradient with respect to the components of the $i$-th neuron $w_i = [w_i^Q, w_i^K]^\top $:
\begin{align}
\nabla_{w_i^Q} f_x &\propto (u^\top  w_i^K) x \\
\nabla_{w_i^K} f_x &\propto (x^\top  w_i^Q) u
\end{align}
Evaluating these gradients at $w_i = 0$ yields zero vectors regardless of the states of $w_j$ ($j \neq i$). The ZGZ condition holds.

Deriving $A(X)$:
We focus on the second-order term for the $i$-th neuron:
\begin{equation}
\frac{1}{\sqrt{d_k}} (u^\top  w_i^K) (x^\top  w_i^Q) = \frac{1}{\sqrt{d_k}} (w_i^K)^\top  u x^\top  w_i^Q
\end{equation}
We need to express this in the quadratic form $ w_i^\top  A(X) w_i$, Which gives:
\begin{equation}
A(X) = \frac{1}{2 \sqrt{d_k}}
\begin{bmatrix}
0 & x u^\top  \\
u x^\top  & 0
\end{bmatrix}
\end{equation}
and finishes the proof.

\subsection{Proof of Proposition \ref{prop:MoE}}
Smoothness:
Assuming $\psi$ and $E$ are at least three times continuously differentiable, their product and sum ensure that the model $f_x$ is $C^3$ with respect to $W$.

Permutation Symmetry:
Let $\sigma$ be an arbitrary permutation of the expert indices $\{1, \dots, d\}$. Evaluating the model with permuted parameters yields:
\begin{equation}
    f_x(w_{\sigma(1)}, \dots, w_{\sigma(d)}) = \sum_{i=1}^d \psi(r_{\sigma(i)}^\top  x) E(x; \Theta_{\sigma(i)}) = \sum_{i=1}^d \psi(r_i^\top  x) E(x; \Theta_i) = f_x(w_1, \dots, w_d).
\end{equation}

ZGZ:
We evaluate the partial gradients with respect to the components of $w_i$ at $w_i = 0$ (which implies $r_i = 0$ and $\Theta_i = 0$).
Taking the partial derivative with respect to $r_i$ yields:
\begin{equation}
    \nabla_{r_i} f_x = \psi'(r_i^\top  x) E(x; \Theta_i) x
\end{equation}
Since $E(x; 0) = 0$, we obtain:
\begin{equation}
    \nabla_{r_i} f_x \Big|_{w_i=0} = \psi'(0) \cdot 0 \cdot x = 0
\end{equation}
Taking the partial derivative with respect to $\Theta_i$ gives:
\begin{equation}
    \nabla_{\Theta_i} f_x = \psi(r_i^\top  x) \nabla_{\Theta_i} E(x; \Theta_i)
\end{equation}
Since $\psi(0) = 0$, we have:
\begin{equation}
    \nabla_{\Theta_i} f_x \Big|_{w_i=0} = \psi(0) \cdot \nabla_{\Theta_i} E(x; \Theta_i) = 0 \cdot \nabla_{\Theta_i} E(x; \Theta_i) = 0
\end{equation}
Thus, the ZGZ condition holds.

The computation of $A(x)$ is similar to that in Proposition \ref{prop:MLP}.

\subsection{Proof of Proposition \ref{prop:CNN}}
The proof is identical to that of Proposition \ref{prop:MLP}.

\subsection{Proof of Theorem \ref{theo:master}}
Let us concatenate the weight vectors $w_i \in \mathbb{R}^p$ into $W = [w_1, w_2, \dots, w_d] \in \mathbb{R}^{p \times d}$.
Let $\mathbf{1} \in \mathbb{R}^d$ be the vector of all ones. With this notation, we can rewrite the summary statistics as $\mu = \sum_{i=1}^d w_i = W \mathbf{1}$ and $M = \sum_{i=1}^d w_i w_i^\top  = W W^\top $. Accordingly, the NQF formulation \eqref{eq:NQF-general} can be rewritten as:
\begin{equation}
f_x(W) = f_x(0) + g(x)^\top  W \mathbf{1} + \mathbf{1}^\top  W^\top  B(x) W \mathbf{1} + \Tr[W^\top  A(x) W].
\end{equation}
Taking the derivative of $f_x(W)$ with respect to $W$, we have: $\nabla_W (g(x)^\top  W \mathbf{1}) = g(x) \mathbf{1}^\top $, $\nabla_W (\mathbf{1}^\top  W^\top  B(x) W \mathbf{1}) = 2 B(x) W \mathbf{1} \mathbf{1}^\top  = 2 B(x) \mu \mathbf{1}^\top $ and $\nabla_W \Tr[W^\top  A(x) W] = 2 A(x) W$. Summing these terms yields the gradient:
\begin{equation}
\nabla_W f_x(W) = \big( g(x) + 2B(x)\mu \big) \mathbf{1}^\top  + 2A(x)W.
\label{eq:sample-gradient}
\end{equation}
By the chain rule, the gradient of the loss $\mathcal{L}$ with respect to $W$ over a mini-batch $\mathcal{B}$ is:
\begin{equation}
\nabla_W \mathcal{L} = \sum_{x \in \mathcal{B}} \ell'_x \nabla_W f_x(W).
\end{equation}
Substituting \eqref{eq:sample-gradient} and using the definitions of $v$ and $H$:
\begin{equation}
\nabla_W \mathcal{L} = v \mathbf{1}^\top  + H W.
\end{equation}
Under SGD, the weight matrix is updated as:
\begin{equation}
\Delta W = -\eta \nabla_W \mathcal{L} = -\eta (v \mathbf{1}^\top  + H W_t).
\end{equation}
We obtain the dynamics for $\mu$:
\begin{equation}
\Delta \mu = \Delta W \mathbf{1} = -\eta (v \mathbf{1}^\top  \mathbf{1} + H W_t \mathbf{1})= -\eta \big( d \cdot v + H \mu_t \big).
\end{equation}
The update $\Delta M$ is computed as:
\begin{equation}
\Delta M = (W + \Delta W)(W + \Delta W)^\top-WW^\top  = W \Delta W^\top  + \Delta W W^\top  + \Delta W \Delta W^\top .
\end{equation}
We evaluate these three components separately: $\Delta W W^\top  = -\eta (v \mathbf{1}^\top  + H W) W^\top  = -\eta (v \mu^\top  + H M)$. $W \Delta W^\top  = (\Delta W W^\top )^\top  = -\eta (\mu v^\top  + M H)$ since $H$ is symmetric. For the second-order term:
\begin{equation}
\Delta W \Delta W^\top  = \eta^2 (v \mathbf{1}^\top  + H W)(\mathbf{1} v^\top  + W^\top  H)= \eta^2 \big( d \cdot v v^\top  + v \mu^\top  H + H \mu v^\top  + H M H \big).
\end{equation}
Summing all three components, we establish:
\begin{equation}
\Delta M = -\eta \big( v \mu^\top  + \mu v^\top  + H M + M H \big) + \eta^2 \big( d \cdot v v^\top  + v \mu_t^\top  H + H \mu v^\top  + H M H \big).
\end{equation}
Therefore, if two NQFs satisfy $M_a(0) = M_b(0)$ and $\mu_a(0) = \mu_b(0)$ at initialization, we have $M_a(t) = M_b(t)$ and $\mu_a(t) = \mu_b(t)$. This completes the proof.

\subsection{Proof of Theorem \ref{theo:compressibility}}
Let us define the subspace $\mathcal{V} \subseteq \mathbb{R}^p$ spanned by the data and $g(x)$, $A(x)$, $B(x)$:
\begin{equation}
\mathcal{V} = \text{span} \bigcup_{x \in \mathcal{X}} \Big( \{g(x)\} \cup \text{Col}(A(x)) \cup \text{Col}(B(x)) \Big).
\end{equation}
Let $k_{\mathcal{V}} = \dim(\mathcal{V}) \le p$, and let $V \in \mathbb{R}^{p \times k}$ be a matrix whose columns form an orthonormal basis for $\mathcal{V}$, where we recall that $p:=\dim w_i$. By definition, for any $x$, we have $V V^\top  g(x) = g(x)$, $V V^\top  A(x) V V^\top  = A(x)$, and $V V^\top  B(x) V V^\top  = B(x)$.

Thus, to match the initial outputs and dynamics, we construct the smaller model within $\mathcal{V}$ such that its projected statistics match the original model:
\begin{equation}
\tilde{\mu}(0) = V V^\top  \mu(0),\ \tilde{M}(0) = \frac{d}{d'} V V^\top  M(0) V V^\top.
\label{eq:match_moments}
\end{equation}

We construct each weight vector of the smaller model as $w'_i = V \left( \frac{1}{d'} V^\top  \mu(0) + u_i \right)$, where $u_1, \dots, u_{d'} \in \mathbb{R}^{k_{\mathcal{V}}}$ satisfies:
\begin{equation}
\sum_{i=1}^{d'} u_i = 0, \quad \text{and} \quad \sum_{i=1}^{d'} u_i u_i^\top  = \frac{d}{d'} \left( V^\top  M(0) V - \frac{1}{d} V^\top  \mu(0)\mu(0)^\top  V \right) =: C_{\mathcal{V}},
\end{equation}
which leads to \eqref{eq:match_moments}. $C_{\mathcal{V}} \in \mathbb{R}^{k \times k}$ is inherently positive semi-definite (PSD) and can be factorized as $C_{\mathcal{V}} = L L^\top $ for some $L \in \mathbb{R}^{k \times k}$.

To find $u_1, \dots, u_{d'}$, consider the all-ones vector $\mathbf{1}_{k_{\mathcal{V}}+1} \in \mathbb{R}^{k_{\mathcal{V}}+1}$. Its orthogonal complement is a $k_{\mathcal{V}}$-dimensional subspace in $\mathbb{R}^{k_{\mathcal{V}}+1}$. Let the rows of a matrix $Z \in \mathbb{R}^{k_{\mathcal{V}} \times (k_{\mathcal{V}}+1)}$ form an orthonormal basis for this subspace, satisfying $Z \mathbf{1}_{k+1} = 0$ and $Z Z^\top  = I_{k_{\mathcal{V}}}$.

We can set $U := [u_1, \dots, u_{k+1}] := L Z$. It follows that:
\begin{equation}
\sum_{i=1}^{k_{\mathcal{V}}+1} u_i = U \mathbf{1}_{k+1} = L Z \mathbf{1}_{k_{\mathcal{V}}+1} = 0, \quad \text{and} \quad U U^\top  = L Z Z^\top  L^\top  = L I_{k_{\mathcal{V}}} L^\top  = C_{\mathcal{V}}.
\end{equation}
Thus, $d' = k_{\mathcal{V}}+1$ neurons are sufficient to match the effective initial statistics.

Assume that at any step $t$, the statistics of the smaller model remain within the subspace $\mathcal{V}$ and match the projected original statistics:
\begin{equation}
\tilde{\mu}_t = V V^\top  \mu_t,\ \tilde{M}_t = \frac{d}{d'} V V^\top  M_t V V^\top 
\label{eq:match_moments_t}
\end{equation}
Due to the subspace definitions (e.g., $V V^\top  g(x) = g(x)$), the output of the smaller model is:
\begin{align}
\tilde{f}_x &= f_x(0) + g(x)^\top  \tilde{\mu}_t + \tilde{\mu}_t^\top  B(x) \tilde{\mu}_t + \Tr[\tilde{M}_t A(x)] \nonumber \\
&= f_x(0) + g(x)^\top  V V^\top  \mu_t + \mu_t^\top  V V^\top  B(x) V V^\top  \mu_t + \Tr\left[ \left(\frac{d}{d'} V V^\top  M_t V V^\top  \right) \left(\frac{d'}{d} A(x) \right) \right] \nonumber \\
&= f_x(0) + g(x)^\top  \mu_t + \mu_t^\top  B(x) \mu_t + \Tr[M_t A(x)] \nonumber \\
&= f_x.
\end{align}
Thus, as long as the statistics maintain this proportional relationship, the outputs are identical. This satisfies the equivalence at initialization ($t=0$).

Because $\tilde{f}_x = f_x$, the loss derivative with respect to the output is identical: $\tilde{\ell}'_x = \ell'_x$. According to Theorem \ref{theo:master}, the gradient variables for the smaller model evaluate to:
\begin{align}
\tilde{v} &= \sum_{x \in \mathcal{B}} \tilde{\ell}'_x \big(\tilde{g}(x) + 2\tilde{B}(x)\tilde{\mu}_t\big) = \sum_{x \in \mathcal{B}} \ell'_x \big(g(x) + 2B(x)\mu_t\big) = v, \\
\tilde{H} &= \sum_{x \in \mathcal{B}} 2\tilde{\ell}'_x \tilde{A}(x) = \frac{d'}{d} \sum_{x \in \mathcal{B}} 2\ell'_x A(x) = \frac{d'}{d} H,
\end{align}
where we use $\tilde{B}(x)\tilde{\mu}_t = B(x)VV^\top \mu_t = B(x)\mu_t$ because $B(x)VV^\top = B(x)$ by definition of $\mathcal{V}$.
since $v$ is a linear combination of $g(x)$ and the columns of $B(x)$, we have $v = V V^\top v$. Similarly, the symmetric matrix $H$ is formed by $A(x)$, meaning $H = V V^\top H V V^\top$.

Applying Theorem \ref{theo:master} to the smaller model, we compute the update $\Delta \tilde{\mu}$:
\begin{align}
\Delta \tilde{\mu}_t &= -\tilde{\eta} \big( d' \cdot \tilde{v} + \tilde{H} \tilde{\mu}_t \big) \nonumber \\
&= -\left(\frac{d}{d'}\eta\right) \left( d' \cdot V V^\top v + \left(\frac{d'}{d} V V^\top H V V^\top \right) V V^\top \mu_t \right) \nonumber \\
&= - \eta V V^\top \big( d \cdot v + H \mu_t \big) = V V^\top \Delta \mu_t.
\end{align}
For the second-moment statistic, we have:
\begin{align}
\Delta \tilde{M}_t &= -\tilde{\eta} \big( \tilde{v} \tilde{\mu}_t^\top + \tilde{\mu}_t \tilde{v}^\top + \tilde{H} \tilde{M}_t + \tilde{M}_t \tilde{H} \big) + \tilde{\eta}^2 \big( d' \cdot \tilde{v} \tilde{v}^\top + \tilde{v} \tilde{\mu}_t^\top \tilde{H} + \tilde{H} \tilde{\mu}_t \tilde{v}^\top + \tilde{H} \tilde{M}_t \tilde{H} \big).
\end{align}
By using \eqref{eq:match_moments_t}, the first-order term becomes:
\begin{equation}
-\left(\frac{d}{d'}\eta\right) V V^\top \big( v \mu_t^\top + \mu_t v^\top + H M_t + M_t H \big) V V^\top. \nonumber
\end{equation}
The second term becomes
\begin{equation}
\left(\frac{d}{d'}\eta\right)^2 \frac{d'}{d} V V^\top \big( d \cdot v v^\top + v \mu_t^\top H + H \mu_t v^\top + H M_t H \big) V V^\top. \nonumber
\end{equation}
Summing these up, we obtain:
\begin{equation}
\Delta \tilde{M}_t = \frac{d}{d'} V V^\top \Delta M_t V V^\top.
\end{equation}
Thus, at step $t+1$, we have $\tilde{\mu}_{t+1} = \tilde{\mu}_t + \Delta \tilde{\mu}_t = V V^\top \mu_{t+1}$ and $\tilde{M}_{t+1} = \frac{d}{d'} V V^\top M_{t+1} V V^\top$. By mathematical induction, these projection relations hold for all training steps $t \ge 0$, and we conclude that $f_{d'} = f_d$ holds throughout training. This completes the proof.

\subsection{Proof of Proposition \ref{prop:decoupling}}
By \eqref{eq:residual_diagonal} and $\dot{\tilde{M}} = -\tilde{H}\tilde{M} - \tilde{M}\tilde{H}$, we have:
\begin{equation}
\frac{d}{dt}\tilde{M}_{ij}= -(\tilde{H}_{ii} + \tilde{H}_{jj}) \tilde{M}_{ij}.
\end{equation}
Because $\tilde{H}_{ii}$ and $\tilde{H}_{jj}$ do not depend on the off-diagonal elements, it can be integrated directly to yield
\begin{equation}
\tilde{M}_{ij}(t) = \tilde{M}_{ij}(0) \exp\left( -\int_0^t  \left( \tilde{H}_{ii}(\tau) + \tilde{H}_{jj}(\tau) \right) d\tau \right).
\end{equation}

\subsection{Proof of Theorem \ref{theo:proportion}}
Recall that the residual for the $\mu$-th sample can be written as:
\begin{equation}
\Delta_\mu(t) = \mathrm{Tr}[W(t)W(t)^\top  c_\mu A] - y_\mu = c_\mu \mathrm{Tr}[M(t)A] - y_\mu
\end{equation}
and
\begin{equation}
H(t) = \frac{4}{m} \sum_{\mu=1}^m (c_\mu \mathrm{Tr}[M(t)A] - y_\mu) c_\mu A = (\alpha \mathrm{Tr}[M(t)A] - \beta) A,
\end{equation}
where we use $\alpha = \frac{4}{m} \sum_{\mu=1}^m c_\mu^2$ and $\beta = \frac{4}{m} \sum_{\mu=1}^m c_\mu y_\mu$. 
According to \eqref{eq:W_dynamics}, the gradient flow for $W(t)$ is:
\begin{equation}
\dot{W}(t) = -H(t)W(t) = - \left( \alpha \mathrm{Tr}[M(t)A] - \beta \right) A W(t)
\end{equation}
To solve this equation, we introduce an auxiliary scalar variable $\xi(t)$ defined as the integrated effective residual:
\begin{equation}
\xi(t) = \int_0^t  \left( \alpha \mathrm{Tr}[M(\tau)A] - \beta \right) d\tau.
\end{equation}
By definition, $\xi(0) = 0$ and $\dot{\xi}(t) = \alpha \mathrm{Tr}[M(t)A] - \beta$. Projecting the parameter matrix into the shared eigenbasis by defining $\tilde{W}(t) = P^\top  W(t)$, we can rewrite the dynamics of $\tilde{W}(t)$ as:
\begin{equation}
\dot{\tilde{W}}(t) = P^\top  \dot{W}(t) = -\dot{\xi}(t) P^\top  A W(t) = -\dot{\xi}(t) \Lambda \tilde{W}(t).
\end{equation}
Since $\Lambda$ is a diagonal matrix, this system of differential equations is decoupled row-by-row. Integrating with respect to time from $0$ to $t$ yields the solution in the projected space:
\begin{equation}
\tilde{W}(t) = \exp(-\xi(t)\Lambda)\tilde{W}(0).
\end{equation}
Multiplying by $P$ on the left, we obtain the closed-form solution for $W$:
\begin{equation}
W(t) = P \exp(-\xi(t) \Lambda) P^\top  W(0)
\end{equation}

Next, we evaluate $M(t) = W(t)W(t)^\top $ to close the system for $\xi(t)$. Projecting $M(t)$ into the eigenbasis yields:
\begin{equation}
\tilde{M}(t) = P^\top  M(t) P = \tilde{W}(t)\tilde{W}(t)^\top  = \exp(-\xi(t)\Lambda) \tilde{M}(0) \exp(-\xi(t)\Lambda),
\end{equation}
Since $\exp(-\xi(t)\Lambda)$ is a diagonal matrix, the diagonal entries of $\tilde{M}(t)$ decouple from the off-diagonal entries and depend only on the diagonal elements of $\tilde{M}(0)$. They evolve as:
\begin{equation}
z_k(t) = \tilde{M}_{kk}(t) = \left( \exp(-\xi(t)\Lambda) \tilde{M}(0) \exp(-\xi(t)\Lambda) \right)_{kk} = z_k(0) \exp(-2 \lambda_k \xi(t))
\end{equation}
The trace term can then be simplified via $\mathrm{Tr}[M(t)A] = \mathrm{Tr}[\tilde{M}(t)\Lambda] = \sum_{j=1}^p \lambda_j z_j(t)$. Substituting the expression of $z_j(t)$ into the derivative of $\xi(t)$ results in a self-contained ODE:
\begin{equation}
\frac{d\xi}{dt} = \alpha \sum_{j=1}^p \lambda_j z_j(0) \exp(-2 \lambda_j \xi) - \beta.
\end{equation}
Separating variables and integrating from $0$ to $\xi(t)$ leads to the implicit equation:
\begin{equation}
t = \int_0^{\xi(t)} \frac{ds}{\alpha \sum_{j=1}^p \lambda_j z_j(0) \exp(-2 \lambda_j s) - \beta}.
\end{equation}
This completes the proof.

\subsection{Proof of Theorem \ref{theo:orthogonal_samples}}
Recall that $\Delta_\mu(t) = \mathrm{Tr}[M(t) A(x_\mu)] - y_\mu$ and
\begin{equation}
H(t) = \frac{4}{m} \sum_{\mu=1}^m \Delta_\mu(t) A(x_\mu) = P \left( \frac{4}{m} \sum_{\mu=1}^m \Delta_\mu(t) \Lambda_\mu \right) P^\top .
\end{equation}
To solve the dynamics $\dot{W}(t) = -H(t) W(t)$, we introduce a set of $m$ independent auxiliary scalar variables, $\xi_\mu(t)$, defined as the integrated effective residual for each sample:
\begin{equation}
\xi_\mu(t) = \int_0^t  \Delta_\mu(\tau) d\tau.
\end{equation}
By definition, $\xi_\mu(0) = 0$ and $\dot{\xi}_\mu(t) = \Delta_\mu(t)$. Projecting the parameter matrix into the shared eigenbasis by defining $\tilde{W}(t) = P^\top  W(t)$, the dynamics decouple into:
\begin{equation}
\dot{\tilde{W}}(t) = -P^\top  H(t) P \tilde{W}(t) = -\left( \frac{4}{m} \sum_{\mu=1}^m \dot{\xi}_\mu(t) \Lambda_\mu \right) \tilde{W}(t).
\end{equation}
By defining the diagonal matrix $\Sigma(t) = \frac{4}{m} \sum_{\mu=1}^m \xi_\mu(t) \Lambda_\mu$, we can integrate this ODE to obtain
\begin{equation}
\tilde{W}(t) = \exp(-\Sigma(t)) \tilde{W}(0).
\end{equation}
Multiplying by $P$ on the left yields the closed-form representation for $W(t)$.

Next, we track the evolution of $M(t) = W(t)W(t)^\top $. Projecting $M(t)$ into the eigenbasis yields:
\begin{equation}
\tilde{M}(t) = P^\top  M(t) P = \exp(-\Sigma(t)) \tilde{M}(0) \exp(-\Sigma(t)).
\end{equation}
Because $\Sigma(t)$ is a diagonal matrix, the diagonal entries of $\tilde{M}(t)$ evolve independently of the off-diagonal elements. The feature magnitudes evolve as:
\begin{equation}
z_k(t) = \tilde{M}_{kk}(t) = z_k(0) \exp\left( -2 \Sigma_{kk}(t) \right) = z_k(0) \exp\left( -\frac{8}{m} \sum_{\nu=1}^m \xi_\nu(t) \lambda_{\nu, k} \right).
\end{equation}
Now we utilize the data orthogonality condition $A(x_\mu)A(x_\nu) = 0$. This implies $\Lambda_\mu \Lambda_\nu = 0$, meaning that for any specific feature index $k$, at most one dataset matrix can have a non-zero eigenvalue. Therefore, if $\lambda_{\mu, k} \neq 0$ for a specific sample $\mu$, then $\lambda_{\nu, k} = 0$ for all $\nu \neq \mu$.

This orthogonality decouples the sum in the exponent. When substituting $z_k(t)$ back into the derivative of the residual $\dot{\xi}_\mu(t)$, we only need to consider the non-zero elements $\lambda_{\mu, k}$:
\begin{equation}
\dot{\xi}_\mu(t) = \mathrm{Tr}[\tilde{M}(t)\Lambda_\mu] - y_\mu = \sum_{k=1}^p \lambda_{\mu, k} z_k(t) - y_\mu.
\end{equation}
Applying the orthogonality property to the exponent of $z_k(t)$, we get:
\begin{equation}
\frac{d\xi_\mu}{dt} = \sum_{k=1}^p \lambda_{\mu, k} z_k(0) \exp\left(-\frac{8}{m} \lambda_{\mu, k} \xi_\mu(t)\right) - y_\mu.
\end{equation}
Thus, the $O(pd)$-dimensional gradient flow reduces to $m$ one-dimensional ODEs.
Separating variables and integrating from $0$ to $\xi_\mu(t)$ provides the implicit solution for each $\mu$ independently:
\begin{equation}
t = \int_0^{\xi_\mu(t)} \frac{ds}{\sum_{k=1}^p \lambda_{\mu, k} z_k(0) \exp\left(-\frac{8}{m} \lambda_{\mu, k} s\right) - y_\mu}.
\end{equation}
This completes the proof.

\subsection{Proof of Theorem \ref{theo:ortho_features}}
We start with \eqref{eq:GLV}:
\begin{equation}
\dot{z}_k(t) = z_k(t) \left[ \left( \frac{8}{m} \sum_{\mu=1}^m \lambda_{\mu, k} y_\mu \right) - \sum_{j=1}^p \left( \frac{8}{m} \sum_{\mu=1}^m \lambda_{\mu, k} \lambda_{\mu, j} \right) z_j(t) \right].
\end{equation}
Using the definitions of $r_k$ and $C_{kj} := \frac{8}{m} \sum_{\mu=1}^m \lambda_{\mu, k} \lambda_{\mu, j}$, the dynamics can be written as:
\begin{equation}
\dot{z}_k(t) = z_k(t) \left( r_k - \sum_{j=1}^p C_{kj} z_j(t) \right).
\end{equation}
Applying the orthogonal features condition, we have $C_{kj} = 0$ for all $k \neq j$. This condition decouples the feature interactions. The ODE simplifies to the classic logistic equation:
\begin{equation}
\dot{z}_k(t) = z_k(t) (r_k - C_{kk} z_k(t)).
\end{equation}
We have $z_k(t)=0$ if $z_k(0)=0$. With the initial condition $z_k(0) > 0$, integrating from $0$ to $t$ yields the analytic solution:
\begin{equation}
z_k(t) = \frac{r_k z_k(0)}{C_{kk} z_k(0) + (r_k - C_{kk} z_k(0)) \exp(-r_k t)}
\end{equation}
if $r_k\neq0$. If $r_k = 0$, the solution is $z_k(t) = z_k(0) / (1 + C_{kk} z_k(0) t)$.

Finally, we reconstruct the parameter matrix $W(t)$. Recall that $\tilde{H}(t) = \frac{4}{m} \sum_{\mu=1}^m \Delta_\mu(t) \Lambda_\mu$ is a diagonal matrix for all $t$. Let $\Sigma(t) = \int_0^t  \tilde{H}(\tau) d\tau$, which is also diagonal. The dynamics in the projected space is $\dot{\tilde{W}}(t) = -\tilde{H}(t)\tilde{W}(t)$, yielding $\tilde{W}(t) = \exp(-\Sigma(t)) \tilde{W}(0)$. Since $\Sigma(t)$ is diagonal, the $k$-th row of $\tilde{W}(t)$ is scaled by the factor $\exp(-\Sigma_{kk}(t))$. We also know from the diagonal elements of $\tilde{M}(t) = \tilde{W}(t)\tilde{W}(t)^\top$ that $z_k(t) = z_k(0) \exp(-2\Sigma_{kk}(t))$. For any feature with $z_k(0) > 0$, we have $\exp(-\Sigma_{kk}(t)) = \sqrt{z_k(t) / z_k(0)}$. For a feature with $z_k(0) = 0$, the $k$-th row of $\tilde{W}(0)$ is a zero vector, which ensures that the $k$-th row of $\tilde{W}(t)$ remains zero for all $t \ge 0$. Thus, defining $d_k(t) = \sqrt{z_k(t) / z_k(0)}$ for $z_k(0) \neq 0$ and $d_k(t) = 0$ otherwise consistently captures the row-wise scaling: $\tilde{W}(t) = D(t) \tilde{W}(0)$. Multiplying by $P$ on the left, we arrive at the trajectory for $W$:
\begin{equation}
W(t) = P D(t) P^\top  W(0).
\end{equation}
This completes the proof.

\subsection{Proof of Theorem \ref{theo:isotropic_samples}}
Recall that the gradient flow of the parameters is $\dot{W}(t) = -H(t) W(t)$, where $H(t) = \frac{4}{m} \sum_{\mu=1}^m \Delta_\mu(t) A(x_\mu)$. By expanding the residual $\Delta_\mu(t) = \mathrm{Tr}[M(t) A(x_\mu)] - y_\mu$ and isolating the label-dependent term $Y$, we have:
\begin{equation}
H(t) = 4 \sum_{\mu=1}^m \frac{1}{m} \mathrm{Tr}[M(t) A(x_\mu)] A(x_\mu) - Y.
\end{equation}
We evaluate the first term element-wise:
\begin{equation}
\left( 4 \sum_{\mu=1}^m \frac{1}{m} \mathrm{Tr}[M(t) A(x_\mu)] A(x_\mu) \right)_{ij} = 4 \sum_{k=1}^p \sum_{l=1}^p \left( \frac{1}{m} \sum_{\mu=1}^m A_{ij}(x_\mu) A_{kl}(x_\mu) \right) M_{kl}(t).
\end{equation}
Applying Assumption \ref{assume:isotropic}, this expression simplifies to:
\begin{equation}
4 \sum_{k=1}^p \sum_{l=1}^p \frac{c}{2} (\delta_{ik}\delta_{jl} + \delta_{il}\delta_{jk}) M_{kl}(t) = 2c \left( M_{ij}(t) + M_{ji}(t) \right) = 4c M_{ij}(t),
\end{equation}
where we utilized $M(t) = M(t)^\top $. Thus, we have $H(t) = 4c M(t) - Y$.
Substituting $H(t)$ into $\dot{M}(t) = -H(t) M(t) - M(t) H(t)$, we obtain a matrix Riccati differential equation \cite{abou2012matrix}:
\begin{equation}
\label{eq:riccati}
\dot{M} = -(4cM - Y)M - M(4cM - Y) = Y M + M Y - 8c M^2.
\end{equation}
To solve this non-linear matrix equation, we introduce $M(t) = V(t) U(t)^{-1}$, where $V(0) = M(0)$ and $U(0) = I$. Differentiating this expression yields:
\begin{equation}
\dot{M} = \dot{V} U^{-1} - V U^{-1} \dot{U} U^{-1} = \dot{V} U^{-1} - M \dot{U} U^{-1}.
\end{equation}
By matching the terms with the Riccati equation $\dot{M} = Y V U^{-1} - M (-Y U + 8c V) U^{-1}$, the dynamics decouple into a system of two linear ordinary differential equations:
\begin{align}
\dot{V} &= Y V, \label{eq:V_dot} \\
\dot{U} &= -Y U + 8c V. \label{eq:U_dot}
\end{align}
The solution to \eqref{eq:V_dot} is given by $V(t) = \exp(Y t) M(0)$. Substituting $V(t)$ into \eqref{eq:U_dot} gives:
\begin{equation}
\dot{U} + Y U = 8c \exp(Y t) M(0).
\end{equation}
Multiplying both sides by the integrating factor $\exp(Y t)$ leads to $\frac{d}{dt} \left[ \exp(Y t) U(t) \right] = 8c \exp(2 Y t) M(0)$. Integrating from $0$ to $t$ and using $U(0) = I$, we get:
\begin{equation}
\exp(Y t) U(t) - I = 8c \left( \int_0^t\exp(2Y\tau) d\tau \right) M(0) = 8c \Phi(t) M(0).
\end{equation}
Solving for $U(t)$, we find $U(t) = \exp(-Y t) \left[ I + 8c \Phi(t) M(0) \right]$, which is always invertible because $\Phi(t) := \int_0^t  \exp(2Y\tau) d\tau$ is positive definite for $t > 0$, and $M(0) = W(0)W(0)^\top $ is positive semi-definite. Finally, reconstructing $M(t) = V(t) U(t)^{-1}$, we arrive at:
\begin{equation}
M(t) = \exp(Y t) M(0) \left[ I + 8c \Phi(t) M(0) \right]^{-1} \exp(Y t).
\label{eq:solution-matrix-ricci}
\end{equation}
Furthermore, suppose $Y$ and $M(0)$ are simultaneously diagonalizable. Then, there exists an orthogonal matrix $P$ such that $Y = P \Gamma P^\top $ and $M(0) = P Z(0) P^\top $, where $\Gamma$ and $Z(0)$ are diagonal matrices containing the eigenvalues $\gamma_k$ and $z_k(0)$, respectively. Since all matrices in the closed-form solution share the same eigenbasis, $M(t)$ remains diagonal in this basis: $M(t) = P Z(t) P^\top $.

For the $k$-th eigen-mode, the matrix functions reduce to scalar functions. The integral evaluates to $\Phi_k(t) = \frac{\exp(2\gamma_k t) - 1}{2\gamma_k}$ for $\gamma_k \neq 0$. Substituting these into \eqref{eq:solution-matrix-ricci} gives:
\begin{equation}
z_k(t) = \frac{\exp(2\gamma_k t) z_k(0)}{1 + 8c \left( \frac{\exp(2\gamma_k t) - 1}{2\gamma_k} \right) z_k(0)}.
\end{equation}
Rearranging the terms, we recover the standard generalized logistic growth curve:
\begin{equation}
z_k(t) = \frac{\gamma_k z_k(0)}{4c z_k(0) + (\gamma_k - 4c z_k(0)) \exp(-2\gamma_k t)}.
\end{equation}
For the case where $\gamma_k = 0$, the integral simplifies to $\Phi_k(t) = \int_0^t 1 d\tau = t$. Substituting this yields:
\begin{equation}
z_k(t) = \frac{z_k(0)}{1 + 8c z_k(0) t}.
\end{equation}
This completes the proof.

\subsection{Formal Statements of Section \ref{sec:saddle-to-saddle}}
\label{app:feature-descent}
\paragraph{Feature-wise Descent}
The first part of the formal statement is the following theorem.
\begin{theorem}
\label{theo:saddle_to_saddle}
Under the conditions of Theorems \ref{theo:ortho_features} (or Theorem~\ref{theo:isotropic_samples} with $M(0)$ and $Y$ simultaneous diagonalizable), let $\zeta_1 > \zeta_2 > \dots > \zeta_p > 0$, where we define $\zeta_k = r_k$ in Theorem \ref{theo:ortho_features}  ($\zeta_k=\gamma_k$ in Theorem~\ref{theo:isotropic_samples}).

Then, for any fixed constant $\rho \in (0, 1)$, let $t_k^*$ denote the time that $z_k(t_k^*)=\rho z_k(\infty)$ with $z_k(\infty) = \lim_{t \to \infty} z_k(t)$. Let the initialization scales satisfy $z_k(0) = \Theta(\epsilon)$ for all $k \in \{1, \dots, p\}$, then as $\epsilon \to 0^+$,
\begin{equation}
t_k^*\sim\frac{1}{a\zeta_k} \ln\frac{1}{\epsilon},
\end{equation}
where $a=1$ in Theorem \ref{theo:ortho_features} ($a=2$ in Theorem~\ref{theo:isotropic_samples}).
\end{theorem}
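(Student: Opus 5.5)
Both settings share a single closed-form logistic trajectory, so the plan is to read the half-saturation time off that formula directly. From Theorem~\ref{theo:ortho_features}, each mode satisfies
\begin{equation*}
z_k(t)=\frac{r_k z_k(0)}{C_{kk}z_k(0)+(r_k-C_{kk}z_k(0))e^{-r_k t}}.
\end{equation*}
From Theorem~\ref{theo:isotropic_samples} in the simultaneously diagonalizable case, the same formula holds with $r_k\to 2\gamma_k$ and $C_{kk}\to 8c$. Writing it as $z_k(t)=\frac{a\zeta_k z_k(0)}{K_k z_k(0)+(a\zeta_k-K_k z_k(0))e^{-a\zeta_k t}}$ covers both theorems at once, with $a\in\{1,2\}$ and $K_k\in\{C_{kk},8c\}$.

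First, since $\zeta_k>0$, the exponential term vanishes as $t\to\infty$, so $z_k(\infty)=a\zeta_k/K_k$. This limit is a fixed positive constant independent of $\epsilon$. (I assume $K_k>0$; this is automatic for $8c$, and $C_{kk}=\frac8m\sum_\mu\lambda_{\mu,k}^2>0$ whenever $r_k\neq0$.)

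Next, I solve $z_k(t_k^*)=\rho z_k(\infty)$ exactly. Inverting the formula gives $K_k z_k(0)+(a\zeta_k-K_kz_k(0))e^{-a\zeta_k t}=K_k z_k(0)/\rho$, hence
\begin{equation*}
t_k^*=\frac{1}{a\zeta_k}\ln\frac{\rho\,(a\zeta_k-K_k z_k(0))}{(1-\rho)K_k z_k(0)}.
\end{equation*}
This is well defined for small $\epsilon$, because then $z_k(0)<\rho z_k(\infty)$ and the argument of the logarithm exceeds $1$.

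Then I take asymptotics. Since $z_k(0)=\Theta(\epsilon)$, there are constants $0<c_1\le c_2$ with $c_1\epsilon\le z_k(0)\le c_2\epsilon$. The argument of the logarithm is therefore $\frac{1}{\epsilon}\cdot\Theta(1)$, with the $\Theta(1)$ factor bounded above and below by positive constants depending only on $\rho,\zeta_k,K_k,c_1,c_2$. Hence $t_k^*=\frac{1}{a\zeta_k}\big(\ln\frac1\epsilon+O(1)\big)$, and dividing by $\frac{1}{a\zeta_k}\ln\frac1\epsilon\to\infty$ gives the ratio $\to1$.

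The main obstacle is bookkeeping rather than analysis. One must check that the isotropic case truly maps onto the same logistic form with growth rate $2\gamma_k$; this is read off the displayed solution by multiplying through, since $\gamma_k/(4c)=2\gamma_k/(8c)$. One must also make sure the $O(1)$ term stays bounded uniformly under only the $\Theta(\epsilon)$ assumption. The ordering $\zeta_1>\dots>\zeta_p$ is not needed for the asymptotic itself. It is only used afterwards to conclude that the gaps $|t^*_{k'}-t^*_k|\sim|\frac{1}{a\zeta_{k'}}-\frac{1}{a\zeta_k}|\ln\frac1\epsilon$ diverge.
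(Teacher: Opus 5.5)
Your proposal is correct and follows the paper's proof essentially step for step. Both unify the two settings into one logistic trajectory, read off $z_k(\infty)$, invert $z_k(t_k^*)=\rho z_k(\infty)$ in closed form, and note that the resulting logarithm is $\ln(1/\epsilon)+O(1)$. Your normalization ($a\zeta_k$ in the numerator with $K_k=8c$, versus the paper's $\zeta_k$ with $C_k=4c$) is equivalent. Your two-sided bounds $c_1\epsilon\le z_k(0)\le c_2\epsilon$ are slightly more careful than the paper's exact choice $z_k(0)=c_k\epsilon$.
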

Consequently, the gaps between consecutive feature activations diverge in the limit of small initialization, which exhibits saddle-to-saddle dynamics: 
\begin{equation}
\lim_{\epsilon \to 0} (t_{k+1}^* - t_k^*) = \infty.
\end{equation}
\begin{proof}
 Let the initialization scale be $z_k(0) = c_k \epsilon$ for all $k$, where $c_k >0$ is a constant and $\epsilon \to 0$.

By Theorems \ref{theo:ortho_features} and \ref{theo:isotropic_samples}, $z_k$ evolves according to
\begin{equation}
z_k(t) = \frac{\zeta_k z_k(0)}{C_k z_k(0) + (\zeta_k - C_k z_k(0)) \exp(-a \zeta_k t)}
\label{eq:logistic}
\end{equation}
where $\zeta_k = r_k, a=1, C_k = C_{kk}$ for Theorem \ref{theo:ortho_features}, and $\zeta_k = \gamma_k, a=2, C_k = 4c$ for Theorem \ref{theo:isotropic_samples}. The non-zero steady-state saturation value is $z_k(\infty) = \zeta_k / C_k$.

Setting $z_k(t_k^*) = \rho z_k(\infty) = \rho \frac{\zeta_k}{C_k}$, we substitute the initial condition and solve for $t_k^*$:
\begin{equation}
\rho \frac{\zeta_k}{C_k} = \frac{\zeta_k c_k \epsilon}{C_k c_k \epsilon + (\zeta_k - C_k c_k \epsilon) \exp(-a \zeta_k t_k^*)}
\end{equation}
Rearranging to isolate the exponential term yields:
\begin{equation}
\exp(-a \zeta_k t_k^*) = \frac{C_k c_k \epsilon}{\zeta_k - C_k c_k \epsilon} \left( \frac{1 - \rho}{\rho} \right)
\end{equation}
Taking the logarithm and dividing by $-a \zeta_k$, we obtain the characteristic time:
\begin{equation}
t_k^* = \frac{1}{a \zeta_k} \ln\left( \frac{1}{\epsilon} \right) - \frac{1}{a \zeta_k} \ln\left( \frac{C_k c_k}{\zeta_k - C_k c_k \epsilon} \frac{1 - \rho}{\rho} \right)
\end{equation}
As $\epsilon \to 0$, the second term is $O(1)$, and the time scales as:
\begin{equation}
t_k^* = \frac{1}{a \zeta_k} \ln\left(\frac{1}{\epsilon}\right) + O(1) \sim \frac{1}{a\zeta_k} \ln\frac{1}{\epsilon}.
\end{equation}
This completes the proof.
\end{proof}

Recall that $r_k$ and $\gamma_k$ are essentially the correlation between input features and label, and $C_{kk}\geq 0$ is essentially the input variances. The second part of the formal statement is the following.

\begin{theorem}
\label{theo:feature-scaling-law}
Suppose the conditions of Theorem \ref{theo:saddle_to_saddle} hold. Consider the infinite-width limit $p \to \infty$\footnote{Note that under Theorem \ref{theo:isotropic_samples} this requires $m\to\infty$ simultaneously to satisfy Assumption \ref{assume:isotropic}} and assume the initialization satisfies $z_k(0) = \epsilon c_kk^{-\beta}$ where the constants $c_k$ are bounded away from zero: $0 < c_{\min} \le c_k \le c_{\max} < \infty$. Let $\mathcal{E}_\epsilon(t) := L(t) - L(\infty)$ denote the excess loss. Define $V_k := \frac{r_k^2}{8 C_{kk}}$ (under Theorem \ref{theo:ortho_features}) or $V_k := \frac{\gamma_k^2}{16c}$ (under Theorem \ref{theo:isotropic_samples}).

Assume that $\zeta_k$ and $V_k$ exhibit the following power-law decays:
\begin{align}
\zeta_k = c_\zeta k^{-\alpha_2}, \
V_k = c_w k^{-\alpha_1}
\end{align}
for positive constants $c_\zeta, c_w, \alpha_2 > 0$. Furthermore, assume that $\alpha_1 > 1$ \footnote{Note that under Theorem \ref{theo:isotropic_samples}, $V_k = \frac{\zeta_k^2}{16c}$, which enforces $\alpha_1 = 2\alpha_2$. Thus, we require $\alpha_2 > 0.5$ in that regime.} and $\beta \ge \alpha_1 - \alpha_2$  to ensure finite initial excess loss.

 Define the rescaled time $\tau := \frac{t}{\ln(1/\epsilon)}$. Then, under the small initialization limit $\epsilon \to 0^+$, the excess loss converges $\mathcal{E}(\tau) := \lim_{\epsilon \to 0^+} \mathcal{E}_\epsilon(\tau\ln(1/\epsilon))$ for almost every $\tau$ and $\mathcal{E}(\tau)$ has a power-law decay
\begin{equation}
\mathcal{E}(\tau) = \Theta\left( \tau^{-\frac{\alpha_1 - 1}{\alpha_2}} \right).
\end{equation}
\end{theorem}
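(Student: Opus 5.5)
The plan is to write the excess loss as a sum of independent per-feature contributions, take the $\epsilon\to0$ limit term by term, and observe that the limit is a tail sum of the power law $V_k$.

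\textbf{Per-feature decomposition.} Under the conditions of Theorem~\ref{theo:ortho_features}, expand the MSE in the common eigenbasis. The cross terms vanish because $\sum_\mu\lambda_{\mu,k}\lambda_{\mu,j}=0$ for $k\neq j$, so
\[
L=\text{const}+\sum_k\Big(\tfrac{C_{kk}}{8}z_k^2-\tfrac{r_k}{4}z_k\Big).
\]
Completing the square, with $z_k(\infty)=r_k/C_{kk}$, gives
\[
\mathcal{E}_\epsilon(t)=\sum_k V_k\big(1-u_k(t)\big)^2,\qquad u_k:=z_k/z_k(\infty),\qquad V_k=\frac{r_k^2}{8C_{kk}}.
\]
In the isotropic case, Assumption~\ref{assume:isotropic} gives $L=\text{const}-\tfrac12\Tr[YM]+c\Tr[M^2]$. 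Since $M$ stays diagonal in the eigenbasis of $Y$, the same computation yields the identical form with $V_k=\gamma_k^2/(16c)$ and $z_k(\infty)=\gamma_k/(4c)$.

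\textbf{Pointwise limit.} The logistic solution in the proof of Theorem~\ref{theo:saddle_to_saddle} gives
\[
u_k(t)=\Big[1+\tfrac{\zeta_k-C_kz_k(0)}{C_kz_k(0)}e^{-a\zeta_kt}\Big]^{-1}.
\]
Substitute $t=\tau\ln(1/\epsilon)$ and $z_k(0)=\epsilon c_kk^{-\beta}$. The correction term then behaves like $\text{const}_k\cdot\epsilon^{a\zeta_k\tau-1}$. Hence, for fixed $k$, $u_k\to1$ if $a\zeta_k\tau>1$ and $u_k\to0$ if $a\zeta_k\tau<1$. Since $\zeta_k>0$ is strictly decreasing, the equality case $a\zeta_k\tau=1$ occurs only for the countable set $\tau\in\{1/(a\zeta_k)\}$. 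So for almost every $\tau$, the $k$-th term converges to $V_k\,\mathbb{I}(a\zeta_k\tau<1)$.

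\textbf{Exchanging limit and sum (main obstacle).} This is where I expect the real work, because $p=\infty$ and we need uniform control in $k$. I will use dominated convergence with dominating sequence $\max(1,\sup_k\text{const})\cdot V_k$, which is summable since $\alpha_1>1$. Each $z_k(t)$ moves monotonically between $z_k(0)$ and $z_k(\infty)$, so $(1-u_k)^2\le\max\big(1,(1-u_k(0))^2\big)$. It therefore suffices that $u_k(0)=z_k(0)/z_k(\infty)$ be bounded uniformly in $k$ for small $\epsilon$. Using $z_k(\infty)=\zeta_k/C_k\propto V_k/\zeta_k\propto k^{-(\alpha_1-\alpha_2)}$, we get $u_k(0)\lesssim\epsilon\,c_{\max}k^{-\beta+\alpha_1-\alpha_2}$. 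This is uniformly bounded, and in fact tends to $0$, exactly when $\beta\ge\alpha_1-\alpha_2$. This is precisely the role of that hypothesis, and it also gives finite initial excess loss.

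\textbf{Evaluating the tail sum.} Dominated convergence then yields
\[
\mathcal{E}(\tau)=\sum_{k>K(\tau)}c_wk^{-\alpha_1},\qquad K(\tau)=(ac_\zeta\tau)^{1/\alpha_2}.
\]
An integral comparison bounds this tail above and below by constant multiples of $K^{1-\alpha_1}$ once $K\ge1$. Substituting $K(\tau)$ gives $\mathcal{E}(\tau)=\Theta\big(\tau^{-(\alpha_1-1)/\alpha_2}\big)$.
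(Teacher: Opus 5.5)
Your proposal is correct and follows essentially the same route as the paper's proof. Both arguments decompose the excess loss as $\sum_k V_k(1-z_k/z_k(\infty))^2$, take the pointwise indicator limit in rescaled time, use $\beta\ge\alpha_1-\alpha_2$ to dominate each term by $V_k$ for dominated convergence, and finish with an integral comparison on the tail $\sum_{k>(ac_\zeta\tau)^{1/\alpha_2}}c_wk^{-\alpha_1}$. Your domination step (logistic monotonicity plus a uniform bound on $z_k(0)/z_k(\infty)$) is equivalent to the paper's showing that $\tilde C_k(\epsilon)>0$ uniformly in $k$.
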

\begin{proof}
We first evaluate the empirical loss $L(t)$ in terms of the features $z_k(t)$.

In Theorem \ref{theo:ortho_features}, the loss is $L(t) = \frac{1}{m} \sum_{\mu=1}^m \left( \sum_{k=1}^p \lambda_{\mu, k} z_k(t) - y_\mu \right)^2$. Expanding the square and applying the orthogonality condition $\sum_\mu \lambda_{\mu, j} \lambda_{\mu, l} = 0$ for $j \neq l$, the cross-terms vanish. Using the definitions $C_{kk} = \frac{8}{m} \sum_\mu \lambda_{\mu, k}^2$ and $r_k = \frac{8}{m} \sum_\mu \lambda_{\mu, k} y_\mu$, we obtain:
\begin{equation}
L(t) = \frac{1}{8} \sum_{k=1}^p C_{kk} \left( z_k(t) - \frac{r_k}{C_{kk}} \right)^2+\text{Const}.
\end{equation}
Since $z_k(\infty) = r_k / C_{kk}$, the excess loss is $\mathcal{E}_\epsilon(t) := L(t) - L(\infty) = \frac{1}{8} \sum_{k=1}^p C_{kk} \left( z_k(t) - z_k(\infty) \right)^2$.

Similarly, in Theorem \ref{theo:isotropic_samples}, expanding $L(t) = \frac{1}{m} \sum_{\mu=1}^m \left( \mathrm{Tr}[M(t) A(x_\mu)] - y_\mu \right)^2$ and utilizing Assumption \ref{assume:isotropic} yields $L(t) = c \mathrm{Tr}[M(t)^2] - \frac{1}{2} \mathrm{Tr}[M(t) Y]+\text{Const}$. Because $M(t)$ and $Y$ are simultaneously diagonalizable, this simplifies to $L(t) = c \sum_{k=1}^p \left( z_k(t) - \frac{\gamma_k}{4c} \right)^2+\text{Const}$. With $z_k(\infty) = \gamma_k / 4c$, the excess loss is $\mathcal{E}_\epsilon(t) = c \sum_{k=1}^p \left( z_k(t) - z_k(\infty) \right)^2$.

Therefore, in both regimes the excess loss at any time $t$ can be written as
\begin{equation}
\mathcal{E}_\epsilon(t) = \sum_{k=1}^\infty V_k \left( 1 - \frac{z_k(t)}{z_k(\infty)} \right)^2.
\label{eq:excess_loss_series}
\end{equation}
after taking $p\to\infty$. $\beta\ge\alpha_1-\alpha_2$ guarantees that $(1 - z_k(0)/z_k(\infty))^2 \to 1$ because
\begin{equation}
\frac{z_k(0)}{z_k(\infty)} = \frac{c_k' \epsilon k^{-\beta} \zeta_k}{V_k} = \frac{c_k' \epsilon k^{-\beta} c_\zeta k^{-\alpha_2}}{c_w k^{-\alpha_1}} = \frac{c_k' c_\zeta \epsilon}{c_w} k^{\alpha_1 - \alpha_2 - \beta},
\end{equation}
where $c_k':=\frac{c_k}{8}$ in Theorem \ref{theo:ortho_features} and $c_k':=\frac{c_k}{4}$ in Theorem \ref{theo:isotropic_samples}. 
Further $\alpha_1 > 1$ guarantees that at $t=0$ the excess loss $\sum_{k=1}^\infty V_k$ is finite.

We introduce the rescaled time $\tau = \frac{t}{\ln(1/\epsilon)}$, or equivalently $t = \tau \ln(1/\epsilon)$. The fraction of unlearned feature magnitude, denoted as $f_{k, \epsilon}(\tau) = 1 - \frac{z_k(\tau \ln(1/\epsilon))}{z_k(\infty)}$, can be evaluated by substituting \eqref{eq:logistic}:
\begin{equation}
\frac{z_k(t)}{z_k(\infty)} = \frac{\zeta_k c_k \epsilon k^{-\beta} / C_k}{\zeta_k c_k \epsilon k^{-\beta} / C_k + (\zeta_k / C_k - c_k \epsilon k^{-\beta}) \exp(-a \zeta_k t)} = \frac{1}{1 + \left( \frac{\zeta_k}{c_k k^{-\beta} C_k \epsilon} - 1 \right) \exp(-a \zeta_k \tau \ln(1/\epsilon))}.
\end{equation}
Let $\tilde{C}_k(\epsilon) := \frac{\zeta_k}{c_k k^{-\beta} C_k \epsilon} - 1$. For sufficiently small $\epsilon$, $\tilde{C}_k(\epsilon) > 0$. The fraction becomes:
\begin{equation}
f_{k, \epsilon}(\tau) = 1 - \frac{1}{1 + \tilde{C}_k(\epsilon) \epsilon^{a \zeta_k \tau}} = \frac{\tilde{C}_k(\epsilon) \epsilon^{a \zeta_k \tau}}{1 + \tilde{C}_k(\epsilon) \epsilon^{a \zeta_k \tau}}.
\end{equation}
Notice that the leading order behavior of the term $\tilde{C}_k(\epsilon) \epsilon^{a \zeta_k \tau}$ is governed by $\epsilon^{a \zeta_k \tau - 1}$.
Let $\tau_k^* = \frac{1}{a \zeta_k}$. We analyze the pointwise limit of $f_{k, \epsilon}(\tau)$ as $\epsilon \to 0^+$:
\begin{itemize}
\item If $\tau > \tau_k^*$, then $a \zeta_k \tau - 1 > 0$. As $\epsilon \to 0^+$, $\tilde{C}_k(\epsilon) \epsilon^{a \zeta_k \tau} \sim \epsilon^{a \zeta_k \tau - 1} \to 0$. Hence, $f_{k, \epsilon}(\tau) \to 0$.
\item If $\tau < \tau_k^*$, then $a \zeta_k \tau - 1 < 0$. As $\epsilon \to 0^+$, $\tilde{C}_k(\epsilon) \epsilon^{a \zeta_k \tau} \sim \epsilon^{a \zeta_k \tau - 1} \to \infty$. Hence, $f_{k, \epsilon}(\tau) \to 1$.
\end{itemize}
Therefore, for almost every $\tau$, $f_{k, \epsilon}$ converges to an indicator function: $\lim_{\epsilon \to 0^+} f_{k, \epsilon}(\tau) = \mathbb{I}(\tau < \tau_k^*)$.

Notice that for both Theorem \ref{theo:ortho_features} and Theorem \ref{theo:isotropic_samples}, the relationship $C_k = \Theta(\zeta_k^2 / V_k) = \Theta(k^{\alpha_1 - 2\alpha_2})$ holds (with $\alpha_1 = 2\alpha_2$ specifically under Theorem \ref{theo:isotropic_samples}). Substituting this scaling into $\tilde{C}_k(\epsilon)$ yields:
\begin{equation}
\tilde{C}_k(\epsilon) \ge \frac{\text{Const}}{\epsilon} k^{\beta + \alpha_2 - \alpha_1} - 1.
\end{equation}
Because we assumed $\beta \ge \alpha_1 - \alpha_2$, the exponent of $k$ is non-negative, and thus there exists a sufficiently small $\epsilon > 0$ such that $\tilde{C}_k(\epsilon) > 0$ uniformly for all $k \ge 1$. This ensures that $0 \le f_{k, \epsilon}(\tau) \le 1$ across all features.

Consequently, the terms in \eqref{eq:excess_loss_series} are bounded by $V_k f_{k, \epsilon}(\tau)^2 \le V_k$. Since $\sum_{k=1}^\infty V_k < \infty$, we can apply the Lebesgue dominated convergence theorem to interchange the limit and the summation:
\begin{equation}
\mathcal{E}(\tau) := \lim_{\epsilon \to 0^+} \mathcal{E}_\epsilon(\tau\ln(1/\epsilon)) = \sum_{k=1}^\infty V_k \left( \lim_{\epsilon \to 0^+} f_{k, \epsilon}(\tau) \right)^2 = \sum_{k=1}^\infty V_k \cdot \mathbb{I}(\tau < \tau_k^*).
\end{equation}
The condition $\tau < \tau_k^*$ translates to $\tau < \frac{1}{a c_\zeta k^{-\alpha_2}}$, which can be inverted to define a threshold:
\begin{equation}
k > (a c_\zeta \tau)^{\frac{1}{\alpha_2}} := k^*(\tau).
\end{equation}
Let $K(\tau) = \lfloor k^*(\tau) \rfloor + 1$ be the smallest integer greater than $k^*(\tau)$. The excess loss equals the tail sum:
\begin{equation}
\mathcal{E}(\tau) = \sum_{k=K(\tau)}^\infty c_w k^{-\alpha_1}.
\end{equation}
Because $c_w x^{-\alpha_1}$ is monotonically decreasing, the tail sum is bounded by the Riemann integrals:
\begin{equation}
\int_{K(\tau)}^\infty c_w x^{-\alpha_1} dx \le \sum_{k=K(\tau)}^\infty c_w k^{-\alpha_1} \le \int_{K(\tau)-1}^\infty c_w x^{-\alpha_1} dx.
\end{equation}
Evaluating the integrals yields:
\begin{equation}
\frac{c_w}{\alpha_1 - 1} \left( K(\tau) \right)^{-(\alpha_1 - 1)} \le \mathcal{E}(\tau) \le \frac{c_w}{\alpha_1 - 1} \left( K(\tau) - 1 \right)^{-(\alpha_1 - 1)}.
\end{equation}
As $\tau \to \infty$, the threshold scales as $K(\tau) \sim k^*(\tau) = (a c_\zeta \tau)^{1/\alpha_2}$. Substituting $k^*(\tau)$ into the bounds gives the asymptotics:
\begin{equation}
\mathcal{E}(\tau) = \Theta\left( \left( \tau^{\frac{1}{\alpha_2}} \right)^{-(\alpha_1 - 1)} \right) = \Theta\left( \tau^{-\frac{\alpha_1 - 1}{\alpha_2}} \right).
\end{equation}
This completes the proof.
\end{proof}

\paragraph{Sample-wise Descent}
The first part of the formal statement is the following theorem.
\begin{theorem}
\label{theo:saddle_to_saddle_samples}
Under the conditions of Theorem \ref{theo:orthogonal_samples}, for each sample $\mu \in \{1, \dots, m\}$, let $\lambda_{\mu, \max} = \max_k \lambda_{\mu, k}$ and assume that $y_\mu > 0$. Define the effective growth rate for each sample as:
\begin{equation}
\zeta_\mu := \frac{8}{m} \lambda_{\mu, \max} y_\mu
\end{equation}
Assume that $\zeta_1 > \zeta_2 > \dots > \zeta_m > 0$. Let $L(t) = \frac{1}{m} \sum_{\mu=1}^m \Delta_\mu(t)^2$ be the empirical MSE and the initialization satisfies $z_k(0) = \Theta(\epsilon)$ for all $k \in \{1, \dots, p\}$.

Define the rescaled time $\tau := \frac{t}{\ln(1/\epsilon)}$ and the characteristic timescale $\tau_\mu^* := \frac{1}{\zeta_\mu}$. Then, in the small initialization limit $\epsilon \to 0^+$, the empirical loss converges pointwise (for almost every $\tau$) to a saddle-to-saddle trajectory transitioning through $m$ plateaus:
\begin{equation}
\lim_{\epsilon \to 0^+} L\left(\tau \ln\frac{1}{\epsilon}\right) = \frac{1}{m} \sum_{\mu=1}^m y_\mu^2 \cdot \mathbb{I}(\tau < \tau_\mu^*),
\end{equation}
where $\mathbb{I}(\cdot)$ denotes the indicator function.
\end{theorem}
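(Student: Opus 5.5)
The plan is to reduce the loss to $m$ decoupled scalar problems using Theorem~\ref{theo:orthogonal_samples}, and then do a one-dimensional asymptotic analysis for each sample.

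\textbf{Reduction.} By the proof of Theorem~\ref{theo:orthogonal_samples}, $\Delta_\mu(t)=\dot\xi_\mu(t)=F_\mu(\xi_\mu(t))$, where
\[
F_\mu(s):=\sum_k \lambda_{\mu,k} z_k(0)\, e^{-\kappa\lambda_{\mu,k}s}-y_\mu,\qquad \kappa:=8/m .
\]
Hence $L(t)=\frac1m\sum_\mu F_\mu(\xi_\mu(t))^2$. Since the sum is finite, it suffices to show, for each fixed $\mu$ and each $\tau\neq\tau_\mu^*$,
\[
\Delta_\mu(\tau\ln(1/\epsilon))\to -y_\mu\ \text{ if }\tau<\tau_\mu^*,\qquad \Delta_\mu(\tau\ln(1/\epsilon))\to 0\ \text{ if }\tau>\tau_\mu^*.
\]
I would assume, as the definition of $\zeta_\mu>0$ implicitly does, that $\lambda_{\mu,\max}>0$ with $z_{k_*}(0)=\Theta(\epsilon)>0$ for a maximizing index $k_*$.

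\textbf{Structure of the scalar flow.} At $s=0$ we have $F_\mu(0)=-y_\mu+O(\epsilon)<0$, so $\xi_\mu$ decreases. On $s<0$:
\begin{itemize}
\item terms with $\lambda_{\mu,k}\le 0$ are bounded by $O(\epsilon)$;
\item terms with $\lambda_{\mu,k}>0$ grow like $\epsilon\, e^{\kappa\lambda_{\mu,k}|s|}$, so $F_\mu(s)\to+\infty$ as $s\to-\infty$.
\end{itemize}
Moreover $F_\mu'(s)=-\kappa\sum_k\lambda_{\mu,k}^2 z_k(0)e^{-\kappa\lambda_{\mu,k}s}<0$, so $F_\mu$ is strictly decreasing in $s$. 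It therefore has a unique zero $\xi_\mu^\infty<0$. The flow $\dot\xi=F_\mu(\xi)$ decreases monotonically from $0$ to $\xi_\mu^\infty$, and along it $\Delta_\mu=F_\mu(\xi_\mu)$ increases monotonically from about $-y_\mu$ to $0$.

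Solving $F_\mu(\xi^\infty)=0$ gives $|\xi_\mu^\infty|=\frac{\ln(1/\epsilon)}{\kappa\lambda_{\mu,\max}}+O(1)$. The dominant exponential is the one with $\lambda_{\mu,\max}$; the others are subleading or bounded.

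\textbf{Time estimates.} Fix $\delta>0$ and use $t(\xi)=\int_\xi^0\frac{ds}{-F_\mu(s)}$.
\begin{itemize}
\item \emph{Plateau.} For $|s|\le(1-\delta)\frac{\ln(1/\epsilon)}{\kappa\lambda_{\mu,\max}}$ we have $F_\mu(s)=-y_\mu+O(\epsilon^{\delta})$. So up to time $(1-\delta)\tau_\mu^*\ln(1/\epsilon)(1+o(1))$ the residual stays within $o(1)$ of $-y_\mu$. This uses $\tau_\mu^*=1/(\kappa\lambda_{\mu,\max}y_\mu)=1/\zeta_\mu$, and gives the first limit.
\item \emph{Convergence.} Reaching the point $s_\eta$ where $F_\mu(s_\eta)=-\eta$ takes time at most $|\xi_\mu^\infty|/\eta$. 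Refining this, the stretch where $F_\mu\le -y_\mu/2$ costs at most $(1+\delta)\tau_\mu^*\ln(1/\epsilon)$. The remaining stretch from $-y_\mu/2$ to $-\eta$ has length $O(1)$ in $s$, because the dominant exponential changes by a constant factor over an $O(1)$ interval. That stretch therefore costs $O(1/\eta)=o(\ln 1/\epsilon)$ time.
\item By monotonicity of $\Delta_\mu$ in $t$, for any $\tau>\tau_\mu^*$ and any $\eta$, eventually $|\Delta_\mu(\tau\ln(1/\epsilon))|\le\eta$. This gives the second limit.
\end{itemize}

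\textbf{Main obstacle.} The delicate step is making the transition window uniform: showing that the passage of $F_\mu$ from near $-y_\mu$ to near $0$ happens over an $s$-interval of length $O(1)$ independent of $\epsilon$. I would handle it via the lower bound $|F_\mu'(s)|\ge \kappa\lambda_{\mu,\max}^2 z_{k_*}(0)e^{\kappa\lambda_{\mu,\max}|s|}$. In the relevant region this is $\Theta(1)$, since there $\epsilon\, e^{\kappa\lambda_{\mu,\max}|s|}=\Theta(1)$ because $F_\mu+y_\mu=\Theta(1)$, while the terms with $\lambda_{\mu,k}\le0$ are $O(\epsilon)$. Summing the $m$ squared limits and dividing by $m$ then yields the stated staircase.
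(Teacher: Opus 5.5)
Your proposal is correct, but it is organized differently from the paper's proof.

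\textbf{The paper's route.} The paper aggregates the maximal-eigenvalue modes into $Z(t)=\sum_{k\in\mathcal K_\mu}\lambda_{\mu,\max}z_k(t)$. It writes each sub-dominant positive mode as $z_k(0)(Z/Z(0))^{\gamma_k}$ with $\gamma_k=\lambda_{\mu,k}/\lambda_{\mu,\max}<1$, and proves $\xi_\mu\le 0$ by a first-crossing argument. It then uses the non-increase of the total loss to bound $Z$ a priori, and deduces that the remainder is uniformly $O(\epsilon^{\min(\delta,1)})$. Finally it sandwiches $Z$ between two explicit logistic solutions with rates $\zeta_\mu(1\pm o(1))$ and reads the step function off their closed forms.

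\textbf{Your route.} You never leave the scalar autonomous equation $\dot\xi_\mu=F_\mu(\xi_\mu)$. Your key observation, $F_\mu'<0$, replaces both the sign argument and the loss-based bound. It gives a unique equilibrium and monotone convergence to it. It also gives $\frac{d}{dt}\Delta_\mu^2=2F_\mu'(\xi_\mu)F_\mu(\xi_\mu)^2\le 0$, so $\Delta_\mu$ rises monotonically from $\Delta_\mu(0)$ to $0$. The limit then reduces to two hitting-time estimates by quadrature, with no comparison principle needed.

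\textbf{What each buys.} The paper's approach gives explicit logistic envelopes, so it describes the shape of each transition and not only its location, and it mirrors the feature-wise proof. Yours is more elementary. It also hands the subsequent power-law theorem its dominating bound $\Delta_\mu(t)^2\le\Delta_\mu(0)^2$ directly; this is summable exactly under that theorem's hypotheses.

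\textbf{Two steps to write out.} First, the claim that the stretch $F_\mu\le -y_\mu/2$ costs at most $(1+\delta)\tau_\mu^*\ln(1/\epsilon)$. Split at $|s|=(1-\delta)\ln(1/\epsilon)/(\kappa\lambda_{\mu,\max})$. On the plateau part $-F_\mu\ge y_\mu-O(\epsilon^{\delta})$. The remaining part has $s$-length $\delta\ln(1/\epsilon)/(\kappa\lambda_{\mu,\max})+O(1)$, and on it $-F_\mu\ge y_\mu/2$. Summing gives $(1+\delta+o(1))\tau_\mu^*\ln(1/\epsilon)$. Second, in the transition window, each sub-dominant positive term is at most a constant times $\epsilon e^{\kappa\lambda_{\mu,\max}|s|}$. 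This is what forces that quantity to be $\Theta(1)$ there, not just the smallness of the terms with $\lambda_{\mu,k}\le 0$.
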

\begin{proof}
According to Theorem \ref{theo:orthogonal_samples}, for a specific sample $\mu$, we denote $z_k(t) := z_k(0) \exp\left( -\tilde{\lambda}_k \xi_\mu(t) \right)$ and $\tilde{\lambda}_k := \frac{8}{m} \lambda_{\mu, k}$.
Let $\mathcal{K}_\mu = \{ k \mid \lambda_{\mu, k} = \lambda_{\mu, \max} \}$ be the set of indices for the maximum eigenvalue. We define
\begin{equation}
Z(t) := \sum_{k \in \mathcal{K}_\mu} \lambda_{\mu, \max} z_k(t) = Z(0) \exp\left( -\tilde{\lambda}_{\max} \xi_\mu(t) \right)
\end{equation}
where $Z(0) := \sum_{k \in \mathcal{K}_\mu} \lambda_{\mu, \max} z_k(0) = \Theta(\epsilon)$.

We first prove that $\xi_\mu(t)\leq0$. By definition, $\xi_\mu(0) = 0$. The initial residual is $\Delta_\mu(0) = \sum_{k=1}^p \lambda_{\mu, k} z_k(0) - y_\mu = O(\epsilon) - y_\mu$. Since $y_\mu > 0$, for sufficiently small $\epsilon$, $\Delta_\mu(0) < 0$. Because $\dot{\xi}_\mu(t) = \Delta_\mu(t)$, $\xi_\mu(t)$ is initially decreasing. Suppose there exists a first time $t_1 > 0$ such that $\xi_\mu(t_1) = 0$. At this instant, $z_k(t_1) = z_k(0)$ for all $k$, which implies $\dot{\xi}_\mu(t_1) = \Delta_\mu(0) < 0$. Since $\xi_\mu(t) < 0$ for $t \in (0, t_1)$, reaching zero at $t_1$ from below requires $\dot{\xi}_\mu(t_1) \ge 0$, which contradicts $\dot{\xi}_\mu(t_1) < 0$.

Because $\xi_\mu(t) \le 0$, the evolution of any feature depends on the sign of its eigenvalue. 
Since $\lambda_{\mu, \max} > 0$, the exponent $-\tilde{\lambda}_{\max} \xi_\mu(t) \ge 0$, which guarantees $Z(t) \ge Z(0) = C\epsilon$ for all $t \ge 0$. For any sub-dominant positive feature ($k \notin \mathcal{K}_\mu, \lambda_{\mu, k} > 0$), we can express its dynamics in terms of $Z(t)$ by eliminating $\xi_\mu(t)$: 
\begin{equation}
z_k(t) = z_k(0) \left( \frac{Z(t)}{Z(0)} \right)^{\gamma_k},
\end{equation}
where $\gamma_k := \lambda_{\mu, k} / \lambda_{\mu, \max} \in (0, 1)$. Conversely, for any negative feature ($\lambda_{\mu, k} < 0$), the exponent $-\tilde{\lambda}_k \xi_\mu(t) \le 0$. This implies that $z_k(t) \le z_k(0)$ for all $t \ge 0$.

We define the residual $\Delta_\mu(t) := Z(t) + R(Z, \epsilon) - y_\mu$, where $R(Z, \epsilon) := R_+(Z, \epsilon) + R_-(\epsilon)$ and
\begin{equation}
R_+(Z, \epsilon) = \sum_{\lambda_{\mu, k} > 0, k \notin \mathcal{K}_\mu} \lambda_{\mu, k} c_k \epsilon \left( \frac{Z}{A \epsilon} \right)^{\gamma_k}, \quad R_-(\epsilon) = \sum_{\lambda_{\mu, k} < 0} \lambda_{\mu, k} z_k(t).
\end{equation}
Before proceeding, we establish a global bound for $Z(t)$. Since the dynamics follow a gradient flow, the total loss is non-increasing ($L(t) \le L(0)$), which guarantees that the residual $\Delta_\mu(t)$ is globally bounded, i.e., $|\Delta_\mu(t)| \le C_\Delta$ for some constant $C_\Delta > 0$.

Rearranging the residual equation yields: $Z(t) + R_+(Z, \epsilon) = \Delta_\mu(t) + y_\mu - R_-(\epsilon)$. We have already established that $z_k(t) \le z_k(0)$ for all features with $\lambda_{\mu, k} < 0$. This implies that the negative contribution $R_-(\epsilon)$ is bounded: $-R_-(\epsilon) \le O(\epsilon)$. Consequently, the right-hand side of the equation is bounded by a constant $M_Z := C_\Delta + y_\mu + O(\epsilon)$. Since both $Z(t)$ and $R_+(Z, \epsilon)$ are sums of non-negative terms (as $\lambda_{\mu,k} > 0$ and $z_k(t) \ge 0$), we must have $Z(t) \le M_Z$ for all $t \ge 0$.

Let $\delta := \min_{\lambda_{\mu,k}>0} (1 - \gamma_k) > 0$. Since $M_Z\ge Z \ge C\epsilon$, $|R_+(Z, \epsilon)| \le C_1 \epsilon^\delta$. Since $z_k(t) \le c_k \epsilon$ for negative eigenvalues, $|R_-(\epsilon)| \le C_2 \epsilon$. Combining these, for any bounded trajectory of $Z(t)$, we can define a uniform bound relative to $y_\mu$: $\kappa(\epsilon) := \sup_t |R(Z(t), \epsilon)| / y_\mu = O(\epsilon^{\min(\delta, 1)})$, which vanishes as $\epsilon \to 0^+$.

Now, summing \eqref{eq:GD-z} over $k\in\mathcal{K}_\mu$ yields:
\begin{equation}
\dot{Z}(t) = -\frac{8}{m} \lambda_{\mu, \max} \Delta_\mu(t) Z(t) = \zeta_\mu Z(t) \left( 1 - \frac{Z(t) + R(Z, \epsilon)}{y_\mu} \right)
\end{equation}
where $\zeta_\mu = \frac{8}{m} \lambda_{\mu, \max} y_\mu$. With the definition of $\kappa(\epsilon)$, for $Z \in [0, y_\mu]$ we have:
\begin{equation}
\zeta_\mu Z \left( 1 - \kappa(\epsilon) - \frac{Z}{y_\mu} \right) \le \dot{Z} \le \zeta_\mu Z \left( 1 + \kappa(\epsilon) - \frac{Z}{y_\mu} \right)
\end{equation}
Let $Z_{lower}(t)$ and $Z_{upper}(t)$ denote the solutions to these bounding logistic equations with the initial condition $Z_{lower}(0) = Z_{upper}(0) = Z(0) = A \epsilon$:
\begin{align}
Z_{upper}(t) &= \frac{y_\mu (1 + \kappa(\epsilon)) A \epsilon}{A \epsilon + (y_\mu (1 + \kappa(\epsilon)) - A \epsilon) \exp(-\zeta_\mu (1 + \kappa(\epsilon)) t)} \\
Z_{lower}(t) &= \frac{y_\mu (1 - \kappa(\epsilon)) A \epsilon}{A \epsilon + (y_\mu (1 - \kappa(\epsilon)) - A \epsilon) \exp(-\zeta_\mu (1 - \kappa(\epsilon)) t)}
\end{align}
We now apply the timescale change $t = \tau \ln(1/\epsilon)$ to evaluate the pointwise limit as $\epsilon \to 0^+$. For the upper bound, substituting the time yields the exponential term $\exp(-\zeta_\mu (1 + \kappa(\epsilon)) \tau \ln(1/\epsilon)) = \epsilon^{\zeta_\mu \tau (1 + \kappa(\epsilon))}$. Dividing the numerator and denominator by $\epsilon$ yields:
\begin{equation}
Z_{upper}\left(\tau \ln\frac{1}{\epsilon}\right) = \frac{y_\mu (1 + \kappa(\epsilon)) A}{A + (y_\mu (1 + \kappa(\epsilon)) - C\epsilon) \epsilon^{\zeta_\mu \tau (1 + \kappa(\epsilon)) - 1}}
\end{equation}
Let $\tau_\mu^* = 1/\zeta_\mu$. Because $\kappa(\epsilon) \to 0$ as $\epsilon \to 0^+$, for any fixed $\tau \neq \tau_\mu^*$, the sign of the exponent $\zeta_\mu \tau (1 + \kappa(\epsilon)) - 1$ is determined by $\zeta_\mu \tau - 1$ for sufficiently small $\epsilon$. If $\tau < \tau_\mu^*$, then the exponent is negative, meaning $\epsilon^{\zeta_\mu \tau (1 + \kappa(\epsilon)) - 1} \to \infty$ and $Z_{upper} \to 0$. If $\tau > \tau_\mu^*$, then the exponent is positive, meaning the term vanishes and $Z_{upper} \to y_\mu (1 + 0) = y_\mu$.

Applying the same time substitution to the lower bound gives the term $\epsilon^{\zeta_\mu(1-\kappa(\epsilon))\tau}$. Dividing by $\epsilon$ gives the denominator behavior governed by $\epsilon^{\zeta_\mu \tau - 1 - \zeta_\mu \tau \kappa(\epsilon)}$. Since $\kappa(\epsilon) \to 0$, for any fixed $\tau \neq \tau_\mu^*$, the sign of the exponent is determined by $\zeta_\mu \tau - 1$ for sufficiently small $\epsilon$. Thus, as with the upper bound, $Z_{lower} \to 0$ for $\tau < \tau_\mu^*$ and $Z_{lower} \to y_\mu (1 - \kappa(0)) = y_\mu$ for $\tau > \tau_\mu^*$.

Since $Z_{lower}(t) \le Z(t) \le Z_{upper}(t)$, the dominant feature subspace converges to a step function for $\tau\neq\tau_\mu^*$:
\begin{equation}
\lim_{\epsilon \to 0^+} Z\left(\tau \ln\frac{1}{\epsilon}\right) = y_\mu \cdot \mathbb{I}(\tau > \tau_\mu^*)
\end{equation}
Returning to the residual $\Delta_\mu(t) = Z(t) + R(Z, \epsilon) - y_\mu$. Because $Z \le y_\mu$ and $R(Z, \epsilon) = O(\epsilon^{\min(\delta, 1)}) \to 0$ uniformly for bounded $Z$, we have $\lim_{\epsilon \to 0^+} \Delta_\mu(t) = \lim_{\epsilon \to 0^+} Z(t) - y_\mu$. Squaring this residual yields:
\begin{equation}
\lim_{\epsilon \to 0^+} \Delta_\mu\left(\tau \ln\frac{1}{\epsilon}\right)^2 = (0 - y_\mu)^2 \cdot \mathbb{I}(\tau < \tau_\mu^*) + (y_\mu - y_\mu)^2 \cdot \mathbb{I}(\tau > \tau_\mu^*) = y_\mu^2 \cdot \mathbb{I}(\tau < \tau_\mu^*)
\end{equation}
Summing over $\mu$ completes the proof.
\end{proof}

The second part of the formal statement is the following.
\begin{theorem}
Suppose the conditions of Theorem \ref{theo:saddle_to_saddle_samples} hold, and consider the infinite-sample limit $m \to \infty$. 
Suppose that the $\zeta_\mu$ and $y_\mu$ exhibit the following power-law decay:
\begin{align}
\zeta_\mu = c_\zeta \mu^{-\gamma_2},\
y_\mu = c_y \mu^{-\gamma_1}
\end{align}
for some positive constants $c_\zeta, c_y, \gamma_2 > 0$. Let the empirical MSE loss be defined as $\tilde L(t) = \sum_{\mu=1}^m \Delta_\mu(t)^2$. We also assume $\sum_{\mu=1}^\infty \left( \sum_{k=1}^\infty \lambda_{\mu, k} z_k(0) \right)^2 < \infty$ 
and $\gamma_1 > 1/2$ such that the initial loss converges.

Define the rescaled time $\tau := \frac{t}{\ln(1/\epsilon)}$. The empirical loss converges $\mathcal{L}(\tau) := \lim_{\epsilon \to 0^+} \tilde L\left(\tau \ln \frac{1}{\epsilon} \right)$ for almost every $\tau$ and the limiting loss follows a power-law decay:
\begin{equation}
\mathcal{L}(\tau) = \Theta\left( \tau^{-\frac{2\gamma_1 - 1}{\gamma_2}} \right).
\end{equation}
\end{theorem}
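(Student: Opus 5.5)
The plan is to mirror the proof of Theorem~\ref{theo:feature-scaling-law}, with Theorem~\ref{theo:saddle_to_saddle_samples} playing the role of the pointwise step-function limit. It has three steps: a per-sample limit, an exchange of limit and sum, and a tail-sum estimate. First, for each fixed $\mu$, Theorem~\ref{theo:saddle_to_saddle_samples} already shows that $\Delta_\mu(\tau\ln(1/\epsilon))^2 \to y_\mu^2\,\mathbb{I}(\tau<\tau_\mu^*)$ for every $\tau\neq\tau_\mu^*$. Since the exceptional set $\{\tau_\mu^*\}_{\mu\ge1}$ is countable, the per-sample limits hold simultaneously for almost every $\tau$. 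The only point to recheck is that the constants in that proof (the bound $C_\Delta$ and $\kappa(\epsilon)$) stay finite for each fixed $\mu$ as $m\to\infty$. This should be harmless because, by Theorem~\ref{theo:orthogonal_samples}, the $m$ ODEs for $\xi_\mu$ are fully decoupled.

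Second, exchange $\lim_{\epsilon\to0}$ with the infinite sum $\tilde L=\sum_\mu\Delta_\mu^2$ by dominated convergence. Because the samples decouple, each $\xi_\mu$ follows its own one-dimensional gradient flow on the per-sample loss $\Delta_\mu^2$. Hence $\Delta_\mu(t)^2$ is individually non-increasing in $t$, and we obtain $\Delta_\mu(t)^2\le \Delta_\mu(0)^2=(\sum_k\lambda_{\mu,k}z_k(0)-y_\mu)^2\le 2(\sum_k\lambda_{\mu,k}z_k(0))^2+2y_\mu^2$. This bound is summable by the stated assumption together with $\gamma_1>1/2$. I expect this domination step to be the main obstacle. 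The monotonicity of each per-sample loss must be justified: $\frac{d}{dt}\Delta_\mu=\sum_k\lambda_{\mu,k}\dot z_k$ with $\dot z_k=-\frac{8}{m}\lambda_{\mu,k}\Delta_\mu z_k$ for the $k$ belonging to $\mu$, which gives $\frac{d}{dt}\Delta_\mu^2=-\frac{16}{m}\Delta_\mu^2\sum_k\lambda_{\mu,k}^2z_k\le0$. The bound must also hold uniformly in $\epsilon$, so I would require $z_k(0)=\epsilon c_k$ with the assumed series summable at $\epsilon=1$, so that the dominating series scales down with $\epsilon$. A further subtlety is the $1/m$ factor as $m\to\infty$, which also rescales $\zeta_\mu$. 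Following the statement, I would treat $\zeta_\mu$ as given by its power law in the limit, i.e.\ absorb the normalization into the time scale, and remark on this.

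Third, after the exchange, $\mathcal L(\tau)=\sum_{\mu}y_\mu^2\,\mathbb{I}(\tau<1/\zeta_\mu)=\sum_{\mu>\mu^*(\tau)}c_y^2\mu^{-2\gamma_1}$, where $\mu^*(\tau)=(c_\zeta\tau)^{1/\gamma_2}$, using that $\zeta_\mu$ is decreasing. Bounding the tail by integrals exactly as in Theorem~\ref{theo:feature-scaling-law} gives $\frac{c_y^2}{2\gamma_1-1}K^{-(2\gamma_1-1)}\le\mathcal L(\tau)\le\frac{c_y^2}{2\gamma_1-1}(K-1)^{-(2\gamma_1-1)}$ with $K=\lfloor\mu^*\rfloor+1$. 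Since $K\sim(c_\zeta\tau)^{1/\gamma_2}$ as $\tau\to\infty$, this yields $\mathcal L(\tau)=\Theta(\tau^{-(2\gamma_1-1)/\gamma_2})$.
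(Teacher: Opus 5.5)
Your proposal is correct. Its skeleton matches the paper's proof: the per-sample step limit from Theorem~\ref{theo:saddle_to_saddle_samples}, dominated convergence, and integral bounds on the tail $\sum_{\mu>\mu^*(\tau)}c_y^2\mu^{-2\gamma_1}$. The genuine difference is how you dominate.

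The paper uses $\Delta_\mu(t)^2\le 9y_\mu^2$. This comes from the logistic sandwich in the proof of Theorem~\ref{theo:saddle_to_saddle_samples} ($Z\le y_\mu(1+\kappa(\epsilon))$, $|R|\le y_\mu\kappa(\epsilon)$) together with $\kappa(\epsilon)\le 1/2$. The resulting dominating sequence depends only on $y_\mu$, and the hypothesis $\sum_\mu(\sum_k\lambda_{\mu,k}z_k(0))^2<\infty$ is not used directly. However, it needs $\kappa(\epsilon)\le 1/2$ for all $\mu$ at once, and that is not automatic. $\kappa$ is a relative error $\sup_t|R|/y_\mu$ whose constants depend on $\mu$ (through $\delta$, the $c_k$, and the negative eigenvalues). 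Since $y_\mu\to 0$, it need not be uniformly small at a fixed $\epsilon$.

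You instead use the exact identity
\[
\frac{d}{dt}\Delta_\mu^2=-\frac{16}{m}\,\Delta_\mu^2\sum_k\lambda_{\mu,k}^2z_k\le 0 .
\]
It holds because the disjoint supports of the $\Lambda_\mu$ split the flow into independent per-sample flows. So $\Delta_\mu(t)^2\le\Delta_\mu(0)^2$ for every $\epsilon$ and $t$, and you dominate by the initial per-sample losses, which is exactly what the stated summability assumption controls.

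The price is that you must read that assumption uniformly in $\epsilon$ (e.g.\ $z_k(0)=\epsilon c_k$), as you note. In exchange, you need nothing from the sandwich beyond the per-sample pointwise limit. Your observation that the exceptional set $\{\tau_\mu^*\}_{\mu\ge1}$ is countable also makes the ``almost every $\tau$'' claim precise, which the paper leaves implicit.
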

\begin{proof}
According to Theorem \ref{theo:saddle_to_saddle_samples}, the residual for each sample converges for almost every $\tau$:
\begin{equation}
\lim_{\epsilon \to 0^+} \Delta_\mu\left(\tau \ln\frac{1}{\epsilon}\right)^2 = y_\mu^2 \cdot \mathbb{I}(\tau < \tau_\mu^*)
\end{equation}
where $\mathbb{I}(\cdot)$ is the indicator function and $\tau_\mu^* = \frac{1}{\zeta_\mu}$ is the characteristic timescale.

To interchange the limit and the infinite sum, we establish a uniform bound for the residuals. As shown in the proof of Theorem \ref{theo:saddle_to_saddle_samples}, for any trajectory of $Z(t)$, we have $Z(t) \le y_\mu(1 + \kappa(\epsilon))$ and $|R(Z, \epsilon)| \le y_\mu \kappa(\epsilon)$, where $\kappa(\epsilon) \to 0$ as $\epsilon \to 0^+$. Therefore, the absolute residual is bounded by:
\begin{equation}
|\Delta_\mu(t)| = |Z(t) + R(Z, \epsilon) - y_\mu| \le |Z(t)| + |R(Z, \epsilon)| + y_\mu \le y_\mu(2 + 2\kappa(\epsilon)).
\end{equation}
For sufficiently small $\epsilon$, we have $\kappa(\epsilon) \le 1/2$, yielding a uniform bound $\Delta_\mu(t)^2 \le 9 y_\mu^2$ for all $t \ge 0$. Under the assumption $\gamma_1 > 1/2$, the series $\sum_{\mu=1}^\infty 9 y_\mu^2 = 9 c_y^2 \sum_{\mu=1}^\infty \mu^{-2\gamma_1} < \infty$.

Now we can apply the Lebesgue dominated convergence theorem:
\begin{equation}
\mathcal{L}(\tau) = \lim_{\epsilon \to 0^+} \sum_{\mu=1}^\infty \Delta_\mu\left(\tau \ln\frac{1}{\epsilon}\right)^2 = \sum_{\mu=1}^\infty y_\mu^2 \cdot \mathbb{I}(\tau < \tau_\mu^*),
\end{equation}
We can then define a threshold on the index:
\begin{equation}
\mu^*(\tau):=\left( c_\zeta \tau \right)^{\frac{1}{\gamma_2}}.
\end{equation}
Let $M(\tau) = \lfloor \mu^*(\tau) \rfloor + 1$ be the smallest integer greater than $\mu^*(\tau)$. The limiting loss evaluates to the tail sum:
\begin{equation}
\mathcal{L}(\tau) = \sum_{\mu=M(\tau)}^\infty c_y^2 \mu^{-2\gamma_1}.
\end{equation}
Since $c_y^2 x^{-2\gamma_1}$ is decreasing, the sum is bounded by its corresponding Riemann integrals:
\begin{equation}
\int_{M(\tau)}^\infty c_y^2 x^{-2\gamma_1} dx \le \mathcal{L}(\tau) \le \int_{M(\tau)-1}^\infty c_y^2 x^{-2\gamma_1} dx
\end{equation}
Evaluating the integrals yields explicit upper and lower bounds:
\begin{equation}
\frac{c_y^2}{2\gamma_1 - 1} \left( M(\tau) \right)^{-(2\gamma_1 - 1)} \le \mathcal{L}(\tau) \le \frac{c_y^2}{2\gamma_1 - 1} \left( M(\tau) - 1 \right)^{-(2\gamma_1 - 1)}
\end{equation}
As $\tau \to \infty$, the threshold scales as $M(\tau) \sim \mu^*(\tau) = (c_\zeta \tau)^{\frac{1}{\gamma_2}}$. Substituting this asymptotic scaling into both bounds provides the decay rate:
\begin{equation}
\mathcal{L}(\tau) = \Theta\left( \left( \tau^{\frac{1}{\gamma_2}} \right)^{-(2\gamma_1 - 1)} \right) = \Theta\left( \tau^{-\frac{2\gamma_1 - 1}{\gamma_2}} \right).
\end{equation}
This completes the proof.
\end{proof}

\section{Additional Theory}
\subsection{Removing ZGZ Condition}
\label{app:remove_ZGZ}
\begin{theorem}
\label{theo:expansion}
Let $f_x: \mathbb{R}^{p \times d} \to \mathbb{R}$ be a three times continuously differentiable model with respect to its neurons $W = (w_1, \dots, w_d)$. Assume $f_x$ satisfies the permutation symmetry.
Then, the model can be approximated around the origin as:
\begin{equation}
f_x(W) = f_x(0) + \sum_{i=1}^d g(x)^\top  w_i + \sum_{i=1}^d w_i^\top  A(x) w_i + \sum_{i \neq j} w_i^\top  B(x) w_j + O(\|W\|^3),
\end{equation}
where $g(x) \in \mathbb{R}^p$, $A(x) \in \mathbb{R}^{p \times p}$ is a symmetric matrix representing the self-interaction of each neuron and $B(x) \in \mathbb{R}^{p \times p}$ is a symmetric matrix representing the cross-interaction between any pair of distinct neurons.
\end{theorem}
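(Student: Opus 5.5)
The plan is to repeat the Taylor-expansion argument from the proof of Theorem~\ref{theo:main}, but now to keep the first-order and cross terms that ZGZ previously removed. Since $f_x$ is $C^3$, Taylor's theorem with remainder about $W=0$ gives
\begin{equation}
f_x(W) = f_x(0) + \sum_{i=1}^d G_i^\top w_i + \frac12\sum_{i,j=1}^d w_i^\top H_{ij} w_j + O(\|W\|^3),
\end{equation}
where $G_i=\nabla_{w_i}f_x(0)$ and $H_{ij}=\partial^2 f_x/\partial w_i\partial w_j|_{W=0}$. The $O(\|W\|^3)$ bound holds locally, uniformly on a neighborhood of the origin, because the third derivatives are continuous and hence bounded on a compact ball. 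All of the work then goes into using permutation symmetry to constrain the tensors $G_i$ and $H_{ij}$.

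For a permutation $\sigma$, let $P_\sigma$ be the linear map $(w_1,\dots,w_d)\mapsto(w_{\sigma(1)},\dots,w_{\sigma(d)})$. Definition~\ref{def:perm} says $f_x = f_x\circ P_\sigma$. Since $P_\sigma$ fixes the origin, I differentiate this identity once and twice at $W=0$ using the chain rule. This gives $G_i = G_{\sigma(i)}$ and $H_{ij} = H_{\sigma(i)\sigma(j)}$ for every $\sigma$, with the index convention fixed by the chain rule; either convention gives the same conclusions below.

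$S_d$ acts transitively on indices, so all $G_i$ equal a common vector $g(x)$. It also acts transitively on the diagonal pairs $(i,i)$, so all $H_{ii}$ are equal. For $d\ge2$ it acts transitively on ordered pairs $(i,j)$ with $i\neq j$, so all $H_{ij}$ with $i\ne j$ equal a common matrix $\tilde B$. Equality of mixed partials gives $H_{ji}=H_{ij}^\top$, so $\tilde B=\tilde B^\top$ as soon as $(i,j)$ and $(j,i)$ are in the same orbit, which the transposition $(ij)$ ensures. Likewise $H_{ii}$ is symmetric as a Hessian block.

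Setting $A(x)=\tfrac12 H_{ii}$ and $B(x)=\tfrac12\tilde B$ and collecting terms then gives the stated expansion: the first-order term becomes $\sum_i g(x)^\top w_i$, the diagonal terms $\sum_i w_i^\top A w_i$, and the off-diagonal terms $\sum_{i\ne j} w_i^\top B w_j$. The case $d=1$ is trivial, since the cross sum is empty and $B$ can be taken to be $0$.

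The only step needing care is the bookkeeping in the differentiation of $f_x\circ P_\sigma$: confirming that the Hessian of the composition has $(i,j)$ block $H_{\sigma(i)\sigma(j)}$, so that the orbit argument is applied to the correct index pairs. Once that is fixed, everything else is routine.

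As a consistency check, I would also note the rewriting in the $(M,\mu)$ variables used in Eq.~\eqref{eq:NQF-general}. Using $\sum_{i\ne j}w_i^\top Bw_j=\mu^\top B\mu-\Tr[MB]$, the expansion becomes $f_x(0)+g^\top\mu+\mu^\top B\mu+\Tr[M(A-B)]$, which is the general NQF form.
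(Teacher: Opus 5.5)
Your proposal is correct and follows the paper's proof: Taylor-expand about the origin, use $S_d$-invariance to force all first-order blocks, all diagonal Hessian blocks, and all off-diagonal Hessian blocks to coincide, then set $A(x)=\tfrac12 H_{ii}$ and $B(x)=\tfrac12 H_{ij}$ ($i\neq j$). Your explicit orbit bookkeeping is slightly more careful than the paper, which simply asserts these equalities. In particular, the paper does not give the transposition argument for the symmetry of $B$ or treat the $d=1$ case.
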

\begin{proof}
We begin with the standard multi-variable Taylor expansion of $f_x$ around $W=0$:
\begin{equation}
f_x(W) = f_x(0) + \sum_{i=1}^d \big(\nabla_{w_i} f_x(0)\big)^\top  w_i + \frac{1}{2} \sum_{i=1}^d \sum_{j=1}^d w_i^\top  H_{ij} w_j + O(\|W\|^3),
\end{equation}
where $H_{ij} = \frac{\partial^2 f_x}{\partial w_i \partial w_j} \big|_{W=0}$ are the $p \times p$ blocks of the Hessian matrix.

By permutation symmetry, all neurons are interchangeable. This implies that the first derivative evaluated at the symmetric origin $W=0$ must be identical for all neurons. Thus, there exists a vector $g(x)$ such that:
\begin{equation}
\nabla_{w_i} f_x(0) = g(x) \quad \text{for all } i.
\end{equation}
Similarly, the permutation symmetry severely restricts the blocks of the Hessian matrix evaluated at $W=0$. There can only be two types of blocks.

Diagonal blocks: $H_{ii}$ must be identical for all $i$. Let $H_{ii} = 2 A(x)$.

Off-diagonal blocks: $H_{ij}$ for all $i \neq j$ must be identical. Let $H_{ij} = 2 B(x)$.

Substituting these findings back into the Taylor expansion:
\begin{equation}
\begin{aligned}
f_x(W) &= f_x(0) + \sum_{i=1}^d g(x)^\top  w_i + \frac{1}{2} \sum_{i=1}^d w_i^\top  (2A(x)) w_i + \frac{1}{2} \sum_{i \neq j} w_i^\top  (2B(x)) w_j + O(\|W\|^3) \\
&= f_x(0) + \sum_{i=1}^d g(x)^\top  w_i + \sum_{i=1}^d w_i^\top  A(x) w_i + \sum_{i \neq j} w_i^\top  B(x) w_j + O(\|W\|^3).
\end{aligned}
\end{equation}
This concludes the proof.
\end{proof}

We can similarly expand the model to an arbitrary order.
\begin{theorem}
\label{theo:generalized_expansion}
Let $f_x: \mathbb{R}^{p \times d} \to \mathbb{R}$ be a $K+1$-times continuously differentiable model with respect to its neurons $W = (w_1, \dots, w_d)$. Assume $f_x$ satisfies the permutation symmetry.

Then, the $K$-th order Taylor expansion of the model around the origin $W=0$ can be expressed as:
\begin{equation}
f_x(W) = f_x(0) + \sum_{k=1}^K \sum_{\lambda \vdash k} \sum_{\substack{i_1, \dots, i_m \\ \text{distinct}}} \mathcal{T}_{\lambda}(x) \big[ w_{i_1}^{\otimes c_1}, w_{i_2}^{\otimes c_2}, \dots, w_{i_m}^{\otimes c_m} \big] + O(\|W\|^{K+1}),
\label{eq:Taylor-higher-order}
\end{equation}
where $\lambda = \{c_1, c_2, \dots, c_m\}$ represents an integer partition of $k$, satisfying $\sum_{r=1}^m c_r = k$ and $c_r \ge 1$. The innermost sum iterates over all $m$-tuples of mutually distinct indices $(i_1, \dots, i_m)$ drawn from $\{1, \dots, d\}$ and $\mathcal{T}_{\lambda}(x)$ is a symmetric tensor of order $k$, determined by $\lambda$, representing the partial derivatives and combinatorial constants evaluated at the origin.
\end{theorem}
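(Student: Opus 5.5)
The plan is to start, as in the proof of Theorem~\ref{theo:expansion}, from the multivariate Taylor expansion of $f_x$ at $W=0$ to order $K$ with remainder $O(\|W\|^{K+1})$. This remainder is valid because $f_x\in C^{K+1}$. The $k$-th order term is $\frac{1}{k!}D^kf_x(0)[W,\dots,W]$. I would write it as a sum over ordered neuron-index tuples $(j_1,\dots,j_k)\in\{1,\dots,d\}^k$ of the blocks $\frac{1}{k!}\partial_{w_{j_1}}\cdots\partial_{w_{j_k}}f_x(0)[w_{j_1},\dots,w_{j_k}]$. The order of differentiation is irrelevant by Schwarz's theorem. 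Hence each block depends on the tuple only through the multiset of indices, that is, through which distinct neurons appear and with what multiplicities.

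Next I would regroup the tuples by their multiplicity pattern. Each tuple determines a set of distinct indices $i_1,\dots,i_m$ with multiplicities $c_1,\dots,c_m\ge1$ summing to $k$, and these multiplicities form an integer partition $\lambda\vdash k$. Collecting all tuples that yield the same assignment $(i_r,c_r)$ gives a combinatorial factor $k!/\prod_r c_r!$. The contribution of that assignment is then $\frac{1}{\prod_r c_r!}\,\partial_{w_{i_1}}^{c_1}\cdots\partial_{w_{i_m}}^{c_m}f_x(0)[w_{i_1}^{\otimes c_1},\dots,w_{i_m}^{\otimes c_m}]$.

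The key step uses permutation symmetry. Differentiating the identity $f_x(W)=f_x(W_\sigma)$ and evaluating at the fixed point $W=0$ (which is $S_d$-invariant) shows that the mixed derivative tensor of pattern $(c_1,\dots,c_m)$ at distinct indices $(i_1,\dots,i_m)$ equals the one at $(\sigma(i_1),\dots,\sigma(i_m))$. Since $S_d$ acts transitively on ordered $m$-tuples of distinct indices, this tensor is independent of the chosen indices. I would therefore define $\mathcal{T}_\lambda(x)$ as the common tensor, including the factor $1/\prod c_r!$. It is symmetric within each group of $c_r$ slots by Schwarz.

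The main obstacle is bookkeeping. When the summation runs over ordered distinct tuples $(i_1,\dots,i_m)$, an assignment with repeated multiplicities is counted several times: a block of $c$'s equal to a common value with $s$ members is counted $s!$ times. The fix is to divide by the symmetry factor $\prod_c s_c(\lambda)!$ and absorb it into $\mathcal{T}_\lambda$. I would also fix a canonical ordering of the parts of $\lambda$, so that distinct orderings of the same partition are not double-counted. Once these constants are absorbed, summing over $k\le K$ and $\lambda\vdash k$ gives \eqref{eq:Taylor-higher-order}. As a check, the case $K=2$ recovers Theorem~\ref{theo:expansion}, with $g=\mathcal{T}_{(1)}$, $A=\mathcal{T}_{(2)}$ and $B=\mathcal{T}_{(1,1)}$ up to the counting factor.
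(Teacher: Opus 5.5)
Your proposal is correct and follows the paper's route. Like the paper, you Taylor-expand to order $K$, group the ordered index tuples $(j_1,\dots,j_k)$ by their multiplicity pattern $\lambda\vdash k$, use permutation symmetry at the invariant point $W=0$ to make the derivative block depend only on $\lambda$, and absorb the constants into $\mathcal{T}_\lambda$. Your bookkeeping is sharper than the paper's, which leaves an unspecified $C_\lambda$: you make explicit both $k!/\prod_r c_r!$ and the overcounting factor $\prod_c s_c(\lambda)!$ from summing over ordered distinct tuples. Your remark that $\mathcal{T}_\lambda$ is symmetric within each group of $c_r$ slots is the accurate reading of the statement's ``symmetric tensor of order $k$''; full symmetry can fail across groups of different sizes.
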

\begin{proof}
We begin with the standard multi-variable Taylor expansion of $f_x$ around $W=0$ up to order $K$:
\begin{equation}
f_x(W) = f_x(0) + \sum_{k=1}^K \frac{1}{k!} \sum_{j_1=1}^d \dots \sum_{j_k=1}^d \left( \nabla^k_{w_{j_1} \dots w_{j_k}} f_x(0) \right) [w_{j_1}, \dots, w_{j_k}] + O(\|W\|^{K+1}).
\end{equation}
where $\nabla^k_{w_{j_1} \dots w_{j_k}} f_x(0)$ denotes the $k$-th order partial derivative tensor block evaluated at the origin.

For any fixed $k$, consider an arbitrary sequence of indices $(j_1, \dots, j_k)$. This sequence can be grouped by the multiplicity of each distinct index it contains. Let there be $m$ distinct indices $i_1, \dots, i_m$ appearing with multiplicities $c_1, \dots, c_m$ respectively, such that $\sum_{r=1}^m c_r = k$ and $c_r \ge 1$. This multiset of multiplicities defines an integer partition $\lambda \vdash k$.

By permutation symmetry, all neurons are interchangeable. At $W=0$, any permutation of the input neurons leaves the function and its derivative tensors unchanged. Consequently, the derivative $\nabla^k_{w_{j_1} \dots w_{j_k}} f_x(0)$ depends on the partition $\lambda$ of its indices, and is identical for any specific choice of neurons $i_1, \dots, i_m$.

Let $D_\lambda(x)$ denote this derivative corresponding to the partition $\lambda$. We can rewrite the sum over the raw indices $(j_1, \dots, j_k)$ by summing over all valid partitions $\lambda \vdash k$, and then summing over all possible choices of $m$ distinct indices $(i_1, \dots, i_m)$ from $\{1, \dots, d\}$:
\begin{equation}
\sum_{j_1=1}^d \dots \sum_{j_k=1}^d \left( \nabla^k_{w_{j_1} \dots w_{j_k}} f_x(0) \right) [w_{j_1}, \dots, w_{j_k}] = \sum_{\lambda \vdash k} \sum_{\substack{i_1, \dots, i_m \\ \text{distinct}}} C_\lambda D_\lambda(x) \big[ w_{i_1}^{\otimes c_1}, \dots, w_{i_m}^{\otimes c_m} \big].
\end{equation}
Here, $C_\lambda$ is a combinatorial coefficient counting the number of ways to arrange the sequence $(j_1, \dots, j_k)$ such that the distinct elements appear with the frequencies specified by $\lambda$.

Substituting it back into \eqref{eq:Taylor-higher-order}, we can define $
\mathcal{T}_\lambda(x) = \frac{C_\lambda}{k!} D_\lambda(x)$, which yields:
\begin{equation}
f_x(W) = f_x(0) + \sum_{k=1}^K \sum_{\lambda \vdash k} \sum_{\substack{i_1, \dots, i_m \\ \text{distinct}}} \mathcal{T}_{\lambda}(x) \big[ w_{i_1}^{\otimes c_1}, \dots, w_{i_m}^{\otimes c_m} \big] + O(\|W\|^{K+1}).
\end{equation}
This concludes the proof.
\end{proof}

\subsection{Multidimensional Output}
\label{app:multi-dimension}
\begin{corollary}
Let $F_x: \mathbb{R}^{p \times d} \to \mathbb{R}^k$ be a three times continuously differentiable model with respect to its neurons $W = (w_1, \dots, w_d)$, where $w_i \in \mathbb{R}^p$. Assume $F_x$ satisfies the permutation symmetry\footnote{$F_x(w_1, \dots, w_d) = F_x(w_{\sigma(1)}, \dots, w_{\sigma(d)})$ for any permutation $\sigma$.} and the ZGZ condition: the Jacobian matrix satisfies $
\mathcal{J}_{w_i} F_x(w_1, \dots, w_d) \big|_{w_i = 0} = \mathbf{0} \in \mathbb{R}^{k \times p}$ for any $i$.

Then, the model can be approximated around the origin as:
\begin{equation}
F_x(W) = F_x(0) + \sum_{i=1}^d \mathcal{A}(x)[w_i, w_i] + O(\|W\|^3),
\end{equation}
where $\mathcal{A}(x) \in \mathbb{R}^{k \times p \times p}$ is a 3rd-order tensor dependent only on the input $x$. The operation $\mathcal{A}(x)[w_i, w_i]$ yields a vector in $\mathbb{R}^k$ whose $m$-th component is $w_i^\top  A^{(m)}(x) w_i$, with each $A^{(m)}(x) \in \mathbb{R}^{p \times p}$ being a symmetric matrix.
\end{corollary}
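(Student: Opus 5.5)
The plan is to reduce the vector-valued statement to the scalar Theorem~\ref{theo:main}, applied one output coordinate at a time. Write $F_x=(F_x^{(1)},\dots,F_x^{(k)})$, and fix a coordinate $m\in\{1,\dots,k\}$. Each scalar map $F_x^{(m)}:\mathbb{R}^{p\times d}\to\mathbb{R}$ is three times continuously differentiable, since it is a component of a $C^3$ map.

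Next I check that $F_x^{(m)}$ inherits both hypotheses of Theorem~\ref{theo:main}.
\begin{itemize}
\item \textbf{Permutation symmetry.} Equality of the vectors $F_x(w_1,\dots,w_d)$ and $F_x(w_{\sigma(1)},\dots,w_{\sigma(d)})$ gives equality coordinatewise. So $F_x^{(m)}$ is $S_d$-symmetric in the sense of Definition~\ref{def:perm}.
\item \textbf{ZGZ.} The $m$-th row of the Jacobian $\mathcal{J}_{w_i}F_x$ is exactly $(\nabla_{w_i}F_x^{(m)})^\top$. Vanishing of the whole Jacobian at $w_i=0$, for arbitrary $w_{j\neq i}$, therefore gives $\nabla_{w_i}F_x^{(m)}|_{w_i=0}=0$ for arbitrary $w_{j\neq i}$. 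This is the ZGZ property for the scalar function.
\end{itemize}
Theorem~\ref{theo:main} then yields a symmetric matrix $A^{(m)}(x)\in\mathbb{R}^{p\times p}$ with
\[
F_x^{(m)}(W)=F_x^{(m)}(0)+\sum_{i=1}^d w_i^\top A^{(m)}(x)w_i+R^{(m)}(W),\qquad |R^{(m)}(W)|\le C_m\|W\|^3
\]
near the origin. Here $A^{(m)}(x)=\tfrac12\nabla^2_{w_iw_i}F_x^{(m)}(0)$, which is independent of $i$.

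Finally I assemble the coordinates. Define the tensor $\mathcal{A}(x)\in\mathbb{R}^{k\times p\times p}$ by stacking the slices $A^{(m)}(x)$, so that $\mathcal{A}(x)[w_i,w_i]_m=w_i^\top A^{(m)}(x)w_i$. Stacking the $k$ scalar expansions gives the claimed vector identity with remainder $R=(R^{(1)},\dots,R^{(k)})$. Because $k$ is finite, $\|R(W)\|\le\sqrt{k}\max_m C_m\|W\|^3$, so $R=O(\|W\|^3)$ in any norm on $\mathbb{R}^k$. Each $A^{(m)}(x)$ is built from derivatives of $F_x$ at $0$ only, so it depends only on $x$.

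The argument is routine. The only point needing care is the translation of the matrix-valued ZGZ condition into the scalar one, and that is immediate because a matrix vanishes iff all its rows vanish.
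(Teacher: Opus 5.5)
Your proposal is correct and follows the paper's proof essentially step for step. Like the paper, you check that permutation symmetry and ZGZ pass to each output coordinate $F_x^{(m)}$, apply Theorem~\ref{theo:main} coordinatewise, and stack the slices $A^{(m)}(x)$ into $\mathcal{A}(x)$; your explicit bound on the stacked remainder is a small detail the paper leaves implicit.
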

\begin{proof}
Let $F_x(W) = \big[ f_x^{(1)}(W), f_x^{(2)}(W), \dots, f_x^{(k)}(W) \big]^\top $, where each component function $f_x^{(m)}: \mathbb{R}^{p \times d} \to \mathbb{R}$ is three times continuously differentiable.

By permutation symmetry, each component $f_x^{(m)}$ satisfies
\begin{equation}
f_x^{(m)}(w_1, \dots, w_d) = f_x^{(m)}(w_{\sigma(1)}, \dots, w_{\sigma(d)}).
\end{equation}
Furthermore, the ZGZ condition implies that for every output dimension $m$:
\begin{equation}
\nabla_{w_i} f_x^{(m)}(W) \big|_{w_i = 0} = \mathbf{0} \in \mathbb{R}^p.
\end{equation}
Thus each component function $f_x^{(m)}$ satisfies the conditions of Theorem \ref{theo:main}, and we can apply it to each dimension $m$:
\begin{equation}
f_x^{(m)}(W) = f_x^{(m)}(0) + \sum_{i=1}^d w_i^\top  A^{(m)}(x) w_i + O(\|W\|^3),
\end{equation}
where $A^{(m)}(x) = \frac{1}{2} H_{ii}^{(m)}$ is the Hessian blocks of the $m$-th output dimension. To stack the equations for all $k$ components, we define a 3rd-order tensor $\mathcal{A}(x) \in \mathbb{R}^{k \times p \times p}$ such that its $m$-th slice along the first dimension represents $A^{(m)}(x)$. Consequently, the multi-dimensional expansion can be written as:
\begin{equation}
F_x(W) = F_x(0) + \sum_{i=1}^d \mathcal{A}(x)[w_i, w_i] + O(\|W\|^3).
\end{equation}
This concludes the proof.
\end{proof}

\subsection{Learning Dynamics of Other Optimization Methods}
\label{app:linear_update}
\begin{corollary}
Consider any learning algorithm whose update rule for the weight matrix $W$ takes the form:
\begin{equation}
\Delta W_t = \sum_{k} \alpha_k \nabla_W \mathcal{L}_k + \beta_k W_k
\end{equation}
where $\alpha_k, \beta_k$ can be any scalar functions of the historical sufficient statistics $\{M_\tau, \mu_\tau\}_{\tau \le t}$. Then, the learning dynamics of the NQF are completely determined by $\{M_\tau, \mu_\tau\}_{\tau \le t}$. %, but excludes algorithms with element-wise non-linearities such as Adam.
\label{cor:linear_update}
\end{corollary}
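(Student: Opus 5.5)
The plan is to reduce the corollary to the computation already carried out in the proof of Theorem~\ref{theo:master}. That proof showed that for a single sample or minibatch, the gradient of the loss has the affine form
\begin{equation*}
\nabla_W \mathcal{L}_k = v_k \mathbf{1}^\top + H_k W_k,
\end{equation*}
where $v_k$ and $H_k$ are built from $\ell'_x$, $g(x)$, $B(x)\mu_k$ and $A(x)$. Since $f_x$ depends on $W_k$ only through $(M_k,\mu_k)$, each $\ell'_x$ is a function of $(M_k,\mu_k)$. Hence $v_k$ and $H_k$ are themselves functions of the statistics. I read the index $k$ in the update as ranging over past times $\tau\le t$, as with momentum, and $W_k$ as the weights at that time.

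Step one is to show that the accumulated update is a sum of the form
\begin{equation*}
\Delta W_t = \sum_k \bigl[\, a_k \mathbf{1}^\top + G_k W_k \,\bigr],
\end{equation*}
with $a_k = \alpha_k v_k$ and $G_k = \alpha_k H_k + \beta_k I$. Here each $a_k\in\mathbb{R}^p$ and each symmetric $G_k\in\mathbb{R}^{p\times p}$ is a function of the history $\{M_\tau,\mu_\tau\}_{\tau\le t}$.

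Step two computes $\Delta\mu_t = \Delta W_t\mathbf{1} = \sum_k (d\, a_k + G_k\mu_k)$, which is directly a function of the history.

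Step three expands $\Delta M_t = W_t\Delta W_t^\top + \Delta W_t W_t^\top + \Delta W_t\Delta W_t^\top$. The terms that appear are:
\begin{itemize}
\item $W_t\mathbf{1}a_k^\top = \mu_t a_k^\top$, which involves only $\mu_t$;
\item $G_k W_k W_t^\top$, a cross-time product;
\item in the quadratic part, $d\,a_k a_l^\top$, $a_k\mu_l^\top G_l$, and $G_k W_k W_l^\top G_l$.
\end{itemize}
The only terms that are not obviously functions of the stated statistics are the cross-time Gram matrices $W_k W_l^\top$ for $k\ne l$.

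This is the main obstacle. To close the system I plan to show by induction that every $W_\tau$ can be written as
\begin{equation*}
W_\tau = P_\tau W_0 + q_\tau \mathbf{1}^\top,
\end{equation*}
where the matrix $P_\tau$ and vector $q_\tau$ are determined by the history. The induction follows from the affine update:
\begin{equation*}
P_{t+1} = P_t + \sum_k G_k P_k, \qquad q_{t+1} = q_t + \sum_k (a_k + G_k q_k).
\end{equation*}
With this representation, every cross product reduces to
\begin{equation*}
W_k W_l^\top = P_k M_0 P_l^\top + P_k\mu_0 q_l^\top + q_k\mu_0^\top P_l^\top + d\, q_k q_l^\top,
\end{equation*}
which depends only on $(M_0,\mu_0)$ and the history. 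Thus the joint law of the trajectory of $(M_\tau,\mu_\tau)$ is determined by the initial statistics, as in Theorem~\ref{theo:master}. Two NQFs with equal $(M_0,\mu_0)$ therefore produce identical $P_\tau$, $q_\tau$, $M_\tau$ and $\mu_\tau$ at every step, and hence identical outputs.

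The one point to check carefully is that $\beta_k W_k$ keeps the update affine in $W$ with coefficients that do not depend on $W$. This is what excludes elementwise rescalings such as those in Adam.
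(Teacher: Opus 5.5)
Your proposal is correct and matches the paper's proof: the same affine form $\nabla_W\mathcal{L}_k = v_k\mathbf{1}^\top + H_kW_k$, the same induction $W_\tau = P_\tau W_0 + q_\tau\mathbf{1}^\top$ with identical recurrences, and the same expression of the moments in terms of $(M_0,\mu_0)$. Your detour through the cross-time Gram matrices $W_kW_l^\top$ is not needed, since the paper reads $M_t = P_tM_0P_t^\top + P_t\mu_0 q_t^\top + q_t\mu_0^\top P_t^\top + d\,q_tq_t^\top$ directly off the representation, but it is harmless.
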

\begin{proof}
From Theorem \ref{theo:master}, we know that the gradient of the loss with respect to the weight matrix $W$ at any step $k$ takes the form:
\begin{equation}
\nabla_W \mathcal{L}_k = v_k \mathbf{1}^\top  + H_k W_k.
\end{equation}
Given the update rule $\Delta W_t = \sum_{k \le t} (\alpha_k \nabla_W \mathcal{L}_k + \beta_k W_k)$, we substitute the gradient expression:
\begin{equation}
\Delta W_t = \sum_{k \le t} \big( \alpha_k v_k \mathbf{1}^\top  + (\alpha_k H_k + \beta_k I) W_k \big).
\end{equation}
We claim that for any step $t$, the weight matrix $W_t$ can be written in the following form:
\begin{equation}
W_t = P_t W_0 + Q_t \mathbf{1}^\top ,
\end{equation}
where $P_t \in \mathbb{R}^{p \times p}$ is a transformation matrix and $Q_t \in \mathbb{R}^p$ is a vector, dependent on the historical sequences $\{v_k, H_k, \alpha_k, \beta_k\}_{k < t}$.

We prove this by induction: For $t=0$, $W_0 = I \cdot W_0 + 0 \cdot \mathbf{1}^\top $. Thus, $P_0 = I$ and $Q_0 = 0$, which holds.

Assume the claim holds for all $k \le t$. For step $t+1$, we have:
\begin{equation}
W_{t+1} = W_t + \Delta W_t = W_t + \sum_{k \le t} \big( \alpha_k v_k \mathbf{1}^\top  + (\alpha_k H_k + \beta_k I) W_k \big).
\end{equation}
Substitute $W_k = P_k W_0 + Q_k \mathbf{1}^\top $ into the equation:
\begin{equation}
W_{t+1} = (P_t W_0 + Q_t \mathbf{1}^\top ) + \sum_{k \le t} \Big[ \alpha_k v_k \mathbf{1}^\top  + (\alpha_k H_k + \beta_k I) (P_k W_0 + Q_k \mathbf{1}^\top ) \Big].
\end{equation}
By grouping the terms associated with $W_0$ and $\mathbf{1}^\top $, we get:
\begin{equation}
W_{t+1} = \underbrace{\Big( P_t + \sum_{k \le t} (\alpha_k H_k + \beta_k I) P_k \Big)}_{:= P_{t+1}} W_0 + \underbrace{\Big( Q_t + \sum_{k \le t} [\alpha_k v_k + (\alpha_k H_k + \beta_k I) Q_k] \Big)}_{:= Q_{t+1}} \mathbf{1}^\top .
\end{equation}
This defines a recurrence relation for $P_{t+1}$ and $Q_{t+1}$ that depends only on variables from steps $k \le t$.

Now we show that $\mu_t$ and $M_t$ only depend on $P_t, Q_t$ and the initial statistics.
For the first moment $\mu_t$:
\begin{equation}
\mu_t = W_t \mathbf{1} = (P_t W_0 + Q_t \mathbf{1}^\top ) \mathbf{1} = P_t \mu_0 + d \cdot Q_t
\end{equation}
For the second moment $M_t$:
\begin{equation}
M_t = W_t W_t^\top  = (P_t W_0 + Q_t \mathbf{1}^\top )(P_t W_0 + Q_t \mathbf{1}^\top )^\top = P_t M_0 P_t^\top  + P_t \mu_0 Q_t^\top  + Q_t \mu_0^t  P_t^\top  + d \cdot Q_t Q_t^\top .
\end{equation}
Therefore, given the initial conditions $M_0$ and $\mu_0$, the subsequent values of $P_t$ and $Q_t$ are completely determined, which in turn dictate the values of $\mu_t$ and $M_t$ via the derived equations. This proves that for any linear-span learning algorithm, the dynamics of the NQF form a closed system over $\{M_\tau, \mu_\tau\}_{\tau \le t}$. 
\end{proof}

\subsection{Learning Dynamics of Multi-layer NQF}
\label{app:multi-layer}
Before stating the theorem, we first define an $L$-layer NQF.
\begin{definition}[Deep NQF]
Let $f(x)=h^{(L)}(x)$ be an $L$-layer NQF with hidden representation $h^{(l)}(x) \in \mathbb{R}^{D_l}$:
\begin{align}
h^{(0)}(x) &= x \\
h^{(l)}_k(x) &= C_k^{(l)} + \big(g_k^{(l)}\big)^\top \mu^{(l)} + \big(\mu^{(l)}\big)^\top B_k^{(l)} \mu^{(l)} + \mathrm{Tr}\left[M^{(l)} A_k^{(l)}\right], \quad k=1, \dots, D_l
\end{align}
where $M^{(l)} := W^{(l)}(W^{(l)})^\top \in \mathbb{R}^{p_l \times p_l}$ and $\mu^{(l)} := W^{(l)} \mathbf{1}_{d_l} \in \mathbb{R}^{p_l}$ are the order parameters. The structure components $\{C_k^{(l)}, g_k^{(l)}, B_k^{(l)}, A_k^{(l)}\}$ are functions that only depend on the previous layer's output $h^{(l-1)}(x)$. Without loss of generality, $A_k^{(l)}$ and $B_k^{(l)}$ are symmetric matrices.
\end{definition}
\begin{theorem}[Master Theorem for Deep NQF]
\label{theo:deep_master}
Under SGD
\begin{equation}
\Delta W^{(l)} = -\eta \sum_{x \in \mathcal{B}} \nabla_{W^{(l)}} \mathcal{L}(f(x))
\end{equation}
with identical data sampling, two models will have identical states $(\mu^{(l)}_a(t) = \mu^{(l)}_b(t), M^{(l)}_a(t) = M^{(l)}_b(t))$ for all layers $l$ and time $t$ if they share the same initial statistics:
\begin{equation}
\mu^{(l)}_a(0) = \mu^{(l)}_b(0), \quad M^{(l)}_a(0) = M^{(l)}_b(0) \quad \forall l \in \{1, \dots, L\}.
\end{equation}
The learning dynamics are closed, evolving as:
\begin{align}
\Delta \mu^{(l)} &= -\eta \big( d_l \cdot v^{(l)} + H^{(l)} \mu^{(l)} \big) \\
\Delta M^{(l)} &= -\eta \big( v^{(l)} (\mu^{(l)})^\top + \mu^{(l)} (v^{(l)})^\top + H^{(l)} M^{(l)} + M^{(l)} H^{(l)} \big) \notag \\
&\quad + \eta^2 \big( d_l \cdot v^{(l)} (v^{(l)})^\top + v^{(l)} (\mu^{(l)})^\top H^{(l)} + H^{(l)} \mu^{(l)} (v^{(l)})^\top + H^{(l)} M^{(l)} H^{(l)} \big),
\end{align}
where $d_l$ is the width (number of neurons) of layer $l$. The vector $v^{(l)} \in \mathbb{R}^{p_l}$ and effective Hessian $H^{(l)} \in \mathbb{R}^{p_l \times p_l}$ are defined as:
\begin{align}
v^{(l)} &:= \sum_{x \in \mathcal{B}} \sum_{k=1}^{D_l} \frac{\partial \mathcal{L}}{\partial h_k^{(l)}(x)} \Big( g_k^{(l)} + 2B_k^{(l)} \mu^{(l)} \Big) \\
H^{(l)} &:= \sum_{x \in \mathcal{B}} \sum_{k=1}^{D_l} 2 \frac{\partial \mathcal{L}}{\partial h_k^{(l)}(x)} A_k^{(l)}
\end{align}
\end{theorem}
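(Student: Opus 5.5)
The plan is to reduce the deep case to the single-layer Master Theorem (Theorem~\ref{theo:master}) applied layer by layer, treating all other layers as frozen at the current step. The key observation is that, at a fixed time $t$, the gradient of $\mathcal{L}$ with respect to $W^{(l)}$ only sees $W^{(l)}$ through the layer-$l$ NQF. Write $\partial\mathcal{L}/\partial W^{(l)} = \sum_{x\in\mathcal{B}}\sum_{k}\frac{\partial \mathcal{L}}{\partial h^{(l)}_k(x)}\nabla_{W^{(l)}}h^{(l)}_k(x)$ by the chain rule. The structure components $C_k^{(l)},g_k^{(l)},B_k^{(l)},A_k^{(l)}$ depend only on $h^{(l-1)}(x)$, which involves only layers $1,\dots,l-1$, so they are constant with respect to $W^{(l)}$. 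Repeating the computation in the proof of Theorem~\ref{theo:master} componentwise then gives
\begin{equation}
\nabla_{W^{(l)}}h_k^{(l)}(x) = \big(g_k^{(l)}+2B_k^{(l)}\mu^{(l)}\big)\mathbf{1}^\top + 2A_k^{(l)}W^{(l)},
\end{equation}
and summing yields $\nabla_{W^{(l)}}\mathcal{L} = v^{(l)}\mathbf{1}^\top + H^{(l)}W^{(l)}$ with $v^{(l)},H^{(l)}$ as defined. From there, the algebra for $\Delta\mu^{(l)}=\Delta W^{(l)}\mathbf{1}$ and $\Delta M^{(l)} = W\Delta W^\top+\Delta W W^\top+\Delta W\Delta W^\top$ is verbatim that of Theorem~\ref{theo:master}, with $d$ replaced by $d_l$ and symmetry of $H^{(l)}$ used.

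The substantive step is closure: $v^{(l)}$ and $H^{(l)}$ must be functions of the order parameters $\{\mu^{(j)},M^{(j)}\}_{j}$ (and the data) alone. I would prove by forward induction on $l$ that every hidden representation $h^{(l)}(x)$ is a function of $x$ and $\{\mu^{(j)},M^{(j)}\}_{j\le l}$. The base case is $h^{(0)}=x$. For the inductive step, the defining formula expresses $h^{(l)}_k$ through the structure components, which depend on $h^{(l-1)}$, together with $\mu^{(l)}$ and $M^{(l)}$. Consequently the structure components at every layer and the output $f(x)=h^{(L)}(x)$ depend only on the order parameters.

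A backward induction then handles the adjoints $\partial\mathcal{L}/\partial h^{(l)}_k(x)$. At $l=L$ this is $\ell'_x$, a function of $f(x)$. For $l<L$, the chain rule through $h^{(l+1)}$ involves derivatives of $C^{(l+1)},g^{(l+1)},B^{(l+1)},A^{(l+1)}$ with respect to $h^{(l)}$, evaluated at $h^{(l)}(x)$. These are contracted against $\mu^{(l+1)}$ and $M^{(l+1)}$ only, and never against raw $W^{(l+1)}$, because $h^{(l+1)}$ depends on $W^{(l+1)}$ only through its moments. Hence all adjoints, and therefore $v^{(l)}$ and $H^{(l)}$, are functions of the current order parameters and the minibatch.

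Combining the two steps, the update map $(\mu^{(l)},M^{(l)})_{l}\mapsto(\mu^{(l)}+\Delta\mu^{(l)},M^{(l)}+\Delta M^{(l)})_{l}$ is a deterministic function of the current statistics and the sampled batch. Induction on $t$ with identical sampling then gives equality of trajectories from equal initial statistics.

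I expect the main obstacle to be the backward-adjoint step. One must check carefully that differentiating through the structure components never reintroduces dependence on individual neurons. This requires that the components be differentiable in $h^{(l-1)}$, which I would assume as part of the definition. The argument rests on the fact that $W^{(l+1)}$ enters $h^{(l+1)}$ only through $(\mu^{(l+1)},M^{(l+1)})$.
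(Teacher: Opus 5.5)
Your proposal is correct and follows essentially the same route as the paper. Both use the chain-rule factorization $\nabla_{W^{(l)}}\mathcal{L}=v^{(l)}\mathbf{1}^\top+H^{(l)}W^{(l)}$ (relying on the layer-$l$ structure components depending only on $h^{(l-1)}$), then repeat the single-layer algebra for $\Delta\mu^{(l)}$ and $\Delta M^{(l)}$, and then close the dynamics by showing that all activations and adjoints $\partial\mathcal{L}/\partial h^{(l)}(x)$ depend only on the order parameters. Your explicit forward and backward inductions (noting that $W^{(l+1)}$ enters only through $(\mu^{(l+1)},M^{(l+1)})$) and the induction on $t$ spell out what the paper asserts in one sentence. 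Your differentiability assumption on the structure components is one the paper uses implicitly.
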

\begin{proof}
By the chain rule, the gradient of the loss with respect to the weights $W^{(l)}$ is:
\begin{equation}
\nabla_{W^{(l)}} \mathcal{L} = \sum_{x \in \mathcal{B}} \sum_{k=1}^{D_l} \frac{\partial \mathcal{L}}{\partial h_k^{(l)}(x)} \nabla_{W^{(l)}} h_k^{(l)}(x).
\end{equation}
Notice that $\mu^{(l)} = W^{(l)} \mathbf{1}_{d_l}$, where $\mathbf{1}_{d_l} \in \mathbb{R}^{d_l}$ is a vector of ones. The derivatives of the terms in $h_k^{(l)}$ are:
\begin{align}
\nabla_{W^{(l)}} \big( (g_k^{(l)})^\top \mu^{(l)} \big) &= g_k^{(l)} \mathbf{1}_{d_l}^\top \\
\nabla_{W^{(l)}} \big( (\mu^{(l)})^\top B_k^{(l)} \mu^{(l)} \big) &= 2 B_k^{(l)} \mu^{(l)} \mathbf{1}_{d_l}^\top \\
\nabla_{W^{(l)}} \mathrm{Tr}\left[ M^{(l)} A_k^{(l)} \right] &= 2 A_k^{(l)} W^{(l)}
\end{align}
Collecting these terms, the gradient factorizes into:
\begin{equation}
\nabla_{W^{(l)}} \mathcal{L} = v^{(l)} \mathbf{1}_{d_l}^\top + H^{(l)} W^{(l)},
\end{equation}
where $v^{(l)}$ and $H^{(l)}$ are defined as in the theorem. The SGD update is therefore:
\begin{equation}
\Delta W^{(l)} = -\eta \big( v^{(l)} \mathbf{1}_{d_l}^\top + H^{(l)} W^{(l)} \big).
\end{equation}
For the mean vector $\mu^{(l)}$, we multiply the update by $\mathbf{1}_{d_l}$:
\begin{equation}
\Delta \mu^{(l)} = \Delta W^{(l)} \mathbf{1}_{d_l} = -\eta \big( v^{(l)} (\mathbf{1}_{d_l}^\top \mathbf{1}_{d_l}) + H^{(l)} W^{(l)} \mathbf{1}_{d_l} \big) = -\eta \big( d_l \cdot v^{(l)} + H^{(l)} \mu^{(l)} \big).
\end{equation}
For the second moment $M^{(l)}$, we expand $M^{(l)}_{t+1} = (W^{(l)} + \Delta W^{(l)})(W^{(l)} + \Delta W^{(l)})^\top$:
\begin{equation}
\Delta M^{(l)} = W^{(l)} (\Delta W^{(l)})^\top + \Delta W^{(l)} (W^{(l)})^\top + \Delta W^{(l)} (\Delta W^{(l)})^\top.
\end{equation}
Substitute $\Delta W^{(l)}$ and use the identity $W^{(l)} \mathbf{1}_{d_l} = \mu^{(l)}$:
\begin{equation}
W^{(l)} (\Delta W^{(l)})^\top = -\eta \big( W^{(l)} \mathbf{1}_{d_l} (v^{(l)})^\top + W^{(l)} (W^{(l)})^\top H^{(l)} \big) = -\eta \big( \mu^{(l)} (v^{(l)})^\top + M^{(l)} H^{(l)} \big).
\end{equation}
The symmetric counterpart is $-\eta \big( v^{(l)} (\mu^{(l)})^\top + H^{(l)} M^{(l)} \big)$.
For the $\eta^2$ term:
\begin{align}
\Delta W^{(l)} (\Delta W^{(l)})^\top &= \eta^2 \big( v^{(l)} \mathbf{1}_{d_l}^\top + H^{(l)} W^{(l)} \big) \big( \mathbf{1}_{d_l} (v^{(l)})^\top + (W^{(l)})^\top H^{(l)} \big) \notag \\
&= \eta^2 \big( v^{(l)} (\mathbf{1}_{d_l}^\top \mathbf{1}_{d_l}) (v^{(l)})^\top + v^{(l)} \mathbf{1}_{d_l}^\top (W^{(l)})^\top H^{(l)} + H^{(l)} W^{(l)} \mathbf{1}_{d_l} (v^{(l)})^\top + H^{(l)} W^{(l)} (W^{(l)})^\top H^{(l)} \big) \notag \\
&= \eta^2 \big( d_l \cdot v^{(l)} (v^{(l)})^\top + v^{(l)} (\mu^{(l)})^\top H^{(l)} + H^{(l)} \mu^{(l)} (v^{(l)})^\top + H^{(l)} M^{(l)} H^{(l)} \big).
\end{align}
Summing these gives the update rule for $\Delta M^{(l)}$.

Since the forward pass $h^{(l)}(x)$ depends only on $\{\mu^{(l)}, M^{(l)}\}$ and $h^{(l-1)}(x)$, all intermediate activations are functions of the order parameters. Consequently, the gradients $\frac{\partial \mathcal{L}}{\partial h^{(l)}(x)}$ also depend only on the sequence of order parameters. Thus, the dynamics are closed independent of $W^{(l)}$.
\end{proof}
Theorem \ref{theo:compressibility} can be generalized to multi-layer NQFs in the same way.
\begin{theorem}[Compressibility of Deep NQF]\label{theo:deep_compressibility}
Let an $L$-layer General Deep NQF have layer widths $d_1, \dots, d_L$, defined by functions $g_k^{(l)}, B_k^{(l)}, A_k^{(l)}$, learning rate $\eta$, and initial parameter statistics $\mu^{(l)}(0)$ and $M^{(l)}(0)$ for each layer $l \in \{1, \dots, L\}$.

Let $\mathcal{V}^{(l)} \subseteq \mathbb{R}^{p_l}$ be the joint subspace spanned by the network components across all training samples $x \in \mathcal{X}$ and all training steps $t \ge 0$ of the original model:
\begin{equation}
\mathcal{V}^{(l)} := \mathrm{span} \bigcup_{x \in \mathcal{X}, t \ge 0} \bigcup_{k=1}^{D_l} \Big( {g_k^{(l)}(h_t^{(l-1)})} \cup \mathrm{Col}(A_k^{(l)}(h_t^{(l-1)})) \cup \mathrm{Col}(B_k^{(l)}(h_t^{(l-1)})) \Big).
\end{equation}
Let $k_l := \dim(\mathcal{V}^{(l)}) \le p_l$. If $d_l > k_l + 1$ for all $l$, then there exists a smaller Deep NQF with layer widths $d'_l = k_l + 1$, characterized by $\tilde{A}_k^{(l)} = \frac{d'_l}{d_l} A_k^{(l)}$, rescaled learning rate $\tilde{\eta} = \frac{d_l}{d'_l} \eta$ (layer-specific learning rates), such that:
\begin{enumerate}[noitemsep,topsep=0pt, parsep=0pt,partopsep=0pt, leftmargin=13pt]
\item $\tilde{f}_x = f_x$ for all inputs $x\in\mathcal{X}$ at initialization;
\item $\tilde{\mu}^{(l)}_t = V_l V_l^\top \mu^{(l)}_t$ and $\tilde{M}^{(l)}_t = \frac{d_l}{d'_l} V_l V_l^\top M^{(l)}_t V_l V_l^\top$ for all layers $l$ and all training steps $t \ge 0$, where $V_l \in \mathbb{R}^{p_l \times k_l}$ is the orthogonal basis of $\mathcal{V}^{(l)}$.
\item The output and learning dynamics of the smaller model are identical to the original model: $\tilde{f}_x(t) = f_x(t)$ for all $t \ge 0$.
\end{enumerate}
\end{theorem}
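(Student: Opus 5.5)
The plan is to reduce to the single-layer argument of Theorem~\ref{theo:compressibility}, applied layer by layer. The one new difficulty is that the structure components of layer $l$ depend on the previous layer's output $h^{(l-1)}_t(x)$, which changes during training. That is why $\mathcal{V}^{(l)}$ is defined as a span over all training steps of the \emph{original} model. The proof will be a single induction on the training step $t$, with an inner induction on the layer index $l$ at each step.

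\textbf{Initialization.} For each layer, let $V_l$ be an orthonormal basis of $\mathcal{V}^{(l)}$. I construct $d'_l=k_l+1$ neurons exactly as in the proof of Theorem~\ref{theo:compressibility}: $w'_i=V_l(\tfrac{1}{d'_l}V_l^\top\mu^{(l)}(0)+u_i)$, where $U=L Z$, $Z$ has orthonormal rows orthogonal to $\mathbf{1}_{k_l+1}$, and $LL^\top$ factorizes the PSD matrix $C_{\mathcal V}^{(l)}=\frac{d_l}{d'_l}\big(V_l^\top M^{(l)}(0)V_l-\frac1{d_l}V_l^\top\mu^{(l)}(0)\mu^{(l)}(0)^\top V_l\big)$. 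This gives the claimed projected moments at $t=0$.

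The forward pass then matches by induction on $l$. Suppose $\tilde h^{(l-1)}_0(x)=h^{(l-1)}_0(x)$. Then the structure functions of the small model evaluated at $\tilde h^{(l-1)}$ coincide with those of the large model, with $\tilde A=\frac{d'_l}{d_l}A$. Moreover $g_k^{(l)}$, $A_k^{(l)}$ and $B_k^{(l)}$ at step $0$ lie in $\mathcal{V}^{(l)}$ by definition, so $V_lV_l^\top g=g$, $V_lV_l^\top AV_lV_l^\top=A$, and likewise for $B$. The computation from the single-layer proof then gives $\tilde h^{(l)}_0=h^{(l)}_0$. Running this up to $l=L$ proves item 1.

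\textbf{Inductive step in $t$.} Assume item 2 holds at step $t$. The forward-pass argument above, now using structure functions at step $t$ (which still lie in $\mathcal{V}^{(l)}$ because the span includes all $t$), shows $\tilde h^{(l)}_t=h^{(l)}_t$ for every $l$. Backpropagation then gives $\partial\mathcal{L}/\partial \tilde h^{(l)}_k=\partial\mathcal{L}/\partial h^{(l)}_k$. This needs one check: the Jacobian of $h^{(l)}$ with respect to $h^{(l-1)}$, taken through the structure functions, depends only on the order parameters, and these enter only through the invariant combinations $g^\top\mu$, $\mu^\top B\mu$ and $\Tr[MA]$. Those combinations agree because of the projection identities and the $d'_l/d_l$ rescaling.

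With the backpropagated derivatives equal, the definitions in Theorem~\ref{theo:deep_master} give $\tilde v^{(l)}=v^{(l)}=V_lV_l^\top v^{(l)}$ and $\tilde H^{(l)}=\frac{d'_l}{d_l}H^{(l)}=V_lV_l^\top\tilde H^{(l)}V_lV_l^\top$. The update formulas of Theorem~\ref{theo:deep_master} are exactly the single-layer ones, so with $\tilde\eta_l=\frac{d_l}{d'_l}\eta$ the algebra of the single-layer proof carries over verbatim. It yields $\Delta\tilde\mu^{(l)}=V_lV_l^\top\Delta\mu^{(l)}$ and $\Delta\tilde M^{(l)}=\frac{d_l}{d'_l}V_lV_l^\top\Delta M^{(l)}V_lV_l^\top$, which is item 2 at step $t+1$. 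Item 3 follows from the forward-pass matching at every $t$.

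\textbf{Main obstacle.} I expect the hardest step to be the backward-pass equality. For a single layer, $\ell'_x$ matched trivially because outputs matched. Here the gradients of layer $l-1$ involve derivatives of layer $l$'s structure functions with respect to $h^{(l-1)}$, contracted with the order parameters of layer $l$. I would handle this by differentiating the identity $\tilde h^{(l)}(h)=h^{(l)}(h)$. If this identity holds for all $h$ in a neighborhood, the derivatives match at once. But the projections are guaranteed only at the visited points $h_t^{(l-1)}$, so the identity is not yet established as a function of $h$.

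To repair this, I would reformulate. Define $\mathcal{V}^{(l)}$ so that it also contains the columns of the derivatives $\partial_h A_k^{(l)}$, $\partial_h B_k^{(l)}$ and $\partial_h g_k^{(l)}$ at the visited points. Alternatively, observe that the contraction $\Tr[M\,\partial_h A]$ is needed only in the projected form, since the smaller model's moments live in $\mathcal V^{(l)}$. If that fails, I would state the needed closure as an explicit assumption on the structure functions.
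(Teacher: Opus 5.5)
Your route is the one the paper takes. The paper gives no separate proof; it only asserts that Theorem~\ref{theo:compressibility} ``can be generalized to multi-layer NQFs in the same way.'' Your initialization, your forward-pass induction on $l$, and your update algebra are all correct, the last one provided the backward signals agree.

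The gap is the backward pass, and the sentence in your inductive step that dismisses it is wrong. Write $\partial_j=\partial/\partial h^{(l)}_j$ at the visited point, applied to the structure functions of layer $l+1$. The Jacobian is
\[
\frac{\partial h^{(l+1)}_k}{\partial h^{(l)}_j}=\partial_j C_k+(\partial_j g_k)^\top\mu^{(l+1)}+(\mu^{(l+1)})^\top(\partial_j B_k)\,\mu^{(l+1)}+\Tr\big[M^{(l+1)}\,\partial_j A_k\big],
\]
and this is not built from $g^\top\mu$, $\mu^\top B\mu$ and $\Tr[MA]$. Let $P=V_{l+1}V_{l+1}^\top$. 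After the $d/d'$ factors cancel, the compressed model evaluates the same expression with $P\mu^{(l+1)}$ and $PM^{(l+1)}P$ in place of $\mu^{(l+1)}$ and $M^{(l+1)}$. The two agree only if the $h$-derivatives of $g$, $A$, $B$ are compatible with $P$. But $\mathcal V^{(l+1)}$ contains the values of $g,A,B$ at visited points, not their derivatives. Your fallback (b) fails for the same reason: the original network contracts these derivatives with the \emph{unprojected} moments, and their components off $\mathcal V^{(l+1)}$ are generically nonzero, for example at a random initialization.

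This cannot be patched, because item 2 is false for $L\ge 2$. Here is a counterexample.
\begin{itemize}
\item Setup: one input, $\mathcal L=\tfrac12(h^{(2)}-y)^2$, full-batch GD, $d_1=d_2=3$.
\item Layer 1: $p_1=1$, $h^{(1)}=M^{(1)}$.
\item Layer 2: $p_2=2$, $h^{(2)}=g(h^{(1)})^\top\mu^{(2)}$ with $g(h)=(1,\phi(h))^\top$ and $A^{(2)}=B^{(2)}=C^{(2)}=0$.
\item Initialization: $h_0=M^{(1)}(0)>0$, $\mu^{(2)}(0)=(\mu_1,\mu_2)$ with $\mu_2\neq0$ and $\mu_1\neq y$, and $\eta$ small.
\item Impose $\phi(h_0)=0$ and $\phi'(h_0)=1$. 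These fix the first step: $M^{(1)}_1=(1-2\eta(\mu_1-y)\mu_2)^2M^{(1)}_0$, so $h_1\neq h_0$. Now choose $\phi$ smooth with $\phi\equiv0$ near $h_1$.
\end{itemize}
The original layer 1 then freezes for $t\ge1$. Hence $\mathcal V^{(1)}=\mathbb R$, $\mathcal V^{(2)}=\mathrm{span}(e_1)$ and $k_1=k_2=1$. Item 2 at $t=0$ forces $\tilde\mu^{(2)}_2=0$. The compressed layer 1 therefore receives backward signal $(\mu_1-y)\phi'(h_0)\cdot 0=0$, which gives $\tilde M^{(1)}_1=\tfrac{d_1}{2}M^{(1)}_0\neq\tfrac{d_1}{2}M^{(1)}_1$.

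So neither your proposal nor the paper's one-line reduction proves the statement as written. Your repair (a) is the right fix. Add to $\mathcal V^{(l+1)}$ the columns of $\partial_{h^{(l)}_j}g^{(l+1)}_k$, $\partial_{h^{(l)}_j}A^{(l+1)}_k$ and $\partial_{h^{(l)}_j}B^{(l+1)}_k$ at the visited points. The backward signals then match by downward induction from $l=L$, and the rest of your argument goes through verbatim. However, this enlarges $k_l$ and hence $d'_l$, so it proves a different theorem. An alternative is to assume $\mu^{(l)}(0)=P_l\mu^{(l)}(0)$ and $M^{(l)}(0)=P_lM^{(l)}(0)P_l$ with $P_l=V_lV_l^\top$. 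Then the original moments stay in $\mathcal V^{(l)}$ and the projected and unprojected contractions coincide.
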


\subsection{Compression Error}
\label{app:compression_error}
\begin{corollary}
\label{cor:error_bound}
For any original NQF with $d$ neurons and input dimension $p$, and any target width $d' \leq p$, there exists a compressed NQF with $d'$ neurons such that, assuming the SGD update dynamics are locally Lipschitz continuous with constant $L$, the deviation in its sufficient statistics at any training step $t$ is bounded by:
\begin{equation}
\left\| \frac{d'}{d}\tilde{M}_t - M_t \right\|_F + \left\| \tilde{\mu}_t - \mu_t \right\|_2 \le \left( \sqrt{\sum_{k=d'}^p \lambda_k^2} \right) \exp(\eta L t),
\end{equation}
where $\lambda_1 \ge \lambda_2 \ge \dots \ge \lambda_p \ge 0$ are the eigenvalues of the initial covariance matrix $C := \frac{d}{d'} \big( M(0) - \frac{1}{d}\mu(0)\mu(0)^\top  \big)$.
\end{corollary}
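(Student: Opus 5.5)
The plan is to reuse the construction from the proof of Theorem~\ref{theo:compressibility}, truncated at rank $d'$, and then propagate the initial mismatch through the closed moment dynamics of Theorem~\ref{theo:master} with a discrete Gr\"onwall argument.

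\textbf{Construction.} Diagonalize $C = \sum_{k=1}^p \lambda_k q_k q_k^\top$ and set $C_{d'} := \sum_{k<d'} \lambda_k q_k q_k^\top$. This has rank at most $d'-1$, so it factors as $C_{d'} = L L^\top$ with $L \in \mathbb{R}^{p\times(d'-1)}$. As in Theorem~\ref{theo:compressibility}, take $Z \in \mathbb{R}^{(d'-1)\times d'}$ with orthonormal rows orthogonal to $\mathbf{1}_{d'}$, put $U = LZ$, and define the neurons
\[
\tilde w_i = \tfrac{1}{d'}\mu(0) + u_i .
\]
Then $\tilde\mu(0) = \mu(0)$ exactly, and
\[
\tilde M(0) = \tfrac{1}{d'}\mu(0)\mu(0)^\top + C_{d'} .
\]
With the scaling convention of Theorem~\ref{theo:compressibility} ($\tilde A = \tfrac{d'}{d}A$, $\tilde\eta = \tfrac{d}{d'}\eta$), the comparison quantity is $\tfrac{d'}{d}\tilde M_t$ against $M_t$. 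I will fix the normalization so that the initial discrepancy is exactly the discarded spectral part of the covariance, $\|C - C_{d'}\|_F = \bigl(\sum_{k\ge d'} \lambda_k^2\bigr)^{1/2}$. By Eckart--Young this is the optimal rank-$(d'-1)$ truncation error. Reconciling the $d/d'$ factors in the definition of $C$ with the exact target $\tfrac{d}{d'}M(0)$ is bookkeeping, and I will absorb it into the definition as the statement does.

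\textbf{Propagation.} Let $S_t = (M_t, \mu_t)$ be the original statistics and $\tilde S_t$ the rescaled statistics of the compressed model. By Theorem~\ref{theo:master} and the rescaling identities verified in the proof of Theorem~\ref{theo:compressibility}, both sequences obey the same map $S_{t+1} = S_t + \eta F(S_t)$ when written in matched coordinates, given identical data sampling. That proof showed the update equations coincide after rescaling, independently of whether the statistics agree.

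Under the assumed local Lipschitz bound $\|F(S) - F(S')\| \le L\|S - S'\|$, the errors satisfy
\[
e_{t+1} \le (1 + \eta L)\, e_t ,
\]
so $e_t \le (1 + \eta L)^t e_0 \le e^{\eta L t} e_0$. Substituting $e_0 = \bigl(\sum_{k\ge d'} \lambda_k^2\bigr)^{1/2}$, since the $\mu$ part of the initial error vanishes, gives the stated bound.

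\textbf{Main obstacle.} The delicate step is checking that the update maps really coincide in matched coordinates when the two sets of statistics differ. The $\eta^2$ terms carry factors of $d$ versus $d'$, for instance $d\cdot vv^\top$. I will recheck these terms with the substitutions $\tilde v = v$ and $\tilde H = \tfrac{d'}{d}H$ to confirm that the map $F$ is literally the same function. If some term fails to match, I will fold the residual into the Lipschitz constant $L$, which the corollary treats as given anyway.
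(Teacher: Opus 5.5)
Your proposal is correct and follows the paper's proof essentially step for step: a rank-$(d'-1)$ Eckart--Young truncation of $C$ realized through the zero-sum factorization $U=LZ$, exact matching $\tilde\mu(0)=\mu(0)$, and the discrete Gr\"onwall bound $e_t\le(1+\eta L)^t e_0\le e^{\eta L t}e_0$. Your ``main obstacle'' resolves exactly: in the variables $(\tfrac{d'}{d}\tilde M,\tilde\mu)$ with $\tilde\eta=\tfrac{d}{d'}\eta$ and $\tilde H=\tfrac{d'}{d}H$, every term of Theorem~\ref{theo:master} (e.g.\ $\tfrac{d'}{d}\tilde\eta^2 d'\tilde v\tilde v^\top=\eta^2 d\,vv^\top$) reproduces the original map, so no fallback is needed (and folding a genuine mismatch into $L$ would not have been valid). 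The true initial error is $\tfrac{d'}{d}\|C-C_{d'}\|_F$, which never exceeds the stated quantity; for $d'>d$ the tail vanishes because $\mathrm{rank}\,C\le d-1$.
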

\begin{proof}
Let the discrepancy in the sufficient statistics at training step $t$ be denoted by the error states $E_M^{(t)} := \frac{d'}{d}\tilde{M}_t - M_t$ and $E_\mu^{(t)} := \tilde{\mu}_t - \mu_t$. Define the total error norm as $e_t := \| E_M^{(t)} \|_F + \| E_\mu^{(t)} \|_2$.

To minimize the initial error, we first match the first moment by setting $\tilde{\mu}(0) = \mu(0)$, which yields $E_\mu^{(0)} = 0$. For the second moment, following the construction in Theorem \ref{theo:compressibility}, finding $U = [u_1, \dots, u_{d'}] \in \mathbb{R}^{p \times d'}$ requires solving:
\begin{equation}
\sum_{i=1}^{d'} u_i = 0, \quad \text{and} \quad U U^\top  \approx C := \frac{d}{d'} \left( M(0) - \frac{1}{d}\mu(0)\mu(0)^\top  \right).
\end{equation}
Because the $d'$ vectors must sum to zero, they are confined to a $(d'-1)$-dimensional subspace. Consequently, the rank of the constructed covariance matrix $UU^\top $ can be at most $d'-1$. Since we assume the target width $d' \leq p$, perfect reconstruction of the $p \times p$ full-rank matrix $C$ is in general impossible.

By the Eckart-Young-Mirsky theorem, the optimal rank-$(d'-1)$ approximation $\tilde{C}$ that minimizes the Frobenius norm of the residual is given by the truncated eigendecomposition of $C$. Let $C = \sum_{k=1}^p \lambda_k q_k q_k^\top $ be the eigendecomposition with eigenvalues $\lambda_1 \ge \dots \ge \lambda_p \ge 0$. We construct $U$ such that $UU^\top  = \tilde{C} := \sum_{k=1}^{d'-1} \lambda_k q_k q_k^\top $. The residual matrix is $\Delta C = C - \tilde{C} = \sum_{k=d'}^p \lambda_k q_k q_k^\top $.

We initialize the compressed model such that its effective initial covariance is this optimal approximation. Thus, the initial matrix error is the residual:
\begin{equation}
E_M^{(0)} = \frac{d'}{d}\tilde{M}(0) - M(0) = \tilde{C} - C = -\Delta C.
\end{equation}
The initial total error is therefore the Frobenius norm of the truncated tail:
\begin{equation}
e_0 = \|\Delta C\|_F + 0 = \sqrt{\sum_{k=d'}^p \lambda_k^2}.
\label{eq:initial_error}
\end{equation}
For any step $t$, the dynamics of $M$ and $\mu$ are governed by the update rules derived in Theorem \ref{theo:master}. The increments $\Delta M$ and $\Delta \mu$ are functions of the current state variables $(M_t, \mu_t)$, the data sequence $A(x), B(x), g(x)$, and the loss derivative $\ell'_x$.

Because polynomials are locally Lipschitz continuous, and assuming the SGD trajectories and data samples remain bounded within the time horizon $t$, there exists a uniform Lipschitz constant $L > 0$ that bounds the divergence of the trajectories. Specifically, the difference in the updates between the uncompressed state $(M_t, \mu_t)$ and the scaled compressed state $(\frac{d'}{d}\tilde{M}_t, \tilde{\mu}_t)$ satisfies:
\begin{equation}
\left\| \frac{d'}{d}\Delta \tilde{M}_t - \Delta M_t \right\|_F + \left\| \Delta \tilde{\mu}_t - \Delta \mu_t \right\|_2 \le \eta L \Big( \| E_M^{(t)} \|_F + \| E_\mu^{(t)} \|_2 \Big) = \eta L e_t.
\end{equation}
By the triangle inequality, the error at the next step is bounded by:
\begin{equation}
e_{t+1} \le e_t + \left\| \frac{d'}{d}\Delta \tilde{M}_t - \Delta M_t \right\|_F + \left\| \Delta \tilde{\mu}_t - \Delta \mu_t \right\|_2 \le e_t + \eta L e_t = (1 + \eta L) e_t.
\end{equation}
Applying this discrete recurrence relation recursively from step $0$ to $t$, we obtain:
\begin{equation}
e_t \le e_0 (1 + \eta L)^t .
\end{equation}
Using the standard exponential inequality $(1 + x)^t  \le \exp(xt)$ for all $x \ge 0$, we establish the upper bound for the error dynamics:
\begin{equation}
e_t \le e_0 \exp(\eta L t) = \left( \sqrt{\sum_{k=d'}^p \lambda_k^2} \right) \exp(\eta L t).
\end{equation}
This completes the proof.
\end{proof}

\subsection{NTK and Feature Learning}\label{app:ntk}
\begin{proposition}
\label{prop:ntk}
Under the same notations as Theorem \ref{theo:master}, the empirical Neural Tangent Kernel (NTK) between two data points $x$ and $x'$, defined as $\Theta(x, x') := \langle \nabla_W f_x(W), \nabla_W f_{x'}(W) \rangle$, is determined by the summary statistics $\mu$ and $M$:
\begin{equation}
\Theta(x, x') = d \cdot u(x)^\top  u(x') + 2 u(x)^\top  A(x') \mu + 2 \mu^\top  A(x) u(x') + 4 \Tr\big(A(x) A(x') M\big)
\end{equation}
where $u(x) := g(x) + 2B(x)\mu$.
\end{proposition}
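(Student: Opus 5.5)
The plan is to compute the gradient $\nabla_W f_x(W)$ in closed form and then expand the Frobenius inner product term by term. Throughout, the NQF is taken in the general form \eqref{eq:NQF-general}, with $A(x),B(x)$ symmetric as in Theorem~\ref{theo:master}.

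\textbf{Step 1: the per-sample gradient.} The first step reuses Eq.~\eqref{eq:sample-gradient} from the proof of Theorem~\ref{theo:master}. That proof writes $\mu=W\mathbf{1}$ and $M=WW^\top$, and obtains
\begin{equation*}
\nabla_W f_x(W)=u(x)\mathbf{1}^\top+2A(x)W,\qquad u(x)=g(x)+2B(x)\mu .
\end{equation*}
This expression is exactly the sum of a rank-one ``mean'' part and a ``second-moment'' part. That split is what will make the kernel depend only on $(\mu,M)$.

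\textbf{Step 2: expand the inner product.} Next I write $\Theta(x,x')=\Tr\big[(\nabla_W f_x)^\top\nabla_W f_{x'}\big]$ and expand the product into four traces:
\begin{equation*}
\Tr[\mathbf{1}u(x)^\top u(x')\mathbf{1}^\top],\quad 2\Tr[\mathbf{1}u(x)^\top A(x')W],\quad 2\Tr[W^\top A(x)u(x')\mathbf{1}^\top],\quad 4\Tr[W^\top A(x)A(x')W].
\end{equation*}
Each is then simplified using cyclicity of the trace together with the identities $\mathbf{1}^\top\mathbf{1}=d$, $W\mathbf{1}=\mu$ and $WW^\top=M$:
\begin{itemize}
\item The first trace gives $d\,u(x)^\top u(x')$.
\item The second gives $2u(x)^\top A(x')\mu$.
\item The third gives $2\mu^\top A(x)u(x')$; here the symmetry of $A(x)$ is used.
\item The fourth gives $4\Tr\big(A(x)A(x')M\big)$.
\end{itemize}
Summing the four pieces yields the claimed formula. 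Since $u$ itself depends only on $\mu$, every term is a function of $(\mu,M)$ alone.

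\textbf{Expected difficulty.} There is no real obstacle here. The only point needing care is bookkeeping of transposes in the two cross terms, so that the $A$ attached to each input lands on the correct side. I will check this by transposing the scalar, which is legitimate because $A$ and $B$ are symmetric.
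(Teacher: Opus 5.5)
Your proposal is correct and follows the paper's proof essentially verbatim: both reuse $\nabla_W f_x(W)=u(x)\mathbf{1}^\top+2A(x)W$ from the proof of Theorem~\ref{theo:master} and expand $\Tr\big[(\nabla_W f_x)^\top\nabla_W f_{x'}\big]$ into four traces. Each trace then reduces via $\mathbf{1}^\top\mathbf{1}=d$, $W\mathbf{1}=\mu$ and $WW^\top=M$, exactly as you describe; the symmetry of $A(x)$ is what lets you write $(\nabla_W f_x)^\top=\mathbf{1}u(x)^\top+2W^\top A(x)$ in the first place.
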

\begin{proof}
By definition, the empirical NTK is the Frobenius inner product of the gradients with respect to the weight matrix $W$:
\begin{equation}
\Theta(x, x') = \Tr\big(\nabla_W f_x(W)^\top  \nabla_W f_{x'}(W)\big).
\end{equation}
From the proof of Theorem \ref{theo:master}, we know the sample-wise gradient can be written as:
\begin{equation}
\nabla_W f_x(W) = \big(g(x) + 2B(x)\mu\big) \mathbf{1}^\top  + 2A(x)W = u(x)\mathbf{1}^\top  + 2A(x)W.
\end{equation}
Substituting this into the NTK definition yields:
\begin{equation}
\Theta(x, x') = \Tr\Big( \big(\mathbf{1} u(x)^\top  + 2W^\top  A(x)\big) \big(u(x')\mathbf{1}^\top  + 2A(x')W\big) \Big).
\end{equation}
We expand this expression to obtain:
\begin{equation}
\Theta(x, x') = d \cdot u(x)^\top  u(x') + 2 u(x)^\top  A(x') \mu + 2 \mu^\top  A(x) u(x') + 4 \Tr\big(A(x) A(x') M\big).
\end{equation}
This demonstrates that the empirical NTK at any point in the training trajectory can be evaluated without knowing the individual parameter states in $W$, as it relies solely on the sufficient statistics $\mu$ and $M$.
\end{proof}

Assume the network width is $d$. Let the NQF coefficients scale with $d$ as $g(x) = \Theta(d^{-\alpha_g})$, $A(x) = \Theta(d^{-\alpha_A})$, and $B(x) = \Theta(d^{-\alpha_B})$. To prevent the outputs and gradients from diverging at initialization, we require $\alpha_A \ge 0$ and $\alpha_B \ge 1/2$. Assume standard variance scaling at initialization (e.g., $1/d$) such that the summary statistics scale as $\mu(0) = \mathcal{O}(1)$ and $M(0) = \mathcal{O}(1)$.

Based on Proposition \ref{prop:ntk}, the initial magnitude of the NTK scales as $\Theta(x, x') = \mathcal{O}\big(d^{1-2\alpha_g} + d^{1-2\alpha_B} + d^{-2\alpha_A}\big)$. We assume that the initial NTK is $\Theta(x, x') = \Theta(1)$ by setting $\alpha_g = 1/2$. 

\begin{proposition}[NTK Evolution and Feature Learning]
\label{prop:feature_learning}
For the gradient flow, the learning dynamics have two regimes:
\begin{enumerate}
    \item Lazy Training (Constant NTK): If $\alpha_A > 0$ and $\alpha_B > 1$, then $\lim_{d \to \infty} \dot{\Theta} = 0$. The NTK remains invariant while the loss decreases by $\mathcal{O}(1)$. The NQF effectively behaves as a linear model.
    \item Feature Learning (Evolving NTK): If $\alpha_A = 0$ or $\frac{1}{2}\leq\alpha_B \le 1$, then $\dot{\Theta} = \mathcal{O}(1)$. The NTK changes on the same timescale as the loss, allowing the model to learn data-dependent representations.
\end{enumerate}
\end{proposition}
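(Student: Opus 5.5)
The plan is to write down the time derivative of the empirical NTK using Proposition~\ref{prop:ntk}, substitute the gradient-flow equations for $(\mu, M)$ from Theorem~\ref{theo:master}, and count powers of $d$ term by term.

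\textbf{Gradient-flow equations.} Taking $\eta\to 0$ in Theorem~\ref{theo:master}, the order-$\eta^2$ terms drop and
\begin{equation*}
\dot\mu = -(d\, v + H\mu), \qquad \dot M = -(v\mu^\top + \mu v^\top + HM + MH),
\end{equation*}
where $v=\sum_x \ell'_x u(x)$, $u(x)=g(x)+2B(x)\mu$, and $H=\sum_x 2\ell'_x A(x)$. Since the loss is $O(1)$, every $\ell'_x=O(1)$ at initialization.

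\textbf{Magnitudes.} Take the scalings $g=\Theta(d^{-1/2})$, $A=\Theta(d^{-\alpha_A})$, $B=\Theta(d^{-\alpha_B})$, with $\mu,M=O(1)$. Then $u=O(d^{-1/2}+d^{-\alpha_B})$, and $v$ has the same size. A caution: the component of $u$ along $\mathbf 1$ feeds $\dot\mu$ with a factor $d$, so $\dot\mu=O(d^{1/2}+d^{1-\alpha_B}+d^{-\alpha_A})$. This is not $O(1)$ unless the $g$-direction contribution is controlled. I would check whether the condition $\mu(0)=O(1)$ is self-consistent by tracking $\mu$ on the loss timescale. If $g$ drives $\mu$ at rate $d^{1/2}$, then the relevant quantity is the change of the kernel itself, not of $\mu$, and I would bound $\dot\Theta$ directly.

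\textbf{Differentiating the kernel.} Differentiating Proposition~\ref{prop:ntk} gives
\begin{equation*}
\dot\Theta = 2d\,\mathrm{Sym}\big(u(x)^\top B(x')\dot\mu\big) + 2\,\dot u^\top A\mu + 2\,u^\top A\dot\mu + \dots + 4\,\Tr\big(A(x)A(x')\dot M\big),
\end{equation*}
using $\dot u = 2B\dot\mu$. I then estimate each term.

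\textbf{Lazy regime ($\alpha_A>0$, $\alpha_B>1$).}
\begin{itemize}
\item The $B$-coupling term scales like $d\cdot d^{-1/2}\cdot d^{-\alpha_B}\cdot d^{1/2}=d^{1-\alpha_B}\to 0$.
\item The $A$ terms carry at least one factor $d^{-\alpha_A}$ against $O(1)$ or $O(d^{1/2}\cdot d^{-1/2})$ quantities, so they vanish.
\item The $\Tr(AA'\dot M)$ term is $O(d^{-2\alpha_A})\to 0$.
\end{itemize}
Hence $\dot\Theta\to 0$, while $\dot{\mathcal L}=-\sum_x \ell'_x\,\Theta\,\ell'_{x'}=\Theta(1)$. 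A Grönwall argument over an $O(1)$ time window then shows the kernel stays fixed while the loss drops by $O(1)$, and the model behaves linearly in the usual lazy-training sense.

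\textbf{Feature-learning regime ($\alpha_A=0$ or $\tfrac12\le\alpha_B\le 1$).}
\begin{itemize}
\item If $\alpha_A=0$, the term $4\Tr(AA'\dot M)$ with $\dot M\ni HM+MH=\Theta(1)$ gives an $O(1)$ contribution that generically does not cancel.
\item If $\alpha_B\le 1$, the $B$-coupling term is of order $d^{1-\alpha_B}\ge O(1)$ at the boundary. The hypothesis $\alpha_B\ge\tfrac12$ keeps the output finite, and one must check that the kernel itself stays $O(1)$.
\end{itemize}

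\textbf{Main obstacle.} The hard part is the feature-learning bookkeeping. Powers of $d$ that appear to cancel or diverge must be reconciled to give exactly $\dot\Theta=O(1)$ and not larger. I would try restricting to the moment directions (the $\mathbf 1$-direction versus its orthogonal complement) so that the factor $d$ multiplies only $O(d^{-1})$ quantities. For instance, I would argue that $d\,u^\top B\,v = O(d\cdot d^{-\alpha_B}\cdot d^{-1})$ once $u,v=O(d^{-1/2})$, which settles the claimed upper bound $O(1)$. Nonvanishing in the $\alpha_A=0$ case would be shown by exhibiting a generic data configuration where the trace term is nonzero.
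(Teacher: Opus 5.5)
Your overall route is the paper's: differentiate the kernel of Proposition~\ref{prop:ntk}, substitute the gradient-flow limit of Theorem~\ref{theo:master}, and count powers of $d$. Your lazy-regime bookkeeping also matches it: the $B$-coupling is $O(d^{1-\alpha_B})$, the $A$-terms are $O(d^{-\alpha_A})$, and the trace term is subleading.

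\textbf{The gap.} It is the step in your ``main obstacle'' paragraph that is supposed to give $\dot\Theta=O(1)$ on all of $\tfrac12\le\alpha_B\le 1$. There you bound $d\,u^\top B v=O(d\cdot d^{-\alpha_B}\cdot d^{-1})$. But the $B$-term in $\dot\Theta$ comes from differentiating $d\,u^\top u$. It is proportional to $d\,u^\top B\dot\mu$, and $\dot\mu=-(d\,v+H\mu)$. So it is proportional to $d^2u^\top Bv$, one more factor of $d$ than you counted, giving $O(d^{1-\alpha_B})$, exactly as in your own lazy-regime bullet.

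Splitting into the $\mathbf 1$-direction and its complement cannot remove this factor. The identity $\dot\mu=-d\,v-H\mu$ is exact, and $v=\sum_x\ell'_x u(x)$ is generically a nonzero vector of size $d^{-1/2}$. For generic $g,B$ one gets $d^2\,g(x)^\top B(x')\sum_{x''}\ell'_{x''}g(x'')=\Theta(d^{1-\alpha_B})$, which diverges for $\alpha_B<1$. Moreover, once $\mu$ has moved by $O(d^{1/2})$, $B\mu=O(d^{1/2-\alpha_B})$ overtakes $g$ when $\alpha_B<1$, as your caution about $\mu$ anticipates. So the $O(1)$ \emph{upper} bound you set out to prove does not exist for $\alpha_B<1$.

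\textbf{What the paper actually shows.} Its argument runs the other way. From $\dot\Theta=O(d^{1-\alpha_B}+d^{-\alpha_A})$ it concludes that if $\alpha_B\le 1$ or $\alpha_A=0$, the drift no longer vanishes. In its words, ``if these conditions are violated, the NTK shifts by an $O(1)$ magnitude'' during the $O(1)$ time in which the loss drops. Item~2 must be read in that sense: at least order one, not at most.

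\textbf{How to fix it.} Drop the attempt to reconcile the powers down to exactly $O(1)$. Argue instead that the leading terms do not cancel:
\begin{itemize}
\item for $\alpha_B\le 1$, the term $d^2u^\top Bv$;
\item for $\alpha_A=0$, either the paper's choice $4u^\top A\dot\mu\approx -4d\,u^\top A v=\Theta(1)$, or your trace term. Both are order one there, so either works.
\end{itemize}
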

\begin{proof}
By the chain rule under gradient flow $\dot{W} = -\nabla_W \mathcal{L}$, the dynamics of the function output are $\dot{f}_x = \langle \nabla_W f_x, \dot{W} \rangle = - \sum_{x' \in \mathcal{B}} \Theta(x, x') \ell'_{x'}$. 

From Proposition \ref{prop:ntk}, the NTK is $\Theta = d \cdot u^\top  u + 2u^\top  A \mu + 2\mu^\top  A u + 4 \Tr(A^2 M)$, where $u = g + 2B\mu$. With $\alpha_g = 1/2$, the leading term $d \cdot g^\top  g = \Theta(1)$, ensuring $\Theta(x, x') = \Theta(1)$. Consequently, $\dot{f}_x = \mathcal{O}(1)$, guaranteeing that the loss changes by an $\mathcal{O}(1)$ amount in an $\mathcal{O}(1)$ time.

We now evaluate the change in NTK during this $\mathcal{O}(1)$ learning process. The evolution of the sufficient statistics from Theorem \ref{theo:master} are:
\begin{equation}
\dot{\mu} = - (d \cdot v + H \mu), \quad \dot{M} = - (v \mu^\top  + \mu v^\top  + H M + M H)
\end{equation}
where $v = \ell'_x u(x)$ and $H = 2 \ell'_x A(x)$. For magnitude analysis, we drop the constant scalar $\ell'_x$. Taking the time derivative of $\Theta$ yields:
\begin{equation}
\dot{\Theta} = 2d \cdot u^\top  \dot{u} + 4 \dot{u}^\top  A \mu + 4 u^\top  A \dot{\mu} + 4 \Tr(A^2 \dot{M})
\end{equation}
Substituting $\dot{u} = 2B\dot{\mu}$ and $\dot{\mu} \approx - d \cdot u$ (since $H\mu$ is dominated), the two leading-order terms dictating the macroscopic change $\dot{\Theta}$ are:
\begin{enumerate}
    \item Term 1: $d \cdot u^\top  B \dot{\mu} \approx - d^2 u^\top  B u$. Since $\alpha_g = 1/2$ and $\alpha_B \ge 1/2$, we have $u = g + 2B\mu = \mathcal{O}(d^{-1/2})$. Thus, this term scales as $d^2 (d^{-1/2})^2 d^{-\alpha_B} = \mathcal{O}(d^{1-\alpha_B})$.
    \item Term 2: $-d \cdot u^\top  A u$. This term scales as $d (d^{-1/2})^2 d^{-\alpha_A} = \mathcal{O}(d^{-\alpha_A})$.
\end{enumerate}
Other terms, such as $\Tr(A^2 \dot{M})$, scale as $d^{-3\alpha_A}$ or lower and are dominated. Therefore, the overall drift in NTK is bounded by $\dot{\Theta} = \mathcal{O}(d^{1-\alpha_B} + d^{-\alpha_A})$.

For the NTK to remain constant (while the loss decreases by $\mathcal{O}(1)$), the derivative $\dot{\Theta}$ must vanish as $d \to \infty$. This requires $1-\alpha_B < 0 \implies \alpha_B > 1$ and $-\alpha_A < 0 \implies \alpha_A > 0$. If these conditions are violated, the NTK shifts by an $\mathcal{O}(1)$ magnitude during the $\mathcal{O}(1)$ training time.
\end{proof}

\subsection{A general exact solution}
\label{app:exact_solution}
The following theorem unifies a class of exact solutions in which the learning dynamics reduce to independent logistic equations. It contains Theorem \ref{theo:ortho_features} and
the simultaneously diagonalizable special case of Theorem \ref{theo:isotropic_samples}, as well as the exact solutions of linear networks in \cite{saxe2014exact}.

\begin{theorem}
\label{theo:compressed_projector_modes}
Define the empirical covariance
\begin{equation}
\mathcal K(X):=\frac{4}{m}\sum_{\mu=1}^m\langle A(x_\mu),X\rangle_F A(x_\mu),
\qquad Y:=\frac{4}{m}\sum_{\mu=1}^m y_\mu A(x_\mu).
\label{eq:K-and-Y}
\end{equation}
Suppose that there exist pairwise orthogonal symmetric projectors
$\{\Pi_a\}_{a=1}^r$ and constants $\gamma_a\in\mathbb R$ and $\kappa_a\geq 0$ such that
$\Pi_a\Pi_b=\delta_{ab}\Pi_a$, and for every $a,b\in\{1,\ldots,r\}$,
\begin{equation}
Y\Pi_a+\Pi_aY=2\gamma_a\Pi_a,
\label{eq:compressed-Y-condition}
\end{equation}
\begin{equation}
\mathcal K(\Pi_b)\Pi_a+\Pi_a\mathcal K(\Pi_b)=2\kappa_a\delta_{ab}\Pi_a.
\label{eq:compressed-K-condition}
\end{equation}
If $M(0)=\sum_{a=1}^r z_a(0)\Pi_a$ with $z_a(0)\geq0$, then the solution can be written as:
\begin{equation}
M(t)=\sum_{a=1}^r z_a(t)\Pi_a.
\label{eq:aligned-M-trajectory}
\end{equation}
If $\gamma_a\neq0$,
\begin{equation}
z_a(t)=\frac{\gamma_a z_a(0)}{\kappa_a z_a(0)+\left(\gamma_a-\kappa_a z_a(0)\right)e^{-2\gamma_a t}},
\label{eq:compressed-logistic-solution}
\end{equation}
whereas, if $\gamma_a=0$,
\begin{equation}
z_a(t)=\frac{z_a(0)}{1+2\kappa_a z_a(0)t}.
\label{eq:compressed-zero-mode-solution}
\end{equation}
\end{theorem}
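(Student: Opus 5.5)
The plan is an ansatz-and-uniqueness argument for the matrix flow \eqref{eq:M_dynamics}. First rewrite $H(t)$ in terms of $\mathcal K$ and $Y$. By the definition of the residual,
\[
H(t)=\frac4m\sum_\mu(\langle A(x_\mu),M\rangle_F-y_\mu)A(x_\mu)=\mathcal K(M(t))-Y,
\]
so the gradient flow on $M$ reads $\dot M=-(\mathcal K(M)-Y)M-M(\mathcal K(M)-Y)$. This is an autonomous ODE with a right-hand side that is polynomial in $M$, hence locally Lipschitz. Solutions are therefore unique, and it suffices to exhibit one solution of the form \eqref{eq:aligned-M-trajectory}.

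Next, plug in the ansatz $M=\sum_b z_b\Pi_b$. By linearity, $\mathcal K(M)=\sum_b z_b\mathcal K(\Pi_b)$. We then compute $(\mathcal K(M)-Y)M+M(\mathcal K(M)-Y)$ term by term:
\[
\sum_a z_a\bigl[(\mathcal K(M)\Pi_a+\Pi_a\mathcal K(M))-(Y\Pi_a+\Pi_aY)\bigr].
\]
Condition \eqref{eq:compressed-K-condition} gives $\mathcal K(M)\Pi_a+\Pi_a\mathcal K(M)=\sum_b z_b\,2\kappa_a\delta_{ab}\Pi_a=2\kappa_a z_a\Pi_a$. Condition \eqref{eq:compressed-Y-condition} gives $Y\Pi_a+\Pi_aY=2\gamma_a\Pi_a$. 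Hence
\[
\dot M=\sum_a 2z_a(\gamma_a-\kappa_a z_a)\Pi_a.
\]
The right-hand side again lies in the span of the projectors, so the ansatz is consistent provided each coefficient satisfies the decoupled logistic equation $\dot z_a=2z_a(\gamma_a-\kappa_a z_a)$. The projectors are nonzero, pairwise orthogonal, and hence linearly independent, so matching coefficients is legitimate.

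The remaining step is to solve this scalar ODE, exactly as in the proofs of Theorem~\ref{theo:ortho_features} and Theorem~\ref{theo:isotropic_samples}. Substituting $u=1/z_a$ linearizes it to $\dot u=-2\gamma_a u+2\kappa_a$. For $\gamma_a\ne0$ this gives $u(t)=\kappa_a/\gamma_a+(1/z_a(0)-\kappa_a/\gamma_a)e^{-2\gamma_a t}$, which rearranges to \eqref{eq:compressed-logistic-solution}. For $\gamma_a=0$ it gives $u=1/z_a(0)+2\kappa_a t$, i.e. \eqref{eq:compressed-zero-mode-solution}. If $z_a(0)=0$, the zero solution is consistent with both formulas.

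The one point needing care, and the main obstacle, is that these formulas must be global on $t\ge0$, i.e. the denominator must not vanish. Because $\kappa_a\ge0$ and $z_a(0)\ge0$, the denominator in \eqref{eq:compressed-logistic-solution} is a convex combination-like sum of positive terms:
\begin{itemize}
\item for $\gamma_a>0$ it interpolates between $\gamma_a$ and $\kappa_a z_a(0)$;
\item for $\gamma_a<0$ it stays $\le\gamma_a<0$, matching the sign of the numerator.
\end{itemize}
In either case $z_a(t)$ remains nonnegative and finite. With global existence of the constructed solution established, uniqueness identifies it with the true trajectory $M(t)$.

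Finally, note that the constructed $M(t)$ is positive semidefinite, consistent with $M=WW^\top$.
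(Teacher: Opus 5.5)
Your proposal is correct and follows the paper's proof. The paper likewise writes the flow as $\dot M = YM+MY-\mathcal K(M)M-M\mathcal K(M)$, substitutes $M=\sum_a z_a\Pi_a$, uses the two projector conditions to reduce to $\dot z_a = 2z_a(\gamma_a-\kappa_a z_a)$, and integrates. Your additions make explicit two steps the paper leaves implicit: uniqueness for the locally Lipschitz matrix ODE, which justifies that the actual trajectory stays in the span of the projectors, and the check that the logistic denominators never vanish for $t\ge 0$.
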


\begin{proof}
The dynamics of $M(t)=W(t)W(t)^\top$ can be written as
\begin{equation}
\dot M
=
YM+MY-\mathcal K(M)M-M\mathcal K(M).
\label{eq:M-operator-dynamics}
\end{equation}
Consider a matrix in the subspace spanned by the projectors $M=\sum_{a=1}^r z_a\Pi_a$.
The linear part of  \eqref{eq:M-operator-dynamics} is
\begin{align}
YM+MY=\sum_{a=1}^rz_a\left(Y\Pi_a+\Pi_aY\right)=2\sum_{a=1}^r\gamma_a z_a\Pi_a,
\label{eq:linear-projector-part}
\end{align}
where we used  \eqref{eq:compressed-Y-condition}.
For the nonlinear part, we have
\begin{align}
\mathcal K(M)M+M\mathcal K(M)&=\sum_{a,b=1}^rz_a z_b\left[\mathcal K(\Pi_b)\Pi_a+\Pi_a\mathcal K(\Pi_b)\right]=2\sum_{a,b=1}^r\kappa_a z_az_b\delta_{ab}\Pi_a=2\sum_{a=1}^r\kappa_a z_a^2\Pi_a,
\label{eq:nonlinear-projector-part}
\end{align}
where  \eqref{eq:compressed-K-condition} was used in the second equality.

Substitution of
\eqref{eq:linear-projector-part} and
\eqref{eq:nonlinear-projector-part}
into  \eqref{eq:M-operator-dynamics} gives
\begin{equation}
\dot M=2\sum_{a=1}^r
z_a(\gamma_a-\kappa_a z_a)\Pi_a.
\label{eq:closed-projector-vector-field}
\end{equation}
Thus each mode evolves independently according to
\begin{equation}
\dot z_a(t)=2z_a(t)\left(\gamma_a-\kappa_a z_a(t)\right).
\label{eq:compressed-logistic-dynamics}
\end{equation}
For $\gamma_a\neq0$, separation of variables gives
\begin{equation}
\int \frac{dz_a}{z_a(\gamma_a-\kappa_a z_a)}=
2t+\mathrm{constant},
\end{equation}
from which  \eqref{eq:compressed-logistic-solution} follows.
When $\gamma_a=0$, the equation reduces to
\begin{equation}
\dot z_a=-2\kappa_a z_a^2,
\end{equation}
whose solution is
 \eqref{eq:compressed-zero-mode-solution}.
\end{proof}

\begin{corollary}
\label{cor:balanced-linear-network}
Consider an NQF with structure matrices
\begin{equation}
A(x_\mu)=\frac{1}{2}
\begin{bmatrix}
0 & S_\mu\\
S_\mu^\top & 0
\end{bmatrix},
\qquad
S_\mu\in\mathbb R^{n_{\mathrm{out}}\times n_{\mathrm{in}}}.
\label{eq:linear-network-A}
\end{equation}
Writing $W=\begin{bmatrix}
U\\
V
\end{bmatrix}$ with $U\in\mathbb R^{n_{\mathrm{out}}\times d}$ and $
V\in\mathbb R^{n_{\mathrm{in}}\times d}$.
Define
\begin{equation}
\mathcal C(Q):=\frac{1}{m}\sum_{\mu=1}^m
\langle S_\mu,Q\rangle_F S_\mu,
\qquad
T:=\frac{1}{m}\sum_{\mu=1}^m y_\mu S_\mu.
\label{eq:rectangular-covariance}
\end{equation}
Suppose that $T$ possesses singular triplets
$\{(\tau_a,u_a,v_a)\}_{a=1}^r$ satisfying
\begin{equation}
Tv_a=\tau_a u_a,
\qquad
T^\top u_a=\tau_a v_a,
\label{eq:task-singular-modes}
\end{equation}
where $\{u_a\}_{a=1}^r$ and $\{v_a\}_{a=1}^r$ are orthonormal.
Assume additionally that the corresponding rank-one singular modes are eigenmatrices of $\mathcal C$:
\begin{equation}
\mathcal C(u_av_a^\top)=c_a u_av_a^\top,
\qquad c_a\geq0.
\label{eq:rectangular-mode-alignment}
\end{equation}
Let $\{r_a\}_{a=1}^r\subseteq\mathbb R^d$ be orthonormal directions, and assume the balanced and aligned initialization
\begin{equation}
U(0)=\sum_{a=1}^r\sqrt{\rho_a(0)}\,u_ar_a^\top,
\qquad
V(0)=\sum_{a=1}^r\sqrt{\rho_a(0)}\,v_ar_a^\top,
\label{eq:balanced-linear-initialization}
\end{equation}
where $\rho_a(0)\geq0$.
Then the trajectory remains balanced and aligned:
\begin{equation}
U(t)=\sum_{a=1}^r\sqrt{\rho_a(t)}\,u_ar_a^\top,
\qquad V(t)=\sum_{a=1}^r\sqrt{\rho_a(t)}\,v_ar_a^\top,
\label{eq:balanced-linear-trajectory}
\end{equation}
where each singular mode evolves independently. If $\tau_a\neq0$, the solution is
\begin{equation}
\rho_a(t)=\frac{\tau_a\rho_a(0)}{c_a\rho_a(0)+\left(\tau_a-c_a\rho_a(0)\right)e^{-4\tau_at}},
\label{eq:linear-mode-solution}
\end{equation}
whereas, if $\tau_a=0$,
\begin{equation}
\rho_a(t)=\frac{\rho_a(0)}{1+4c_a\rho_a(0)t}.
\label{eq:linear-zero-mode-solution}
\end{equation}
\end{corollary}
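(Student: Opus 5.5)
The plan is to reduce the corollary to Theorem~\ref{theo:compressed_projector_modes} by a concrete choice of projectors. I would then upgrade the resulting statement about $M$ to a statement about $W$ by a direct ansatz for the gradient flow.

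\textbf{Step 1: the projectors.} Set $q_a:=[u_a;v_a]\in\mathbb{R}^{n_{\mathrm{out}}+n_{\mathrm{in}}}$ and $\Pi_a:=\tfrac12 q_aq_a^\top$. Since $\|q_a\|^2=2$ and the families $\{u_a\}$ and $\{v_a\}$ are each orthonormal, we get $q_a^\top q_b=2\delta_{ab}$. Hence the $\Pi_a$ are pairwise orthogonal symmetric projectors. The balanced initialization gives $W(0)=\sum_a\sqrt{\rho_a(0)}\,q_ar_a^\top$, and orthonormality of the $r_a$ then yields $M(0)=\sum_a 2\rho_a(0)\Pi_a$, so $z_a(0)=2\rho_a(0)$.

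\textbf{Step 2: the two conditions of Theorem~\ref{theo:compressed_projector_modes}.} From \eqref{eq:K-and-Y} and \eqref{eq:linear-network-A},
\[
Y=2\begin{bmatrix}0&T\\T^\top&0\end{bmatrix},
\]
so \eqref{eq:task-singular-modes} gives $Yq_a=2\tau_aq_a$. Therefore $Y\Pi_a=\Pi_aY=2\tau_a\Pi_a$, which is \eqref{eq:compressed-Y-condition} with $\gamma_a=2\tau_a$.

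For the covariance term, first compute $\langle A(x_\mu),\Pi_b\rangle_F=\tfrac12 q_b^\top A(x_\mu)q_b=\tfrac12 u_b^\top S_\mu v_b$. This gives
\[
\mathcal K(\Pi_b)=\begin{bmatrix}0&\mathcal C(u_bv_b^\top)\\ \mathcal C(u_bv_b^\top)^\top&0\end{bmatrix}=c_b\begin{bmatrix}0&u_bv_b^\top\\ v_bu_b^\top&0\end{bmatrix}
\]
by \eqref{eq:rectangular-mode-alignment}. Applying this to $q_a$ gives $c_b\delta_{ab}q_a$. Hence \eqref{eq:compressed-K-condition} holds with $\kappa_a=c_a$.

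Theorem~\ref{theo:compressed_projector_modes} now yields $M(t)=\sum_a z_a(t)\Pi_a$ with logistic $z_a$. Substituting $z_a=2\rho_a$, $\gamma_a=2\tau_a$ and $\kappa_a=c_a$ into \eqref{eq:compressed-logistic-solution} and \eqref{eq:compressed-zero-mode-solution} reproduces \eqref{eq:linear-mode-solution} and \eqref{eq:linear-zero-mode-solution}. The rate $e^{-2\gamma_at}$ becomes $e^{-4\tau_at}$, and $2\kappa_az_a(0)t$ becomes $4c_a\rho_a(0)t$.

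\textbf{Step 3: the claim at the level of $W$.} The corollary asserts the form \eqref{eq:balanced-linear-trajectory} of $U(t),V(t)$, not only of $M(t)$, and I expect this to be the main obstacle. Here I would work directly with $\dot W=-HW$, where $H=\mathcal K(M)-Y$.

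Make the ansatz $W(t)=\sum_a s_a(t)\,q_ar_a^\top$ with $s_a(0)=\sqrt{\rho_a(0)}$. Under this ansatz $M=\sum_a 2s_a^2\Pi_a$. The computations of Step~2 give $Hq_a=(2c_as_a^2-2\tau_a)q_a$. Hence $HW$ stays in the span of the $q_ar_a^\top$, and the flow reduces to the scalar system
\[
\dot s_a=2(\tau_a-c_as_a^2)s_a .
\]
This ODE is locally Lipschitz, so by uniqueness of solutions of $\dot W=-HW$ the ansatz is the true trajectory. Working with $s_a$ rather than $\sqrt{\rho_a}$ avoids the non-smoothness of the square root at $\rho_a=0$; in particular $s_a\equiv0$ whenever $\rho_a(0)=0$.

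Finally, $\rho_a:=s_a^2$ satisfies $\dot\rho_a=4\rho_a(\tau_a-c_a\rho_a)$. This is consistent with Step~2, and its solution is the stated formula. The sign of $s_a$ is preserved because $s_a$ never crosses zero, so $s_a=\sqrt{\rho_a}$ for all $t$.
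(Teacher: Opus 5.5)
Your proposal is correct. Steps~1--2 match the paper's proof: it sets $\psi_a=q_a/\sqrt2$ and $\Pi_a=\psi_a\psi_a^\top$, and it checks $Y\psi_a=2\tau_a\psi_a$. It also checks $\mathcal K(\Pi_b)=c_b\,\mathcal B(u_bv_b^\top)$, where $\mathcal B(Q)$ denotes the symmetric block matrix with off-diagonal blocks $Q,Q^\top$, together with $\mathcal B(u_bv_b^\top)\psi_a=\delta_{ab}\psi_a$. It then applies Theorem~\ref{theo:compressed_projector_modes} with $z_a=2\rho_a$, $\gamma_a=2\tau_a$, $\kappa_a=c_a$, exactly as you do.

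You genuinely depart from the paper in Step~3. The paper obtains \eqref{eq:balanced-linear-trajectory} in one sentence, by ``factorizing'' $M(t)=\sum_a2\rho_a(t)\Pi_a$ with the fixed directions $r_a$. But $M=WW^\top$ determines $W$ only up to $W\mapsto WO$ with $O$ orthogonal. So that sentence shows only that the claimed $U(t),V(t)$ are compatible with $M(t)$, not that the flow actually produces them. Your ansatz $W=\sum_a s_aq_ar_a^\top$, the identity $Hq_a=2(c_as_a^2-\tau_a)q_a$, and uniqueness for the cubic vector field $W\mapsto-H(W)W$ close this properly. Working with $s_a$ instead of $\sqrt{\rho_a}$ also handles the case $\rho_a(0)=0$ cleanly. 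An equivalent repair closer to the paper, in the spirit of its proof of Theorem~\ref{theo:ortho_features}: once $M(t)$ is known, $W$ solves the linear ODE $\dot W=-H(t)W$, and each $q_a$ is an eigenvector of $H(t)$ for every $t$.

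Your Step~3 uses only the eigenrelations $Yq_a=2\tau_aq_a$ and $\mathcal K(\Pi_b)q_a=c_b\delta_{ab}q_a$ from Step~2. From these alone it yields both the trajectory and $\dot\rho_a=4\rho_a(\tau_a-c_a\rho_a)$, so the appeal to Theorem~\ref{theo:compressed_projector_modes} becomes a consistency check. The paper's route buys the unification it is after, exhibiting the corollary as a literal instance of the projector theorem. Yours buys a complete proof of the statement as actually written, about $U(t)$ and $V(t)$. The one thing worth making explicit is that the scalar solutions exist for all $t\ge0$ because $c_a\ge0$, so the ansatz is a global solution.
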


\begin{proof}
Introduce the symmetric block-matrix notation
\begin{equation}
\mathcal B(Q):=\begin{bmatrix}
0&Q\\
Q^\top&0
\end{bmatrix}.
\end{equation}
Then
\begin{equation}
A(x_\mu)=\frac12\mathcal B(S_\mu).
\end{equation}
From  \eqref{eq:K-and-Y}, the target matrix $Y$ is
\begin{equation}
Y=\frac{4}{m}\sum_{\mu=1}^my_\mu A(x_\mu)=2\mathcal B(T).
\label{eq:linear-Y}
\end{equation}
For each singular mode, define
\begin{equation}
\psi_a:=\frac{1}{\sqrt{2}}
\begin{bmatrix}
u_a\\
v_a
\end{bmatrix},
\qquad\Pi_a:=\psi_a\psi_a^\top.
\label{eq:linear-mode-projector}
\end{equation}
The orthonormality of the singular vectors implies
\begin{equation}
\psi_a^\top\psi_b=\frac12
\left(u_a^\top u_b+v_a^\top v_b\right)=\delta_{ab},
\end{equation}
and hence $\Pi_a\Pi_b=\delta_{ab}\Pi_a$.
Using  \eqref{eq:task-singular-modes},
\begin{align}
Y\psi_a=\frac{2}{\sqrt{2}}
\begin{bmatrix}
Tv_a\\
T^\top u_a
\end{bmatrix}
=2\tau_a\psi_a.
\end{align}
It follows that
\begin{equation}
Y\Pi_a+\Pi_aY=4\tau_a\Pi_a.
\label{eq:linear-compressed-Y}
\end{equation}
Thus  \eqref{eq:compressed-Y-condition} is satisfied with $\gamma_a=2\tau_a$.

We next evaluate the empirical covariance on $\Pi_b$.
The off-diagonal block of $\Pi_b$ is
$\frac12u_bv_b^\top$, and therefore
\begin{align}
\mathcal K(\Pi_b)=\frac{4}{m}\sum_{\mu=1}^m
\langle A(x_\mu),\Pi_b\rangle_F A(x_\mu)=
\mathcal B\left(\mathcal C(u_bv_b^\top)\right)=
c_b\mathcal B(u_bv_b^\top).
\label{eq:linear-K-projector}
\end{align}
For any $a,b$,
\begin{align}
\mathcal B(u_bv_b^\top)\psi_a=
\frac{1}{\sqrt{2}}
\begin{bmatrix}
u_bv_b^\top v_a\\
v_bu_b^\top u_a
\end{bmatrix}
=\delta_{ab}\psi_a.
\label{eq:block-mode-action}
\end{align}
Consequently,
\begin{equation}
\mathcal K(\Pi_b)\Pi_a+\Pi_a\mathcal K(\Pi_b)=2c_a\delta_{ab}\Pi_a.
\label{eq:linear-compressed-K}
\end{equation}
Thus the condition \eqref{eq:compressed-K-condition} holds with $\kappa_a=c_a$.

Under the initialization
\eqref{eq:balanced-linear-initialization}, we have
\begin{align}
M(0)=W(0)W(0)^\top=\sum_{a=1}^r2\rho_a(0)\Pi_a.
\label{eq:linear-M-initialization}
\end{align}
Theorem \ref{theo:compressed_projector_modes} therefore applies
with $z_a(t)=2\rho_a(t), \gamma_a=2\tau_a, \kappa_a=c_a$.
Substitution into
\eqref{eq:compressed-logistic-dynamics} yields
\begin{equation}
\dot\rho_a=4\rho_a(\tau_a-c_a\rho_a).
\label{eq:linear-mode-logistic}
\end{equation}
\eqref{eq:linear-mode-solution} and
\eqref{eq:linear-zero-mode-solution} follow directly from
\eqref{eq:compressed-logistic-solution} and
\eqref{eq:compressed-zero-mode-solution}.
Finally, factorizing
\begin{equation}
M(t)=\sum_a2\rho_a(t)\Pi_a
\end{equation}
using the fixed orthonormal directions $\{r_a\}$ gives \eqref{eq:balanced-linear-trajectory}.
\end{proof}

\begin{remark}[Recovery of the solution in \cite{saxe2014exact}]
Consider the standard two-layer linear network
\begin{equation}
\widehat y(x)=UV^\top x
\end{equation}
trained on samples $\{(x_\mu,y_\mu)\}_{\mu=1}^m$. Define $\Sigma_x=\frac1m\sum_{\mu=1}^m x_\mu x_\mu^\top$ and $T=\frac1m\sum_{\mu=1}^m y_\mu x_\mu^\top.$
The corresponding operator is $\mathcal C(Q)=Q\Sigma_x.$
Hence the condition
 \eqref{eq:rectangular-mode-alignment} holds whenever
\begin{equation}
\Sigma_xv_a=c_av_a
\end{equation}
for the right singular vectors of $T$.
In particular, for whitened inputs, $\Sigma_x=I$,
we have $c_a=1$ for every mode. Then the solution is
\begin{equation}
\rho_a(t)=\frac{\tau_a\rho_a(0)}{\rho_a(0)+\left(\tau_a-\rho_a(0)\right)e^{-4\tau_at}}.
\end{equation}
Up to a rescaling of time, this is precisely the balanced solution of deep linear networks derived by \cite{saxe2014exact}.
\end{remark}

\section{Experimental Details and Additional Experiments}
\label{app:exp-details}
\begin{figure}[t]
    \centering
    \includegraphics[width=0.35\linewidth]{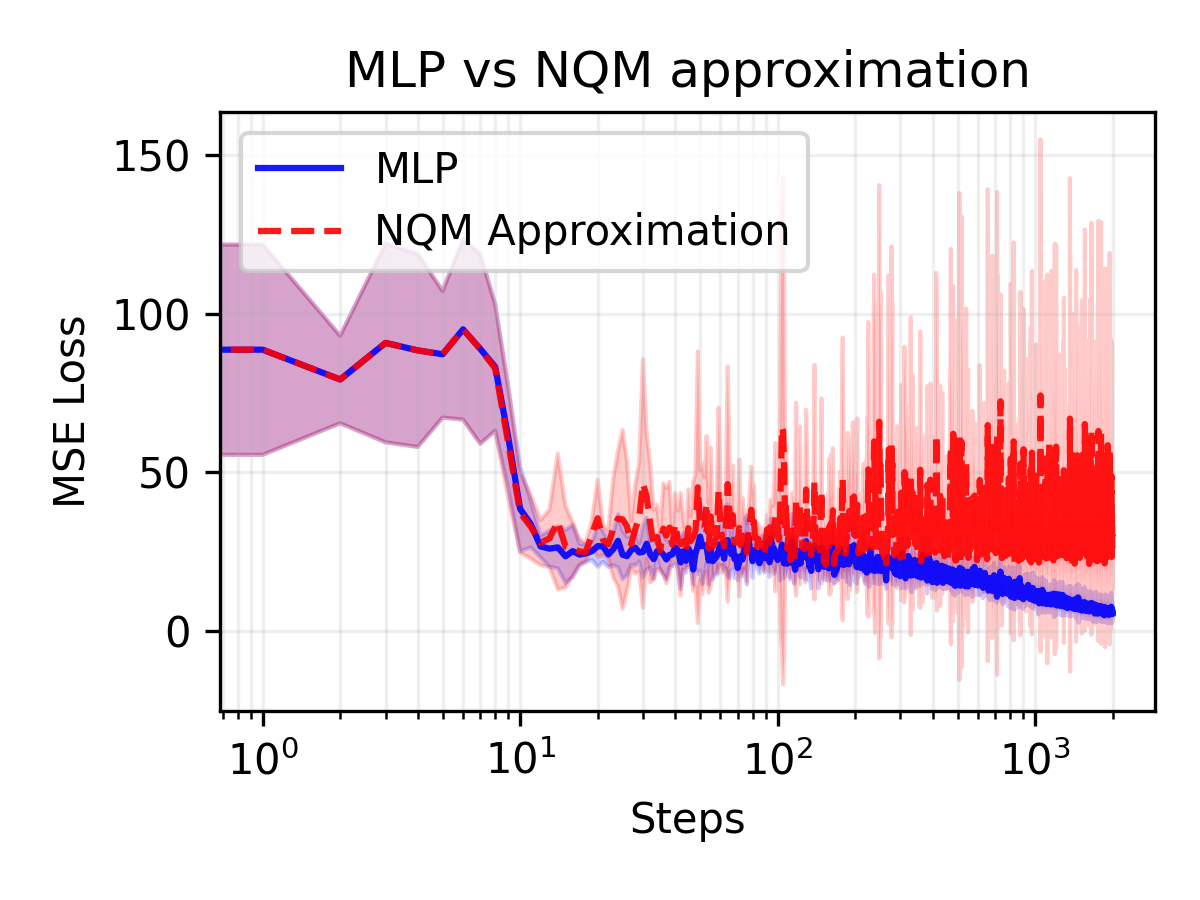}
    \includegraphics[width=0.35\linewidth]{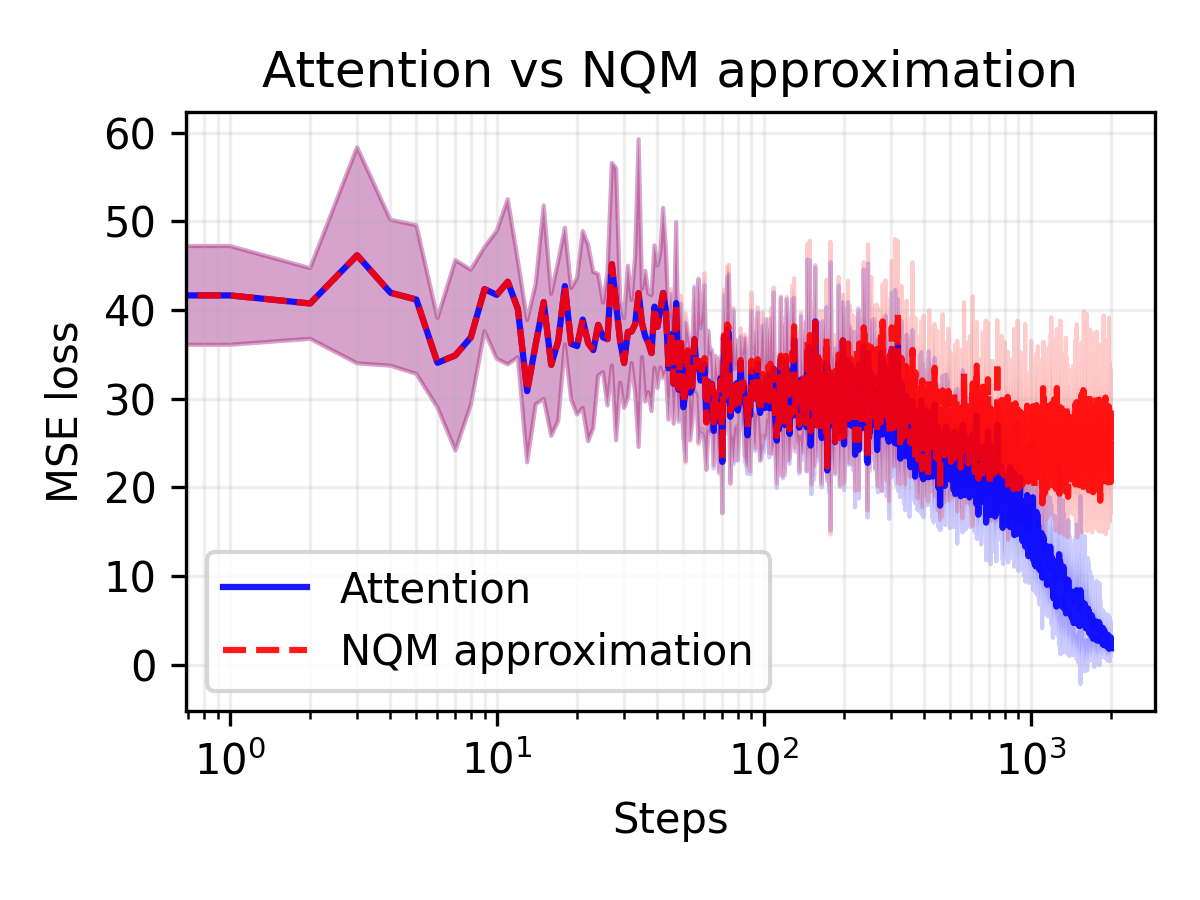}
    \caption{\textbf{Left}: Comparison between a two-layer MLP and its NQF approximation (Proposition \ref{prop:MLP}) under a teacher-student setting.  \textbf{Right}: Comparison between a query-key-only attention model and its NQF approximation (Proposition \ref{prop:MHA_variant}). Models are trained via online SGD (learning rate $0.05$, batch size $64$) over $2,000$ iterations. The results are averaged across $5$ independent runs, with shaded regions representing $\pm 1$ standard deviation.}
    \label{fig:NQF}
\end{figure}
\begin{figure}[t]
    \centering
    \includegraphics[width=0.9\linewidth]{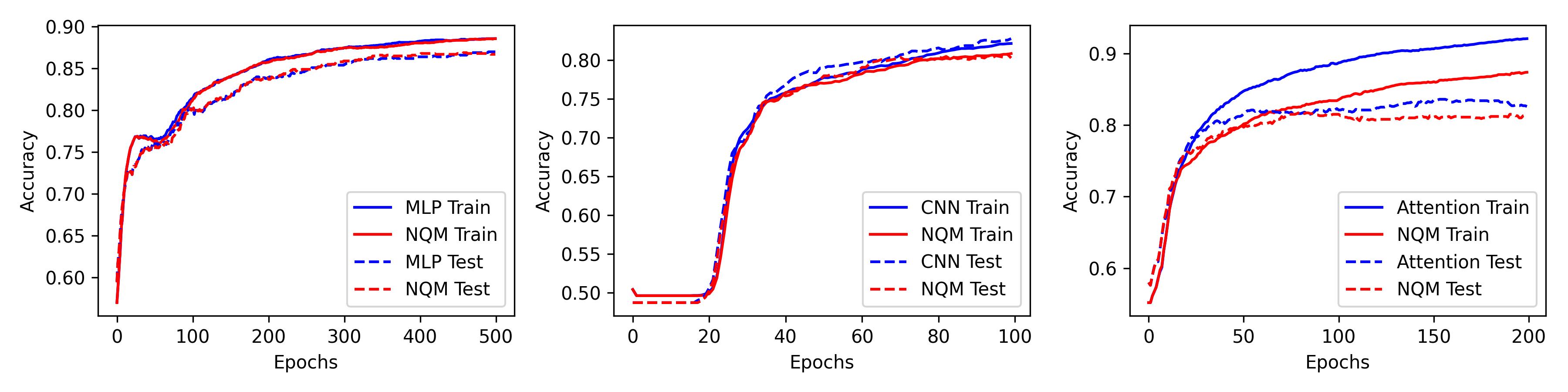}
    \caption{Comparison of training dynamics between a two-layer MLP (\textbf{Left}), a one-layer CNN (\textbf{Middle}), a one-layer self-attention (\textbf{Right}) and their corresponding NQFs on the MNIST odd vs. even classification task. In all cases, the NQFs (in red) achieve similar performance as the original model (in blue).}
    \label{fig:MNIST}
\end{figure}

\begin{figure}[t]
\centering
\includegraphics[width=1.0\linewidth]{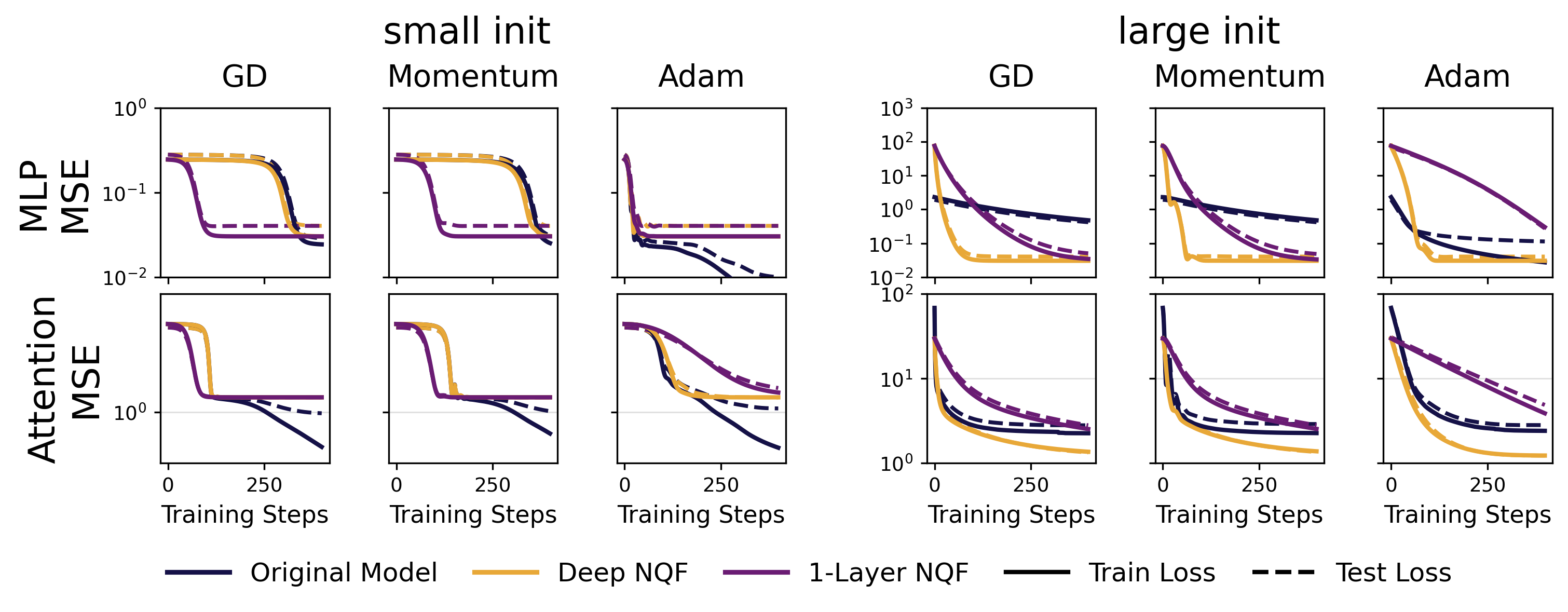}
\caption{Training dynamics of $4-$layer MLPs and $2-$layer attention models compared to their NQF approximations under different optimizers and initialization scales. The black line represents the original model. The orange line shows the 2-layer NQF approximations. The purple line represents an independently trained 1-layer NQF, initialized to match the original model at step zero. Solid lines represent the training loss and dashed lines represent the test loss. Under small initialization, the dynamics overlap, confirming the validity of deep NQF approximations.}
\label{fig:2layer_NQF}
\end{figure}

\paragraph{Figure \ref{fig:NQF_approx}} In Figure \ref{fig:NQF_approx}, all the models use output dimension $8$ and the MSE loss. The two-layer MLP has an input dimension $20$, hidden dimension $64$ and the $\tanh$ activation without any biases. The single-layer CNN takes a single-channel input of dimension $5\times 7$, employing $64$ kernels of dimension $4\times5$ (with stride 1 and no padding), followed by global sum-pooling. Both the single-head and the multi-head attention models have an input dimension $20$, a hidden dimension of $8$ and sequence length $16$. The multi-head model further has four heads with $d_v=20$. For all the experiments, we do full batch learning with $600$ samples. ‘‘Momentum’’ refers to GD with momentum $0.9$. Learning rates differ in each setting.

\paragraph{Additional Experiments on NQF Approximation}
On the left side of Figure \ref{fig:NQF}, we train a two-layer bias-free MLP with a $\tanh$ activation function in the teacher-student framework. The teacher weights are standard Gaussian, and we train two student models: an MLP of the same structure and its corresponding NQF approximation. Both models are optimized using online SGD with a learning rate of $0.05$ and a batch size of $64$ over $2,000$ steps. Both the teacher and the student MLP have the input dimension $20$, the hidden dimension $100$ and the output dimension $1$. Figure \ref{fig:NQF} shows that the trajectories of two student models match for the first $100$ steps and then they become different. 

For the right side of Figure \ref{fig:NQF}, we compare a query-key-only attention model (Proposition \ref{prop:MHA_variant}) with its NQF approximation and observe similarly that the NQF tracks the attention model accurately for the first $100$ steps. The attention model computes a scalar output from a query token $x \in \mathbb{R}^D$ and a context sequence $X \in \mathbb{R}^{D \times N}$, with a fixed Gaussian readout vector $c \in \mathbb{R}^D$. We set $D=16$, $N=10$, $d_k=8$, and use $H=4$ attention heads. Both the attention model and the NQF are optimized using online SGD with a learning rate of $0.02$ and a batch size of $64$ over $2,000$ steps. For both the MLP and the attention model, the teacher has standard Gaussian weights and student networks are initialized with identical Gaussian weights drawn from $\mathcal{N}(0, \epsilon^2)$ where $\epsilon = 10^{-3}$. At each step, the input data is sampled i.i.d. from a Gaussian distribution. 

In Figure \ref{fig:MNIST}, we construct a two-layer bias-free MLP equipped with tanh activations, a one-layer CNN and a one-layer self-attention and their NQF counterparts, where the NQFs are initialized with identical small Gaussian weights ($\mathcal{N}(0, \epsilon^2)$ with $\epsilon=0.01$) as the original models. All the models are trained with Adam (full batch), MSE loss and learning rate $0.001$. We construct a binary classification task: distinguishing odd from even digits on a subset of the MNIST dataset with $4000$ samples. As illustrated in Figure \ref{fig:MNIST}, the original models and their corresponding NQFs exhibit indistinguishable trajectories in both training loss and test accuracy. The two-layer MLP uses the hidden dimension $20$, $\tanh$ activation and no biases. The one-layer CNN uses $50$ channels with kernel size $15\times15$. The attention model uses a fixed readout vector, $10$ heads and $d_k=32$. The input $28\times 28$ image is regarded as $28$ tokens with embedding dimension $28$.

In Figure \ref{fig:2layer_NQF}, We formulate a standard teacher-student regression task. We examine two architectures: a 4-layer MLP and a 2-layer single-head attention. The inputs are drawn from a normal distribution, and the targets are generated by a corresponding teacher model with fixed Gaussian weights. According to Propositions \ref{prop:MLP} and \ref{prop:attention}, both models can be approximated by 2-layer NQFs. This is validated by Figure \ref{fig:2layer_NQF}, where under small initialization, the training dynamics of both models are well approximated by 2-layer NQFs but not 1-layer NQFs. The MLP has the input dimension $8$, the hidden dimension $16$ and the output dimension $1$. The initial weights are Gaussian with variance $0.05$ under small initialization and $0.5$ under large initialization. The 2-layer attention model has the input and hidden dimension $8$, input length $10$ and the output dimension $1$. The initial weights are Gaussian with variance $0.1$ under small initialization and $1$ under large initialization. The training set has $200$ samples and the test set has $1000$ samples. When the initialization scale and the model are fixed, three models (original one, 2-layer NQF and 1-layer NQF) share the same initialization and the same learning rate.

\paragraph{Figure \ref{fig:feature-descent}}
For the left and middle side of Figure \ref{fig:feature-descent}, we choose $m=5$, $r = \{1.0, 0.5, 0.25, 0.125, 0.0625\}$ and $C_{kk}=1$ for all $k$. The initialization scale is $\epsilon = 10^{-5}$. $\{A_\mu\}_{\mu=1}^m$ are chosen to be diagonal matrices of size $m\times m$. To satisfy the condition of Theorem \ref{theo:ortho_features}, we randomly generate a $m\times m$ orthogonal matrix and use the $\mu-$th column as the diagonal elements of $A_\mu$. We train the model using full-batch GD with learning rate $0.01$ to approximate the gradient flow and use the initialization $W(0) = \sqrt{\epsilon} I$ such that $z_k(0) = \epsilon$. Because $W$ remains diagonal, $z_k(t)=[W(t)W^\top (t)]_{kk}$ represents the eigenvalues of $WW^\top $.

For the right side of Figure \ref{fig:feature-descent}, we choose $\{A_\mu\}_{\mu=1}^{1000}$ such that each matrix only has one non-zero element. Specifically, we first set $r_k=k^{-\alpha_2}$, $V_k=k^{-\alpha_1}$ for $k=1,2,\cdots,1000$ and $\alpha_1=2$, $\alpha_2=0.8$. Then we can calculate $C_{kk} = r_k^2 / 8V_k$. We choose $A_{k,kk}=\sqrt{C_{kk}m/8}$ and other elements to be $0$. The labels are chosen to be $y_k = mr_k / (8A_{k,kk})$. The initialization is chosen to be $z_k(0)=10^{-6}k^{-1.2}$ for $k=1,2,\cdots,1000$.

\paragraph{Figure \ref{fig:sample-descent}} 
To validate Theorem \ref{theo:ortho_features}, we construct a synthetic dataset utilizing block-diagonal data matrices. We consider a system with $m=4$ samples and a parameter dimension of $p = m \times K$. $\{A(x_\mu)\}_{\mu=1}^m$ are configured as diagonal matrices with disjoint non-zero supports. Specifically, for any given sample $\mu \in \{1, \dots, m\}$, only the diagonal entries within the index block $[\mu K, \mu K + K - 1]$ are non-zero. This ensures that $A(x_\mu) A(x_\nu) = 0$ for all $\mu \neq \nu$, satisfying the condition required by Theorem \ref{theo:orthogonal_samples}. We choose $K=3$ and the three non-zero elements to be $1,1.5,2$. In this case \eqref{eq:implicit-solution} remains irreducible and lacks a closed-form solution. Consequently, the theoretical trajectories presented in Figure \ref{fig:sample-descent} are obtained by numerically integrating the associated ODE for $\{\xi_\mu(t)\}_{\mu=1}^4$ using the standard Runge-Kutta method.

\paragraph{Finite-initialization Effect}
On the right side of Figure \ref{fig:feature-descent} and  Figure \ref{fig:power-law-MLP}, one can notice that there is a slight difference between the empirical slope and the theoretical prediction. This is a finite-initialization effect.

Under the feature-wise descent setting in Section \ref{sec:saddle-to-saddle}, we can approximate the excess loss through
\begin{equation} 
\mathcal{E}(k) \approx \int_k^\infty V_x \, dx \propto k^{-(\alpha_1 - 1)}, 
\end{equation}
where $k$ as a function of $t$ is determined by the characteristic timescale
\begin{equation} t(k) \approx \frac{1}{\zeta_k} \ln \left( \frac{\zeta_k}{C_k \epsilon} \right) \label{eq:t_k} \end{equation}
with $\zeta_k / C_k \propto k^{\alpha_2 - \alpha_1}$. We can define the effective slope as $k_{\text{eff}} = -\frac{d \ln \mathcal{E} / dk}{d \ln t / dk}$, where the derivatives are $ \frac{d \ln \mathcal{E}}{d k} = -\frac{\alpha_1 - 1}{k} $ and 
\begin{equation} 
\frac{d \ln t}{d k} = \frac{\alpha_2}{k} + \frac{\alpha_2 - \alpha_1}{k \left[ \ln\left(\frac{\zeta_k}{C_k}\right) + \ln\left(\frac{1}{\epsilon}\right) \right]} = \frac{1}{k} \left( \alpha_2 - \frac{\alpha_1 - \alpha_2}{\ln \left( \frac{\zeta_k}{C_k \epsilon} \right)} \right). \end{equation}
Combining them, we obtain
\begin{equation} k_{\text{eff}}(k,\epsilon) = \frac{\alpha_1 - 1}{\alpha_2 - \frac{\alpha_1 - \alpha_2}{\ln \left( \frac{\zeta_k}{C_k \epsilon} \right)}}. \label{eq:k_eff} 
\end{equation}
For the experiments in Figure \ref{fig:feature-descent} (right) and Figure \ref{fig:power-law-MLP}, we use $\alpha_2 < \alpha_1$, and thus the effective slope is larger than the theoretical prediction for finite initialization ($\epsilon>0$). 

We can also write $k_{\text{eff}}$ as a function of $t$. From \eqref{eq:t_k} we have
\begin{equation} 
\zeta_k(t) t = \ln(1/\epsilon) + \ln\left(\frac{\zeta_k}{C_k}\right) \approx \ln(1/\epsilon) + \alpha \ln \zeta_k(t) + C, 
\end{equation}
where we denote $\alpha := \frac{\alpha_1 - \alpha_2}{\alpha_2}$ and $C$ is a constant independent of $\epsilon, t$. At the leading order of $t$ this gives
\begin{equation} 
\zeta_k(t) \cdot t \approx \ln(1/\epsilon) + \alpha \ln \ln(1/\epsilon) - \alpha \ln t + C. \end{equation}
Taking it into \eqref{eq:k_eff} we obtain
\begin{equation} k_{\text{eff}}(t) \approx \frac{\alpha_1 - 1}{\alpha_2} \left[ 1 + \frac{\alpha}{\ln(1/\epsilon) + \alpha \ln \ln(1/\epsilon) + C - \alpha \ln t} \right], \end{equation}
which increases as $t$ increases, as we observe in Figure \ref{fig:feature-descent} (right) and Figure \ref{fig:power-law-MLP}.

\paragraph{Figure \ref{fig:power-law-MLP}} For Figure \ref{fig:power-law-MLP}, we use $\beta = 1.5$, $P = 64$, $M = 256$ (thus $256$ data points) and run full batch GD with learning rate $0.02$ for $30000$ steps. The standard MLP has $256$ hidden units. All models are initialized with $\mathcal{N}(0, \epsilon^2)$ where $\epsilon = 10^{-4}$.

According to Proposition \ref{prop:MLP}, \eqref{eq:Fourier-MLP} corresponds to an NQF with
\begin{equation}
A_k(x) = \frac{s_k}{2\sqrt{P}} \begin{bmatrix} 0 & \psi_k(x) \\ \psi_k(x) & 0 \end{bmatrix}
\end{equation}
for the $k$-th hidden unit. Now we verify that the conditions in Section \ref{sec:saddle-to-saddle} (feature-wise descent) are satisfied. Since $A(x)$ is block-diagonal with elements $\{A_k(x)\}_{k=1}^P$ and $A_{k,\mu} A_{k,\nu}=A_{k,\nu} A_{k,\mu}$, we have $A_\mu A_\nu=A_\nu A_\mu$ and thus Assumption \ref{assume:commutation} is satisfied. 

Given the block-diagonal structure, the eigenvalues associated with the $k$-th unit for sample $\mu$ are $\lambda_{\mu, k} = \pm \frac{s_k}{2\sqrt{P}} \psi_k(x_\mu)$. Because $\psi_k(x)$ are Fourier modes uniformly sampled on the grid, they are orthogonal, leading to $\sum_{\mu} \lambda_{\mu, k} \lambda_{\mu, j} = 0$ for $k \neq j$, satisfying the orthogonal condition in Theorem \ref{theo:ortho_features}.

Finally, we map the experimental variables to that in Section \ref{sec:saddle-to-saddle}:
\begin{equation}
C_{kk} = \frac{8}{m} \sum_{\mu} \lambda_{\mu, k}^2 \propto s_k^2 = k^{-2\theta},\ r_k = \frac{8}{m} \sum_{\mu} \lambda_{\mu, k} y_\mu \propto s_k b_k = k^{-(\theta+\beta)}.
\end{equation}
Therefore, the $r_k$ decays with exponent $\alpha_2 = \theta + \beta$. $V_k$ is given by $V_k = \frac{r_k^2}{8C_{kk}} \propto \frac{k^{-2(\theta+\beta)}}{k^{-2\theta}} = k^{-2\beta}$, identifying $\alpha_1 = 2\beta$\footnote{Theorem \ref{theo:feature-scaling-law} assumes an initialization $z_k(0)=\epsilon c_k k^{-\beta_0}$ with $\beta_0\ge\alpha_1-\alpha_2$, whereas we initialize all weights from the same $\mathcal{N}(0,\epsilon^2)$. This does not lead to a significant difference because $P=64$ is very small.}. Plugging $\alpha_1$ and $\alpha_2$ into Section \ref{sec:saddle-to-saddle}, the excess loss $\mathcal{E}(t)$ decays asymptotically as:
\begin{equation}
\mathcal{E}(t) = \Theta\left( t^{-\frac{\alpha_1 - 1}{\alpha_2}} \right) = \Theta\left( t^{-\frac{2\beta - 1}{\theta + \beta}} \right)
\end{equation}
This provides the prediction for the scaling law in Figure \ref{fig:power-law-MLP}.

\paragraph{Power Law on One-hot Data}
We can also observe power laws from one-hot data. We choose each input sample $x_\mu \in \mathbb{R}^p$ (for $\mu \in \{1, \dots, m\}$) to be $x_\mu = \lambda_\mu e_\mu$, where $e_\mu$ is the basis vector (with $1$ at the $\mu$-th coordinate and $0$ elsewhere). We choose the scaling to be $\lambda_\mu \propto \mu^{-\left(\alpha_2 - \frac{\alpha_1}{2}\right)}$ and the label to be $y_\mu \propto \mu^{-\frac{\alpha_1}{2}}$.
Then we consider a quadratic network
\begin{equation}
f(x) = x^\top  W W^\top  x=\Tr[WW^\top A(x)],
\end{equation}
where $A(x)=xx^\top =\lambda_\mu^2 E_{\mu\mu}$, where $E_{\mu\mu}$ is the matrix with $1$ at the $(\mu, \mu)$ entry and $0$ elsewhere. Then we can find that this is the identical setting of Figure \ref{fig:feature-descent} (right).

% As a comparison, we use the same $\{x_\mu,y_\mu\}_{\mu=1}^m$ and train a two-layer MLP with tanh activation and $1000$ hidden units instead. The results are shown in Figure \ref{fig:MLP-powerlaw}, where we can see that the power law does not exist any more. This suggests that the power law we observe in Figure \ref{fig:power-law} is architecture specific.
\end{document}